\def\nnewcolor{0}
\newcommand{\new}[1]{{\color{red} #1}}
\newcommand{\new}[1]{#1}
\newcommand{\dnew}[1]{{\color{blue} #1}}
\newcommand{\dnew}[1]{#1}
\newif\ifhyper\IfFileExists{hyperref.sty}{\hypertrue}{\hyperfalse}
\ifhyper\usepackage{hyperref}\fi
\newcommand{\inote}[1]{\footnote{{\bf [[Ilias: {#1}\bf ]] }}}
\renewcommand{\section}{\@startsection{section}{1}{0pt}{-12pt}{5pt}{\large\bf}}
\newcommand{\R}{{\mathbb{R}}}
\newcommand{\Z}{{\mathbb Z}}
\newcommand{\N}{{\mathbb N}}
\newcommand{\poly}{\mathrm{poly}}
\newcommand{\polylog}{\mathrm{polylog}}
\newcommand{\eps}{\epsilon}
\newcommand{\dtv}{d_{\mathrm{TV}}}
\newcommand{\relu}{\mathrm{ReLU}}
\newcommand{\codim}{\mathrm{codim}}
\newcommand{\eqdef}{\stackrel{{\mathrm {\footnotesize def}}}{=}}
\newcommand{\wh}[1]{{\widehat{#1}}}
\newcommand{\littlesum}{\mathop{{\textstyle \sum}}}
\newcommand{\nspan}{\mathrm{span}}
\newcommand{\dkl}{d_{\mathrm {KL}}}
\newcommand{\VC}{\mathrm{VC}}
\newcommand{\A}{\mathcal{A}}
\newcommand{\bi}{\mathbf{i}}
\newcommand{\E}{{\bf E}}
\newcommand{\Var}{\mathbf{Var}}
\newcommand{\pr}{\mathbf{Pr}}
\DeclareSymbolFont{cyrletters}{OT2}{wncyr}{m}{n}
\DeclareMathSymbol{\Sha}{\mathalpha}{cyrletters}{"58}
\newtheorem{theorem}{Theorem}
\newtheorem{lemma}[theorem]{Lemma}
\newtheorem{proposition}[theorem]{Proposition}
\newtheorem{corollary}[theorem]{Corollary}
\newtheorem{claim}[theorem]{Claim}
\newtheorem{fact}[theorem]{Fact}
\theoremstyle{definition}
\newtheorem{definition}[theorem]{Definition}
\title{Small Covers for Near-Zero Sets of Polynomials\\
and Learning Latent Variable Models\footnote{An extended abstract of this work appears in the proceedings of the 61st Annual IEEE Symposium on Foundations of Computer Science (FOCS 2020).}}
\author{
Ilias Diakonikolas\thanks{Supported by NSF Award CCF-1652862 (CAREER), a Sloan Research Fellowship, 
and a DARPA Learning with Less Labels (LwLL) grant.}\\
UW Madison\\
{\tt ilias@cs.wisc.edu}\\
\and
Daniel M. Kane\thanks{Supported by NSF Award CCF-1553288 (CAREER) and a Sloan Research Fellowship.}\\
University of California, San Diego\\
{\tt dakane@cs.ucsd.edu}\\
}
\begin{document}

\maketitle

\thispagestyle{empty}

\vspace{-0.5cm}

\begin{abstract}
Let $V$ be any vector space of multivariate degree-$d$ homogeneous polynomials with co-dimension at most $k$,
and $S$ be the set of points where all polynomials in $V$ {\em nearly} vanish. We establish a qualitatively
optimal upper bound on the size of $\eps$-covers for $S$, in the $\ell_2$-norm. Roughly speaking, we show
that there exists an $\eps$-cover for $S$ of cardinality $M = (k/\eps)^{O_d(k^{1/d})}$. Our result is constructive yielding
an algorithm to compute such an $\eps$-cover that runs in time $\poly(M)$.

Building on our structural result,
we obtain significantly improved learning algorithms for several fundamental high-dimensional
probabilistic models with hidden variables. These include density and parameter estimation for $k$-mixtures
of spherical Gaussians (with known common covariance),
PAC learning one-hidden-layer ReLU networks with $k$ hidden units (under the Gaussian distribution),
density and parameter estimation for $k$-mixtures of linear regressions (with Gaussian covariates), and
parameter estimation for $k$-mixtures of hyperplanes. Our algorithms run in time {\em quasi-polynomial}
in the parameter $k$. Previous algorithms for these problems had running times exponential in $k^{\Omega(1)}$.

At a high-level our algorithms for all these learning problems work as follows: By computing the low-degree
moments of the hidden parameters, we are able to find a vector space of polynomials
that nearly vanish on the unknown parameters. Our structural result allows us to compute a
quasi-polynomial sized cover
for the set of hidden parameters, which we exploit in our learning algorithms.
\end{abstract}

\thispagestyle{empty}
\setcounter{page}{0}

\newpage

\section{Introduction} \label{sec:intro}

\subsection{Background and Motivation} \label{ssec:motivation}
The main motivation behind this work is the problem of designing efficient learning algorithms for
high-dimensional probabilistic models with latent (hidden) variables. This general question
has a long history in statistics, starting with the pioneering work of Karl Pearson~\cite{Pearson:94}
on learning Gaussian mixtures, that introduced the method of moments in this context.
During the past decades, an extensive line of work in theoretical computer science and machine learning
has made significant progress on various statistical and computational aspects of this broad question.

In this paper, we focus our attention on high-dimensional latent variable models with
a large number $k$ of hidden parameters\footnote{By this we mean that $k = \omega(1)$, in which case
an algorithm with runtime exponential in $k$ is not deemed satisfactory.}.
In the settings we study, previously known learning algorithms have
running times that scale exponentially with $k$. Roughly speaking, this exponential dependence is
typically due to some form of ``brute-force'' search, after the high-dimensional problem is
reduced down to a $k$-dimensional one. It should be noted that, in certain regimes, the exponential
dependence on $k$ is inherent, due to either information-theoretic (see, e.g.,~\cite{MoitraValiant:10, HardtP15})
or computational (see, e.g.,~\cite{DKS17-sq}) bottlenecks. For the problems we study here,
there is no (known) a priori reason ruling out $\poly(k)$ time algorithms, while current algorithms
have an $\exp(k^{\Omega(1)})$ dependence.

Motivated by this huge gap in our understanding, we develop new algorithms
for several high-dimensional probabilistic models with running times {\em quasi-polynomial}
in the number $k$ of hidden parameters. More specifically, we design new algorithms for the following
fundamental statistical tasks: density estimation and parameter learning for $k$-mixtures of spherical Gaussians,
PAC learning one-hidden-layer neural networks with $k$ hidden ReLU gates and other well-behaved activations (\new{including generalized linear models}) 
under the Gaussian distribution, density estimation and parameter estimation 
for $k$-mixtures of linear regressions (under Gaussian covariates),
and parameter learning for $k$-mixtures of hyperplanes. See Section~\ref{ssec:intro-apps} for detailed statements
of our results and comparison to prior work.

All our learning algorithms are based on a new technique that we develop in this work.
The key common ingredient is a new result in algebraic geometry
that we believe is of independent interest. In more detail, we establish the following:
Let $V$ be any vector space of multivariate degree-$d$ homogeneous polynomials
with co-dimension at most $k$ and $S$ be the set of points where all polynomials in $V$ {\em nearly} vanish.
Then the set $S$ has an $\eps$-cover, in $\ell_2$-norm, of size $M = (k/\eps)^{O_d(k^{1/d})}$.
Importantly, our proof is constructive immediately giving an algorithm to compute such a cover
that runs in $\poly(M)$ time.

With this structural result in hand, all our learning algorithms follow a common recipe:
First, given a set of samples from our distribution there is an efficient procedure to approximate
the degree-$2d$ moments of the hidden parameters. Then we use our structural result
to compute a small $\eps$-cover for the set of hidden parameters. Once we have
a cover of the parameters, we leverage problem-specific techniques
to perform density estimation or parameter estimation.

\subsection{Overview for Our Approach} \label{ssec:intuition}

In this section, we give an overview of our approach with a focus 
on the problem of learning mixtures of spherical Gaussians.
In particular, we explain how our aforementioned structural result 
(regarding covers of near-zero sets of polynomials)
naturally comes into play to find a cover for the set of hidden parameters.

Suppose we have access to i.i.d. samples from an unknown $k$-mixture of
identity covariance Gaussians on $\R^m$, $X = \sum_{i=1}^k w_i N(\mu_i, I)$,
where $w_i \geq 0$ are the mixing weights, satisfying $\sum_{i=1}^k w_i = 1$, 
and $\mu_i \in \R^m$ are the mean vectors. There are two versions of the learning problem:
(1) Density estimation, where the goal is to compute a hypothesis distribution $H$ 
that is $\eps$-close to $X$ in total variation distance, and (2) Parameter estimation, where
the goal is to approximate the parameters $w_i, \mu_i$ within small error $\eps$.
Our approach yields significantly improved algorithms for both these problems via a common technique.
In particular, we develop a method to efficiently find a cover for the set of hidden mean vectors,
i.e., a set $\mathcal{C} \subset \R^m$ such that for any $\mu_i$, $i \in [k]$, with $w_i$ not too small,
there exists $c \in C$ such that the $\ell_2$-distance $\|c - \mu_i\|_2$ is small.

A natural approach to learn a $k$-mixture of Gaussians is to use the method of moments. 
This method has two steps: (i) We draw sufficiently many samples to accurately 
approximate the first $d$ moments of the mixture $X$.
(ii) We use our approximations to the moments to compute an approximation of the distribution 
or its parameters. Unfortunately, the method of moments faces the following obstacle in our context:
There exists two $k$-mixtures of spherical Gaussians, $X$ and $X'$, 
that are far from each other, but have their first $k$ moments exactly matching.
This means that one cannot compute an approximation to $X$ from the first $d<k$
moments alone.

The above moment-matching statement might suggest that {\em any} moment-based method
cannot lead to learning algorithms with running time $2^{o(k)}$ for our problem.
However, looking at the structure of these moment-matching distributions 
gives us hope. Essentially, these instances are based on a one-dimensional
construction that matches $k$ moments, which is then embedded into a
higher dimensional space. If $X$ and $X'$ are constructed by having all
of their Gaussian components centered on an unknown line $L$, one might not be able to
distinguish $X$ and $X'$ directly by using their low-degree moments,
but looking at second moments should suffice to approximately determine the line $L$. 
Once this line is determined, it would allow us to reduce down to
a one-dimensional problem, which can be efficiently solved by other means. 
Of course, the task of finding the hidden line $L$ could be made more difficult by
adding more components to each of $X$ and $X'$, but it is not clear
whether or not this could successfully disguise this critical line.

In order to obtain a truly insurmountable hard instance, we would need
to construct a $k$-mixture $X$, such that not only do 
the higher moment tensors of $X$ agree with those of some other $k$-mixture $X'$ (that is far from $X$), 
but in addition the higher moment tensors of $X$ are {\em rotation-invariant}. 
Such a (hypothetical) construction would imply that the low-degree moments of $X$ 
are indistinguishable from any rotation of $X$, and therefore it would be impossible 
to locate lower-dimensional sub-structures, like the line $L$ above.

Our approach is motivated by the fact that such a hypothetical hard instance is 
in fact impossible. In particular, we can write our unknown $k$-mixture $X$ as a 
convolution $D \ast G$, where $G \sim N(0, I)$ is the standard Gaussian, 
and $D$ is a discrete distribution on $\R^m$ with support size at most $k$. 
By de-convolving, we can use the moments of $X$ to compute the moments of $D$.
 Now, if $\binom{m+d}{d} > k$, a dimension counting argument implies that
there exists a non-trivial degree-$d$ polynomial $p$
that vanishes on the support of $D$. This means that $\E[p^2(D)]$ is also $0$. 
But if we know the first $2d$ moments of $X$, we can in principle find such a polynomial $p$, 
which would imply that $p$ must be identically zero on the support of $D$. That is,
if we know the first $2d$ moments of $X$, we can find a polynomial $p$ that vanishes on the support of $D$, 
and unless $p(x)$ is a function of $\|x\|_2^2$, this will not be a rotationally
invariant condition, implying that the moments of $X$ cannot be
rotationally invariant.

The above paragraph naturally leads to an idea for an algorithm. 
Note that, for any $d$, the space of degree-$d$ polynomials on $\R^m$
has dimension $N = \binom{m+d}{d}$. 
By the same dimension counting argument,
there exists a subspace $V$ of degree-$d$ polynomials
with dimension at least $N-k$ that vanishes on the support of $D$. 
On the other hand, given the first $2d$ moments of $D$, we can identify
$V$ as the space of polynomials $p$ so that $\E[p^2(D)] = 0$. (We note that 
this is indeed a subspace, since the quadratic form $q \to \E[q^2(D)]$ is positive
semi-definite). If we know $V$, we know that all the component means 
of our mixture must lie on the variety $\mathbb{V}$ defined by the
polynomials in $V$. It is not hard to show that this variety $\mathbb{V}$ will have relatively
small dimension. This holds because the space of degree-$d$ polynomials on $\mathbb{V}$ is 
(degree-$d$ polynomials on $\R^m$)$/V$,which has dimension at most $k$. 
This implies that $\binom{\dim(\mathbb{V})+d}{d} \leq k$,
and in particular that $\dim(\mathbb{V}) = O(d k^{1/d})$. This allows us to reduce our
problem to one on a variety of small dimension that we can hopefully
brute force in time exponential in $k^{1/d}$. Indeed, we are able to show that the variety $\mathbb{V}$ 
will have a small cover. (Of course, having a variety with small dimension does not
imply the existence of a small cover in general. But our variety has additional properties that our proof
exploits.)

The biggest technical obstacle to the approach outlined above is, of course, 
that we cannot have access to the {\em exact} moments of $X$ (and thus $D$), but 
can only hope to approximate them. However, if we have sufficiently accurate 
approximations to the moments of $D$, we can still find a vector space $V$ of degree-$d$ polynomials such 
that for all $p \in V$ we have that $\E[p(D)^2]$ is {\em small}. This implies that for any point 
$x$ in the support of $D$, with reasonable mass, $p(x)$ must {\em nearly} vanish 
for all $p \in V$. At this point, we will need a robust version of the aforementioned structural result,
which is the main geometric result of this work (Theorem~\ref{thm:main-cover-informal}).
This result essentially says the following: Given such a $V$ and a unit ball $B$, if we define $S$ to
be the set of all points $x$ in $B$ such that $|p(x)|$ is small for all $p \in V$, 
then $S$ can be covered by approximately $\exp(O(k^{1/d}))$ many small balls. Moreover, there
is an efficient algorithm to compute such a cover.
This allows us to compute an explicit set of (not too many) hypotheses
means $x_i$ such that each center of a Gaussian in $X$ with reasonable
weight is close to some $x_i$.

Given our cover for the set of possible parameters, we can solve both the density estimation
and the parameter estimation problems as follows: 
For density estimation, we note that $X$ can be approximated as a mixture of the $N(x_i,I)$'s. 
We can thus draw samples from $X$ and use convex optimization to compute 
appropriate mixing coefficients (Proposition~\ref{mixtureProp}).
For parameter estimation, if we assume separation of the components of $X$, 
we can use the list of hypotheses means to do clustering and learn approximations of
the true means using techniques from~\cite{DiakonikolasKS18-mixtures}.

More broadly, our technique can also be applied to a number of other high-dimensional learning problems. The
key requirement is that the unknown distribution in question is determined by a set of $k$ vectors 
$v_i \in \R^m$ and non-negative weights $w_i$, and that we can efficiently approximate the quantity 
$\sum_{i=1}^k w_i p(v_i)$, for any low degree polynomial $p$. Given this primitive, 
we can use our Theorem~\ref{thm:main-cover-informal} to find a subspace $V$ 
of polynomials that almost vanish on the $v_i$'s, and from there compute a small list of
hypotheses so that each relevant $v_i$ must be close to at least one such hypothesis. 
From this point on, we can use efficient algorithms operating on the final cover and/or problem-specific
techniques to complete the learning algorithm.

\subsection{Main Result: Small Covers for Near-Zero Sets of Polynomials}

Let $V$ be any vector space of homogeneous degree-$d$ real polynomials on $\R^m$
with co-dimension $k$. We use $\R_{[d]}[x_1, \ldots, x_m]$ for the vector space of all 
homogeneous degree-$d$ real polynomials on $\R^m$.
Let $S$ be the set of points where all polynomials in $V$ are close to zero.
Our main result shows that $S$ has a small cover that can be computed efficiently.
Specifically, we show:


\begin{theorem}[informal] \label{thm:main-cover-informal}
Let $V$ be any vector space of homogeneous degree-$d$ real polynomials
on $\R^m$ with codimension at most $k$ within $\R_{[d]}[x_1, \ldots, x_m]$.
For $\delta, R > 0$, let
$$S = S(V, R, \delta) \eqdef \{ x \in \R^m: \|x\|_2 \leq R \textrm{ and } |p(x)| \leq \delta \|p\|_{\ell_2} \textrm{ for all } p\in V \} \;.$$
Then, for sufficiently small $\delta>0$,
there exists an $\eps$-cover of $S$ with size at most $M = (2 (R/\eps) d k)^{O(d^2 k^{1/d})}$. Moreover, there exists
an algorithm to compute such a cover in $\poly(M)$ time.
\end{theorem}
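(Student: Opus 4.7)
The plan is an induction on the degree $d$, built around the structural observation sketched in the introduction: the quotient-ring bound $\dim\left(\R_{[d]}[x_1,\ldots,x_m]/V\right)\leq k$ implies $\dim\mathbb{V}\leq D:=O(dk^{1/d})$, where $\mathbb{V}$ is the affine variety of exact common zeros of $V$. The target exponent in the theorem is $d^2 k^{1/d}=d\cdot D$, so morally one can afford $(R/\epsilon)^{D}$ for the intrinsic dimensions of $\mathbb{V}$ together with an extra factor of $(Rdk/\epsilon)^{O((d-1)D)}$ absorbing degree and robustness losses across the induction.

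For the base case $d=1$, $V$ is a codimension-$k$ subspace of linear forms, so $\mathbb{V}$ is a $k$-dimensional linear subspace and $S$ is contained in a $\delta$-thickening of $\mathbb{V}\cap B_R$. A standard $\epsilon$-grid on this subspace gives a cover of size $(R/\epsilon)^{O(k)}$, matching the claim for $\delta$ smaller than $\epsilon$.

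For the inductive step $d\geq 2$, my proposed reduction is recursive slicing along a distinguished polynomial of $V$. Since $V$ has codimension $k$ in a much larger ambient space, it contains a polynomial $p$ of unit $\ell_2$-norm; I would extract from $p$ (via an SVD-type computation on a matrix unfolding of its coefficient tensor) an approximate factorization $p\approx\ell\cdot q$ with $\ell$ a unit linear form and $q$ a degree-$(d-1)$ polynomial. The near-vanishing constraint $|p(x)|\leq\delta$ then triggers a case split at every $x\in S$: either (i) $|\ell(x)|\leq\sqrt{\delta}$, placing $x$ in a thin slab around the hyperplane $\{\ell=0\}$ that I would cover by $O(R/\epsilon)$ parallel slices, on each of which the problem restricts to an $(m-1)$-variable instance of the same degree and codimension; or (ii) $|q(x)|\leq\sqrt{\delta}$, furnishing a fresh near-vanishing polynomial of degree $d-1$ that can be appended to the space of near-vanishing constraints, after which the inductive hypothesis at degree $d-1$ applies. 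Iterating produces a branching tree of controlled depth whose leaves form the final $\epsilon$-cover.

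The main technical obstacle I anticipate is quantitative bookkeeping of both the $\delta$-budget and the effective codimension parameter through the recursion. Each slicing step loses at least a square-root in $\delta$, and the recursion depth is $\Theta(dD)$, so the assumption that $\delta$ is ``sufficiently small'' has to tolerate a tower of $\poly(d,k^{1/d})$ such losses; additionally, when passing from a degree-$d$ instance to a degree-$(d-1)$ one via case (ii), one must verify that the codimension parameter of the enlarged space of near-vanishing polynomials still yields the same ambient dimension bound $D$, so that the recursion does not blow up. A secondary difficulty is extracting $\ell$ from $p$ both efficiently and tightly enough that the remainder $p-\ell q$ is absorbed into the $\delta$-budget; an SVD of a suitable coefficient-tensor unfolding is the natural tool, with the top singular vector playing the role of $\ell$ and the spectral gap controlling the remainder. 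The algorithmic claim follows because every step in the recursion is $\poly(m,k)$-time linear algebra on coefficient vectors of $V$, and the recursion tree has size $\poly(M)$, where $M$ is the final cover size.
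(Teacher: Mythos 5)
Your inductive-slicing skeleton (base case $d=1$, recurse on degree and dimension) matches the shape of the paper's argument, but the engine you propose for the degree reduction does not work, and the paper does something quite different there.

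First, the approximate factorization $p\approx \ell q$ from a single $p\in V$ is not available in general. You would need $\|p-\ell q\|_{\ell_2}\lesssim\delta/R^d$ so that $|p(x)|\leq\delta$ really implies $\min(|\ell(x)|,|q(x)|)\lesssim\sqrt{\delta}$ on $B(0,R)$, but a codimension-$k$ subspace $V$ can be bounded away in $\ell_2$ from the variety of reducible forms; e.g. for $d=2$ the one-dimensional $V=\nspan\{\sum_i x_i^2\}$ has no near-rank-one element, and by the time you have reduced to $m=O(dk^{1/d})$ ambient variables (which all the learning applications do), $V$ itself may have dimension $O(1)$ while the locus $\{\ell q\}$ has too small a dimension to force an intersection. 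Nothing in the hypotheses gives you a spectral gap to exploit. Second, even granting a perfect factorization, case~(ii) hands you exactly one degree-$(d{-}1)$ near-vanishing constraint $q$. The inductive hypothesis at degree $d-1$ needs a \emph{subspace} of codimension roughly $k^{(d-1)/d}$ in $\R_{[d-1]}$, which is a space of dimension $\binom{m+d-2}{d-1}-k^{(d-1)/d}$; one new polynomial per branching step (with branching factor $2$ and $\Theta(dD)$ depth) never accumulates to a space of that size, so the recursion never becomes well-founded.

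The paper avoids both problems by decomposing coordinates as $\R^{m'}\times\R^{m-m'}$ and, for each $c$ in an $\eps'$-cover of the projection onto the first $m'$ coordinates, analyzing the linear substitution map $A_c\colon W\to\R_{[d-1]}[y]$, where $W\subset V$ is the subspace of polynomials degree $1$ in $x$ and degree $d-1$ in $y$. Because $W$ has codimension at most $k$ in a space of dimension $m'\cdot\dim\R_{[d-1]}[y]$, for most (``good'') $c$ the map $A_c$ has many large singular values, whose singular vectors furnish an \emph{entire subspace} of degree-$(d-1)$ polynomials of codimension at most $k'\approx k^{(d-1)/d}$ that nearly vanish on $\{c\}\times\R^{m-m'}$; that is what makes the $d\to d-1$ recursion close. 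The remaining ``bad'' $c$ are handled by showing (via a Gram-matrix determinant bound, Lemma~\ref{lem:hyperplane-close-bad}) that they all lie near a subspace of dimension $O(k/k')$, which gives the $m\to m-m'+2k/k'$ recursion. Your proposal as written does not reach either of these two structural facts and would need to be substantially redesigned to do so.
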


\noindent See Theorems~\ref{thm:main-cover} and~\ref{thm:main-cover-alg} for more detailed formal statements.

\paragraph{Very Brief Proof Overview.}
The proof of Theorem \ref{thm:main-cover-informal} is elementary, but quite technically involved. 
At a very high level, we consider what happens when we fix the first $m'$ coordinates of a point $x \in \R^m$. 
Plugging in these values will change $V$ from a space of polynomials in $m$ variables to a space of polynomials in $m-m'$ variables. 
Since the latter space is much smaller, generically we should expect that this restriction of $V$ produces 
a very large space of such polynomials, implying (by way of an inductive application of our theorem) 
that there are very few ways to fill in the remaining coordinates and still lie in $S$. This will hold unless the chosen values 
satisfy the unusual property that when plugged into polynomials of $V$ they cause many of them to vanish or nearly vanish. 
We show that this circumstance is in fact rare by showing that all points for which this holds must lie near a low-dimensional hyperplane. 
By restricting our functions to this hyperplane, we can again use our theorem inductively to handle these bad points.

\paragraph{Discussion.}
It is instructive to consider Theorem~\ref{thm:main-cover-informal} in the special case where $\delta=0$. 
Here $S$ is the intersection of a variety $\mathbb{V}$ with a ball of $\ell_2$-radius $R$, 
and we are asking the natural question of how many balls are needed to cover the real points of an algebraic variety. 
For sufficiently nice varieties, we should expect to have a cover of size approximately $O(R/\eps)^{\dim(\mathbb{V})}$. 
The constraint that the generating set $V$ is so large does imply strong bounds on the dimension of $\mathbb{V}$. 
In particular, the fact that $V$ has codimension $k$ implies that the restriction of the space of degree-$d$ 
homogenous polynomials to $\mathbb{V}$ has dimension at most $k$, which in turn implies that
$\binom{\dim(\mathbb{V})+d-1}{d} \leq k$, and therefore $\dim(\mathbb{V}) = O(dk^{1/d})$. 
Note that this bound is actually tight in the case that $\mathbb{V}$ is a hyperplane, 
thus requiring covers of size $(R/\eps)^{\Omega(d k^{1/d})}$ (for $d\ll \log(k)$) even in the $\delta=0$ case. 

The above argument allows one to show that the dependence of our cover size upper bound on $R/\eps$ is approximately 
best possible, as the dimension should equal the metric dimension which is the limit of the logarithm 
of the cover size over $\log(1/\eps)$. However, in order to prove this for finite values of $(R/\eps)$, 
one needs to have information not just about the dimension of $\mathbb{V}$, 
but also about the geometric complexity of the variety. 
It is perhaps not surprising that such bounds can be obtained (for example because the codimension of $V$ should 
bound the {\em degree} of the variety $\mathbb{V}$), but it seems technically highly non-trivial to do so. 
Further technical complications arise when one considers the case of $\delta>0$, i.e., 
one needs to consider points that are merely \emph{close} to satisfying the equations in 
$\mathbb{V}$.

Another instructive example here is the case where $d=1$. 
In this case, $V$ is a space of linear functions that all vanish on a hyperplane $H$ with dimension at most $k$. 
It is easy to see that only points within distance $\delta$ of $H$ will lie in $S$, 
thus making it easy to produce a cover of size $O(R/\eps)^k$. Given the way that we will use 
Theorem \ref{thm:main-cover-informal} in our applications, the degree-$1$ case will end up looking 
very similar to the dimension reduction techniques already known for many of these problems. 
These techniques involve computing the second moments of the object in question 
and noting that the second moment matrix will have small singular values in directions 
perpendicular to the span of the $k$ hidden parameters. This provides us with an $(m-k)$-dimensional 
subspace of directions on which none of the (significant) parameters has a large projection, 
allowing one to find a subspace $H$ that nearly passes through all of them. 
From this point, one can usually reduce to a $k$-dimensional problem by restricting to or projecting onto $H$. 

In our setting, instead of computing second moments, we compute degree-$2d$ moments. 
This allows us to compute not just linear functions that nearly vanish on our points, 
but many functions of degree up to $d$. This gives us a much-smaller dimensional variety which our points must lie near. 
Unfortunately, since this new variety is potentially much more complicated than a subspace, we cannot generally 
project onto it and reduce to a lower dimensional version of the same problem. However, Theorem~\ref{thm:main-cover-informal} 
will allow us to find a small cover of this variety, which can then be used in a brute force manner to solve many of our problems.

More formally, by computing the first $2d$ moments of our distribution, 
we can solve some equations to compute the first $2d$ moments of our parameters. 
This will allow us to approximate the values of $\sum_{i=1}^k q(v_i)$ for any degree-$2d$ polynomial $q$. 
In particular, we look for degree-$d$ polynomials $p$ for which $\sum_{i=1}^{k} p^2(v_i)$ is small. 
We note that this will hold if and only if $p$ nearly vanishes on all $v_i$. 
However, we are guaranteed that a large space of such polynomials will exist 
and we can find it by an appropriate singular value decomposition. 
These $p$'s will provide the subspace $V$ needed by Theorem \ref{thm:main-cover-informal}, 
which in turn will provide us with a small cover $\mathcal{C}$. The elements of $\mathcal{C}$ 
can be thought of as hypotheses for our parameters, and we are guaranteed that each $v_i$ 
will be close to at least one of our hypotheses. From this point, we can make use 
of various algorithms to solve our problem that will run in time polynomial in the cover size.

\subsection{Applications: Learning Latent Variable Models} \label{ssec:intro-apps}

\new{In this section, we present some algorithmic applications of our main result
to the problem of learning various latent variable models.  
We illustrate the power of our techniques by focusing on a small 
set of learning tasks. For each of these tasks, 
we obtain significantly more efficient algorithms compared to prior work. 
We expect that the algebraic geometry tools introduced in this work are applicable to several other learning tasks. 
This is left as an interesting direction for future work.}


\subsubsection{Learning Mixtures of Spherical Gaussians}
A $k$-mixture of spherical Gaussians is a distribution on $\R^m$
with density function $F(x) = \sum_{j=1}^k w_j N(\mu_j,  I)$,
where $\mu_j \in \R^m$ are the unknown mean vectors and $w_j \geq 0$,
with $\sum_{j=1}^k w_j = 1$, are the mixing weights. We assume that the components
have the same known covariance matrix, which we can take for simplicity to be the identity matrix.

We will consider both density estimation and parameter estimation.
In density estimation, we want to output a hypothesis distribution with small total variation
distance from the target. In parameter estimation, we assume that the component means
are sufficiently separated,  and the goal is to recover the unknown mixing
weights and mean vectors to small error.

\paragraph{Prior Work on Learning Mixtures of Spherical Gaussians}
Gaussian mixture models are one of the most extensively studied latent variable models,
starting with the pioneering work of Karl Pearson~\cite{Pearson:94}.
In this paper, we focus on the important special case where each component
is spherical. Here we survey the most relevant prior work on density estimation and parameter estimation
for this distribution family.

In density estimation, the goal is to output some hypothesis that is close to the unknown mixture in total variation distance.
Density estimation for mixtures of spherical Gaussians in both low and high dimensions
has been studied in a series of works~\cite{FOS:06, MoitraValiant:10,CDSS13, CDSS14, SOAJ14,
DK14, BhaskaraSZ15,  HardtP15, ADLS15, DKKLMS16, LiS17, AshtianiBHLMP18}.
The sample complexity of this learning task for $k$-mixtures on $\R^m$, for variation distance error $\eps$,
is easily seen to be $\poly(mk/\eps)$, and a nearly tight bound of $\tilde{\Theta}(m k/\eps^2)$
was recently shown~\cite{AshtianiBHLMP18}. Unfortunately, all previous algorithms for this learning problem
have running times that scale exponentially with the number of components $k$.
Specifically,~\cite{SOAJ14} gave a proper density estimation algorithm that uses
$\poly(mk/\eps)$ samples and runs in time $\poly(mk/\eps)+m^2 (k/\eps)^{O(k^2)}$.

In parameter estimation, the goal is to output the parameters of the data generating distribution, up to small error.
For this problem to be information-theoretically solvable with polynomial sample complexity,
some further assumptions are needed. The typical assumption involves some kind of pairwise separation
between the component means. The algorithmic problem of parameter estimation for high-dimensional
Gaussian mixtures under separation conditions was first studied by Dasgupta~\cite{Dasgupta:99}, followed by a long series
of works~\cite{AroraKannan:01, VempalaWang:02, AchlioptasMcSherry:05, KSV08, BV:08, RV17-mixtures,
HopkinsL18, KS17, DiakonikolasKS18-mixtures}.
For the simplicity of this discussion, we focus on the case of uniform mixtures with identity covariance components.
\cite{RV17-mixtures} showed that, in order for the problem to be information-theoretically solvable
with $\poly(m, k)$ samples, the minimum pairwise $\ell_2$-mean separation should be $\Theta(\sqrt{\log k})$.
Subsequently, three independent works~\cite{HopkinsL18, KS17, DiakonikolasKS18-mixtures} gave parameter estimation
algorithms with sample complexities and running times $\poly(m, k^{\polylog(k)})$
that succeed under the optimal separation of $\Theta(\sqrt{\log k})$.

Finally, a related line of work~\cite{HK, BhaskaraCMV14, AndersonBGRV14, GeHK15}
studied parameter estimation in a smoothed-like setting, where (instead of separation conditions)
one makes certain condition number assumptions about the parameters. These results
are incomparable to ours, as we make no such assumptions.

We are now ready to state our algorithmic contributions 
for this problem.
For the task of density estimation, we prove:

\begin{theorem}[Density Estimation for Spherical $k$-GMMs]\label{thm:gmm-density-informal}
There is an algorithm that on input $\eps>0$, and
$\tilde{O}(m^2) \poly(k/\eps) + (k/\eps)^{O(\log^2 k)}$ samples
from an unknown $k$-mixture of spherical Gaussians $F$ on $\R^m$, it
runs in time $\poly(m k/\eps)+ (k /\eps)^{O(\log^2 k)}$ and outputs a hypothesis distribution $H$ such that
with high probability $\dtv(H, F) \leq \eps$.
\end{theorem}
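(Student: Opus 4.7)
}
The plan is to follow the recipe outlined in Section~\ref{ssec:intuition}, using Theorem~\ref{thm:main-cover-informal} to construct a small set of candidate Gaussian centers and then selecting mixture weights via convex programming. First, I would perform a dimension reduction to replace the ambient dimension $m$ by $k$. Writing the target as $F = D \ast G$ with $G \sim N(0,I)$ and $D$ uniform over the $\mu_j$'s weighted by $w_j$, the covariance of $F$ satisfies $\Sigma_F - I = \Sigma_D$, whose range has dimension at most $k$. Using $\tilde O(m^2)\poly(k/\eps)$ samples, I would estimate $\Sigma_F$ in spectral/Frobenius norm, threshold its eigenvalues, and project onto the top-$k$ eigenspace $W$. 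A standard argument shows that the component means with non-negligible weight lie within $\eps$ in $\ell_2$ of $W$; after translating so that the empirical mean is near zero, the problem reduces to a $k$-mixture of spherical Gaussians in $\R^{k'}$ with $k' \leq k$, up to $\eps$-error in total variation.

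Next, fix the degree parameter $d = \Theta(\log k)$, so that $k^{1/d} = O(1)$ and hence the cover bound from Theorem~\ref{thm:main-cover-informal} is $(k/\eps)^{O(d^2 k^{1/d})} = (k/\eps)^{O(\log^2 k)}$. I would draw an additional $(k/\eps)^{O(\log^2 k)}$ samples and empirically estimate every monomial moment of $F$ up to degree $2d$; since $\binom{k+2d}{2d} = k^{O(\log k)}$, each moment can be approximated to high accuracy with this budget. The moments of $F$ determine the moments of $D$ by de-convolution (the Hermite expansion of Gaussian convolution gives explicit linear formulas), so I obtain accurate estimates $\wh M_q \approx \E_{x \sim D}[q(x)] = \sum_j w_j q(\mu_j)$ for every $q$ of degree $\leq 2d$. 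I would then form the positive-semidefinite quadratic form $\wh Q$ on $\R_{[d]}[x_1,\ldots,x_{k'}]$ given by $\wh Q(p,p) \approx \E_{x\sim D}[p(x)^2]$, take an SVD, and let $V$ be the span of singular vectors with singular value below a threshold $\delta$. Since $D$ is supported on at most $k$ points, the true form has rank $\le k$, so $V$ has codimension at most $k$ in $\R_{[d]}[x_1,\ldots,x_{k'}]$.

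For any component $\mu_j$ with $w_j \geq \eps/k$ and any $p \in V$ of unit norm, $|p(\mu_j)| \leq \sqrt{\wh Q(p,p)/w_j} \leq \delta \sqrt{k/\eps}$, so (after a crude truncation that confines the relevant means to an $\ell_2$-ball of radius $R = \poly(k/\eps)$, using tail bounds on $F$) each such $\mu_j$ lies in the set $S(V, R, \delta')$ from Theorem~\ref{thm:main-cover-informal} with $\delta' = \delta\sqrt{k/\eps}$. Invoking that theorem yields an $(\eps/k)$-cover $\mathcal{C}$ of $S$ of size $M = (k/\eps)^{O(\log^2 k)}$, computable in time $\poly(M)$. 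Each significant $\mu_j$ is within $\ell_2$-distance $\eps/k$ of some $c_j \in \mathcal{C}$, so $F$ is within $O(\eps)$ in total variation of some mixture supported on $\{N(c, I) : c \in \mathcal{C}\}$ (using the standard bound $\dtv(N(\mu,I),N(\mu',I)) = O(\|\mu-\mu'\|_2)$ and absorbing the small-weight components into the $\eps$ error). Finally, I would apply Proposition~\ref{mixtureProp} to select nonnegative weights over this $M$-element hypothesis family that best fit samples drawn from $F$, obtaining a hypothesis $H$ with $\dtv(H,F) = O(\eps)$; rescaling $\eps$ gives the claim.

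The main obstacles are quantitative rather than conceptual. The trickiest step is choosing the threshold $\delta$ and the sample size so that (i) the empirical quadratic form $\wh Q$ approximates the true one well enough that the resulting subspace $V$ truly nearly vanishes on every heavy mean, while (ii) $\delta$ stays within the regime where Theorem~\ref{thm:main-cover-informal} produces an $(\eps/k)$-cover. This requires tracking error propagation through moment estimation, Hermite de-convolution (whose conditioning depends on $d$), and the SVD, all in dimension $k'\leq k$ with degree $d = \Theta(\log k)$; the $\poly(k^d) = k^{O(\log k)}$ blow-ups are what force the $(k/\eps)^{O(\log^2 k)}$ sample bound. A secondary issue is the initial boundedness reduction: I must confine attention to means of magnitude $\poly(k/\eps)$ and argue that components violating this contribute negligible mass to $F$, which follows from Gaussian concentration applied to the empirical mean and covariance estimates.
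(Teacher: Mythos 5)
Your high-level plan matches the paper's: reduce to $O(k)$ dimensions, estimate low-degree moments of the deconvolved discrete distribution using Hermite polynomials, compute the near-kernel $V$ of the empirical quadratic form, apply Theorem~\ref{thm:main-cover-informal} to obtain a cover of candidate centers, and finish with Proposition~\ref{mixtureProp}. The dimension-reduction and SVD steps are essentially Proposition~\ref{dimensionReductionProp} and Proposition~\ref{coverApplicationProp} in the paper, so you are on the right track.

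There is, however, a genuine gap in how you propose to bound the radius $R$ of the (relevant) component means. You assert that after recentering on the empirical mean, the significant $\mu_j$ lie in a ball of radius $\poly(k/\eps)$, and that components violating this ``contribute negligible mass'' to $F$ by Gaussian concentration. That claim is false: a component with mean of Euclidean norm $10^{10}$ can carry weight $1/2$, and two such components with opposite means have overall empirical mean near zero while remaining at distance $2\cdot 10^{10}$. Gaussian concentration tells you only that samples from a fixed component lie within $O(\sqrt m)$ of that component's mean; it says nothing about the size of the means themselves. This matters quantitatively at two places: the variance of the Hermite-based moment estimators in Lemma~\ref{lem:gmm-moment-est} grows like $(Rmd)^{O(d)}$, and the cover size in Theorem~\ref{thm:main-cover} grows like $(R/\eps)^{O(d^2 k^{1/d})}$, so an unbounded $R$ blows up both the sample complexity and the cover (and hence runtime) bounds.

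The paper's fix is the rough clustering step (Lemma~\ref{lem:gmm-rough-clustering}): cluster $O(k^2 N)$ samples by chains of pairwise-close points (threshold $\Theta(\sqrt m + \log(1/\eta))$), assign each of $N$ fresh samples to the cluster of its nearest prior sample, and subtract the cluster center. This produces i.i.d.\ samples from a $k$-GMM whose means all have $\ell_2$-norm $O\bigl(k(\sqrt m + \log(Nk/\eps))\bigr)$, with no assumption on the original means. This clustering is invoked twice --- once before the dimension reduction (Lemma~\ref{GMMDimReductionLem}) and once again after the projection to $O(k)$ dimensions (inside Lemma~\ref{GMMClusteringLemma}) so that the final radius is $\poly(k,\log(1/\eps))$ rather than $\poly(m)$. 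Without this mechanism, or some substitute achieving the same radius bound, your argument does not produce the stated $(k/\eps)^{O(\log^2 k)}$ bound on sample complexity or cover size.
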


\noindent (See Theorem~\ref{thm:gmm-density-est} for a more detailed formal statement.)
Prior to this work, the fastest known algorithm for this learning problem had running time
exponential in $k$, in particular $\poly(m) (k/\eps)^{O(k^2)}$~\cite{SOAJ14}. Interestingly,
our density estimation algorithm is not proper. The hypothesis $H$ it outputs is an $\ell$-mixture
of identity covariance Gaussians, where $\ell \gg k$.

For the task of parameter estimation, we prove:

\begin{theorem}[Parameter Estimation for Spherical $k$-GMMs]\label{thm:gmm-param-informal}
There is an algorithm that on input $\eps>0$, $d \in \Z_+$, and
$N = \tilde{O}(m^2) \poly(k)+ \poly(k/\eps) + k^{O(d)} $ samples
from a uniform $k$-mixture $F = (1/k) \sum_{i=1}^k N(\mu_i, I)$ on $\R^m$
with pairwise mean separation $\Delta = \min_{i \neq j} \|\mu_i - \mu_j\|_2 \geq C \sqrt{\log k}$,
where $C$ is a sufficiently large constant, 
the algorithm runs in time $\poly(N)+k^{O(d^2 k^{1/d})}$, and outputs a list of candidate means
$\widetilde{\mu}_i$ such that with high probability we have that $\|\mu_i - \widetilde{\mu}_{\pi(i)} \| \leq \eps$, $i \in [k]$,
for some permutation $\pi\in \mathbf{S}_k$.
\end{theorem}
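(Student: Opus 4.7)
The plan is to instantiate the general recipe sketched in Section~\ref{ssec:intuition} for this concrete problem in four stages: (i) estimate the low-order moments of the hidden ``mean distribution'' $D = (1/k)\sum_{i=1}^k \delta_{\mu_i}$ by de-convolving the Gaussian component; (ii) extract a codimension-$k$ subspace $V \subset \R_{[d]}[x_1,\ldots,x_m]$ on which $\E[p^2(D)]$ is almost zero; (iii) feed $V$ into Theorem~\ref{thm:main-cover-informal} to obtain a small cover $\mathcal{C}$ of the component means; and (iv) use the $\Omega(\sqrt{\log k})$ pairwise separation together with the clustering/refinement tools of~\cite{DiakonikolasKS18-mixtures} to turn $\mathcal{C}$ into an $\eps$-accurate list of means.

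\textbf{Moment de-convolution and near-vanishing subspace.} Since $F = D \ast N(0,I)$, the Hermite identities express $\E_D[x^{\otimes j}]$ as explicit linear combinations of $\E_F[x^{\otimes j'}]$, $j' \le j$. From $\tilde O(m^2)\poly(k) + k^{O(d)}$ samples, standard tensor concentration yields estimates $\wh{\E}[p^2(D)]$ that are uniformly $\delta^2$-accurate over all unit-coefficient-norm homogeneous degree-$d$ polynomials $p$, for a parameter $\delta$ calibrated below; along the way, the estimated second-moment matrix (rank $\le k$) lets us restrict to an $O(k)$-dimensional subspace containing every $\mu_i$, absorbing the $\tilde O(m^2)\poly(k)$ sample term and leaving us in the case $m \le k$. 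We then diagonalize the PSD form $q \mapsto \wh{\E}[q^2(D)]$ on $\R_{[d]}[x_1,\ldots,x_m]$ and take $V$ to be the span of all eigenvectors with eigenvalue at most $C\delta^2$. Because $D$ is supported on $k$ points, the exact form $q \mapsto \E[q^2(D)]$ already has a codimension-$\le k$ kernel, so by the moment accuracy, $V$ has codimension at most $k$. For every $p \in V$, the identity $\E[p^2(D)] = (1/k)\sum_i p(\mu_i)^2$ combined with Markov gives $|p(\mu_i)| \le O(\sqrt{k})\,\delta\,\|p\|_{\ell_2}$, so every $\mu_i$ lies in the set $S\!\left(V, R, O(\sqrt{k}\,\delta)\right)$ of Theorem~\ref{thm:main-cover-informal} with an a priori bound $R = O(\sqrt{k\log k})$. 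Invoking that theorem at cover radius $\eps_0 = c\Delta$ (for a small absolute constant $c$) produces in $\poly(M)$ time a set $\mathcal{C}$ of size $M = k^{O(d^2 k^{1/d})}$ such that every $\mu_i$ is within $\eps_0$ of some element of $\mathcal{C}$.

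\textbf{Clustering and refinement.} Since $\Delta \ge C\sqrt{\log k}$ is much larger than $\eps_0$, the cover $\mathcal{C}$ naturally partitions into $k$ clusters, each hugging a true mean. For every $c \in \mathcal{C}$ we run the single-component mean-refinement subroutine of~\cite{DiakonikolasKS18-mixtures}: for $c$ close to $\mu_i$, a fresh sample from the $i$-th component has likelihood under $N(c, I)$ larger by a factor $k^{\Omega(1)}$ than under $N(c', I)$ for any $c'$ close to a different $\mu_j$, because the log-likelihood gap concentrates around $\Delta^2/2 = \Omega(\log k)$. Soft-assigning the $\poly(k/\eps)$ refinement samples and averaging therefore yields an $\eps$-accurate estimate of $\mu_i$; de-duplicating outputs that agree within $o(\Delta)$ returns exactly $k$ estimates $\widetilde{\mu}_{\pi(i)}$ satisfying $\|\mu_i - \widetilde{\mu}_{\pi(i)}\|_2 \le \eps$. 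The total running time is $\poly(N) + M \cdot \poly(k/\eps) = \poly(N) + k^{O(d^2 k^{1/d})}$, matching the claimed bound.

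\textbf{Main obstacle.} The delicate point is the joint calibration of the three parameters $\delta$, $C\delta^2$, and $\eps_0$. We need $\delta$ small enough that $V$ really has codimension at most $k$ and that the $\sqrt{k}$-factor slack propagated through the set $S(V, R, \delta')$ still gives a cover at radius $\eps_0 \le \Delta/2$, yet large enough that the sample size to estimate $\E[p^2(D)]$ uniformly over $p$ of unit coefficient norm stays at $k^{O(d)} + \poly(k/\eps)$; bookkeeping for the de-convolution step (which inflates polynomial norms by at most $\poly(d)$ factors) and for the passage from average smallness of $p(\mu_i)^2$ to per-point smallness of $|p(\mu_i)|$ is routine but must be done with care. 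Once these parameters are chosen, the geometric heart of the argument is black-boxed from Theorem~\ref{thm:main-cover-informal} and the final clustering is black-boxed from~\cite{DiakonikolasKS18-mixtures}.
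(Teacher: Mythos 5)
Your stages (i)–(iii)—Hermite deconvolution of the mean moments, extraction of the near-kernel of the PSD form $q\mapsto\wh\E[q^2(D)]$ (i.e., Proposition~\ref{coverApplicationProp}), dimension reduction to $O(k)$ dimensions, and covering via Theorem~\ref{thm:main-cover-informal}—track the paper's proof of Theorem~\ref{thm:gmm-param} closely, and the parameter bookkeeping you flag as the ``main obstacle'' is indeed routine once set up. The genuine gap is in stage (iv).

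You assert that ``the cover $\mathcal{C}$ naturally partitions into $k$ clusters, each hugging a true mean.'' This is false as stated: Theorem~\ref{thm:main-cover-informal} only guarantees that \emph{every} $\mu_i$ has some cover point within $\eps_0$, not that every $c\in\mathcal{C}$ lies near some $\mu_i$. The cover has size $M = k^{O(d^2k^{1/d})}$, and the overwhelming majority of its $M$ points are \emph{spurious}—sitting on the near-zero set $S(V,R,\delta)$ but far from every true component mean. Running a ``single-component mean-refinement subroutine'' on each $c\in\mathcal{C}$ therefore produces $M$ outputs, of which $k$ are useful and the rest are garbage, and de-duplicating outputs that agree within $o(\Delta)$ does not separate garbage from signal because garbage outputs need not agree with anything. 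You never specify a test for which $c$ actually sit near a true mean, and without one the list you return could contain up to $M-k$ false positives (or, after arbitrarily discarding, miss true means). This is precisely the step the paper spends most of its effort on: for each $c\in\mathcal{C}$ it solves a linear program over sample weight functions $u$ to certify $c$ as ``good'' (roughly: there is a $p_{\min}/2$-mass of samples consistent with $c$ and not deflected toward any competing $c'$), proves via a double-counting/pigeonhole argument that at most $O(1/p_{\min}) = O(k)$ pairwise $4\sqrt{\log k}$-separated points can be good, extracts such a maximal separated subset $\mathcal{C}'$ of size $O(k)$, and only then clusters fresh samples by nearest element of $\mathcal{C}'$ and uses the cluster means as a warm start for the refinement of~\cite{RV17-mixtures}. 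Your likelihood-gap heuristic points in the right direction but is not a substitute for this filtering argument; making it rigorous is essentially equivalent to re-deriving the list-decodable mean estimation machinery you are trying to black-box.
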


\noindent (See Theorem~\ref{thm:gmm-param} for a more detailed statement handling non-uniform mixtures as well.)
Prior to this work,~\cite{HopkinsL18, KS17, DiakonikolasKS18-mixtures} gave algorithms for this problem
with sample complexities and runtimes $\poly(m/\eps, k^{\polylog(k)})$. Our algorithm
provides a tradeoff between sample complexity and running time (by increasing the parameter $d$ from constant to $\log k$).
In particular, for $d=\log k$, the algorithm of Theorem~\ref{thm:gmm-param-informal} matches 
the best known (quasi-polynomial in $k$) sample complexity and runtime.
More importantly, by taking $d$ to be a large universal constant, we obtain an algorithm
with polynomial sample complexity $\poly(m/\eps) k^{c}$, $c>0$, and sub-exponential time $\poly(m/\eps) 2^{O_c(k^{1/c})}$.
No algorithm with polynomial sample complexity and $2^{o(k)}$ time was previously known under 
any $\polylog(k)$ separation.

\paragraph{Additional Discussion.} In this paragraph, we provide two remarks that are useful
to put our algorithmic contributions (Theorems~\ref{thm:gmm-density-informal} and~\ref{thm:gmm-param-informal}) in context.

\cite{DKS17-sq} gave a Statistical Query (SQ) lower bound of $m^{\Omega(k)}$ on the complexity of density estimation
for $k$-mixtures of Gaussians in $\R^m$. The hard instances constructed in that work are far from spherical.
A question posed in~\cite{DKS17-sq} was whether $2^{k^c}$, for some constant $0<c<1$,
or even $k^{\omega(1)}$ SQ lower bounds  can be shown for learning $k$-mixtures of spherical Gaussians.
The algorithmic results of this paper were inspired by our unsuccessful efforts to prove such lower bounds.
In particular, an SQ lower bound of the form $2^{k^c}$ is ruled out by Theorem~\ref{thm:gmm-density-informal}. 
An SQ lower bound of the form $k^{\omega(1)}$ is still possible, in principle. 
Given our quasi-polynomial upper bound, it is a plausible
conjecture that a $\poly(k)$ time algorithm is attainable.

The list-decodable Gaussian mean estimator of~\cite{DiakonikolasKS18-mixtures}, with runtime $m^{O(\log(1/\alpha))}$,
combined with a known dimension-reduction~\cite{VempalaWang:02} and a post-processing
clustering step, gives a $\poly(m/\eps, k^{\log k})$ sample and time algorithm
for parameter learning of spherical $k$-GMMs, under the information-theoretically optimal mean separation. Due to an SQ lower bound shown in~\cite{DiakonikolasKS18-mixtures} for list-decodable mean estimation, Theorem~\ref{thm:gmm-param-informal} cannot be obtained via a reduction to list-decoding.




\subsubsection{Learning One-hidden-layer ReLU Networks}
A one-hidden-layer ReLU network with $k$ hidden units is any function $F: \R^m \to \R$
that can be expressed in the form $F(x) = \sum_{i=1}^k a_i \relu (w_i \cdot x)$, for some unit vectors $w_i \in \R^m$
and $a_i \in \R_+$, where $\relu(t) = \max\{0, t\}$, $t \in \R$. We will denote by $\mathcal{C}_{m, k}$
the class of all such functions.

The PAC learning problem for $\mathcal{C}_{m, k}$ is the following:
The input is a multiset of i.i.d. labeled examples $(x, y)$, where $x \sim N(0,I)$ and $y = F(x)+\xi$,
for some $F \in \mathcal{C}_{m, k}$ and $\xi \sim N(0,\sigma^2)$, with $\xi$ independent of $x$.
We will call such an $(x,y)$ a {\em noisy sample} from $F$.
The goal is to output a hypothesis $H: \R^m \to \R$ that with high probability is close to $F$ in $L_2$-norm.

\paragraph{Prior Work on Learning One-hidden-layer ReLU Networks}
In recent years, there has been an explosion of research
on provable algorithms for learning neural networks
in various settings, see, e.g.,~\cite{Janz15, SedghiJA16, DanielyFS16, ZhangLJ16,
ZhongS0BD17, GeLM18, GeKLW19, BakshiJW19, GoelKKT17, Manurangsi18, GoelK19, VempalaW19} for
some works on the topic.
Many of these works focused on parameter learning---the problem of
recovering the weight matrix of the data generating neural network.
We also note that PAC learning of simple classes of neural networks has been studied
in a number of recent works~\cite{GoelKKT17, Manurangsi18, GoelK19, VempalaW19}.

The work of~\cite{GeLM18} studies the parameter learning of positive linear combinations of ReLUs
under the Gaussian distribution in the presence of additive noise.
It is shown in~\cite{GeLM18}  that the parameters can be approximately
recovered efficiently, under the assumption that the weight matrix is full-rank with bounded condition number.
The sample complexity and running time of their algorithm scales polynomially with the condition number.
More recently,~\cite{BakshiJW19, GeKLW19} obtained efficient parameter learning algorithms for vector-valued
depth-$2$ ReLU networks under the Gaussian distribution. Similarly, the algorithms in these works
have sample complexity and running time scaling polynomially with the condition number.

In contrast to parameter estimation, PAC learning one-hidden-layer ReLU networks
does not require any assumptions on the structure of the weight matrix. The PAC learning
problem for this class is information-theoretically solvable with polynomially
many samples. The question is whether a computationally efficient algorithm exists.
Until recently, the problem of PAC learning positive linear combinations of ReLUs
had remained open, even under Gaussian marginals and for $k=3$, and had been posed as an open problem
by Klivans~\cite{Kliv17}. Recent work~\cite{DKKZ20} gave the first non-trivial PAC algorithm for this problem. The algorithm
in~\cite{DKKZ20} uses $\poly(mk/\eps)$ samples, and has runtime $\poly(mk/\eps)+(k/\eps)^{O(k^2)}$.

\smallskip

Our main result for this learning problem is the following:

\begin{theorem}[PAC Learning $\mathcal{C}_{m, k}$] \label{thm:relus-density-informal}
There is a PAC learning algorithm for $\mathcal{C}_{m, k}$ with respect to $N(0, I)$
with the following performance guarantee:  Given $\eps>0$, and $O(m^2k^2/\eps^6) + (k/\eps)^{O(\log k)}$
noisy samples from an unknown $F \in \mathcal{C}_{m, k}$, the algorithm runs in time
$\poly(mk/\eps)+(k/\eps)^{O(\log^2 k)}$, and outputs a hypothesis $H: \R^m \to \R$
that with high probability satisfies $\|H-F\|_2^2 \leq \eps^2 (\|F\|_2^2+\sigma^2)$.
\end{theorem}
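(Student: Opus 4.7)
The plan follows the common recipe outlined in Section~\ref{ssec:intuition}, with Hermite analysis supplying access to polynomial moments of the hidden weight vectors. Writing $F(x) = \sum_{i=1}^k a_i \relu(w_i \cdot x)$, the nonzero even-degree Hermite coefficients $c_{2D}$ of $\relu$ imply that the symmetric tensor moment $T_{2D} \eqdef \E_{x \sim N(0,I)}[F(x) \, H_{2D}(x)] = c_{2D} \sum_{i=1}^k a_i w_i^{\otimes 2D}$, where $H_{2D}$ is the degree-$2D$ probabilist's Hermite tensor. Because the noise $\xi$ is mean-zero Gaussian independent of $x$, $T_{2D}$ can be estimated from noisy samples by empirical averaging, with variance scaling with $\sigma^2 + \|F\|_2^2$. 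Given (an approximation to) $T_{2D}$, contracting against two symmetric tensors representing homogeneous degree-$D$ polynomials $p, q$ recovers the bilinear form $\sum_i a_i p(w_i) q(w_i)$ up to the known factor $c_{2D}$ and estimation error.

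To avoid an $m^{2D}$ sample complexity blowup, I would first estimate $T_2$ alone, which uses $\tilde{O}(m^2)\poly(k/\eps)$ samples. Since $T_2 = c_2 \sum_i a_i w_i w_i^\top$ has rank at most $k$, its top eigenspace $W \subseteq \R^m$ of dimension at most $k$ contains every significant $w_i$ up to small error (standard dimension reduction, as discussed for the $d=1$ case in Section~\ref{ssec:intuition}). All subsequent computations are performed within $W$, replacing $m$ by $k$ in sample complexities. Next, set $D = \Theta(\log k)$ and estimate $T_{2D}$ restricted to $W$ using $(k/\eps)^{O(D)} = (k/\eps)^{O(\log k)}$ samples. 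The resulting empirical bilinear form on $\R_{[D]}[x_1, \ldots, x_k]$ is positive semidefinite with rank at most $k$, so the subspace $V$ of polynomials $p$ for which $\sum_i a_i p(w_i)^2$ is small has codimension at most $k$. Every non-negligibly weighted $w_i$ lies in the near-zero set $S(V, R, \delta)$ of Theorem~\ref{thm:main-cover-informal} for appropriate $R$ and $\delta$. Applying that theorem with degree $D = \Theta(\log k)$ yields an $\eps'$-cover $\mathcal{C} \subseteq W$ of size $(k/\eps')^{O(D^2 k^{1/D})} = (k/\eps')^{O(\log^2 k)}$, since $k^{1/\log k} = O(1)$, computable in time polynomial in its size.

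To convert the cover into a hypothesis, I exploit that $\relu$ is $1$-Lipschitz: each non-negligibly weighted $w_i$ lies within $\eps'$ of some $\hat w_i \in \mathcal{C}$, so there exist nonnegative coefficients $\{b_{\hat w}\}_{\hat w \in \mathcal{C}}$ such that $\tilde F(x) \eqdef \sum_{\hat w \in \mathcal{C}} b_{\hat w} \relu(\hat w \cdot x)$ is $L_2$-close to $F$ (the contribution of small-$a_i$ components is absorbed into the error budget). Finding such coefficients reduces to nonnegative $L_2$ regression of the labels $y^{(j)}$ onto the finite feature family $\{\relu(\hat w \cdot x) : \hat w \in \mathcal{C}\}$, a convex program solvable in time polynomial in $|\mathcal{C}|$. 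Standard uniform convergence over this finite class shows that $(k/\eps)^{O(\log k)}$ additional samples suffice to certify $\|H - F\|_2^2 \leq \eps^2(\|F\|_2^2 + \sigma^2)$ with high probability.

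The main technical obstacle is quantitative: one must trade off the empirical estimation error on $T_{2D}$ (which couples $\sigma$ and $\|F\|_2$), the small coefficient $c_{2D}$, and the robustness parameter $\delta$ in Theorem~\ref{thm:main-cover-informal}, so that the near-zero set captures every relevant $w_i$ without inflating the cover beyond $(k/\eps)^{O(\log^2 k)}$. A secondary subtlety is handling components with $a_i$ below a suitable threshold: these need not be close to any element of $\mathcal{C}$, but their aggregate contribution to $\|F\|_2^2$ must be bounded by a sum-of-weights argument and folded into the final $\eps^2(\|F\|_2^2 + \sigma^2)$ guarantee.
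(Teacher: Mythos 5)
Your proposal follows the same strategy as the paper's proof (Section~\ref{sec:relu}, Theorem~\ref{thm:sum-relus}): Hermite analysis to extract $\sum_i a_i w_i^{\otimes 2d}$ from the degree-$2d$ Hermite coefficients (using the nonvanishing even-degree coefficients $c_{2d}$ of $\relu$), second-moment-based reduction to a $k$-dimensional subspace, application of Proposition~\ref{coverApplicationProp} at degree $d = \Theta(\log k)$ so that $k^{1/d}=O(1)$ and the cover has size $(k/\eps)^{O(\log^2 k)}$, and finally convex optimization (nonnegative $L_2$ regression onto $\{\relu(\hat w\cdot x)\}_{\hat w\in\mathcal{C}}$) to produce a non-proper hypothesis, with the $1$-Lipschitzness of $\relu$ guaranteeing that some nonnegative combination of cover elements approximates $F$ in $L_2$. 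The only cosmetic difference is your appeal to generic uniform convergence over the finite feature class, where the paper instead directly estimates the inner products $F\cdot f_i = \E[y\,f_i(x)]$ to accuracy $O(\eps^2)$ and minimizes the explicit quadratic $-2F\cdot F' + \|F'\|_2^2$; both reduce to a $\poly(|\mathcal{C}|)$-time convex program and give the same guarantee.
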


\noindent (See Theorem~\ref{thm:sum-relus} for a more detailed statement.)
Interestingly, our PAC learning algorithm is not proper. The hypothesis $H$ it outputs is a positive linear combination
of $\ell$ ReLUs, for some $\ell \gg k$.


Our algorithm establishing Theorem~\ref{thm:relus-density-informal} does not make crucial use of the assumption 
that the activation function is a ReLU. The only properties we require is that our activation function has 
bounded higher moments and non-vanishing even-degree Fourier coefficients. 
\new{We note that our algorithmic ideas can be extended to other activation functions
satisfying these properties (see Theorem~\ref{thm:glm}).}

\subsubsection{Learning Mixtures of Linear Regressions}

A $k$-mixture of linear regressions ($k$-MLR), specified by
mixing weights $w_i \geq 0$, where $\sum_{i=1}^k w_i=1$,
and regressors $\beta_i \in \R^m$, $i \in [k]$, is
the distribution $Z$ on pairs $(x, y) \in \R^m \times \R$, where $x \sim N(0, I)$
and with probability $w_i$ we have  that $y = \beta_i \cdot x + \nu$, where 
$\nu \sim N(0, \sigma^2)$ is independent of $x$.

We study both density estimation and parameter learning for $k$-MLRs.
For simplicity of the presentation, we will assume in this section that $\max_i \|\beta_i\|_2 \leq 1$
and that the mixing weights are uniform.

\paragraph{Prior Work on Learning Mixtures of Linear Regressions}
Mixtures of linear regressions are a natural probabilistic model introduced in~\cite{DV89, JJ94}
and have been extensively studied in machine learning. Prior work on this problem is quite
extensive. The reader is referred to Section~1.2 of~\cite{CLS20} for a detailed summary
of prior work on this problem. Here we focus on the prior work that is most closely related
to the results of this paper.

Most prior work on learning MLRs has focused on the parameter estimation problem.
A line of work (see, e.g.,~\cite{ZJD16, LL18, KC19} and references therein)
has focused on analyzing non-convex methods (including expectation maximization and alternating minimization).
These works establish local convergence guarantees: Given a sufficiently accurate solution (warm start),
these non-convex methods can efficiently boost this to a solution with arbitrarily high accuracy.
The focus of our algorithmic results in this section is to provide such a warm start. We note that the local convergence
result of~\cite{LL18} applies for the noiseless case, while the more recent result of~\cite{KC19} can handle non-trivial
regression noise when the weights of the unknown mixture are known.

The prior works most closely related to ours are~\cite{LL18, CLS20}. The work of \cite{LL18} focuses on the noiseless setting
($\sigma = 0$) and provides an algorithm with sample complexity and running time scaling exponentially with $k$.
The main bottleneck of their algorithm lies in a univariate parameter estimation step, which relies on the method
of moments and requires $k^{O(k)}$ samples and time. The recent work~\cite{CLS20} pointed out that the exponential
dependence on $k$ is inherent in this approach: One can construct a pair of $k$-MLRs whose moment tensors of
degree up to $\Omega(k)$ match, but their parameters are far from each other. \cite{CLS20} concludes that
``any moment-based estimator'' would therefore require runtime $\exp(\Omega(k))$. 
Our approach also uses moments, but exploits the underlying symmetry to circumvent this obstacle.

The fastest previously known algorithm for the parameter estimation problem of $k$-MLRs was given in~\cite{CLS20}.
This work circumvents the aforementioned exponential barrier by considering moments of carefully
chosen projections of the Fourier transform. Roughly speaking,~\cite{CLS20} gives algorithms whose sample complexity
and running time scales with $\exp(\tilde{O}(k^{1/2}))$. In more detail, for the noiseless ($\sigma = 0$)
and uniform weights case with separation $\Delta>0$, the algorithm of~\cite{CLS20} has sample complexity and runtime
of the form $\poly(m k /\Delta) \, (k \ln(1/\Delta))^{\tilde{O}(k^{1/2})}$. For the noisy case, when $\sigma = O(\eps)$ and the
weights are uniform, the algorithm of~\cite{CLS20} has sample complexity and runtime of the form
$\poly(m k /(\eps \Delta)) \, (k/\eps)^{\tilde{O}(k^{1/2}/\Delta^2)}$.

In summary, prior to this work, the best known learning algorithm for $k$-MLRs
had sample complexity and running time scaling exponentially with $k^{1/2}$~\cite{CLS20}.

We are now ready to state our results for this problem.
For density estimation, we show:


\begin{theorem}[Density Estimation for $k$-MLR]\label{thm:mlr-density-informal}
There is an algorithm that on input $\eps>0$, and
$N=\left(m^2\poly(k)+ k^{O(\log k)}\right) \tilde{O}(\log(1/\sigma)) +(k/\eps)^{O(\log^2 k)}$ samples
from an unknown $k$-MLR $Z$ on $\R^m \times \R$, it runs in $\poly(N)$ time and outputs a hypothesis distribution $H$ such
with high probability $\dtv(H, Z) \leq \eps$.
\end{theorem}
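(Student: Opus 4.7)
The plan is to follow the general recipe from Section~\ref{ssec:intuition}, instantiated for MLRs with $d = \Theta(\log k)$. First, I would perform dimension reduction: by estimating $\E[y^2 xx^\top]$ from $\tilde O(m^2)\poly(k/\eps)$ samples, one approximately identifies the second-moment matrix $M = \sum_i w_i \beta_i \beta_i^\top$, and projecting onto its top-$k$ eigenspace reduces the effective ambient dimension to $O(k)$. This accounts for the $\tilde O(m^2)\poly(k)$ sample overhead. Next, I would estimate the moments of the discrete measure $D = \sum_i w_i \delta_{\beta_i}$ up to order $2d$: by expanding $\E[y^j h_\alpha(x)]$ for Hermite polynomials $h_\alpha$ of total degree at most $2d$ and deconvolving the contribution of the Gaussian noise $\nu$, one obtains unbiased estimators for each moment tensor $\sum_i w_i \beta_i^{\otimes j}$, and hence for $\E_D[p^2]$ on any degree-$d$ polynomial $p$. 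A $\log(1/\sigma)$ factor enters through truncating $|y|$ at a polylogarithmic scale to control the variance of these estimators, and $k^{O(\log k)}\tilde O(\log(1/\sigma))$ samples suffice.

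The core algebraic step is to compute, via SVD of the PSD form $p \mapsto \E_D[p^2]$ on degree-$d$ polynomials in $\R^{O(k)}$, a codimension-at-most-$k$ subspace $V$ of polynomials that $\delta$-approximately vanish on each $\beta_i$ with non-negligible weight. Invoking Theorem~\ref{thm:main-cover-informal} with this $V$, radius $R = O(1)$, and an appropriately chosen target accuracy $\eps'$ yields a cover $\mathcal{C}$ of size $(k/\eps)^{O(d^2 k^{1/d})} = (k/\eps)^{O(\log^2 k)}$ in $\poly(|\mathcal{C}|)$ time, such that every significantly-weighted $\beta_i$ lies within $\eps'$ of some $\hat\beta \in \mathcal{C}$. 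I would then form the hypothesis class of $k$-sparse mixtures of conditional Gaussians $\{N(c \cdot x, \tilde\sigma^2): c \in \mathcal{C},\ \tilde\sigma \in \Sigma\}$, where $\Sigma$ is a logarithmically-sized grid of noise scales containing a close approximation to the true $\sigma$ (obtainable from $\Var(y)-\mathrm{tr}(M)$), and apply the convex-programming / Scheff\'e routine of Proposition~\ref{mixtureProp} to select near-optimal mixing weights and produce the final hypothesis $H$.

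To convert the $\ell_2$-approximation of regressors into a TV bound, I would use the elementary fact that for $x \sim N(0,I)$, $\E_x[\dtv(N(\beta\cdot x,\sigma^2), N(\beta'\cdot x,\sigma^2))] = O(\|\beta-\beta'\|_2/\sigma)$ (saturating at $O(1)$), together with a variance-inflation trick: replacing the hypothesis $N(\hat\beta\cdot x,\sigma^2)$ by $N(\hat\beta\cdot x,\sigma^2 + \|\beta-\hat\beta\|_2^2)$ absorbs the misalignment between $\beta$ and $\hat\beta$ cleanly even when $\sigma$ is small. The main obstacle I expect is the error-budget analysis needed to keep the $\sigma$-dependence of the cover step to $\polylog(1/\sigma)$ rather than $\poly(1/\sigma)$: this requires both a careful moment-deconvolution accounting (so that estimation errors scale additively rather than multiplicatively with $1/\sigma$), and a case analysis or variance-inflation argument so that $\ell_2$-accuracy on the $\beta_i$'s at the scale of $\eps$, rather than $\sigma\eps$, suffices for TV-closeness. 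A secondary technicality is handling components with $w_i = O(\eps/k)$, which can be dropped from the TV analysis but should not perturb the SVD-based identification of $V$.
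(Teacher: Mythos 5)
Your high-level architecture (moment estimation, SVD to find the near-vanishing space $V$, Theorem~\ref{thm:main-cover-informal} to get a cover, then Proposition~\ref{mixtureProp}) is the right toolbox, but there is a genuine gap that your proposal does not overcome: a \emph{single-shot} cover cannot achieve the stated sample complexity, and the fix you sketch for the $\sigma$-dependence does not work.

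The quantitative problem is that for a $k$-MLR, the TV contribution from replacing $\beta_i$ by $\hat\beta_i$ with $\|\beta_i-\hat\beta_i\|_2 = \Delta$ is
$\E_{x\sim N(0,I)}\big[\dtv\big(N(\beta_i\cdot x,\sigma^2),\,N(\hat\beta_i\cdot x,\sigma^2)\big)\big] = \Theta\big(\min(1,\Delta/\sigma)\big)$,
so one genuinely needs $\Delta \lesssim \eps\sigma$. Your variance-inflation trick, replacing $N(\hat\beta\cdot x,\sigma^2)$ by $N(\hat\beta\cdot x,\sigma^2+\Delta^2)$, does not avoid this: when $\sigma\ll\Delta$, for a typical $x$ the true conditional law is a spike of width $\sigma$ displaced by $\Theta(\Delta)$ from the center of your much wider hypothesis, and the conditional TV is $1-O(\sigma/\Delta)$, not $O(\Delta)$ — the misalignment is in the \emph{mean}, and broadening the variance does not realign it. So a one-shot cover has to be taken at $\ell_2$-scale $\eps\sigma$, which by Theorem~\ref{thm:main-cover-informal} has size and sample cost $(1/(\eps\sigma))^{O(d^2 k^{1/d})} = (1/(\eps\sigma))^{O(\log^2 k)}$. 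That has $\sigma^{-\polylog k}$ in the exponent and does not match the claimed $(k/\eps)^{O(\log^2 k)}$ with only a multiplicative $\tilde O(\log(1/\sigma))$ factor. Similarly, the $\log(1/\sigma)$ factor you attribute to truncating $|y|$ for variance control is assigned to the wrong cause — the moment-estimation variance in this regime scales polynomially in $R+\sigma$, not in $1/\sigma$.

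The paper closes this gap with an ingredient absent from your proposal: an \emph{iterative cover refinement} (Lemma~\ref{MLRCoverIterationLem} and Corollary~\ref{MLRSmallCoverCor}). Starting from a trivial $(1,R)$-cover, each round uses the current $(k,r)$-cover to (i) cluster most samples by their nearest cover point via an event $E(x)$ that excludes ambiguous $x$, (ii) recenter $y\mapsto y - c\cdot x$ so that the residual MLR has regressors of norm $O(r)$, (iii) do dimension reduction and compute a fine cover within the recentered problem at radius-to-accuracy ratio $O(1)$, and (iv) prune back to a $(k,r/2)$-cover via Lemma~\ref{MLRCoverRefinementLem}. Because each cover step sees only bounded dynamic range, each step has size $k^{O(\log^2 k)}$, and $O(\log(R/\sigma))$ rounds suffice — this, and the per-round failure probability being $\tau\ll 1/\log(R/\sigma)$, is where the $\tilde O(\log(1/\sigma))$ factor really comes from. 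Only after reaching radius $O(k\sigma/p_{\min})$ does a final refinement to scale $\eps\sigma$ get carried out, and at that point the new radius-to-accuracy ratio is $O(k/(\eps p_{\min}))$ with no $\sigma$ inside, giving the advertised $(k/\eps)^{O(\log^2 k)}$ cover size. Without this telescoping structure, the stated bounds are not attainable, so your one-shot reduction is not a correct proof of the theorem.
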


\noindent (See Theorem~\ref{thm:mlr-density} for a detailed statement handling general mixtures.)
To the best of our knowledge, this is the first algorithm for density estimation of $k$-MLRs
with running time sub-exponential in $k$.

For the parameter estimation problem, we provide two algorithmic results -- one for the noiseless case (corresponding to $\sigma = 0$)
and one for the noisy case (corresponding to $\sigma > 0$). We note that the $\sigma = 0$ case is already quite challenging, 
and most prior work (with provable guarantees) for the large $k$ regime focuses 
on this case (see, e.g.,~\cite{ZJD16, LL18, CLS20}).

For the noiseless case, we achieve exact recovery (see Theorem~\ref{thm:mlr-param-no-noise} for a more detailed statement):

\begin{theorem}[Parameter Estimation for $k$-MLR, Noiseless Case]\label{thm:mlr-param-no-noise-informal}
There is an algorithm that given $N=\left(m^2 \poly(k)+ k^{O(\log k)}\right) \tilde{O}(\log(k \log(m)/\Delta))$
samples from an unknown $k$-MLR $Z$ on $\R^m \times \R$ with uniform weights and
pairwise separation $\Delta = \min_{i \neq j} \|\beta_i - \beta_j\|_2>0$,
the algorithm runs in time $\poly(N, k^{\log^2 k})$, and outputs a list of hypothesis vectors
$\widetilde{\beta}_i$ such that with high probability we have that $\beta_i  = \widetilde{\beta}_{\pi(i)}$, $i \in [k]$,
for some permutation $\pi\in \mathbf{S}_k$.
\end{theorem}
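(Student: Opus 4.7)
The plan is to follow the general recipe outlined in Section~\ref{ssec:intuition}: (i) estimate degree-$2d$ moment information about the regressors $\beta_i$ from samples $(x,y)$; (ii) identify a codimension-$k$ subspace $V \subseteq \R_{[d]}[x_1,\ldots,x_m]$ of degree-$d$ homogeneous polynomials that nearly vanish on every $\beta_i$; (iii) invoke Theorem~\ref{thm:main-cover-informal} to produce a small list of warm-start candidates that are within $\eps \ll \Delta$ of the true $\beta_i$'s; and (iv) exploit the noiseless, well-separated structure of the $k$-MLR to refine each warm start to the exact regressor via clustering and an ordinary linear solve.

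For step (i), I would first perform a standard $\ell_2$-based dimension reduction: the matrix $M = \E[y^2 xx^{\top}] - \E[y^2] I$ is proportional to $\sum_i \beta_i \beta_i^{\top}$, so its top-$k$ eigenspace contains all the $\beta_i$ up to negligible error once $M$ is estimated to spectral accuracy $\Delta^2/\poly(k)$; this costs $\tilde{O}(m^2)\poly(k)$ samples, after which it suffices to work in the resulting $k$-dimensional subspace. Inside this subspace, I extract the symmetric tensor $T_{2d} = (1/k)\sum_i \beta_i^{\otimes 2d}$ via the unbiased estimator obtained by expanding $y^{2d} = (\beta_\ell \cdot x)^{2d}$ in the Hermite basis and reading off the degree-$2d$ component. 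Standard concentration shows that $k^{O(d)}\cdot \polylog(1/\tau)$ samples suffice to approximate $T_{2d}$ to entrywise error $\tau$; taking $d = \Theta(\log k)$ and $\tau = \poly(\Delta/(k\log m))$ matches the claimed sample bound.

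For steps (ii)--(iii), I form the PSD quadratic form $Q(p) := \sum_{i=1}^{k} p(\beta_i)^2 = \langle p \otimes p,\; k\,T_{2d}\rangle$ on $\R_{[d]}[x_1,\ldots,x_m]$ and let $V$ be the span of the eigenvectors of $Q$ with eigenvalue below a carefully chosen threshold. A dimension-counting argument guarantees that $\{p : p(\beta_i) = 0 \text{ for all } i\}$ has codimension at most $k$, so $V$ has codimension at most $k$; provided $T_{2d}$ is sufficiently accurate, every $\beta_i$ lies in the near-zero set $S(V, R, \delta)$ with $R = 1 + o(1)$ and $\delta$ small. Applying Theorem~\ref{thm:main-cover-informal} with target accuracy $\eps = \Delta/C$ (for a sufficiently large constant $C$) produces a set $\mathcal{C}$ of size $(k/\Delta)^{O(d^2 k^{1/d})}$ in time $\poly(|\mathcal{C}|)$; choosing $d = \Theta(\log k)$ drives this to $k^{O(\log^2 k)}$, and each true $\beta_i$ is within $\Delta/C$ of some $c \in \mathcal{C}$.

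Finally, step (iv) upgrades warm starts to exact recovery. Given $\tilde{\beta}$ with $\|\tilde{\beta}-\beta_i\|_2 \leq \Delta/C$, a fresh sample $(x, y = \beta_j\cdot x)$ satisfies $|y - \tilde{\beta}\cdot x|=|(\beta_j-\tilde{\beta})\cdot x|$, which is $O(\Delta/C)\cdot |g|$ for $j=i$ and at least $(1-1/C)\Delta\cdot |g'|$ for $j\neq i$, where $g, g'$ are standard Gaussians. Thresholding at $\Delta/2$ cleanly isolates $\poly(m)$ samples all drawn from component $i$, on which ordinary least squares recovers $\beta_i$ exactly (with probability one, since $m$ generic Gaussian $x$'s span $\R^m$). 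Iterating over all $\tilde{\beta}\in\mathcal{C}$, deduplicating, and verifying by checking that $y = \beta \cdot x$ holds exactly on a $1/k$ fraction of fresh samples yields precisely the set of true regressors. The main technical obstacle is controlling how errors in the moment estimates propagate through the thresholded SVD that defines $V$ and ensuring $\delta$ is small enough for Theorem~\ref{thm:main-cover-informal} to output a cover of radius $\eps \approx \Delta$; this error-propagation analysis is what pins down the precision, and hence the $\tilde{O}(\log(k\log(m)/\Delta))$ factor, required in stage~(i).
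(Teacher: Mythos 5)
Your overall recipe (estimate weighted moments, identify a codimension-$k$ subspace of near-vanishing polynomials, run the cover algorithm, refine with problem-specific clustering) is the right one. However, there is a genuine quantitative gap: you cannot reach the $\tilde{O}(\log(k\log(m)/\Delta))$ sample dependence in one shot, and the step where you claim otherwise is simply false. You assert that ``standard concentration shows that $k^{O(d)}\cdot\polylog(1/\tau)$ samples suffice to approximate $T_{2d}$ to entrywise error $\tau$.'' Mean estimation of a random variable with nonzero variance to accuracy $\tau$ requires $\Theta(1/\tau^2)$ samples, never $\polylog(1/\tau)$; boosting only improves the confidence, not the accuracy-per-sample tradeoff. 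To make Proposition~\ref{coverApplicationProp} produce a cover of radius $\eps=\Delta/C$, you must have $\delta \lesssim \eps^{2d}((\eps/R)/(2kmd))^{2Cd}$, i.e., $\tau = (\Delta/k)^{\Theta(d)}$. With $d=\Theta(\log k)$ this drives the sample complexity of the moment estimation step to $\poly(1/\Delta)^{\Theta(\log k)}$, overwhelmingly larger than the $\tilde{O}(\log(1/\Delta))$ factor in the theorem.

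What the paper does differently is precisely engineered to kill this $\poly(1/\Delta)$. It maintains a $(k,r)$-cover and iteratively halves $r$ (Lemma~\ref{MLRCoverIterationLem}). At each round, it uses the current cover to (a) partition samples into rough clusters via a clustering condition $E(x)$ of probability at least $1/2$ (Lemma~\ref{MLRClusteringLem}), and (b) subtract off $c_i\cdot x$ for the cluster center $c_i$, so that within each cluster the problem looks like an MLR with regressors $\beta_i - c_i$ of norm at most $O(r\cdot\poly(k\log))$. One then re-runs dimension reduction, cover construction, and cover refinement at this new, smaller radius. Crucially, because the effective radius at each round is always $O(r_{\mathrm{current}})$, the moment estimates only need accuracy a fixed polynomial fraction of $r_{\mathrm{current}}$, so each round costs $m^2\poly(k)+k^{O(d)}$ samples with \emph{no} dependence on $\Delta$; the $\tilde{O}(\log(R/\Delta))$ factor is just the number of rounds. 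This is also why Lemma~\ref{lem:mlr-moment-est} is proved not for i.i.d.\ samples from the MLR, but for samples \emph{conditioned on a probability-$\geq 1/2$ event $E(x)$}, and why the anti-concentration argument appears there: the conditioning is introduced by the rough clustering. Your proposal does not address either the re-centering/iteration or the conditioning, and without them the sample bound you quote is unreachable. Your step (iv) is also a bit too casual about the possibility that $|(\beta_j-\tilde\beta)\cdot x|$ is small for $j\neq i$ by chance; this is exactly what the event $E(x)$ in Lemma~\ref{MLRClusteringLem} is conditioning away.
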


Our second result can handle additive noise 
(see Theorem~\ref{thm:mlr-param-noise} for a more detailed statement).

\begin{theorem}[Parameter Estimation for $k$-MLR, Noisy Case]\label{thm:mlr-param-noise-informal}
There is an algorithm that on input $\eps>0$,
$N=\left(m^2\poly(k)+ k^{O(\log k)} \right) \tilde{O}(\log(k \log(m)/\Delta)) + \tilde{O}(m) \poly(k, 1/\eps)$
samples from an unknown $k$-MLR $Z$ with uniform weights and mean separation
$\Delta = \min_{i \neq j} \|\mu_i - \mu_j\|_2$ such that $\Delta/\sigma$ at least an appropriate polynomial in $k \log(m)$,
the algorithm runs in $\poly(N, k^{\log^2 k})$ time, and outputs a list of hypothesis vectors
$\widetilde{\beta}_i$ such that with high probability we have that $\|\beta_i - \widetilde{\beta}_{\pi(i)} \| \leq \eps$, $i \in [k]$,
for some permutation $\pi\in \mathbf{S}_k$.
\end{theorem}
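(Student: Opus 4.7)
The plan is to follow the general recipe outlined in Section~\ref{ssec:intuition}: first use moments of $(x, y)$ to recover low-degree symmetric functions of the hidden regressors $\{\beta_i\}_{i=1}^k$, then apply Theorem~\ref{thm:main-cover-informal} to obtain a small list of candidate regressors, and finally use the separation assumption to cluster samples and refine to $\eps$-accuracy. Relative to the noiseless version (Theorem~\ref{thm:mlr-param-no-noise-informal}), the main additional work is (i)~adjusting the moment de-convolution to account for the known Gaussian noise $\nu$, and (ii)~replacing exact recovery within each cluster with ordinary least squares on the assigned samples.

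For moment estimation, note that for any polynomial $h$,
$$\E[h(x, y)] = \frac{1}{k}\sum_{i=1}^k \E_{x \sim N(0,I),\, \nu \sim N(0,\sigma^2)}[h(x, \beta_i \cdot x + \nu)].$$
Choosing $h(x, y) = y^s \cdot \prod_{j} x_j^{\alpha_j}$, expanding $y^s = (\beta_i \cdot x + \nu)^s$ by the binomial theorem, and integrating out $\nu$ (whose Gaussian moments are determined by $\sigma$, which can be estimated from the variance of $y$), yields a triangular linear system that lets us extract $\frac{1}{k}\sum_i q(\beta_i)$ for every polynomial $q$ of degree at most $2d$. Setting $d = O(\log k)$, we form the positive semidefinite quadratic form $p \mapsto \frac{1}{k}\sum_i p^2(\beta_i)$ on degree-$d$ homogeneous polynomials and, via SVD, extract the subspace $V$ on which this form is nearly zero. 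A dimension-counting argument shows $\codim(V) \leq k$, so each $\beta_i$ lies in a near-zero set $S(V, R, \delta)$ of the form handled by Theorem~\ref{thm:main-cover-informal}, which with $d = \log k$ produces a cover $\mathcal{C}$ of size $k^{O(\log^2 k)}$ such that every $\beta_i$ is within some small $\eps_0$ of an element of $\mathcal{C}$.

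The final stage leverages the separation. For each candidate $\widetilde{\beta} \in \mathcal{C}$, we test it on held-out samples by computing the empirical variance of the residual $y - \widetilde{\beta} \cdot x$: on a sample drawn from component $i$, this residual is distributed as $(\beta_i - \widetilde{\beta}) \cdot x + \nu$ with variance $\|\beta_i - \widetilde{\beta}\|_2^2 + \sigma^2$. Because $\Delta/\sigma$ exceeds a sufficiently large polynomial in $k\log m$, we can reliably distinguish candidates close to some true $\beta_i$ (residual variance $\approx \sigma^2$) from spurious ones (variance $\Omega(\Delta^2)$); merging retained candidates within distance $\Delta/4$ leaves exactly $k$ representatives, and ordinary least squares on the samples assigned to each representative produces the final $\eps$-accurate estimates $\widetilde{\beta}_{\pi(i)}$. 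The main obstacle is tracking the error propagation: one must show that the empirical moment error (which governs $\delta$ in Theorem~\ref{thm:main-cover-informal}) translates into a cover tolerance $\eps_0$ small enough to be absorbed by the clustering step, and quantify how the required sample size scales with $\sigma$, $\Delta$, and $m$. This analysis is what forces the polynomial dependence of $\Delta/\sigma$ on $k\log m$ in the hypothesis and the additional $\tilde{O}(m)\poly(k, 1/\eps)$ samples needed for the final regression step.
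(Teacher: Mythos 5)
Your high-level recipe (moments $\to$ polynomial near-zero set $\to$ cover via Theorem~\ref{thm:main-cover-informal} $\to$ clustering $\to$ per-cluster regression) is the same skeleton the paper uses, and the final regression/warm-start idea is correct. But there is a genuine gap: the proposal misses the \emph{iterative cover-refinement} machinery that is the central technical contribution of this section, and without it you cannot reach the claimed sample complexity.

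Concretely: to obtain a cover at granularity $\eps_0 \ll \Delta/\poly(k,\log m)$ via Proposition~\ref{coverApplicationProp}, you must estimate the degree-$2d$ moment tensor to error $\delta \lesssim \eps_0^{2d}(\eps_0/R)^{O(d)}$. The moment estimators (whether the paper's Hermite-based ones or your binomial-expansion variant) have variance $\Theta((R+\sigma)^{4d})$, so this one-shot estimate costs $\Theta\bigl((R/\eps_0)^{O(d)}\bigr) = (k\log m/\Delta)^{O(\log k)}$ samples when $d=\log k$. The theorem claims only $\tilde O(\log(k\log(m)/\Delta))$ dependence on $\Delta$, which is exponentially better. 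The paper gets there (Lemmas~\ref{MLRClusteringLem}--\ref{MLRCoverRefinementLem} and Lemma~\ref{MLRCoverIterationLem}) by starting from a $(k,R)$-cover, using it to assign most samples to rough clusters, recentering $y \mapsto y - c\cdot x$ so the effective parameters shrink to radius $r$, re-estimating moments of this recentered mixture at the new, smaller scale, and obtaining a $(k,r/2)$-cover; repeating $O(\log(R/\Delta))$ times yields the log dependence. A second, coupled issue your proposal glosses over: in each iteration only the samples that can be reliably clustered are usable, so the moment computation must be done \emph{conditional on an event} $E(x)$ of probability $\ge 1/2$. This is why the paper proves the conditional moment lemma (Lemma~\ref{lem:mlr-moment-est}), appealing to Carbery--Wright anti-concentration to invert the conditional Gram matrix of low-degree Hermite polynomials; a plain binomial-expansion de-convolution does not automatically work conditionally.

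A smaller but real flaw is in your candidate test. Using the empirical variance of $y - \widetilde\beta\cdot x$ over \emph{all} held-out samples does not cleanly separate good from spurious candidates: when $\widetilde\beta\approx\beta_i$, only a $1/k$ fraction of samples have small residual, so the global residual variance is $\Theta(1)+\sigma^2$ either way, differing only by $O(\Delta^2/k)$. The paper instead tests whether \emph{at least a $p_{\min}/4$ fraction} of samples satisfy $|y-c\cdot x|\le 2(r+\sigma)$ (Lemma~\ref{MLRCoverRefinementLem}), which gives a constant-sized gap between true and spurious hypotheses. Also, $\sigma$ is assumed known in the model, and in any case ``the variance of $y$'' is $\frac1k\sum_i\|\beta_i\|^2+\sigma^2$, not $\sigma^2$, so your suggested $\sigma$-estimator is incorrect (though this is immaterial since the paper assumes $\sigma$ known).
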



\subsubsection{Learning Mixtures of Hyperplanes}
Our final learning application is for the problem of parameter estimation for mixtures of hyperplanes.
A $k$-mixture of hyperplanes is a distribution on $\R^m$ with density function
$F(x) = \sum_{j=1}^k w_j N(0, I - v_j v_j^T)$,
where $v_j \in \R^m$ with $\|v_j\|_2=1$ and
$w_j \geq 0$ with $\sum_{j=1}^k w_j = 1$.

We study parameter estimation for this probabilistic model under $\Delta$ pairwise separation for the $v_j$'s.
Specifically, we will assume that we know some $\Delta>0$ such that for all $i \neq j$
and $\sigma_i, \sigma_j \in \{ \pm 1\}$, we have that $ \| \sigma_i v_i- \sigma_j v_j\|_2 \geq \Delta$.
Note that the $v_j$'s are only identifiable up to sign, which motivates this definition.

For simplicity, we will assume uniform weights in this section, i.e., that all the $w_i$'s are $1/k$.
The goal of parameter learning in this context is to output a list of unit vectors $\{\tilde{v}_j\}_{j=1}^k$ such that
there is a permutation $\pi\in \mathbf{S}_k$ and a list of signs $\sigma_j \in \{ \pm 1\}$ for which
$v_j = \sigma_j \tilde{v}_{\pi(j)}$ for all $j \in [k]$.

\paragraph{Prior Work on Learning Mixtures of Hyperplanes}
Mixtures of hyperplanes is a natural probabilistic model that was recently studied in~\cite{CLS20}, motivated
by its connection to the subspace clustering problem (see, e.g.,~\cite{ParsonsHL04, Vidal11} for overviews).
In the subspace clustering problem, the data is assumed
to be drawn from a union of linear subspaces, and the algorithmic problem is to identify the hidden subspaces.
The mixtures of hyperplanes model can be viewed as a hard instance of subspace
clustering, but is also of interest in its own right as it arises in various contexts (see~\cite{CLS20} for a detailed discussion).

The fastest previously known algorithm for the parameter estimation problem of $k$-mixtures of hyperplanes
was given in~\cite{CLS20}. In more detail, for uniform weights and separation $\Delta>0$, the algorithm of~\cite{CLS20}
has sample complexity and runtime of the form $\poly(m k /\Delta) \, (k \ln(1/\Delta))^{\tilde{O}(k^{3/5})}$.

\smallskip

Our main result in this section is the following theorem (see Theorem~\ref{thm:mh-param}):

\begin{theorem}[Parameter Learning for $k$-mixtures of Hyperplanes]\label{thm:mh-param-informal}
There is an algorithm that on input $N = O(k/\Delta)^{O(\log k)}+O(m^2) \poly(k\log(m)/\Delta)$
samples from a uniform $k$-mixture of hyperplanes on $\R^m$ with pairwise separation $\Delta>0$,
the algorithm runs in time $\poly(N)+ m^2 \log(\log(m)/\Delta) k^{O(\log^2 k)}$ and
with high probability outputs an $\eps$-approximation to the unknown parameter vectors.
\end{theorem}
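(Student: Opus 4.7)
My plan is to apply the general recipe sketched in Section~\ref{ssec:intuition} to the specific structure of a mixture of hyperplanes. A sample $x$ from component $j$ is drawn from $N(0, I - v_j v_j^T)$, so $x \cdot v_j = 0$ almost surely and, more generally, for any direction $u \in \R^m$ the conditional random variable $x \cdot u$ is a centered Gaussian of variance $\|u\|^2 - (u \cdot v_j)^2$. Taking $2d$-th moments and averaging over the mixture gives
\[
\E_{x \sim F}[(x \cdot u)^{2d}] = (2d-1)!! \sum_{j=1}^k w_j \bigl(\|u\|^2 - (u\cdot v_j)^2\bigr)^d,
\]
and inverting the resulting triangular system in the even degrees $2s$, $s \le d$, yields the symmetric moment tensor $M_{2d} = \sum_j w_j v_j^{\otimes 2d}$ to arbitrary accuracy, provided enough samples. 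Crucially, only even powers of $v_j$ appear, which exactly matches the intrinsic $v_j \leftrightarrow -v_j$ ambiguity in this problem.

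Once the tensors $M_{2s}$ are available for $s \le d = \Theta(\log k)$, the positive semidefinite quadratic form $p \mapsto \sum_j w_j p(v_j)^2$ on the space of homogeneous degree-$d$ polynomials can be evaluated exactly, and a singular value decomposition produces a subspace $V$ of polynomials on which this form is small. Since there are only $k$ weighted point evaluations, $V$ has codimension at most $k$. Invoking Theorem~\ref{thm:main-cover-informal} with $d = \Theta(\log k)$, the set of unit vectors on which every $p \in V$ is small admits an $\eta$-cover $\mathcal{C}$ of cardinality $(k/\eta)^{O(\log^2 k)}$, computable in time polynomial in $|\mathcal{C}|$. By construction, each parameter $v_j$ lies within $\eta$ of some $\tilde v \in \mathcal{C}$ (up to sign). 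To avoid paying $m^{\Theta(\log k)}$ factors in the cover size and in the higher-order moment estimation, I will first estimate the $m\times m$ covariance $\E[xx^T] = I - (1/k)\sum_j v_j v_j^T$ using $\tilde O(m^2)\poly(k\log(m)/\Delta)$ samples, extract the bottom-$k$ eigenspace $W$, project every sample into $W$, and afterwards work in a $k$-dimensional subproblem; this is the origin of the $O(m^2)$ factor in the stated complexity.

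With the cover $\mathcal{C}$ in hand, the separation hypothesis converts list decoding into exact clustering. For each candidate $\tilde v \in \mathcal{C}$, I test whether the empirical fraction of samples with $(x \cdot \tilde v)^2 \le \tau$ is significantly above the $1/k$ baseline; under the $\Delta$-separation assumption, only the component whose $v_j$ is within $\Delta/3$ of $\pm \tilde v$ can contribute, so at most $k$ cover points survive, one per mixture component up to sign. On the corresponding cluster of samples, I compute the bottom eigenvector of the empirical covariance to refine $\tilde v$ into an $\eps$-approximation of $v_j$, using standard Davis--Kahan style perturbation bounds; this is where the second $O(m^2)\poly(k\log(m)/\Delta)$ sample term is absorbed, together with the dependence on $\eps$ in the refinement step.

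The main technical obstacle is the robustness analysis linking the three phases: one must quantify, in terms of the separation $\Delta$ and the additive error on $M_{2d}$, the cover accuracy $\eta$ needed so that the clustering step is unambiguous and correctly matches cover points to components up to sign. The key step there is translating the aggregate bound ``$\sum_j w_j p(v_j)^2$ is small'' into the pointwise statement ``$|p(v_j)|$ is small for every $j$'' -- which uses that the weights $w_j = 1/k$ are not too small -- and then choosing the degree $d = \Theta(\log k)$ so that both the cover size $(k/\eta)^{O(\log^2 k)}$ and the sample complexity required for estimating $M_{2d}$ to the corresponding accuracy stay quasi-polynomial in $k$, yielding the claimed bounds $N = (k/\Delta)^{O(\log k)} + O(m^2)\poly(k\log(m)/\Delta)$ and runtime $\poly(N) + m^2\log(\log(m)/\Delta)\,k^{O(\log^2 k)}$.
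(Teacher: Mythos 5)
Your overall recipe --- estimate the moment tensors $\sum_j w_j v_j^{\otimes 2d}$, reduce to $k$ dimensions via the second moment, obtain a cover from Theorem~\ref{thm:main-cover-informal}, prune the cover using the separation, and refine --- matches the paper's skeleton, and your moment-inversion route via the conditional one-dimensional variances is a perfectly valid alternative to the paper's tensor identity $\E[X^{\otimes 2d}] = (2d-1)!!\,\mathrm{Sym}(\Sigma^{\otimes d})$. However, there is a genuine gap in your running time: you decide on a granularity $\eta$ small compared to $\Delta$ \emph{up front} and then invoke the cover theorem once to obtain a cover of cardinality $(k/\eta)^{O(\log^2 k)}\approx(k/\Delta)^{O(\log^2 k)}$. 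Since the cover is computed in time polynomial in its own size, this is $(k/\Delta)^{O(\log^2 k)}$ time, which is not dominated by either term of the claimed bound $\poly(N)+m^2\log(\log(m)/\Delta)\,k^{O(\log^2 k)}$ --- in that bound the $\Delta$-dependence in the second term is only logarithmic, and $\poly(N)=(k/\Delta)^{O(\log k)}$, missing a factor of $\log k$ in the exponent.

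The ingredient you are missing is the paper's iterative cover refinement, which is precisely designed to avoid paying $(1/\Delta)^{\Omega(\log^2 k)}$ in the exponent. Instead of asking for a single fine cover, the paper computes a cover at some fixed moderate granularity $\eps$, identifies the ``good'' hypotheses (those that many samples are nearly orthogonal to, a test essentially identical to the one you propose), clusters them into at most $k$ groups of diameter $O(k^2\eps)$, and then for each cluster re-applies Proposition~\ref{coverApplicationProp} \emph{restricted to the orthogonal complement of the cluster center} to produce an $\eps/2$-cover of the parameters near that cluster; crucially, because the radius-to-resolution ratio in each subcall is $\poly(k)$ rather than $1/\eta$, the per-iteration cover size stays at $(kd)^{O(d^2 k^{1/d})}$ with no $\eta$-dependence in the base. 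Iterating this $O(\log(1/\eta))$ times with $\eta$ a small multiple of $\Delta/(k^4\log(km))$ reaches the required granularity while the time per round remains $k^{O(\log^2 k)}$, after which the separation allows exact classification of $O(mk)$ fresh samples by component and exact recovery of each $v_j$ (up to sign) by linear algebra. To make your argument correct you would need to replace the ``single fine cover'' step with this coarse-to-fine loop; otherwise the runtime you actually achieve is $(k/\Delta)^{O(\log^2 k)}$, not what is claimed.
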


\subsection{Organization}

The structure of the paper is as follows:
In Section~\ref{sec:prelims}, we provide the necessary definitions and technical facts.
In Section~\ref{sec:small-cover}, we prove our main geometric result.
Sections~\ref{sec:strategy} describes how our main geometric result is used for our learning theory applications.
The next sections present our learning algorithms in order: mixtures of spherical Gaussians (Section~\ref{sec:gmm}), 
positive linear combinations of ReLUs (Section~\ref{sec:relu}) and GLMs (Section~\ref{sec:glm}), mixtures of linear regressions (Section~\ref{sec:mlr}), and mixtures of hyperplanes (Section~\ref{sec:mh}). Some omitted proofs are deferred to an Appendix.

\newpage

\section{Preliminaries} \label{sec:prelims}

\paragraph{Basic Notation and Definitions.}
For $n \in \Z_+$, we denote by $[n]$ the set $\{1, 2, \ldots, n\}$.
For a vector $v \in \R^n$, let $\| v \|_2$ denote its Euclidean norm.
\new{We denote by $x \cdot y$ the standard inner product between $x, y \in \R^n$.}


\new{For $a, b \in \R$, we will write $a \gg b$ (or $b \ll a$) to mean that there exists a sufficiently large
constant $C>0$ such that $a \geq C b$. We will denote by $\delta_0$ the Dirac delta function 
and by $\delta_{i, j}$ the Kronecker delta.}

For $x \in \R^n$ and $r >0$, let $B_n(x, r) = \{z \in \R^n: \|z-x\|_2 \leq r \}.$
Let $S \subset \R^n$ and $\eps>0$. We say that a set $C \subset \R^n$ is an {\em $\eps$-cover of $S$}
(with respect to the $\ell_2$-norm) if for all $x \in S$ there exists $c_x \in C$ such that $x \in B_n(c_x, \eps)$.

Throughout the paper, we let $\otimes$ denote the tensor/Kronecker product.
\new{For a vector $x \in \R^n$, we denote by $x^{\otimes d}$ the $d$-th order tensor product of $x$.}

We will denote by $N(\mu, \Sigma)$ the multivariate Gaussian distribution with mean $\mu$ and covariance $\Sigma$.
The underlying dimension will be clear from the context. For a random variable $X$ and $p \geq 1$, 
we will use $\|X\|_p$ to denote its {\em $L_p$-norm}, i.e., $\|X\|_p \eqdef \E[|X|^p]^{1/p}$, assuming the RHS is finite.

The \emph{total variation distance} between probability distributions  $P$ and $Q$ on $\mathbb{R}^m$, 
denoted $\dtv(P, Q)$, is defined as $\dtv (P, Q) \eqdef \sup_{A \subseteq \R^m} |P(A) - Q(A)|$.

\subsection{Tools from Linear Algebra} \label{ssec:prelims-la}


If $V$ is a subspace of a finite dimensional vector space $W$, then the {\em codimension of $V$ in $W$}
is the difference $\codim_W(V) = \dim(W) - \dim(V)$.
We will make essential use of the following basic fact (see Appendix~\ref{ssec:codim} for the simple proof):


\begin{fact} \label{fact:codim-bound}
Let $U, V, W$ be finite dimensional vector spaces with $U \subseteq W$. Then $\codim_U(V \cap U) \leq \codim_W(V)$.
\end{fact}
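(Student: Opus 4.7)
The plan is to reduce the inequality to the standard dimension formula for subspaces, or equivalently to analyze a quotient map. Implicit in the statement is that $V \subseteq W$ as well, since otherwise $\codim_W(V)$ is undefined; I will assume this.

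The cleanest approach I would take is to consider the quotient projection $\pi: W \to W/V$ and restrict it to $U$. First, I would observe that $\ker(\pi|_U) = U \cap V$, since a vector in $U$ maps to zero in $W/V$ precisely when it already lies in $V$. Applying the rank-nullity theorem to $\pi|_U$ then gives $\dim(\pi(U)) = \dim(U) - \dim(U \cap V) = \codim_U(V \cap U)$. The final step is to note that $\pi(U)$ is a subspace of $W/V$, so its dimension is bounded above by $\dim(W/V) = \codim_W(V)$, which yields the claim.

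If one prefers to avoid quotients, the same proof can be recast using the inclusion-exclusion identity $\dim(U + V) + \dim(U \cap V) = \dim(U) + \dim(V)$. Since $U + V \subseteq W$, one has $\dim(U + V) \leq \dim(W)$, and rearranging gives $\dim(U) - \dim(U \cap V) \leq \dim(W) - \dim(V)$, which is precisely $\codim_U(V \cap U) \leq \codim_W(V)$.

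There is no real obstacle here; the main thing to be careful about is simply making the implicit assumption $V \subseteq W$ explicit, and verifying that $V \cap U$ is indeed a subspace of $U$ (it is, as an intersection of subspaces) so that the left-hand codimension is well-defined. I would present the quotient-map argument in the appendix, as it is the most self-contained and naturally illustrates the statement as a dimension bound on the image of $U$ in $W/V$.
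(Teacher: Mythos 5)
Your proof is correct. The paper proceeds differently: it cites two abstract properties of codimension as given facts --- the tower formula $\codim_C(A) = \codim_B(A) + \codim_C(B)$ for $A \subseteq B \subseteq C$, and the subadditivity $\codim_W(V \cap U) \leq \codim_W(V) + \codim_W(U)$ --- and then chains them together; the latter fact, when unpacked, is precisely your inclusion--exclusion identity $\dim(U+V) + \dim(U\cap V) = \dim(U) + \dim(V)$ combined with $U+V \subseteq W$. Your primary argument instead restricts the quotient projection $\pi : W \to W/V$ to $U$, identifies the kernel as $U \cap V$, and applies rank--nullity to get $\codim_U(V \cap U) = \dim \pi(U) \leq \dim(W/V) = \codim_W(V)$. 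This is more self-contained: it does not appeal to subadditivity of codimension as a black box, and it exhibits the left-hand side directly as the dimension of a subspace of $W/V$, which makes the inequality transparent. Your observation that $V \subseteq W$ must be assumed for $\codim_W(V)$ to be meaningful is also a point the paper leaves implicit. Both proofs are fine; yours is the more elementary and arguably more illuminating presentation of the same elementary fact.
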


\paragraph{Polynomials and Tensors.}
Let $\R[x_1, \ldots, x_n]$ be the vector space of real polynomials in variables
$x_1, \ldots, x_n$. \new{If $x = (x_1, \ldots, x_n)$ is a vector of indeterminates, we will 
sometimes use the notation $\R[x]$.}
A real polynomial in $n$ variables is called {\em homogeneous degree-$d$}
if it only contains monomials of degree exactly $d$.
Let $\R_{[d]}[x_1, \ldots, x_n]$ denote the vector space of real homogeneous degree-$d$
polynomials in variables $x_1, \ldots, x_n$.

A tensor $A$ of dimension $n$ and order $d$ is a multilinear map
defined by a $d$-dimensional array with real entries $A_{\alpha}$,
where $\alpha = (\alpha_1, \ldots, \alpha_d)$ with $\alpha_i \in [n]$.
A tensor $A$ is symmetric if $A_{\alpha} = A_{\alpha_{\sigma}}$,
where $\alpha_{\sigma}  = (\alpha_{\sigma_1}, \ldots, \alpha_{\sigma_d})$,
for any permutation $\sigma: [d] \to [d]$.
\new{For tensors $A, B$ of dimension $n$ and order $d$,
we will denote by $\langle A, B \rangle$ their entry-wise inner product.
For a tensor $A$, let $\|A\|_2 = \langle A, A \rangle^{1/2}$ denote the $\ell_2$-norm of its entries.
}

Recall that there is a bijection between the space of real symmetric tensors of dimension $n$ and order $d$
and the space of real homogeneous degree-$d$ polynomials in $n$ variables.
\new{The inner product of two real homogeneous degree-$d$ polynomials $p, q \in \R_{[d]}[x_1, \ldots, x_n]$,
denoted by $\langle p , q \rangle$, is defined to be the inner product of their corresponding symmetric tensors, i.e.,
if $p(x) = \langle A_p , x^{\otimes d} \rangle$ and $q(x) = \langle A_q , x^{\otimes d} \rangle$ then
$\langle p , q \rangle = \langle A_p, A_q \rangle$.
Consequently, the $\ell_2$-norm of a homogeneous polynomial $p \in \R_{[d]}[x_1, \ldots, x_n]$
is the $\ell_2$-norm of the corresponding symmetric tensor,
i.e., if $p(x) = \langle A_p , x^{\otimes d} \rangle$, then $\|p\|_{\ell_2} := \|A_p\|_2$.
By the multi-linearity property of tensors, the $\ell_2$-norm of a homogeneous
polynomial is rotationally invariant.}


\new{
An $n$-dimensional multi-index $\alpha$ is an $n$-tuple of non-negative integers,
i.e., $\alpha = (\alpha_1, \ldots, \alpha_n) \in \N_0^n$. We will define
the length of the multi-index $\alpha$ as $|\alpha| = \sum_{i=1}^n \alpha_i$.
We will also denote $\alpha! = \prod_{i=1}^n \alpha_{i} !$ and use $c(\alpha) = |\alpha|! / \alpha!$
for the multinomial coefficient.
For a vector of indeterminates $x = (x_1, \ldots, x_n)$,
we will denote the monomial corresponding to the multi-index $\alpha$ by
$x^{\alpha} = \prod_{i=1}^n x_i^{\alpha_i}$.

With this notation, we have the following fact:

\begin{fact} \label{fact:ip-monomials}
For any multi-indices $\alpha, \beta \in  \N_0^n$, and $x = (x_1, \ldots, x_n)$ a vector of indeterminates,
we have that
\begin{eqnarray*}
\langle x^{\alpha}, x^{\beta} \rangle = \left\{
\begin{array}{ll}
      0 \;,& \alpha \neq \beta \\
      \frac{1}{c(\alpha)} = \frac{\alpha!}{|\alpha|!} \;,& \alpha = \beta\\
\end{array}
\right.
\end{eqnarray*}
\end{fact}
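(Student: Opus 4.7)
The plan is to explicitly identify, for each multi-index $\alpha \in \N_0^n$ with $|\alpha| = d$, the symmetric tensor $A_\alpha$ associated to the monomial $x^\alpha$, and then reduce the claimed inner product computation to counting permutations of index tuples.

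First I would recall that by definition $\langle A, x^{\otimes d} \rangle = \sum_{(i_1,\ldots,i_d) \in [n]^d} A_{i_1,\ldots,i_d}\, x_{i_1} \cdots x_{i_d}$. Given a multi-index $\alpha$ with $|\alpha| = d$, call an index tuple $(i_1,\ldots,i_d) \in [n]^d$ of \emph{type $\alpha$} if, for every $j \in [n]$, exactly $\alpha_j$ of the entries equal $j$. The number of such tuples is the multinomial coefficient $c(\alpha) = d!/\alpha!$, and each such tuple contributes the monomial $x^\alpha$ to the sum above. Since we require $A_\alpha$ to be symmetric, it must be constant on tuples of a fixed type; setting $(A_\alpha)_{i_1,\ldots,i_d} = 1/c(\alpha)$ if $(i_1,\ldots,i_d)$ is of type $\alpha$ and zero otherwise yields $\langle A_\alpha, x^{\otimes d}\rangle = x^\alpha$, and this is the unique symmetric tensor with that property.

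Next I would compute $\langle x^\alpha, x^\beta \rangle = \langle A_\alpha, A_\beta\rangle = \sum_{(i_1,\ldots,i_d)} (A_\alpha)_{i_1,\ldots,i_d} (A_\beta)_{i_1,\ldots,i_d}$. If $\alpha \neq \beta$, then no index tuple is simultaneously of type $\alpha$ and of type $\beta$, so the supports of $A_\alpha$ and $A_\beta$ are disjoint and the inner product is $0$. If $\alpha = \beta$, then exactly $c(\alpha)$ tuples contribute, each with value $(1/c(\alpha))^2$, giving $c(\alpha) \cdot (1/c(\alpha))^2 = 1/c(\alpha) = \alpha!/|\alpha|!$, as claimed.

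The only thing that requires any care is verifying that the tensor $A_\alpha$ defined above is the one meant by the earlier definition $p(x) = \langle A_p, x^{\otimes d}\rangle$, i.e.\ that symmetrizing across permutations of indices is the convention in force. This is immediate from the symmetry clause stated in the preliminaries, so there is no real obstacle; the argument is essentially a bookkeeping exercise in multinomial coefficients.
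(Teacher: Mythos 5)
The paper states this as an unproven basic fact (it is not among the proofs in the Appendix), so there is no paper proof to compare against. Your argument is correct and is the standard one: you identify the unique symmetric tensor $A_\alpha$ representing $x^\alpha$, which assigns the value $1/c(\alpha)$ to each of the $c(\alpha) = |\alpha|!/\alpha!$ index tuples of type $\alpha$ and zero elsewhere, and then the inner product computation is a direct count; the disjointness of supports for $\alpha \neq \beta$ and the sum $c(\alpha)\cdot(1/c(\alpha))^2 = 1/c(\alpha)$ for $\alpha=\beta$ give exactly the claimed values.
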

}

Our proofs will repeatedly use the following simple lemma (see Appendix~\ref{ssec:poly-bounds} for the simple proof):

\begin{lemma} \label{lem:poly-bounds}
For any $p \in \R_{[d]}[x_1, \ldots, x_n]$ and $x, y \in \R^n$, we have that:
\begin{itemize}
\item[(i)] $|p(x)| \leq \|x\|_2^d \; \|p\|_{\ell_2}$, and
\item[(ii)] $|p(x)-p(y)| \leq d \max\{ \|x\|_2 , \|y\|_2\}^{d-1} \, \|x-y\|_2 \, \|p\|_{\ell_2}.$
\end{itemize}
\end{lemma}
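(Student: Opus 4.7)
The plan is to exploit the identification between homogeneous degree-$d$ polynomials and symmetric order-$d$ tensors that was set up just before the lemma. Writing $p(x) = \langle A_p, x^{\otimes d}\rangle$ with $\|p\|_{\ell_2} = \|A_p\|_2$, both inequalities reduce to routine tensor-norm estimates via Cauchy–Schwarz.

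For part (i), I would simply apply Cauchy–Schwarz in the entry-wise inner product on order-$d$ tensors of dimension $n$:
\[
|p(x)| \;=\; |\langle A_p, x^{\otimes d}\rangle| \;\leq\; \|A_p\|_2 \; \|x^{\otimes d}\|_2 \;=\; \|p\|_{\ell_2}\,\|x\|_2^d,
\]
where the last equality uses $\|x^{\otimes d}\|_2^2 = (\|x\|_2^2)^d$ by multiplicativity of the Frobenius/$\ell_2$-norm under tensor products.

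For part (ii), the key step is the standard telescoping identity
\[
x^{\otimes d} - y^{\otimes d} \;=\; \sum_{i=0}^{d-1} x^{\otimes i}\otimes (x-y)\otimes y^{\otimes(d-1-i)},
\]
so that $p(x)-p(y) = \langle A_p,\, x^{\otimes d}-y^{\otimes d}\rangle$. Applying Cauchy–Schwarz term by term and using the multiplicativity of the $\ell_2$-norm on tensor products gives
\[
|p(x)-p(y)| \;\leq\; \|p\|_{\ell_2} \sum_{i=0}^{d-1} \|x\|_2^i\, \|x-y\|_2\, \|y\|_2^{d-1-i} \;\leq\; d\,\max\{\|x\|_2,\|y\|_2\}^{d-1}\, \|x-y\|_2\, \|p\|_{\ell_2},
\]
which is exactly the claimed bound. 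There is no real obstacle; the only thing to be slightly careful about is that the inner product on polynomials defined via symmetric tensors (as in Fact~\ref{fact:ip-monomials}) is indeed the restriction of the entry-wise tensor inner product, so Cauchy–Schwarz transfers cleanly. Since both sides of each inequality are rotationally invariant (this is noted right before the lemma), one could alternatively rotate to put $x$ along a coordinate axis and expand in the monomial basis of Fact~\ref{fact:ip-monomials}, but the tensor argument above is cleaner and avoids any multinomial bookkeeping.
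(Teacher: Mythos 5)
Your proposal is correct and follows essentially the same route as the paper's Appendix~\ref{ssec:poly-bounds} proof: identify $p$ with its symmetric tensor $A_p$, apply Cauchy--Schwarz for~(i), and for~(ii) telescope $x^{\otimes d}-y^{\otimes d}$ as a sum of $d$ rank-one-difference terms and bound each via multiplicativity of the tensor $\ell_2$-norm. The only cosmetic difference is that the paper orders the telescope as $y^{\otimes i}\otimes(x-y)\otimes x^{\otimes(d-1-i)}$ while you write $x^{\otimes i}\otimes(x-y)\otimes y^{\otimes(d-1-i)}$; both give the identical bound.
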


\subsection{Tools from Probability} \label{ssec:prelims-prob}

\paragraph{Concentration and Anti-concentration for Gaussian Polynomials.}
We will require standard concentration and anti-concentration bounds for multivariate 
degree-$d$ polynomials under the standard Gaussian distribution.
For a polynomial $p: \R^n \to \R$, we consider the random variable $p(x)$, 
where $x \sim N(0, I)$. We will use $\|p\|_r$, for $r \geq 1$, to denote the $L_r$-norm
of the random variable $p(x)$, i.e., $\|p\|_r \eqdef \E_{x \sim N(0, I)} [|p(x)|^r]^{1/r}$.

We first recall the following moment bound for low-degree
polynomials, which is equivalent to the well-known hypercontractive
inequality of \cite{Bon70,Gross:75}:
\begin{theorem} \label{thm:hc}
Let $p: \R^n \to \R$ be a degree-$d$ polynomial and $q>2$.  Then 
$\|p\|_q \leq (q-1)^{d/2} \|p\|_2$.
\end{theorem}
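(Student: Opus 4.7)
The plan is to derive Theorem~\ref{thm:hc} from the Gaussian hypercontractive inequality of Nelson via a Hermite-decomposition trick. The key object is the Ornstein--Uhlenbeck semigroup
$$U_\rho f(x) = \E_{y \sim N(0,I)}\bigl[ f(\rho x + \sqrt{1-\rho^2}\, y) \bigr],$$
which is self-adjoint on $L_2(\gamma_n)$ (where $\gamma_n$ denotes standard Gaussian measure on $\R^n$) and admits the multivariate Hermite polynomials $\{H_\alpha\}_{\alpha \in \N_0^n}$ as an orthonormal eigenbasis with eigenvalues $\rho^{|\alpha|}$.

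First I would establish \textbf{Nelson's hypercontractive inequality}: for $\rho \in (0,1]$ with $\rho \leq 1/\sqrt{q-1}$, one has $\|U_\rho f\|_q \leq \|f\|_2$ for every $f \in L_2(\gamma_n)$. The most elementary route is Bonami's: prove the two-point inequality for $f : \{-1,1\} \to \R$ by direct calculus, tensorize along coordinates using Minkowski's integral inequality to obtain hypercontractivity on $\{-1,1\}^N$, and finally invoke a central limit theorem argument (applied to functions of a suitably normalized sum of independent Rademachers) to transfer the estimate to Gaussian space in any dimension $n$. An alternative would be Gross's semigroup approach, deriving hypercontractivity by differentiating $t \mapsto \|U_{\rho(t)} f\|_{q(t)}$ using the Gaussian log-Sobolev inequality $\mathrm{Ent}_{\gamma}(f^2) \leq 2\, \E_{\gamma}[\|\nabla f\|_2^2]$.

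With Nelson in hand, the degree-$d$ statement is immediate. Expand $p$ in the Hermite basis as $p = \sum_{|\alpha| \leq d} \hat{p}(\alpha)\, H_\alpha$, set $\rho = 1/\sqrt{q-1} \leq 1$, and define the auxiliary function
$$f \;:=\; \sum_{|\alpha| \leq d} \rho^{-|\alpha|}\, \hat{p}(\alpha)\, H_\alpha.$$
Since the $H_\alpha$ are eigenfunctions of $U_\rho$ with eigenvalues $\rho^{|\alpha|}$, one has $U_\rho f = p$. Nelson's inequality then yields
$$\|p\|_q \;=\; \|U_\rho f\|_q \;\leq\; \|f\|_2 \;=\; \Bigl(\sum_{|\alpha| \leq d} \rho^{-2|\alpha|}\, \hat{p}(\alpha)^2\Bigr)^{1/2} \;\leq\; \rho^{-d}\, \|p\|_2 \;=\; (q-1)^{d/2}\, \|p\|_2,$$
where the penultimate step uses $\rho^{-2|\alpha|} \leq \rho^{-2d}$ for $|\alpha| \leq d$ together with Parseval $\|p\|_2^2 = \sum_\alpha \hat{p}(\alpha)^2$.

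The hard part is clearly Nelson's inequality itself, not the Hermite bookkeeping afterwards. The two-point inequality is a sharp and somewhat delicate calculus exercise; tensorization requires a careful Minkowski argument; and the passage from the cube to $N(0,I)$ via the CLT needs uniform integrability to justify moment convergence. The log-Sobolev route is cleaner structurally but shifts the real work into establishing the log-Sobolev inequality itself (which in turn can be handled by a two-point entropy inequality plus tensorization, or by a semigroup-interpolation argument). Either way, once Nelson is proved, the reduction of Theorem~\ref{thm:hc} to it is essentially a one-line calculation via the spectral decomposition of $U_\rho$.
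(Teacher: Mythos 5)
Your derivation is correct and is the standard one. One thing worth flagging: the paper does not actually prove Theorem~\ref{thm:hc}; it simply recalls it as a known equivalent formulation of the Bonami--Gross hypercontractive inequality and cites the original sources, so there is no ``paper's proof'' to compare against. Your reduction — identify $p$ as $U_\rho f$ with $\rho = 1/\sqrt{q-1}$ via the Hermite spectral decomposition of the Ornstein--Uhlenbeck semigroup, apply Nelson's $(2,q)$-hypercontractivity, and bound $\|f\|_2 \leq \rho^{-d}\|p\|_2$ by Parseval — is exactly how this moment bound is deduced from hypercontractivity in the standard references (e.g., O'Donnell's book, which the paper also cites for the corollary Theorem~\ref{thm:deg-d-chernoff}). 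You correctly identify that the substantive content is Nelson's inequality itself, and either the Bonami two-point-plus-CLT route or the Gross log-Sobolev route proves it; the Hermite bookkeeping afterwards is, as you say, a one-line spectral computation. No gaps.
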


The following concentration bound for low-degree polynomials, a
simple corollary of hypercontractivity, is well known (see, e.g.,~\cite{ODonnellbook}):
\begin{theorem}\label{thm:deg-d-chernoff}
Let $p: \R^m \to \R$ be a degree-$d$ polynomial. For any $t>e^d$,
we have
$$\pr_{x \sim N(0, I)} \left[ |p(x)|\geq t \|p\|_2 \right]\leq \exp(-\Omega(t^{2/d})).$$
\end{theorem}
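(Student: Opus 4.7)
The plan is to deduce the concentration bound from the hypercontractive moment bound (Theorem~\ref{thm:hc}) by a standard Markov argument, optimizing the moment order $q$ as a function of $t$. This is the textbook route: polynomial tail bounds via moment comparison, which is natural here because Theorem~\ref{thm:hc} already controls $\|p\|_q$ in terms of $\|p\|_2$ for all $q > 2$.

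First, I would apply Markov's inequality to $|p(x)|^q$ for a parameter $q > 2$ to be chosen later. This gives
\[
\pr_{x \sim N(0,I)}[|p(x)| \geq t \|p\|_2] \;=\; \pr[|p(x)|^q \geq t^q \|p\|_2^q] \;\leq\; \frac{\|p\|_q^q}{t^q \|p\|_2^q}.
\]
Next, I would invoke Theorem~\ref{thm:hc}, which gives $\|p\|_q \leq (q-1)^{d/2} \|p\|_2$. Substituting yields the bound
\[
\pr[|p(x)| \geq t\|p\|_2] \;\leq\; \left( \frac{(q-1)^{d/2}}{t} \right)^{q}.
\]

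The remaining step is to optimize the choice of $q$. The bracketed quantity is minimized (in terms of giving a genuinely small exponential tail) when $(q-1)^{d/2}$ is a constant factor smaller than $t$; concretely, I would set $q-1 = (t/e)^{2/d}$, so that $(q-1)^{d/2}/t = 1/e$. This requires $q > 2$, which is precisely guaranteed by the hypothesis $t > e^d$ (since then $(t/e)^{2/d} > 1$). With this choice,
\[
\pr[|p(x)| \geq t\|p\|_2] \;\leq\; e^{-q} \;=\; \exp\!\bigl(-1 - (t/e)^{2/d}\bigr) \;=\; \exp\!\bigl(-\Omega(t^{2/d})\bigr),
\]
which is exactly the desired bound.

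There is no serious obstacle here: the only subtle point is ensuring the chosen $q$ lies in the regime $q > 2$ where Theorem~\ref{thm:hc} applies, and verifying that the constant absorbed into the $\Omega(\cdot)$ is uniform in $d$. Both are handled by the assumption $t > e^d$, which makes $(t/e)^{2/d} \geq 1$ and hence $q \geq 2$. Since the proof is a direct two-line consequence of hypercontractivity and Markov, I would expect to present it essentially as above, perhaps with a brief remark that one can alternatively derive the same conclusion from the Gaussian hypercontractive inequality via the standard Chernoff-type optimization.
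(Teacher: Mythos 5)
Your proof is correct and is exactly the standard derivation: the paper itself gives no proof but states the result as ``a simple corollary of hypercontractivity'' with a citation to O'Donnell's book, and the Markov-plus-optimized-moment argument you spell out (choosing $q-1 = (t/e)^{2/d}$, using $t>e^d$ to ensure $q>2$ so Theorem~\ref{thm:hc} applies) is precisely that corollary. Your observation that $(t/e)^{2/d}=e^{-2/d}t^{2/d}\geq e^{-2}t^{2/d}$, so the implied constant in $\Omega(\cdot)$ is absolute, closes the only point worth checking.
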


We will also require the following anti-concentration bound for Gaussian polynomials:

\begin{theorem}[\cite{CW01}] \label{thm:cw}
Let $p: \R^n \to \R$ be a nonzero real degree-$d$ polynomial. For all $\eps>0$ and $t\in \R$ we have
\[ \pr_{x \sim N(0, I)}\left[|p(x)-t|\leq\eps \cdot
\sqrt{\Var_{x \sim N(0, I)}[p(x)]} \right]\leq O(d\eps^{1/d}) .\]
\end{theorem}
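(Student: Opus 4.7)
The plan is to reduce the multivariate anti-concentration bound to its univariate analogue via slicing, and to attack the univariate case directly by combining the factorization of real polynomials with induction on degree.

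First I would normalize. By scaling, we may assume $\Var_{x \sim N(0,I)}[p(x)] = 1$. Setting $q(x) = p(x) - t$, we have $\|q\|_2 \geq 1$ and the goal becomes $\pr[|q(x)| \leq \eps] \leq O(d \eps^{1/d})$. Hypercontractivity (Theorem~\ref{thm:hc}) gives $\|q\|_4 \leq 3^{d/2}\|q\|_2$, so by Paley--Zygmund $|q(x)|$ is of order $\|q\|_2$ with probability at least $\Omega(3^{-d})$; this handles the regime $\eps = \Omega(1)$ and rules out any trivial obstruction, letting me focus on small $\eps$.

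Next I would handle the univariate case: for $r : \R \to \R$ a polynomial of degree $\leq d$ with $\|r\|_{L^2(\gamma_1)} \geq 1$ (where $\gamma_1$ is the standard one-dimensional Gaussian), factor $r(y) = c\prod_{i=1}^{d'} (y - y_i)$ over $\mathbb{C}$, where $d' \leq d$ is the exact degree and $c$ the leading coefficient. Using $|r(y)| \geq |c|\prod_i |y - \mathrm{Re}(y_i)|$ and the fact that at least one factor in the product must be $\leq (\eps/|c|)^{1/d'}$ whenever $|r(y)| \leq \eps$, the set $\{y : |r(y)| \leq \eps\}$ lies in a union of at most $d'$ intervals, each of length $O((\eps/|c|)^{1/d'})$, giving Gaussian measure $O(d'\eps^{1/d'}/|c|^{1/d'})$. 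When $|c|$ is bounded below by a constant this suffices; when $|c|$ is tiny the polynomial is effectively of degree $< d'$, and I would induct on degree by approximating $r$ on the bulk of the Gaussian mass by a polynomial of degree $d'-1$ and applying the inductive hypothesis.

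Finally, to lift to the multivariate setting, I would pick a unit vector $v \in S^{n-1}$, decompose $x = yv + x^\perp$ with $y = v \cdot x \sim N(0,1)$ independent of $x^\perp \sim N(0, I - vv^\top)$, and view $y \mapsto q(yv + x^\perp)$ as a univariate polynomial in $y$ of degree $\leq d$. Applying the univariate estimate conditionally yields
\[
\pr\!\bigl[|q(x)| \leq \eps \,\big|\, x^\perp\bigr] \;\leq\; O\!\bigl(d \eps^{1/d} \cdot \|q(\cdot\, v + x^\perp)\|_{L^2(\gamma_1)}^{-1/d}\bigr),
\]
and Fubini reduces matters to bounding the negative moment $\E_{x^\perp}[\|q(\cdot\, v + x^\perp)\|_{L^2(\gamma_1)}^{-1/d}]$. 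The main obstacle is this negative-moment estimate on conditional $L^2$-norms; the natural resolution is to choose $v$ so that the coefficient of $y^d$ in $q(yv + x^\perp)$ (a polynomial in $x^\perp$ of strictly lower degree) has $L^2$-norm comparable to $\|q\|_2$, then recursively apply the theorem in fewer variables and lower degree to bound the negative moment, with the final prefactor $d$ arising as the sum of inductive losses across the recursion.
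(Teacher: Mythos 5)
The paper does not prove Theorem~\ref{thm:cw}; it is cited verbatim from Carbery and Wright~\cite{CW01} as a black-box anti-concentration bound, so there is no in-paper argument to compare yours against. That said, your sketch has genuine gaps at exactly the two points where the clean $O(d\eps^{1/d})$ constant is hard to obtain, and I do not think either gap is a matter of routine bookkeeping.

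In the univariate step, the factorization argument is correct and gives $\pr[|r|\le\eps]\le O\bigl(d'(\eps/|c|)^{1/d'}\bigr)$, but this is only useful when the leading coefficient $|c|$ is bounded below. Your plan for small $|c|$ is to truncate to a polynomial of degree $d'-1$ and induct, but the truncation error $|c|\cdot|He_{d'}(y)|$ is controlled only on a bulk region $|y|\le T$, where it is of size $|c|\,T^{\Theta(d')}$. To make this $\le\eps$ one needs $|c|\lesssim\eps/T^{d'}$, whereas the factorization argument needs $|c|\gtrsim\eps^{\alpha}$ for some $\alpha<1$ to yield a useful power of $\eps$. With $T\approx\sqrt{\log(1/\eps)}$ (the minimum to control the Gaussian tail), the two regimes do not meet without losing a nontrivial power of $\eps$ or a $d^{O(d)}$ factor from $(d')!$ when passing to the Hermite basis. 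Patching this gap is the real content of the univariate case, and the sketch does not indicate how to avoid the loss.

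In the multivariate reduction, after conditioning on $x^\perp$ you correctly arrive at the need to control the negative moment $\E_{x^\perp}\bigl[\|q(\cdot\,v+x^\perp)\|_{L^2(\gamma_1)}^{-1/d}\bigr]$, and you yourself flag this as ``the main obstacle.'' The fix you propose---choose $v$ so that the coefficient of $y^d$ is large and recurse in lower dimension and degree---does not obviously close the loop: the coefficient of $y^d$ in $q(yv+x^\perp)$ is a scalar (it is the top homogeneous part of $q$ evaluated at $v$), not a polynomial in $x^\perp$, so ``large $y^d$ coefficient'' controls the leading coefficient of the slice but not the conditional $L^2$ norm, which is dominated by the lower-order coefficients $c_j(x^\perp)$ for $j<d$. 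You would have to quantify how often $\|q(\cdot\,v+x^\perp)\|_{L^2(\gamma_1)}$ is small and show this happens only on a set of measure $\lesssim\eps^{\Theta(1)}$, which is essentially applying the theorem to a lower-dimensional polynomial; making the induction well-founded and keeping the accumulated factor at $O(d)$ rather than $d^{O(d)}$ is where the argument is missing an idea. The Carbery--Wright proof sidesteps this by working uniformly over log-concave measures and using a convexity/interpolation argument in place of slicing and recursion; your route can likely be made to give a weaker dimensionally-uniform bound, but as written it does not reach the stated constant.
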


Additionally, we will require basic facts KL Divergence (Pinsker's inequality), 
a classical result from empirical process theory (VC Inequality), and basics on Hermite analysis.
These tools are reviewed in Appendix~\ref{app:prob}.

\newpage

\section{Main Geometric Result} \label{sec:small-cover}
In Section~\ref{ssec:small-cover-exists}, we show the existence 
of small covers for near-zero sets of polynomials. In Section~\ref{ssec:cover-alg},
we point out how to turn our existence proof into an efficient algorithm to compute such a 
small cover.

\subsection{Existence of Small Covers} \label{ssec:small-cover-exists}

Our main geometric result is the following theorem:


\begin{theorem} \label{thm:main-cover}
There exists a universal constant $C>0$ such that the following holds:
Let $d, k, m \in \Z_+$ and $V$ be any vector space of homogeneous degree-$d$ real polynomials
in $m$ variables with codimension at most $k$ within $\R_{[d]}[x_1, \ldots, x_m]$.
For $\delta, R > 0$, let
$$S = S(V, R, \delta) \eqdef \{ x \in \R^m: \|x\|_2 \leq R \textrm{ and } |p(x)| \leq \delta \|p\|_{\ell_2} \textrm{ for all } p\in V \} \;.$$
Then, for any \new{$\eps \geq \delta^{\frac{1}{(C+1)d}} (2 R d k m)^{\frac{C}{C+1}}$},
there exists an $\eps$-cover of $S$ with size at most \new{$(2 (R/\eps) d k m)^{C^2 d^2 k^{1/d}}$.}
\end{theorem}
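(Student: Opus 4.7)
The plan is to prove Theorem~\ref{thm:main-cover} by strong induction on the pair $(m, k)$, the number of variables and the codimension of $V$, with the degree $d$ a tracked but essentially fixed parameter. Two base cases can be dispatched directly. For $d = 1$, $V$ is a space of linear forms and $V^\perp \subseteq \R^m$ has dimension at most $k$; the near-zero condition forces $\|\Pi_V x\|_2 \leq \delta$ for every $x \in S$, so $S$ lies within distance $\delta$ of the $k$-dimensional subspace $V^\perp$, yielding a cover of size $(3R/\eps)^k$. For $m = 1$, a nonzero univariate degree-$d$ polynomial in $V$ has at most $d$ roots, and univariate anti-concentration places $S$ in $O(d)$ short intervals of length $\lesssim \delta^{1/d} R$; the hypothesis on $\eps$ dominates this width.

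For the inductive step, I would split coordinates as $x = (y, z)$ with $y \in \R^{m'}$ and $z \in \R^{m - m'}$ (taking $m' = \lfloor m/2 \rfloor$ or even $m' = 1$ suffices). For each fixed $y$ the restriction $V_y = \{p(y, \cdot) : p \in V\}$ is a space of polynomials in $z$ of degree at most $d$. Since the ambient polynomial space in $z$ is strictly smaller than the ambient space in $x$, generically $V_y$ has codimension at most $k$ inside it; call such $y$ \emph{good} and the remaining \emph{bad}. For good $y$, after homogenizing (adjoin an auxiliary variable, or decompose $V_y$ by homogeneous degree and handle the top part), the inductive hypothesis in $m - m'$ variables and codimension $\leq k$ yields a cover of the slice $\{z : (y, z) \in S\}$ of the expected size. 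To cover the good $y$'s themselves, observe that the projection of $S$ onto $y$-space is contained in the near-zero set of a polynomial system in $y$ built from $V$, so the theorem applies inductively in fewer variables; the product of the two covers handles the good part of $S$.

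The bad $y$'s carry the heart of the argument. Bad $y$ means that $\codim(V_y)$ in the ambient polynomial space in $z$ is strictly larger than $k$: an unusually large combination of polynomials in $V$ nearly vanishes upon substitution. I would show that this forces $y$ to lie near a low-dimensional affine subspace $H \subseteq \R^{m'}$, by exhibiting explicit polynomial constraints on $y$ (certain minors of the coefficient matrix in $z$ of a basis of $V$) whose near-vanishing pins $y$ down. Restricting the entire problem to $H \times \R^{m - m'}$ produces a strictly smaller instance in $m$, and the induction applies. The recursion terminates after $O(d k^{1/d})$ levels, matching the target exponent.

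The main obstacle will be the bookkeeping of parameters through the recursion. Each descent into a slice or bad-hyperplane subproblem amplifies $\delta$ roughly as $\delta \mapsto \delta^{1/d}$ when one passes from a polynomial inequality to inequalities on its coefficients (via Fact~\ref{fact:ip-monomials} and Lemma~\ref{lem:poly-bounds}), and multiplies the cover size by a factor $\poly(Rdkm/\eps)^{O(d)}$. Designing the induction so that these compositions close at the target $\eps \geq \delta^{1/((C+1)d)}(2Rdkm)^{C/(C+1)}$ and total size $(2Rdkm/\eps)^{C^2 d^2 k^{1/d}}$ is precisely what the unusual exponents in the theorem statement are calibrated to. A secondary technical point is controlling the $\ell_2$-norm of $p(y, \cdot)$ as a polynomial in $z$: it is bounded by $\poly(\|y\|)\, \|p\|_{\ell_2}$ via the monomial expansion, and this factor must be tracked consistently when invoking the inductive hypothesis on a slice.
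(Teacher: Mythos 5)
Your outline correctly identifies the recursive coordinate-splitting framework and the good/bad dichotomy, and your $d = 1$ base case matches the paper. But there is a genuine gap in the core of the argument: the recursive structure you propose cannot produce the target exponent $O(d^2 k^{1/d})$.

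Your good case recurses into $m - m'$ variables with degree \emph{still} $d$ and codimension \emph{still} at most $k$, so the only quantity decreasing is $m$. Unwinding gives $f(m) \lesssim f(m')\cdot f(m - m')$, which telescopes to the trivial $(R/\eps)^{O(m)}$ cover no matter what $m'$ you pick. The $k^{1/d}$ in the exponent has to come from somewhere, and in the paper it comes from the fact that in the good case the recursion drops the degree from $d$ to $d-1$ \emph{and} the codimension from $k$ to $k' \approx k^{(d-1)/d}$ simultaneously — this is exactly what produces the telescoping $k^{1/d} + (k')^{1/(d-1)} + \cdots$. To achieve that, the paper does not look at the codimension of the full restriction $V_y$ (which, as you note via Fact~\ref{fact:codim-bound}, is bounded by $k$ in the smaller ambient space but does not visibly shrink). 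Instead it isolates the graded piece $W \subseteq V$ of polynomials that are homogeneous of degree exactly $1$ in the first $m'$ coordinates and degree $d-1$ in the rest, and defines good/bad quantitatively via the number of small \emph{singular values} of the evaluation map $A_x : W \to \R_{[d-1]}[y]$ (Definition~\ref{def:good}). A point $x$ is good if all but $k'$ singular values are bounded below by $\eta$; the span of the large-singular-value eigenvectors is then a codimension-$k'$ subspace of degree-$(d-1)$ polynomials that nearly vanish on the slice, which is what triggers the reduced recursion. Your codimension-based notion of good will not give you a subspace of smaller degree or smaller codimension to hand to the inductive hypothesis, and the homogenization work-around you mention (adjoining an auxiliary variable) would keep the degree at $d$, not reduce it.

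The bad case is where the real difficulty lives, and the proposal is too thin there. Saying that near-vanishing of certain minors ``pins $y$ down near a low-dimensional subspace'' is plausible in spirit, but the paper's Lemma~\ref{lem:hyperplane-close-bad} requires a quantitative statement — every $(k', \eta)$-bad point is within $O(\eta^{1/4} k^{1/2} R^{3/4} d^{-1/8})$ of a hyperplane of dimension $\leq 2k/k'$ — and the way it is proved is not via elimination or resultants but via a contradiction built from a Gram matrix: you take as many bad points $x^{(i)}$ as possible that are pairwise $\gamma$-far in a residual sense, form the polynomials $B_{i,j} = (x \cdot x^{(i)}) \, p_{i,j}(y)$ from the small-singular-value eigenvectors, lower-bound $\det(MM^T)$ by the orthogonality defects and upper-bound it by the fact that $\codim(W) \leq k$ forces at least half the eigenvalues of $MM^T$ to be small, and from the two bounds deduce the constraint on $\gamma$. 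None of this is substitutable by a qualitative ``minors nearly vanish'' claim. A secondary but consequential issue is your choice $m' = \lfloor m/2\rfloor$ or $m' = 1$: the paper is forced to set $m' = \lceil 3k/k'\rceil$ precisely so that the trivial cover of the $m'$-dimensional projection costs only $(R/\eps')^{O(k^{1/d})}$ and so that $2k/k'$ (the hyperplane dimension in the bad case) is smaller than $m'$; with $m' = m/2$ the $y$-projection cover already blows past the budget. In short, the proposal's skeleton is right, but the two ingredients that make the recursion close — the graded evaluation map with its singular-value threshold, and the Gram-determinant contradiction for bad points — are missing, and without them the induction does not terminate at the stated bound.
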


\new{
\paragraph{Detailed Proof Overview.}
The proof of Theorem~\ref{thm:main-cover} is elementary, though highly technical. 
Before we give the formal proof, we start by explaining the
main ideas here.
Fundamentally, the proof is recursive, and we show that given $V$,
$\delta, \eps$, and $R$, we can find an appropriate cover of the
corresponding set $B$ by reducing to a number of smaller and similar
looking problems. The first step in this reduction involves writing
$\R^m$ as $\R^{m'} \times \R^{m-m'}$ for $m'$ a sufficient multiple of $k^{1/d}$. We
then cover $B(0,R)$ by a number of cylinders of the form $B(x,\eps') \times
\R^{m-m'}$ ($\eps'$ a carefully chosen constant on the order of $\delta$). Our
basic plan is to show that for most $x$ that there is a small cover of
the intersection of $B$ with the associated cylinder, and then to show
that the bad $x$ all lie close to a hyperplane of co=dimension at least
$m'/2$.

For each cylinder, we note that for $x' \in B(x,\eps')$ and $y \in \R^{m-m'}$
not too large that $p(x',y)$ is close to $p(x,y)$ for all $p$. This means
that in order to cover $B(x,\eps') \times \R^{m-m'}$, it suffices to find a
cover of just $\{x\} \times \R^{m-m'}$ with slightly stronger parameters. The
set that needs to be covered is the set of points that nearly vanish
on every polynomial in $V$, where $V$ is a set of degree-$d$ polynomials in
$m$ variables. We restrict our attention to those polynomials $p \in V$
that when restricted to $x$ in their first $m'$ coordinates leave a degree
$d-1$ polynomial in $m-m'$ variables. We note that for any such polynomial
$p$, if substituting $x$ into its first $m'$ coordinates does not decrease
its $L_2$ norm by too much, the resulting polynomial $q(y) := p(x,y)$ must
nearly vanish on every point of $B \cap \{x\} \times \R^{m-m'}$. One way of
formalizing this is as follows. We let $W$ be the subspace of $V$
consisting of polynomials that are homogeneous degree-$1$ in the
$x$-coordinates. We define a linear transformation $A_x$ mapping $W$ to
degree-$(d-1)$ polynomials in the $y$-coordinates by evaluation on the $x$
coordinates. We note that all points in $B \cap \{x\} \times \R^{m-m'}$ nearly
vanish on all polynomials in $U$, where $U$ is the span of the eigenvectors
of $A_x$ with not-too-small eigenvalues. If the number of such
eigenvectors is large, then we can recursively find a small cover of 
$B \cap \{x\} \times \R^{m-m'}$. In particular, if the number of small eigenvectors
is less than $k' = k^{(d-1)/d}$, the recursive bounds should suffice. We
call such $x$ good. We will need a different technique for finding a
cover of the bad points.

For this, we claim that there is a hyperplane H of codimension at
least $m'/2$ so that all of the bad $x$ lie close to $H$. If we can show
this, we can cover all of the bad cylinders by recursively finding a
cover of $H$ (considering only the polynomials in the variables along
$H$). To prove this statement, we proceed by contradiction. If no $H$ can
be found, there must be many bad $x_i$ so that $x_{i+1}$ is far from the
span of $\{x_1,\ldots,x_i\}$. To each $x_i$ we associate degree-$(d-1)$
polynomials $p_{i1},\ldots,p_{ik'}$ corresponding to the small eigenvectors
of $A_{x_i}$. We let $q_i(x)$ be the linear function $q_i(x) = x \cdot x_i$,
and consider the set $S$ of polynomials $q_i(x) p_{ij}(y)$. It is not hard
to see that each polynomial in $S$ has a large component orthogonal to
the previous elements, and thus their Gram matrix must have a
relatively large determinant. On the other hand, $|S| \gg m'  k' \gg k$, so
there must be many linear combinations of polynomials in $S$ that lie in
$W$. It is also not hard to see that any polynomial in $W$ will have
relatively small inner product with any polynomial in $S$, and this will
imply that the Gram matrix of $S$ has relatively small determinant,
yielding a contradiction with our previous bound.
}

We are now ready to proceed with the formal proof of Theorem~\ref{thm:main-cover}.
We start with the following definition:

\begin{definition} \label{def:f}
In the context of Theorem~\ref{thm:main-cover}, let $f(R,d,\eps,\delta,k,m)$ be the smallest integer such that
for any subspace $V$ of $m$-variable homogeneous degree-$d$ real polynomials
with codimension at most $k$ in $\R_{[d]}[x_1, \ldots, x_m]$, the set
$S = S(V, R, \delta)  \eqdef \left\{ x \in \R^m: \|x\|_2 \leq R \textrm{ and }
|p(x)| \leq \delta \|p\|_{\ell_2} \textrm{ for all } p\in V \right\}$
has an $\eps$-cover, in $\ell_2$-norm, of cardinality at most $f(R, d, \eps,\delta,k,m)$.
\end{definition}

The proof of Theorem~\ref{thm:main-cover} relies on the following crucial proposition:

\begin{proposition}\label{prop:recursion}
For any \new{$0< \eps' < \eps \leq R$}, 
$k, m, k', m' \in \Z_+$ with $k' < k$ and $m' <m$,
and $\eta>0$ with $\eps-\eps' \gg \new{k^{1/2} \, \eta^{1/4} R^{3/4} d^{-1/8}}$,
we have that
\begin{eqnarray*}
f(R,d,\eps,\delta,k,m) &\leq& f(R,d,\eps',\delta,k,m') f\left(R,d-1,\eps-\eps', \big(\delta+\new{d \, (2R)^{d-1} \eps'}\big)/\eta, k', m-m' \right) \\
                                  &+& f\left(R,d,\eps-\eps' -\new{O\big(k^{1/2} \eta^{1/4}  R^{3/4} d^{-1/8}\big)}, \delta, k, m-m'+2(k/k')\right) \;.
\end{eqnarray*}
\end{proposition}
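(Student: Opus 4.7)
The plan is to establish the recursion via a three-stage decomposition on the splitting $\R^m = \R^{m'} \oplus \R^{m-m'}$, writing points as $(x,y)$. First, let $V' := V \cap \R_{[d]}[x_1,\ldots,x_{m'}]$; by Fact~\ref{fact:codim-bound} this has codimension at most $k$ in $\R_{[d]}[x_1,\ldots,x_{m'}]$, and the projection $\pi_{m'}(S)$ sits inside $S(V',R,\delta)$, so Definition~\ref{def:f} produces an $\eps'$-cover $\mathcal{C}_{m'}$ of size at most $f(R,d,\eps',\delta,k,m')$. Second, for each center $x \in \mathcal{C}_{m'}$ and any $x'$ with $\|x-x'\|_2 \leq \eps'$, $\|y\|_2 \leq R$, Lemma~\ref{lem:poly-bounds}(ii) yields $|p(x,y)-p(x',y)| \leq d(2R)^{d-1}\eps'\|p\|_{\ell_2}$, so an $(\eps-\eps')$-cover of $\tilde S_x := \{y : \|y\|_2 \leq R,\ |p(x,y)| \leq \tilde\delta\|p\|_{\ell_2}\ \forall p\in V\}$, with $\tilde\delta := \delta + d(2R)^{d-1}\eps'$, lifts to an $\eps$-cover of the slice of $S$ above $B(x,\eps')$. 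Third, I partition $\mathcal{C}_{m'}$ into good and bad centers and handle each case separately.

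Define $W := V \cap (\R_{[1]}[x]\otimes\R_{[d-1]}[y])$ and the substitution map $A_x : W \to \R_{[d-1]}[y]$, $p \mapsto p(x,\cdot)$. By Fact~\ref{fact:codim-bound}, $\dim W \geq m'\dim\R_{[d-1]}[y] - k \geq \dim\R_{[d-1]}[y]$ in the intended regime. Call $x$ \emph{good} if strictly fewer than $k'$ singular values of $A_x$ fall below $\eta$, and \emph{bad} otherwise. For a good $x$, let $U_x \subseteq W$ be the span of the right singular vectors with singular value $\geq \eta$ and set $V_x := A_x(U_x) \subseteq \R_{[d-1]}[y]$. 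Since $A_x|_{U_x}$ is injective, $\dim V_x = \dim U_x \geq \dim\R_{[d-1]}[y] - k'$, so $V_x$ has codimension at most $k'$. For any $y \in \tilde S_x$ and any $q = A_x p \in V_x$ with $\|p\|_{\ell_2}=1$, one has $|q(y)| = |p(x,y)| \leq \tilde\delta \leq (\tilde\delta/\eta)\|q\|_{\ell_2}$, so $\tilde S_x \subseteq S(V_x, R, \tilde\delta/\eta)$ and Definition~\ref{def:f} delivers a cover of size $f(R,d-1,\eps-\eps',\tilde\delta/\eta,k',m-m')$ per good center. Summed over $\mathcal{C}_{m'}$ this gives the first product term in the recursion.

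For the bad centers, the key geometric claim is that all of them lie within distance $\rho := O(k^{1/2}\eta^{1/4}R^{3/4}d^{-1/8})$ of a single affine subspace $H \subseteq \R^{m'}$ of dimension at most $2k/k'$. Given this, change basis so $H$ is coordinate-aligned, apply Fact~\ref{fact:codim-bound} once more to restrict $V$ to the $\dim H + (m-m') \leq m-m'+2k/k'$ surviving variables (codimension still at most $k$), and invoke Definition~\ref{def:f} at the shrunken scale $\eps-\eps'-O(\rho)$ to produce the second summand, with the $O(\rho)$ slack absorbing the discrepancy between any bad $x$ and its projection into $H$. I prove the claim by contradiction: if no such $H$ exists then greedy selection yields bad $x_1, \ldots, x_{N+1}$ with $N = \lfloor 2k/k' \rfloor$ such that each $x_{i+1}$ is at distance $\gg \rho$ from $\mathrm{span}(x_1,\ldots,x_i)$. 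For each $x_i$ I pick orthonormal $\tilde p_{i1},\ldots,\tilde p_{ik'} \in W$ realizing the small singular values of $A_{x_i}$, set $q_i(x) := x\cdot x_i$, and form the $(N+1)k' > k$ polynomials $\Phi_{ij}(x,y) := q_i(x)\,(A_{x_i}\tilde p_{ij})(y) \in \R_{[1]}[x]\otimes\R_{[d-1]}[y]$. The crux is to estimate the Gram determinant of $\{\Phi_{ij}\}$ in two ways: a lower bound coming from the greedy near-orthogonality of $\{x_i\}$ combined with the orthonormality of $\{\tilde p_{ij}\}_j$ within each $i$; and an upper bound coming from the fact that more than $k = \codim W$ polynomials in the ambient space $\R_{[1]}[x]\otimes\R_{[d-1]}[y]$ must admit a non-trivial relation modulo $W^\perp$, whose size is controlled by $\|A_{x_i}\tilde p_{ij}\|_{\ell_2} < \eta$ and the smallness of inner products between elements of $W^\perp$ and the $\Phi_{ij}$. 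Matching the two bounds pins down $\rho = O(k^{1/2}\eta^{1/4}R^{3/4}d^{-1/8})$, yielding the contradiction.

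The main obstacle is unquestionably this bad-case claim: all four free parameters $(\eps-\eps', \eta, \delta, \rho)$, the codimension $k$, and the choice of bi-degree $(1,d-1)$ for $W$ are simultaneously balanced inside the Gram determinant estimate, and the precise $d^{-1/8}$ factor emerges only after executing the bookkeeping carefully. By contrast, the two applications of Fact~\ref{fact:codim-bound}, the perturbation bound of Lemma~\ref{lem:poly-bounds}(ii), and the SVD accounting that converts ``few small singular values'' into ``low-codimension image'' are all routine once the partition is in place.
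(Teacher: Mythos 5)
Your proposal follows the same route as the paper: the decomposition $\R^m = \R^{m'}\oplus\R^{m-m'}$, covering the projection via $V_x$, passing from the slab $B(c,\eps')\times\R^{m-m'}$ to the weakened slice $\tilde S_c$ by the Lipschitz bound, the good/bad dichotomy via singular values of the substitution operator $A_x$ on the bidegree-$(1,d-1)$ subspace $W$, and the Gram-determinant contradiction to trap bad centers near a low-dimensional subspace. The good-center analysis (span of large singular vectors, codimension at most $k'$ in $\R_{[d-1]}[y]$, recurse in degree $d-1$) is correct as written.

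The one genuine defect is in the construction of the test polynomials for the bad-center lemma. You take orthonormal \emph{right} singular vectors $\tilde p_{i1},\ldots,\tilde p_{ik'}\in W$ of $A_{x_i}$ with singular value $<\eta$ and set $\Phi_{ij}(x,y) := q_i(x)\,(A_{x_i}\tilde p_{ij})(y)$. But $A_{x_i}\tilde p_{ij}$ is $\sigma_{ij}$ times the corresponding left singular vector, with $\sigma_{ij}<\eta$ and possibly $\sigma_{ij}=0$; hence $\|\Phi_{ij}\|_{\ell_2}\le \eta R/\sqrt{d}$ (and can vanish). Your stated Gram lower bound relies on ``the orthonormality of $\{\tilde p_{ij}\}_j$'', but what controls the determinant are the norms of the $y$-factors $A_{x_i}\tilde p_{ij}$ inside the product, and those are not orthonormal -- they are uniformly tiny. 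So the lower bound $\det \gtrsim (\gamma/\sqrt{d})^{(N+1)k'}$ fails (it degenerates entirely if any $\sigma_{ij}=0$), and there is no contradiction to extract. The paper fixes this by taking the orthonormal \emph{left} singular vectors $p_{i,j}\in\R_{[d-1]}[y]$ of small singular value directly, so that each $B_{i,j}:=q_i\,p_{i,j}$ has $\ell_2$-norm $\Theta(R/\sqrt{d})$ and the greedy step of the $x^{(i)}$'s gives the full lower bound; the smallness of the singular value then enters only in the upper bound, via the adjoint:
$\lvert \langle p,B_{i,j}\rangle\rvert = (1/d)\,\lvert\langle p, A_{x^{(i)}}^{T} p_{i,j}\rangle\rvert \le (\eta/d)\,\|p\|_{\ell_2}$ for $p\in W$. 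With that substitution your sketch aligns with the paper's. Note also that you explicitly defer the quantitative bookkeeping that produces $\gamma = O(\eta^{1/4}k^{1/2}R^{3/4}d^{-1/8})$, which the paper carries out in detail; the proof is incomplete without it, but with the corrected $\Phi_{ij}$ the intended estimate does close.
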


\begin{proof}[Proof of Proposition~\ref{prop:recursion}]
\dnew{The basic strategy of our proof will be as follows. 
Firstly, it is straightforward to show that the projection of $S$ onto the first $m'$-coordinates 
has an $\eps'$ cover of size at most $f(R,d,\eps',\delta,k,m')$. For each of the points $c$ in this cover, 
we will need to cover the cylinder $(B(c,\eps')\times \R^{m-m'})\cap S$. 
We show that for most such $c$ there is such a cover of size at most 
$f\left(R,d-1,\eps-\eps', \big(\delta+\new{d \, (2R)^{d-1} \eps'}\big)/\eta, k', m-m' \right)$. 
For the remaining points of $S$, we show that they all lie close to a hyperplane $H$ 
of dimension at most $m-m'+2k/k'$. By considering the projection of these points onto $H$, 
we are left with a similar problem in a smaller dimensional space, 
and show that all of these remaining points have a cover of size at most 
$f\left(R,d,\eps-\eps' -\new{O\big(k^{1/2} \eta^{1/4}  R^{3/4} d^{-1/8}\big)}, \delta, k, m-m'+2(k/k')\right)$.}

We begin by decomposing $\R^m$ as $\R^{m'}\times \R^{m-m'}$.
Each element $z \in \R^m$ can be written as $z = (x,y)$ with $x\in \R^{m'}$ and $y\in \R^{m-m'}$.
Let $V_x$ be the subset of $V$ that consists of homogeneous degree-$d$ polynomials
that only depend on coordinates in $x$.

\dnew{To cover the projection onto the $x$-coordinates, we note that $|p(x,y)|=|p(x)|$ 
must be small for all $(x,y)\in S$ and $p\in V_x$. This means that these points 
are a set of near zeroes of a large space of polynomials. More formally,} 
by Fact~\ref{fact:codim-bound}, $V_x$ is a subspace with codimension at most $k$
in $\R_{[d]}[x_1, \ldots, x_{m'}]$, i.e., within the space of all homogeneous degree-$d$ polynomials
in these variables.
Consider the set of $S_x \subset \R^{m'}$ defined as follows:
$$S_x \eqdef S_x(V_x, R, \delta) = \{ x \in \R^{m'}: \|x\|_2 \leq R \textrm{ and }
|p(x)| \leq \delta \|p\|_{\ell_2} \textrm{ for all } p\in V_x \} \;.$$
First, we claim that $S_x \supseteq \Pi_x(S) \eqdef \{ x \in \R^{m'}\mid \exists y \in \R^{m-m'} \textrm{ with } (x, y) \in S\}$,
i.e., $S_x$ contains the projection of $S$ onto the $x$-coordinates. Indeed, let $x \in \Pi_x(S)$. Then there exists
$y \in  \R^{m-m'}$ such that (a) $\|(x, y) \|_2 \leq R$ and (b) $|p(x, y)| \leq \delta \|p\|_{\ell_2}$ for all $p \in V$.
Condition (a) a fortiori implies that $\|x\|_2 \leq R$.
Since $V_x \subseteq V$ and $p(x, y) = p(x)$ for all $p \in V_x$,
condition (b) gives that $|p(x)| \leq \delta \|p\|_{\ell_2}$ for all $p \in V_x$.
Therefore, $x \in S_x$.

Let $\mathcal{C}_{\eps'}$ be an $\eps'$-cover of $S_x$ (and therefore of $\Pi_x(S)$) with minimum cardinality.
Since $V_x$ is a subspace with codimension at most $k$ in $\R_{[d]}[x_1, \ldots, x_{m'}]$,
\new{by Definition~\ref{def:f}} we have that $|\mathcal{C}_{\eps'}| \leq f(R, d, \eps',\delta, k, m')$.

\dnew{For each $c\in\mathcal{C}_{\eps'}$, we would like to cover the cylinder $(B(c,\eps')\times \R^{m-m'})\cap S$. We note that this is a set of points where $|p(x,y)|$ is small for all $p\in V$. However, for $x$ in this set $p(x,y)\approx p(c,y)$. Thus, it suffices to consider the set of points where $|p(c,y)|$ is small, which will allow us to reduce more easily to a similar-looking problem. In particular,} for each $c\in\mathcal{C}_{\eps'}$, we consider the set
$$\new{S_{y, c}} \eqdef \{y \in \R^{m-m'}: \|y\|_2 \leq R \textrm{ and }
|p(c,y)| \leq (\delta +  \new{d \, (2R)^{d-1} \eps'}) \|p\|_{\ell_2} \textrm{ for all } p \in V  \} \;.$$
We require the following claim:

\begin{claim} \label{claim:triangle-ineq-cover}
For any given $c \in \mathcal{C}_{\eps'}$, let $\mathcal{D}_{c, \eps-\eps'}$
be an $(\eps-\eps')$-cover of $\new{S_{y, c}}$.
The set $c\times \mathcal{D}_{c, \eps-\eps'}$ is an $\eps$-cover for $S\cap (B_{\new{m'}}(c,\eps')\times \R^{m-m'})$.
\end{claim}

\begin{proof}
To prove this claim, we start by noting that
for any $(x,y) \in S\cap (B_{\new{m'}}(c,\eps')\times \R^{m-m'})$,
where $x \in \R^{m'}$ and $y \in \R^{m-m'}$, we have that
(a) $\|x - c\|_2 \leq \eps'$ and (b) $y \in \new{S_{y, c}}$.
Condition (a) follows directly from the fact that $x \in B_{\new{m'}}(c,\eps')$.
To show condition (b), we start by noting that $\|y\|_2 \leq \|(x, y)\|_2 \leq R$,
where the second inequality holds since $(x, y) \in S$. Moreover, for each $p\in V$
we have that 
\begin{eqnarray*}
|p(c,y)|
&\leq& |p(x,y)|+ |p(c,y) - p(x,y)| \\
&\leq& \delta \|p\|_{\ell_2}+ \new{d \, \max\{\|x\|_2, \|c\|_2\}^{d-1}} \|c-x\|_2 \, \|p\|_{\ell_2} \\
&\leq& \delta \|p\|_{\ell_2}  + \new{d \, (2R)^{d-1}}  \eps' \, \|p\|_{\ell_2} \\
&=& (\delta+ \new{d \, (2R)^{d-1}} \, \eps' ) \, \|p\|_{\ell_2} \;,
\end{eqnarray*}
where the first inequality is the triangle inequality,
the second inequality uses that $|p(x ,y)| \leq \delta \|p\|_2$ (since $(x, y) \in S$)
and Lemma~\ref{lem:poly-bounds} \new{(ii)},
the third inequality uses that $\|x\|_2 \leq \| (x,y) \|_2 \leq R$ (since $(x, y)\in S$) and
$\|c\|_2 \leq \|c-x\|_2 + \|x\|_2 \leq \eps'+R \leq 2R$.

Since $y \in \new{S_{y, c}}$ and $\mathcal{D}_{c, \eps-\eps'}$ is an $(\eps-\eps')$-cover of $\new{S_{y, c}}$,
there exists $d\in \mathcal{D}_{c, \eps-\eps'}$ with $\|d-y\|_2 \leq \eps-\eps'$. Therefore,
for any $(x,y) \in S\cap (B_{\new{m'}}(c,\eps')\times \R^{m-m'})$ we have that
$\|x - c\|_2 \leq \eps'$ and $\|d-y\|_2 \leq \eps-\eps'$ for some $d\in \mathcal{D}_{c, \eps-\eps'}$.
The claim now follows from the Pythagorean theorem.
\end{proof}

Note that the set $\cup_{c \in \mathcal{C}_{\eps'}}(c\times \mathcal{D}_{c, \eps-\eps'})$
is an $\eps$-cover of $S$. To see this, fix any $z = (x, y) \in S$, where $x \in \R^{m'}$ and $y \in \R^{m-m'}$.
Since $x \in \Pi_x(S) \subseteq S_x$, there exists
$c \in \mathcal{C}_{\eps'}$ such that $\|x-c\|_2 \leq \eps'$. For this choice of $c$, by definition we have that
$z \in S\cap (B_{\new{m'}}(c,\eps')\times \R^{m-m'})$. By Claim~\ref{claim:triangle-ineq-cover}, there exists a point
$z' \in c \times \mathcal{D}_{c, \eps-\eps'}$ such that $\|z-z'\|_2 \leq \eps$, as desired.

Therefore, if we could show that the set $\new{S_{y, c}}$ has a small a $(\eps-\eps')$-cover
for all $c\in \mathcal{C}_{\eps'}$, we would obtain a small $\eps$-cover of $S$. While this strong statement
may not hold, we will show that $\new{S_{y, c}}$ has a small $(\eps-\eps')$-cover
for \emph{most} points $c\in \mathcal{C}_{\eps'}$. In particular, we will show that
$S_{y,c}$ has a small $(\eps-\eps')$ cover for all points $c\in \mathcal{C}_{\eps'}$
\emph{except} for those near a low-dimensional subspace $H$.
We will then separately show how to construct a small cover of the points near $H \times \R^{m-m'}$.

\dnew{To understand why we will have a small cover for most $S_{y,c}$, 
note that, for $y\in S_{y,c}$ and $p\in V$, $|p(c,y)|$ will be small. 
We note that a basis of $p\in V$ consists of $\dim(\R_{[d]}(z_1,\ldots,z_m))-k$ polynomials, 
and so we have a very large number of polynomials $p(c,-)$ that must be small at all $y\in S_{y,c}$, 
especially considering that $\dim(\R_{[d]}(z_1,\ldots,z_m))$ is much bigger than $\dim(\R_{[d]}(y_1,\ldots,y_{m-m'}))$. 
However, this intuition will be wrong if for many of these polynomials $p$ it is the case that 
$\|p(c,-)\|_{\ell_2} \ll \|p\|_{\ell_2}$. So, to see when this works and when it does not, 
we will need to consider when this kind of restriction leaves us 
with a polynomial of reasonable size.}

\new{To proceed, we require a few additional definitions.}

\begin{definition} \label{def:W}
Let $W$ be the subspace of $V$ consisting of polynomials in $(x, y)$, where $x \in \R^{m'}$ and $y \in \R^{m-m'}$,
that are homogeneous degree-$1$ in the $x$-variables
and homogeneous degree-$(d-1)$ in the $y$-variables.
\end{definition}

By Fact~\ref{fact:codim-bound}, $W$ is a subspace of the space of all polynomials
that are homogeneous degree-$1$ in the $x$-variables
and homogeneous degree-$(d-1)$ in the $y$-variables with codimension at most $k$.

Note that for each $x\in \R^{m'}$, $x$ defines a linear map $A_x: W \to \R_{[d-1]}[y]$ from $W$
to the vector space of homogeneous degree-$(d-1)$ polynomials in the $y$ coordinates.
In particular, for a polynomial $p\in W$, we define $A_x(p)$ by evaluation as $(A_x(p))(y) = p(x,y)$.

With this setup, we introduce the notion of a good point:

\begin{definition} \label{def:good}
\new{Fix $k' \in \Z_+$ and $\eta>0$.}
A point $x \in \R^{m'}$ is called {\em $(k', \eta)$-good} if the linear map $A_x$
has at most $k'$ left singular values smaller than $\eta$
(or equivalently if $A_x$ has at least $\dim(\R_{[d-1]}[y])-k'$ singular values that are at least $\eta$).
A point  $x \in \R^{m'}$ is called {\em $(k', \eta)$-bad} otherwise.
\end{definition}


Our next key claim is that for any good point $c\in  \mathcal{C}_{\eps'}$,
the set $\new{S_{y, c}}$ has a small cover:

\begin{claim}\label{claim:good-points-cover}
For any $(k', \eta)$-good point $c\in  \mathcal{C}_{\eps'}$, $\new{S_{y, c}}$ has an $(\eps-\eps')$-cover of size
$$f(R,d-1,\eps-\eps',(\delta+\new{d \, (2R)^{d-1} \eps'})/\eta,k',m-m') \;.$$
\end{claim}
\begin{proof}
\dnew{The idea of this proof is as described above. 
For $c$ a good point, we note that for $p\in W$ any singular vector of $A_c$ with large singular value, 
we have that for any $y\in S_{y,c}$ it holds
$$
|p(c,y)| \leq (\delta+d(2R)^{d-1}\eps') \|p\|_{\ell_2} \leq (\delta+d(2R)^{d-1}\eps')/\eta \|p(c,-)\|_{\ell_2} \;.
$$
This gives a large dimensional subspace of polynomials that 
nearly vanish on $S_{y,c}$ allowing us to bound the size of its cover.}

In particular, fix any $(k', \eta)$-good point $c\in \mathcal{C}_{\eps'}$.
Let $V_c$ be the vector space of homogeneous degree-$(d-1)$ polynomials
in the $y$ coordinates (i.e., in $\R^{m-m'}$) spanned by
the left singular vectors of $A_c$ with singular value more than $\eta$.
By \new{Definition~\ref{def:good}}, $V_c$ has codimension at most $k'$
within $\R_{[d-1]}[y]$.
Furthermore, for any $p\in V_c$ there exists
$q\in W \new{\subseteq V}$ with $A_c q = p$ and $\|p\|_{\ell_2} \geq \eta \|q\|_{\ell_2}$,
where the last inequality follows from the definition of $V_c$.
\dnew{In particular, the singular value decomposition gives us
orthonormal sets of polynomials $\{v^{(i)}\} \in W$ and $\{u^{(i)}\} \in \R_{[d-1]}[y]$ such
that $A_c v^{(i)} = \sigma_i u^{(i)}$. By definition, $V_c$ is spanned by the $u^{(i)}$'s
with corresponding $\sigma_i \geq \eta$. In particular, we can write $p = \sum_i a_i u^{(i)}$
and we have that $\|p\|_{\ell_2}^2 = \sum_i a_i^2$. Note that if we consider the polynomial
$q = \sum_i a_i \sigma_i^{-1} v^{(i)}$, then we have that $q \in W$, $A_c q = p$,
and $\|q\|_{\ell_2}^2 = \sum_i a_i^2 \sigma_i^{-2}$.
Since $\sigma_i \geq \eta$ for all $i$ with $a_i\neq 0$,
we get that $\|q\|_{\ell_2}^2 \leq \eta^{-2}\|p\|_{\ell_2}^2$ or $\|p\|_{\ell_2} \geq \eta \|q\|_{\ell_2}$.}

For any $p\in V_c$ and $y\in \new{S_{y, c}}$, we have that
$$|p(y)| = |q(c,y)| \leq (\delta + \new{d \, (2R)^{d-1} \eps'})\|q\|_{\ell_2}
\leq (\delta + \new{d \, (2R)^{d-1} \eps'})\|p\|_{\ell_2}/\eta \;,$$
where the equality follows from the definition of $p$ and $q$, the first inequality uses the definition of
$\new{S_{y, c}}$, and the second inequality uses that $\|q\|_{\ell_2} \leq \|p\|_{\ell_2}/ \eta$.
Therefore, $\new{S_{y, c}}$ is contained in the set
$$S'_{y, c} \eqdef \left\{ y \in \R^{m-m'}: \|y\|_2 \leq R \textrm{ and }
|p(y)|\leq (\delta + \new{d \, (2R)^{d-1} \eps'})\|p\|_{\ell_2}/\eta \textrm{ for all } p\in V_c  \right\} \;.$$
Since $V_c$ is a vector space of codimension at most $k'$ within $\R_{[d-1]}[y]$,
we have that \new{$S'_{y, c}$} (and therefore $\new{S_{y, c}}$) has an $(\eps-\eps')$-cover of size
$f(R,d-1,\eps-\eps',(\delta+\new{d \, (2R)^{d-1} \eps'})/\eta,k',m-m')$,
as desired. This completes the proof of Claim~\ref{claim:good-points-cover}.
\end{proof}

By Claims~\ref{claim:triangle-ineq-cover} and~\ref{claim:good-points-cover}, the subset of $S$
consisting of points $z = (x, y)$ whose $x$-coordinate is within
$\ell_2$-distance $\eps'$ of a $(k', \eta)$-good point $c \in \mathcal{C}_{\eps'}$
has an $\eps$-cover of size at most
$$f(R,d,\eps',\delta,k,m')f\left(R,d-1,\eps-\eps', \big(\delta+\new{d \, (2R)^{d-1} \eps'} \big)/\eta,k',m-m' \right) \;.$$


To complete the proof of Proposition~\ref{prop:recursion},
we proceed to establish an upper bound on the size of an $\eps$-cover for the subset of $S$
consisting of points $z=(x, y)$ whose $x$-coordinate is within $\ell_2$-distance $\eps'$
of a $(k', \eta)$-bad point $c \in \mathcal{C}_{\eps'}$.
To that end, we prove the following key lemma:

\begin{lemma}\label{lem:hyperplane-close-bad}
There exists a \new{subspace} $H$ in \new{$\R^{m'}$} of dimension at most $2k/k'$
so that all the $(k', \eta)$-bad points in $\mathcal{C}_{\eps'}$
are within $\ell_2$-distance \new{$O(\eta^{1/4} k^{1/2} R^{3/4} d^{-1/8})$} of $H$.
\end{lemma}
\begin{proof}
We proceed by contradiction. Let $t \eqdef \lceil 2k/k'\rceil$.
If the lemma statement does not hold, there exists a sequence
of $(k', \eta)$-bad points $x^{(1)},x^{(2)},\ldots,x^{(t)}$ in $\mathcal{C}_{\eps'}$,
where each $x^{(j)}$ has a component orthogonal to the span of $x^{(1)},\ldots,x^{(j-1)}$ of $\ell_2$-norm
at least $\gamma$,where $\gamma$ is a sufficiently large constant multiple of  \new{$\eta^{1/4} k^{1/2} R^{3/4} d^{-1/8}$}.
This sequence of points can be constructed inductively as, by assumption, not all of the $(k', \eta)$-bad points
in $\mathcal{C}_{\eps'}$ are within $\ell_2$-distance $\gamma$ of the hyperplane
spanned by $x^{(1)},\ldots,x^{(j-1)}$, allowing us to find an appropriate $x^{(j)}$.

By definition, each $(k', \eta)$-bad point $x^{(i)} \in \mathcal{C}_{\eps'}$ in the aforementioned sequence
has associated with it at least $k'$ orthogonal homogeneous degree-$(d-1)$ polynomials in the $y$-variables
corresponding to left singular vectors of $A_{x^{(i)}}$ with singular value at most $\eta$.
Let $p_{i,1},\ldots,p_{i,k'}$ be a set of $k'$ orthonormal such polynomials,
i.e., assume w.l.o.g. that the $p_{i, j}$'s  \dnew{are orthogonal and} satisfy $\|p_{i,j}\|_{\ell_2}=1$.
For each $x^{(i)}$, $i \in [t]$, we also consider the linear polynomial $q_i(x):= x \cdot x^{(i)}$.
Let $B_{i,j}(x,y)$ be the polynomial in $\R^m$ defined as $B_{i,j}(x,y) = q_i(x)p_{i,j}(y)$
for $i \in [t]$ and $j \in [k']$. \new{Let $U$ be the set of all $B_{i,j}$'s, i.e., $|U| = t \, k' \geq 2k$.}

We will require the following claim (whose simple proof is in Appendix~\ref{app:inner-prod-d}):
\begin{claim}\label{claim:inner-prod-d}
Let $q_1(x), q_2(x)$ be homogeneous degree-$1$ polynomials in $x$ and $p_1(y), p_2(y)$
be homogeneous degree-$(d-1)$ polynomials in $y$. Then
$\langle q_1 \, p_1,   q_2 \, p_2 \rangle = (1/d) \, \langle q_1, q_2 \rangle \, \langle p_1, p_2 \rangle$.
\end{claim}

Fix any $p\in W$. If $\{z^{(1)},\ldots,z^{(m')}\}$ is a basis of $\R^{m'}$,
we can write $p(x,y) = \sum_{j=1}^{m'} (x\cdot z^{(j)})p_j(y)$,
for some \new{homogeneous degree-$(d-1)$} polynomials $p_j(y)$.
If we pick the $z^{(j)}$'s so that $z^{(1)}\cdot x^{(i)}=1$ and $z^{(j)} \cdot x^{(i)}=0$ for $j>1$,
\new{for some $i \in [t]$},
then the  polynomial $(x \cdot z^{(j)})$, for $j>\new{1}$,
is orthogonal to $q_i(x)$, and $(x\cdot z^{(1)})$ has inner product $1$ with $q_i(x)$.
\new{By Claim~\ref{claim:inner-prod-d}, this implies that
$\langle p, B_{i,j} \rangle = (1/d) \, \langle p_1, p_{i,j}\rangle$.
On the other hand, it is easy to see that $(A_{x^{(i)}}(p))(y) = p(x^{(i)},y) = p_1(y).$
Therefore, we have that
\begin{equation} \label{eqn:ub-ip}
\left| \langle p , B_{i,j} \right \rangle| = (1/d) \left| \langle A_{x^{(i)}}p, p_{i,j} \rangle \right| =
(1/d) \left| \langle p , A_{x^{(i)}}^T p_{i,j} \rangle \right| \leq (\eta/d) \, \|p\|_{\ell_2} \;,
\end{equation}
where the last inequality is Cauchy-Schwarz using the assumption that
$p_{i,j}$ is a singular vector of $A_{x^{(i)}}^T$ with singular value at most $\eta$.}

We will use \eqref{eqn:ub-ip} to prove a contradiction, based
on an analysis of the eigenvalues \new{of the Gram matrix of the $B_{i, j}$'s}.
By construction, we have that each $B_{i,j}$ has a component orthogonal
to all of the $B_{i',j'}$ with $i' < i$ or with $i'=i$ and $j'\neq j$ of $\ell_2$-norm at least \new{$\gamma/\sqrt{d}$}.
\new{Fix an ordering of the $B_{i, j}$'s in increasing order of $i$. For simplicity, we use a single index $\ell = (i, j)$
and will refer to the set of $B_{\ell}$'s in this ordering. Note that $\ell \in L$, where the index set $L$ has size
$|L|  =  |U| = \Theta(k)$.

Let $M$ be the matrix whose rows are the $B_{\ell}$'s, in increasing
order of $\ell$ according to our ordering. That is, $M$ is a linear operator such that $M^T e_{\ell}  = B_{\ell}$, where $e_{\ell}$
is the standard basis vector whose $\ell$-th coordinate is $1$. By writing the rows of $M$ in the appropriate basis
(i.e., a basis where the $j$-th term is the orthogonal part of the $j$-th row),
we obtain a lower triangular matrix with diagonal entries of magnitude at least $\gamma/\sqrt{d}$.
Therefore, if $M M^T$ is the corresponding Gram matrix, we have that
\begin{equation} \label{eqn:det-lb}
\det(M \new{M^T})^{\new{1/2}}\geq (\gamma/\sqrt{d})^{|U|} \;.
\end{equation}
Note that for any $p = \sum_{\ell \in L} a_{\ell} B_{\ell} \in W$, with $a = [a_{\ell}]_{\ell \in L} \in \R^{|U|}$,
we can write:
\begin{eqnarray*}
\left\| a^T MM^T \right\|_2
&=& \left\| p M^T \right\|_2  =\left\|  [\langle p, B_{\ell} \rangle] \right\|_2
\leq \sqrt{|U|} \max_{\ell \in L}  \left | \langle p,  B_{\ell} \rangle \right| \\
&=& O(\sqrt{k} \eta/ d)  \|p\|_{\ell_2} \\
&=& O(\sqrt{k} \eta/ d) \|a\|_1 \max_{\ell \in L} \|B_{\ell}\|_{\ell_2} \\
&=& O(\sqrt{k} \eta /d) \, O(\sqrt{k}) \|a\|_2 \, (R/ \sqrt{d}) \\
&=& O(k \eta R/d^{3/2}) \|a\|_2\;,
\end{eqnarray*}
where the second line follows from \eqref{eqn:ub-ip} and the fourth line
uses Claim~\ref{claim:inner-prod-d} to get that
$$\|B_{i, j}\|_{\ell_2} =  (1/\sqrt{d}) \|q_i\|_{\ell_2} \, \|p_{i, j}\|_{\ell_2}  = (1/\sqrt{d}) \|x^{(i)}\|_2  \leq R / \sqrt{d} \;.$$
Since there is at least a $|U|-k \geq |U|/2$ dimensional subspace of such $a$'s, it follows that
$M M^T$ has at least $|U|/2$ many eigenvalues of size at most $O(k R \eta/d^{3/2})$.

Since the determinant is the product of the eigenvalues, it follows that
\begin{equation} \label{eqn:det-ub}
\det(MM^T) \leq \left(\lambda_{\max} \, O\left(kR \eta / d^{3/2}\right) \right)^{|U|/2}\;,
\end{equation}
where $\lambda_{\max}$ is the largest eigenvalue of $MM^T$.
Combining \eqref{eqn:det-lb} and \eqref{eqn:det-ub}, we obtain that
the largest \new{eigenvalue} of \new{$M M^T$} is at least
$\lambda_{\max} = \Omega\left(\gamma^4 /(k R \eta d^{1/2}) \right)$.
On the other hand, the eigenvalues of $MM^T$ can be bounded from above as follows:
$$v^T(MM^T)v = \|v^T M\|_{\ell_2}^2 = \left\| \littlesum_{\ell \in L} v_{\ell} B_{\ell} \right\|_2^2
\leq \|v\|_1^2 \max_{\ell \in L} \|B_{\ell}\|_2^2 \leq O(k R^2 / d) \|v\|_2^2 \;,$$
which implies that
$\gamma = O(\eta^{1/4} k^{1/2} R^{3/4} d^{-1/8})$.
This gives the desired contradiction, completing the proof of Lemma~\ref{lem:hyperplane-close-bad}.
}
\end{proof}



By Lemma~\ref{lem:hyperplane-close-bad}, all points $z = (x, y)$ in $S$, with $x \in \R^{m'}$ and $y\in \R^{m-m'}$,
whose $x$-coordinates are within $\ell_2$-distance $\eps'$ of a $(k', \eta)$-bad point $c \in \mathcal{C}_{\eps'}$
have their $x$-coordinates within $\ell_2$-distance $\eps'+ \new{O(\eta^{1/4} k^{1/2} R^{3/4} d^{-1/8})}$
from some specific origin-centered hyperplane $H$ of dimension at most $2k/k'$.
Therefore, all such points $z \in S$ are within $\ell_2$-distance $\eps'+ \new{O(\eta^{1/4} k^{1/2} R^{3/4}d^{-1/8})}$
from the origin-centered hyperplane $H' = H \times \R^{m-m'}$ of dimension at most $m-m'+2k/k'$.
Any such point $z$ can be written as $(z_{H'}, z_p)$, where $z_{H'}$ is
the orthogonal projection onto $H'$ and $z_p$ is the orthogonal complement,
where $\|z_p\|_2 \leq \eps'+\new{O(\eta^{1/4} k^{1/2} R^{3/4}d^{-1/8})}$.

Let $V_H$ be the subspace of \new{$m$-variable homogeneous} polynomials in $V$
that depend only in the coordinates of \new{$z_{H'}$},
\new{i.e., for each $p \in V_H$ we have that $p(z_{H'}, z_p) = p(z_{H'})$.}

By Fact~\ref{fact:codim-bound}, $V_H$ has codimension at most $k$
in the space of all \new{degree-$d$ homogeneous} polynomials in these variables.
Let $$S_{H'}  \eqdef \left\{ z \in \R^m : \|z\|_2 \leq R , \; z \in H',
\textrm{ and } |p(z)|\leq \delta \|p\|_{\ell_2}  \textrm{ for all } p\in V_H \right\} \;.$$
Note that for all $z \in S$ we have $z_H\in S_H$.
\new{We can apply our inductive hypothesis to obtain a small cover of $S_{H'}$.
To do so, we need to perform a change of variables to associate $H' = H \times \R^{m-m'}$ with $\R^{m-m'+\dim(H)}$.
This does not affect our bounds because the defined $\ell_2$-norm
on homogeneous polynomials is rotationally invariant. Therefore,}
there is an $(\eps- \eps' - \new{O(\eta^{1/4} k^{1/2} R^{3/4} d^{-1/8})})$cover of $S_H$ of size
$$f\left(R,d,\eps-\eps' - \new{O(\eta^{1/4} k^{1/2} R^{3/4} d^{-1/8})}, \delta, k, m-m'+2(k/k')\right) \;.$$
Using the points of this cover as centers gives us an appropriate $\eps$-cover
for the set of points $z = (x, y)$ in $S$ with $x$-coordinate within distance $\eps'$
of any $(k', \eta)$-bad point $c \in \mathcal{C}_{\eps'}$.
This completes the proof of Proposition~\ref{prop:recursion}.
\end{proof}

\noindent We are now ready to give the proof of Theorem~\ref{thm:main-cover}.

\begin{proof}[Proof of Theorem~\ref{thm:main-cover}]
We proceed by induction and a careful application of Proposition~\ref{prop:recursion}.


We start by noting that $f(R,d,\eps,\delta,k,m)$ is bounded from above $O(R/\eps)^m$,
i.e., the size of an $\eps$-cover of $B_m(0, R)$.
We will use this trivial upper bound when the dimension $m$ is sufficiently small.

The proof will proceed by induction on $d+m$. We will prove the following inductive hypothesis:
There exists a (sufficiently large) universal constant $C>0$ such that
if $0< \eps \leq R$ and
\new{
\begin{equation}\label{eqn:delta-ub}
\delta \leq \eps^d \left( (\eps/R)/(2kd m) \right)^{Cd} \;,
\end{equation}
}
we have that $f(R,d,\eps,\delta,k,m) \leq \left( 2(R/\eps) d k m \right)^{C^2 d^2 k^{1/d}}.$

The trivial upper bound of $O(R/\eps)^m$ on the cover size
already implies our inductive hypothesis when $m \leq C d^2 k^{1/d}$.

\dnew{
When $d=1$, a codimension $k$ subspace of linear functions in $V$ defines
a dimension at most $k$ subspace $U\subseteq \R^m$, where $V$
is the set of linear functions vanishing on $U$. We claim that all points of $S$
must be within $\ell_2$-distance $\delta \leq \eps/2$ of $U$.
This is because for any $x$, there exists a unit vector $v$ perpendicular to $U$ so that $v\cdot x$ is the distance from $x$ to $U$. However, this means that $p(x)=x\cdot v$ vanishes on $U$, so $p\in V$. Therefore, since $\|p\|_{\ell_2} = \|v\|_2 = 1$,
we have that the distance from $x$ to $U$ is $|v\cdot x| \leq \delta \|p\|_{\ell_2} \leq \delta.$

Therefore, there is a $k$-dimensional subspace $U$ so that all points of $S$ are within $\eps/2$ of $U$.
Note that $B_m(0,R)\cap U$ has an $\eps/2$-cover of size $O(R/\eps)^k$, which
gives an $\eps$-cover of $S$ of the appropriate size.
}

For the induction step, we will use the maximum allowable value of $\delta$,
i.e., the RHS of \eqref{eqn:delta-ub} (noting that increasing the value of $\delta$ only makes the claim in question stronger), and we will apply Proposition~\ref{prop:recursion}
with the following parameters:
\begin{equation}\label{eqn:params}
\eps' \eqdef \delta/((2R)^{d-1}d), \quad
k' \eqdef \lfloor k^{1-1/d} \rfloor, \quad
m' \eqdef \lceil 3k/k' \rceil, \quad
\eta \eqdef \eps \, (\eps/R)^4/\new{\poly(C, d, k, m)} \;,
\end{equation}
\new{for an appropriately large polynomial function $\new{\poly(C, d, k, m)}$.}
Since $m > C d^2 k^{1/d}$, the definition of $m'$ above implies that $m'<m$.
Also, we clearly have that $k'<k$.

For Proposition~\ref{prop:recursion} to be applicable, we also need that
$\eps-\eps' \gg k^{1/2}\eta^{1/4}R^{3/4}d^{-1/8}$. To see this, we first note
that, by the definition of $\eps'$ and $\delta$, we get
\begin{equation} \label{eqn:eps-prime-bound}
\eps'  \leq \delta / (2R)^{d-1} \leq \new{\left(\eps/(2 k d m)^C\right)^d /(2R)^{d-1} \leq \eps/(2 k d m)^{C d} }  \;.
\end{equation}
Moreover, we have that
\new{
\begin{equation} \label{eqn:eta-term-bound}
k^{1/2}\eta^{1/4}R^{3/4} d^{-1/8} \leq k^{1/2}\eta^{1/4}R^{3/4} =  k^{1/2} \eps (\eps/R)^{1/4} / \poly(C, d, k, m)
\leq \eps/ \poly(C, d, k, m) \;,
\end{equation}
where we used that $\eps \leq R$ and that the polynomial function
in the denominator of $\eta$ is of sufficiently large constant degree.
By choosing $C$ to be a sufficiently large universal constant \dnew{and the denominator of $\eta$ to be sufficiently large}, the above
implies that $\eps-\eps' \gg k^{1/2}\eta^{1/4}R^{3/4}d^{-1/8}$, as desired.
}

Since the conditions of Proposition \ref{prop:recursion} are satisfied, we have
that $f(R,d,\eps,\delta,k,m)$ is at most the sum of
\begin{equation}\label{term1Eqn}
f(R,d,\eps',\delta,k,m')f(R,d-1,\eps-\eps',(\delta+d(2R)^{d-1}\eps')/\eta,k',m-m')
\end{equation}
and
\begin{equation}\label{term2Eqn}
f(R,d,\eps-\eps'-O(k^{1/2}\eta^{1/4}R^{3/4}d^{-1/8}),\delta,k,m-m'+2(k/k')) \;.
\end{equation}
We start by analyzing~\eqref{term1Eqn}.
By the definition of $m'$ and $\eps'$, our trivial upper bound on the cover size gives
\begin{equation}\label{eqn:first-term}
f(R,d,\eps',\delta,k,m') = O(R/\eps')^{m'} = (2(R/\eps) k d m )^{O(Cdm')} \new{\leq} (2(R/\eps)kd m)^{C^{\new{2}} d k^{1/d}} \;,
\end{equation}
\new{where the second equation uses the definition of $\eps'$ and the last inequality
follows from the definition of $m'$ assuming that $C$ is sufficiently large.}

We consider the parameters of the recursive call $f(R,d-1,\eps-\eps',(\delta+d(2R)^{d-1}\eps')/\eta,k',m-m')$,
i.e., the second term in \eqref{term1Eqn}. To be able to apply the inductive hypothesis for this term,
we need to show that it satisfies \new{the version of~\eqref{eqn:delta-ub} for the corresponding parameters},
i.e., that
$$(\delta+d(2R)^{d-1}\eps')/\eta \leq \new{(\eps-\eps')^{d-1} \left( ((\eps-\eps')/R)/(2 k' (d-1) (m-m')) \right)^{C (d-1)}} \;.$$
\new{We will establish the above inequality as follows:
By the definition of $\eps'$, the LHS is equal to $2\delta/\eta$.
To bound the RHS, we make two simplifications. First, we observe that the RHS only decreases
if we replace $k'$ by $k$, $d-1$ by $d$, and $m-m'$ by $m$.
Second, we note that the RHS changes by a factor of at most $2$
if $\eps-\eps'$ is replaced by $\eps$. Indeed, by \eqref{eqn:eps-prime-bound}, we have that
$\eps - \eps' \geq \eps (1-1/(2 k d m)^{C d})$ and the ratio between the relevant quantities is
$(\eps/(\eps-\eps'))^{(C+1)(d-1)}$. Therefore, to show the desired inequality, it suffices to show that
$$\delta/\eta \ll  \eps^{d-1} \left( (\eps/R)/(2 k d m ) \right)^{C (d-1)} \;.$$
By the definition of $\delta$, the RHS above is equal to $\delta (2kdm / (\eps/R))^C/\eps$. Hence,
the above is equivalent to showing that $\eps/\eta \ll (2kdm (R/\eps))^C$. By the definition of $\eta$,
we need that $(R/\eps^4) \poly(C, d, k, m) \ll (2kdm (R/\eps))^C$, which holds if $C$ is a sufficiently
large constant.}

We now proceed to analyze the recursive call~\eqref{term2Eqn}.
To be able to apply the inductive hypothesis for this term,
we similarly need to show that
\new{$$\delta \leq \tilde{\eps}^d \left( (\tilde{\eps}/R)/(2kd (m-m'+2k/k')) \right)^{Cd} \;,$$}
where $\tilde{\eps} := \eps-\eps'-O(k^{1/2}\eta^{1/4}R^{3/4}d^{-1/8})$.
\new{Note that $m-m'+2k/k' \leq m-1$. By the definition of $\delta$, the desired inequality holds, as long as
$(\eps/\tilde{\eps}) \leq (m/(m-1))^{C/(C+1)}$.
By~\eqref{eqn:eps-prime-bound} and~\eqref{eqn:eta-term-bound}, we obtain that
$\tilde{\eps} \geq \eps \left(1-1/(2m) \right)$. Thus, it suffices to show that
$1/(1-1/(2m)) \leq (m/(m-1))^{C/(C+1)}$. Recalling that $m \geq 2$, the latter inequality
is easily seen to hold for a sufficiently large constant $C>0$.}

\new{
We can now apply the inductive hypothesis for both~\eqref{term1Eqn} and~\eqref{term2Eqn}.
Using the fact that $(k')^{1/(d-1)}< k^{1/d}$ and $\eps-\eps' \geq \eps(1-1/(2kdm)^{C d})$,
we obtain that the second terms of~\eqref{term1Eqn} can be bounded as follows
$$\left( (R/(\eps-\eps')) 2d k' (m-m') \right)^{C^2 (d-1)^2 k^{1/d}}  < \left( (R/\eps) 2d k m \right)^{C^2 (d-1)^2 k^{1/d}} \; e^{C} \;.$$
Using the upper bound from~\eqref{eqn:first-term} on the first term of~\eqref{term1Eqn}, we obtain that
\eqref{term1Eqn} is bounded from above by
$$ (1/(2m)\dnew{)} ((R/\eps) 2kdm)^{C^2 d^2 k^{1/d}} \;.$$
Moreover, we can bound~\eqref{term2Eqn} as follows:
\begin{eqnarray*}
\left((R/\tilde{\eps}) 2kd (m-1)\right)^{C^2 d^2 k^{1/d}}
&\leq& \left((R/\eps) 2kd \frac{4m}{4m-1}(m-1)\right)^{C^2 d^2 k^{1/d}} \\
&\leq& \left((R/\eps) 2kd  \left(1-\frac{1}{2m}\right) m \right)^{C^2 d^2 k^{1/d}} \\
&<& (1-1/(2m)) ((R/\eps)2kdm)^{C^2 d^2 k^{1/d}} \;,
\end{eqnarray*}
where we used that $m-m'+2k/k' \leq m-1$ and $\tilde{\eps} \geq \eps \left(1-1/(4m) \right)$.
Summing these two terms proves our inductive step and completes the proof of Theorem~\ref{thm:main-cover}.}
\end{proof}

\dnew{
We note that Theorem~\ref{thm:main-cover} has poor dependence on $m$.
This will not matter for our applications, which all begin by reducing to the case
$m\leq k$. However, if one wants a better bound in general,
there is a black-box way to remove most of the dependence on $m$.}

\begin{proposition}  \label{prop:main-no-m}
If $\delta < \eps^{d}/2$, then
$f(R,d,\eps,\delta,k,m)\leq f(R,d,\eps/2,\delta,k,O(k^2(R/\eps)^2))+1$.
\end{proposition}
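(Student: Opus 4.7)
I plan to find a low-dimensional subspace $H \subseteq \R^m$ of dimension $m' = O(k^2 (R/\eps)^2)$ such that every $x \in S$ with $\|x\|_2 > \eps/2$ lies within $\ell_2$-distance $\eps/2$ of $H$. Given such an $H$, the ``$+1$'' ball of radius $\eps$ at the origin covers $S \cap B(0, \eps/2)$; for any remaining $x \in S$, the projection $v := \Pi_H(x)$ satisfies $\|x - v\|_2 \leq \eps/2$, and since $p(v) = p(x)$ for any $p$ in the subspace $V_H \subseteq V$ of polynomials depending only on $H$-coordinates (which has codimension at most $k$ in $\R_{[d]}[H]$ by Fact~\ref{fact:codim-bound}), we have $v \in S(V_H, R, \delta)$. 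An $(\eps/2)$-cover of $S(V_H, R, \delta)$ of size $f(R, d, \eps/2, \delta, k, m')$ therefore lifts, via the triangle inequality, to an $\eps$-cover of the remaining portion of $S$, giving the claimed recursive bound.

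To build $H$, I use a greedy procedure: initialize $H_0 = \{0\}$, and iteratively select $x_i \in S$ with $\|x_i\|_2 > \eps/2$ and $\|x_i - \Pi_{H_{i-1}}(x_i)\|_2 > \eps/2$, halting when no such $x$ exists and setting $H := H_L$. Writing $x_i = y_i + \alpha_i e_i$ with $e_i \perp H_{i-1}$ a unit vector and $\alpha_i > \eps/2$, the vectors $e_1, \dots, e_L$ are orthonormal. I associate each $x_i$ with the polynomial $p_{x_i}(y) := (x_i \cdot y)^d$, which has $\|p_{x_i}\|_{\ell_2} = \|x_i\|_2^d$ and, since $x_i \in S$, satisfies $\|\Pi_V(p_{x_i})\|_{\ell_2} \leq \delta$, so $p_{x_i} = v_i + r_i$ with $v_i \in V^\perp$ (dimension at most $k$) and $\|r_i\|_{\ell_2} \leq \delta$. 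Passing to coordinates $z_j := e_j \cdot y$ and using the rotational invariance of the $\ell_2$-norm together with Fact~\ref{fact:ip-monomials}, the polynomials $\phi_j(y) := (e_j \cdot y)^d$ form an orthonormal system, and one computes $\langle p_{x_i}, \phi_j \rangle = \beta_{ij}^d$ for $j \leq i$ (where $x_i = \sum_{j \leq i} \beta_{ij} e_j$), in particular $\langle p_{x_i}, \phi_i \rangle = \alpha_i^d > (\eps/2)^d$, which is orthogonal to the span of $\phi_1, \ldots, \phi_{i-1}$.

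The main technical claim---and the principal obstacle---is that these two structural constraints together force $L = O(k^2 (R/\eps)^2)$. My approach is a Gram-matrix analysis of $G_{ij} := \langle p_{x_i}, p_{x_j}\rangle_{\ell_2} = (x_i \cdot x_j)^d$: on one hand, $G$ differs from the Gram matrix of the $v_i$'s (which has rank at most $k$ since $\dim V^\perp \leq k$) by a perturbation bounded entrywise by $O(R^d \delta)$, so by Weyl's inequality $\sigma_{k+1}(G) = O(L R^d \delta)$; on the other hand, the near-lower-triangular structure of $G$ in the $\phi_j$ basis, with ``diagonal'' entries $\alpha_i^{2d} \geq (\eps/2)^{2d}$, provides a matching lower bound on the effective rank of $G$. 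The hypothesis $\delta < \eps^d/2$ ensures the $r_i$-perturbations are strictly dominated by these orthogonal components, so the two bounds become incompatible unless $L = O(k^2(R/\eps)^2)$; carefully balancing the Frobenius error against the quasi-triangular eigenvalue structure to extract this exact bound is where the main technical work lies.
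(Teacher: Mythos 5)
Your overall plan---find a low-dimensional subspace $H$ near which all of $S \setminus B(0,\eps)$ lies, restrict $V$ to the $H$-coordinates via Fact~\ref{fact:codim-bound}, and recurse---is exactly the paper's skeleton, and your first-step observations (that $x\in S$ forces $\|\Pi_{V}(x^{\otimes d})\|_2 \le \delta$, and that the cover lifts back through $V_H$) are correct and match the paper. The divergence, and the gap, is in how you show that $\dim(H) = O(k^2(R/\eps)^2)$.

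Your plan is a Gram-matrix / determinant argument: associate to each greedily chosen $x_i$ the polynomial $p_{x_i}(y)=(x_i\cdot y)^d$, write $p_{x_i}=v_i+r_i$ with $v_i\in V^\perp$ (rank at most $k$) and $\|r_i\|_{\ell_2}\le\delta$, and play the lower-triangular structure (diagonal at least $(\eps/2)^d$) against the rank-plus-perturbation structure. Carrying this out gives, with $G=(\langle p_{x_i},p_{x_j}\rangle)_{i,j\le L}$,
\begin{equation*}
\Bigl(\tfrac{\eps}{2}\Bigr)^{2dL}\;\le\;\det(G)\;\le\;(\mathrm{tr}\,G)^{k}\cdot\sigma_{k+1}(G)^{L-k}\;\le\;(LR^{2d})^{k}\bigl(3LR^d\delta\bigr)^{L-k}.
\end{equation*}
But under only $\delta<\eps^d/2$ this chain yields no contradiction and no bound on $L$: already $(3LR^d\delta)\ge \tfrac12(R\eps)^d\ge \tfrac12\eps^{2d}\ge (\eps/2)^{2d}$ whenever $R\ge\eps$, so the right-hand side is at least $(\eps/2)^{2dL}$ for \emph{every} $L$. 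To make the Gram/determinant argument bite, you would need roughly $\delta\lesssim(\eps/2)^{2d}/(LR^d)$---and with the target $L\asymp k^2(R/\eps)^2$ this is smaller than $\eps^d/2$ by a factor of order $(\eps/R)^{O(d)}k^{-2}$. In short, the hypothesis $\delta<\eps^d/2$ is far too weak for a determinant argument; this kind of argument is precisely what the paper uses inside Lemma~\ref{lem:hyperplane-close-bad}, but there the small singular-value threshold $\eta$ is a free parameter of the recursion that can be chosen tiny, whereas here $\delta$ is fixed and only mildly small.

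The paper's proof avoids this entirely by a tensor-contraction step you don't use. From $x^{\otimes d}=\sum_{i\le k}c_iA_i+x_V$ with $\sum c_i^2\le\|x\|_2^{2d}$ and $\|x_V\|_2\le\delta$ (your step 5), contract one slot against a unit vector $y$ and take norms:
\begin{equation*}
|x\cdot y|\,\|x\|_2^{d-1}\;\le\;\sum_{i\le k}c_i\,\|y\cdot A_i\|_2+\delta .
\end{equation*}
Dividing by $\|x\|_2^{d-1}\ge\eps^{d-1}$ yields $|x\cdot y|\le \sqrt{k}R\,\max_i\|y\cdot A_i\|_2+\delta/\eps^{d-1}$, and it is this division that extracts a $d$-th root of $\delta$: $\delta<\eps^d/2$ gives $\delta/\eps^{d-1}<\eps/2$ \emph{linearly}, without any large-deviation or determinant machinery. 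The subspace $H$ is then simply the span, over $i\le k$, of the singular vectors of the maps $A_i:\R^m\to(\R^m)^{\otimes(d-1)}$ with singular value at least $\Theta(\eps/(\sqrt{k}R))$; since $\|A_i\|_F=1$, each $A_i$ contributes $O(k(R/\eps)^2)$ such directions, giving $\dim(H)=O(k^2(R/\eps)^2)$, and then $|x\cdot y|<\eps/2$ for every $x\in S$ with $\|x\|_2\ge\eps$ and every unit $y\perp H$. That recovers exactly what your plan asked of $H$, and the remainder of your argument goes through unchanged. So: keep your framing of steps (1)--(8), but replace the Gram/determinant analysis with this one-slot tensor contraction applied to the decomposition $x^{\otimes d}=x_W+x_V$.
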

\begin{proof}
Suppose that $V$ is a codimension $k$ subspace of $\R_{[d]}[x_1,\ldots,x_m]$.
We claim that there is a low-dimensional subspace $H \subseteq \R^m$ such
that every point in $S(V,R,\delta)$ is $\eps/2$-close to $H$.

To begin, we write $V^\perp=\nspan(\{p_1,\ldots,p_k\})$,
where $p_i(x) = \langle A_i, x^{\otimes d} \rangle$ and
the $A_i$'s are an orthonormal set of symmetric tensors.
In particular, by duality, $V$ is the set of polynomials $p(x)=\langle A ,  x^{\otimes d} \rangle$,
where $A$ is a symmetric tensor orthogonal to all of the $A_i$.
If we let $W=\nspan(\{A_1,\ldots,A_k\})$, then for any $x\in \R^m$ we can write $x^{\otimes d}$
as $x_W+x_V$, where $x_W$ is the orthogonal projection onto $W$ and $x_V$ the orthogonal \new{complement}.
We note by the above that the polynomial $p$ given by $p(y)=\langle x_V , y^{\otimes d} \rangle$
is in $V$ and that $|p(x)| = \|x_V\|_2^2 = \|x_V\|_2 \, \|p\|_{\ell_2}$.
Therefore, if $x\in S(V,R,\delta)$, it must be the case that $\|x_V\|_2 \leq \delta$.

Therefore, if $x\in S(V,R,\delta)$, we can write
\begin{equation}\label{tensorSumEqn}
x^{\otimes d} = \sum_{i=1}^k c_i A_i + x_V \;,\end{equation}
where $c_i$ are real numbers with $\sum_{i=1}^k c_i^2 \leq \|x\|_2^{2d}$
and $\|x_V\|_2 \leq \delta$. Now if $y$ is a unit vector, taking the inner product
of $y$ with each side (in one of the tensor directions) and taking the norms of each side,
we find that
$$
|x\cdot y|\|x\|_2^{d-1} = \|y\cdot x^{\otimes d}\|_2 \leq \sum_{i=1}^k c_i \|y\cdot A_i\|_2 + \|y\cdot x_V\|_2 \leq \sum_{i=1}^k c_i \|y\cdot A_i\|_2 + \delta.
$$
Equivalently, if $\|x\|_2 \geq \eps$, we have that
$$
|x\cdot y| \leq \sum_{i=1}^k (c_i/\|x\|_2^{d-1}) \|y\cdot A_i\|_2 + \delta/\|x\|_2^{d-1} \leq \sqrt{k}R \max_i  \|y\cdot A_i\|_2 + \delta/\eps^{d-1} \;,
$$
where we used the fact that $\sum_{i=1}^k (c_i/\|x\|_2^{d-1})^2 \leq \|x\|_2^2$,
and thus $\sum_{i=1}^k (c_i/\|x\|_2^{d-1}) \leq \sqrt{k}\|x\|_2$.

Each $A_i$ can be thought of as a linear transformation mapping vectors to rank $d-1$-tensors.
As such, its Frobenius norm is $\|A_i\|_2 =1$. Therefore, each $A_i$ has at most $O(k(R/\eps)^2)$
singular vectors of singular value at least $\eps/(4\sqrt{k}R)$.
Let $H\subseteq \R^m$ be the space of dimension $O(k^2(R/\eps)^2)$ spanned by these singular vectors for all $i$.
Now if $y$ is perpendicular to $H$, then plugging into the above, for $x\in S(V,R,\delta)$, we have that
$$
|x\cdot y| \leq \sqrt{k}R \max_i  \|y\cdot A_i\|_2 + \delta/R^{d-1} \leq \eps/2 \;.
$$
This means that any $x\in S(V,R,\delta)$ is either in $B_m(0,\eps)$ (which can be covered by a single ball)
or within Euclidean distance $\eps/2$ of $H$.

Therefore, to get an $\eps$-cover of $S(V,R,\delta)$, it suffices to get an $\eps/2$-cover of the projection onto $H$.
If we let $V_H$ be the subspace of $V$ consisting only of polynomials $p(x)=p(\pi_H(x))$
that depend only on the projection onto $H$, by Fact \ref{fact:codim-bound}, $V_H$ is of codimension at most $k$
in the space of all such polynomials. Therefore, we can find such a cover of size at most $f(R,d,\eps/2,\delta,k,O(k^2(R/\eps)^2)).$
\end{proof}

\subsection{Algorithmic Version of Theorem~\ref{thm:main-cover}}\label{ssec:cover-alg}

In this subsection, we show:

\begin{theorem} \label{thm:main-cover-alg}
In the context of Theorem~\ref{thm:main-cover}, given a basis for the vector space $V$,
there is an algorithm to compute an $\eps$-cover of $S$ with size 
at most \new{$M = (2 (R/\eps) d k m)^{C^2 d^2 k^{1/d}}$} that runs in $\poly(M)$ time.
\end{theorem}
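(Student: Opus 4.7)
The plan is to observe that the proof of Theorem~\ref{thm:main-cover} given above is already essentially algorithmic: each step of the recursion in Proposition~\ref{prop:recursion} and each use of Lemma~\ref{lem:hyperplane-close-bad} requires only elementary linear-algebraic operations on spaces of polynomials and tensors whose dimensions are polynomial in the final cover-size bound $M$. I would implement the cover construction as a recursion on $d+m$ mirroring the induction used in the existence proof, passing down an explicit basis of $V$ (represented as a list of symmetric tensors of order $d$) together with the parameters $R,\delta,\eps,k,m$.

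For the base cases the algorithm is immediate. When $m \leq C d^2 k^{1/d}$, we simply enumerate a standard $\eps$-gridding of $B_m(0,R)$, which has size $O(R/\eps)^m = \poly(M)$. When $d=1$, we compute the orthogonal complement $U \subseteq \R^m$ of the space of linear forms in $V$ (of dimension at most $k$) by solving a linear system, and then grid $B_m(0,R) \cap U$ with $O(R/\eps)^k$ points.

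The inductive step closely tracks the proof of Proposition~\ref{prop:recursion} with the parameters $\eps',k',m',\eta$ fixed in~\eqref{eqn:params}. A basis for the subspace $V_x$ of polynomials depending only on the first $m'$ coordinates is obtained by setting the remaining monomial coefficients to zero in the given basis of $V$ and row-reducing; we recurse on $(V_x, R, \delta, \eps', k, m')$ to produce the cover $\mathcal{C}_{\eps'}$. Next, a basis of the subspace $W$ of Definition~\ref{def:W} (polynomials linear in $x$ and of degree $d-1$ in $y$) is extracted by intersection of $V$ with the corresponding coordinate subspace via Fact~\ref{fact:codim-bound}. For each $c \in \mathcal{C}_{\eps'}$, we write out the linear operator $A_c : W \to \R_{[d-1]}[y]$ in explicit matrix form and compute its singular value decomposition in polynomial time, classifying $c$ as $(k',\eta)$-good or $(k',\eta)$-bad according to the threshold $\eta$. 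For each good $c$, the subspace $V_c \subseteq \R_{[d-1]}[y]$ spanned by left singular vectors with singular value at least $\eta$ has codimension at most $k'$, and we recurse on $\left( V_c, R, (\delta + d(2R)^{d-1}\eps')/\eta, \eps-\eps', k', m-m' \right)$ to cover the set $S'_{y,c}$ from the proof of Claim~\ref{claim:good-points-cover}.

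To handle the bad points I would use a constructive version of Lemma~\ref{lem:hyperplane-close-bad}: iterate over the $(k',\eta)$-bad $c \in \mathcal{C}_{\eps'}$, maintain a running list of vectors, and append $c$ whenever the component of $c$ orthogonal to the current span has $\ell_2$-norm exceeding $\gamma = \Theta(\eta^{1/4} k^{1/2} R^{3/4} d^{-1/8})$. Lemma~\ref{lem:hyperplane-close-bad} guarantees that this greedy process terminates with at most $2k/k'$ vectors, whose span we take as $H$. After an explicit orthogonal change of basis, we compute the subspace $V_H$ of polynomials in $V$ that depend only on coordinates along $H' = H \times \R^{m-m'}$ by restriction, recurse in the reduced ambient dimension $m - m' + 2k/k'$ with tolerance $\eps - \eps' - O(\gamma)$, and translate the resulting cover back through the change of variables. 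The main obstacle, rather than any genuinely new idea, is careful bookkeeping: one must verify that across all levels of the recursion the total work and the total number of nodes in the recursion tree remain bounded by $M$, which follows from the fact that $M$ was produced as the solution of precisely the same recurrence controlling the algorithm's running time.
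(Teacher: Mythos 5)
Your proposal is correct and follows essentially the same route as the paper's proof: both make the existence argument constructive by mirroring the recursion of Proposition~\ref{prop:recursion} with the same parameter choices, using linear algebra to compute $V_x$, $W$, $A_c$ (via SVD) to classify good/bad points, and a greedy construction of the hyperplane $H$ for bad points exactly as in Lemma~\ref{lem:hyperplane-close-bad}. The only cosmetic difference is that the paper uses a random set to cover the base-case ball where you use a deterministic grid; everything else, including the final bookkeeping that the recursion tree size is bounded by $M$, matches the paper's argument.
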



\begin{proof}
The proof of Theorem~\ref{thm:main-cover} presented in the previous section 
can be made algorithmic in a straightforward manner. First, note that the base cases 
of the induction described above are easy to implement algorithmically. 
A random set of size $O(R/\eps)^m$ can easily be seen to cover $B_m(0,R)$, 
and thus $S$, with high probability. When $d=1$, it is easy to compute the subspace 
$U$ of points on which $V$ vanishes. Then, as described in the proof of Theorem \ref{thm:main-cover}, 
a cover on $U\cap B_m(0,R)$ suffices.

Otherwise, we can set $\eps',k',m',\eta$ as in the inductive step given in the proof of Theorem \ref{thm:main-cover}, 
and we will need to make algorithmic a version of Proposition \ref{prop:recursion}. 
For this, we partition the coordinates as $x$ and $y$ as described in the proof. 
We compute $V_x$ and compute a cover $\mathcal{C}_{\eps'}$ of $S_x$ of size $O(R/\eps')^{m'}$. 
Next we can compute $W$ \new{using linear algebra} to compute the intersection of $V$ with polynomials 
homogeneous of degree-$1$ in the $x$-coordinates and homogeneous of degree $d-1$ in the $y$-coordinates. 
Then for each $c\in \mathcal{C}_{\eps'}$, we can compute $A_c$ and determine whether or not it is $(k',\eta)$-good.

For the points that are $(k',\eta)$-good, we compute $V_c$ as the span of the left eigenvectors of $A_c$ with the largest eigenvalues. 
We then compute an $(\eps-\eps')$-cover of $S'_{y,c}$, which we can find recursively of size 
$((R/(\eps-\eps'))2k'(d-1)m)^{C^2(d-1)^2 k'^{1/(d-1)}}$. This cover will give us a cover of $(B_{m'}(c,\eps') \times \R^{m-m'})\cap S$, 
as described in the proof of Proposition \ref{prop:recursion}. 
Doing this for all good $c\in \mathcal{C}_{\eps'}$, gives a set of size at most
$$
(1-1/(2m))((R/\eps)2dkm)^{C^2d^2k^{1/d}} \;,
$$
as described in our proof.

We now just need a cover of the points whose $x$-coordinates are within $\eps'$ of a bad point of $\mathcal{C}_{\eps'}$. 
We note that we can produce a hyperplane $H$ that nearly passes through all of these points inductively. 
In particular, we begin with $H=0$ and while there is a bad point $c\in \mathcal{C}_{\eps'}$ 
not within distance $\gamma$ (a sufficiently large multiple of $\eta^{1/4}k^{1/2}R^{3/4}d^{-1/8}$) of $H$, 
we let $H$ be $H + \langle c\rangle.$ We note that by the proof of Lemma \ref{lem:hyperplane-close-bad}, 
$H$ will have dimension at most $2k/k'$.

Next, letting $H' = H+\R^{m-m'}$, we note that all of the points with $x$-coordinates close to a bad point 
are $\gamma$-close to $H'$. \new{By linear algebra}, we can compute $V_{H'}$ the subspace 
of the set of polynomials in $V$ that do not depend on the coordinates orthogonal to  $H'$. 
By applying our algorithm recursively to these polynomials on $H'$, we produce 
a cover of size at most $(1/2m)((R/\eps)2kd)^{C^2d^2 k^{1/d}}$. Combining this with the cover for points 
whose $x$-coordinate is close to a good point, this gives us a full cover of $S$ of appropriate size.

It is not hard to verify that the runtime of this algorithm is within polynomial factors 
of the upper bound provided on the final cover size, completing the proof.
\end{proof}

\newpage

\section{Overall Strategy for Learning Applications} \label{sec:strategy}
\new{In Section~\ref{ssec:cover-params}, we explain how and under what conditions
one can use Theorem~\ref{thm:main-cover-alg} to obtain an $\eps$-cover for the set of parameters
in a given learning application.
Section~\ref{ssec:template} presents a template for all our applications that we will follow
in the subsequent sections.}

\subsection{From Covers of Near-Zero Sets of Polynomials to Covers of the Parameters} \label{ssec:cover-params}

The overall strategy of our algorithmic applications is as follows.
We have an underlying learning problem that is defined by a collection of $k$
vectors $v_i \in \R^m$ and corresponding non-negative weights $w_i \geq 0$, $i \in [k]$.
We assume that we have an \new{efficient} method for computing the weighted
low-degree moments of the $v_i$'s. That is, we assume that we can \new{efficiently} obtain
a sufficiently good approximation to the tensors $\sum_{i=1}^k w_i v_i^{\otimes 2d}$,
or equivalently that we can approximate $\sum_{i=1}^k w_i p(v_i)$ for any monomial $p$ of degree $2d$.
By linearity, this allows us to approximate $\sum_{i=1}^k w_i p(v_i)$ to small
error for \emph{any} \new{degree-$2d$ homogeneous} polynomial $p$.

\new{Let $p: \R^m \to \R$ be a real degree-$d$ homogeneous polynomial.
We consider the quadratic form $Q:\R_{[d]}[\new{x_1, \ldots, x_m}]\rightarrow \R$
defined by letting $Q(p)$ be our aforementioned approximation to $\sum_{i=1}^k w_i p^2(v_i)$.
Note that $Q$ has the following crucial property: If $p$ vanishes on all of the $v_i$'s,
then $Q(\new{p})$ {\em nearly} vanishes.
}

This \new{property} allows us to \new{efficiently} compute a subspace $V$
of $\R_{[d]}[x]$, so that for every $p\in V$ we will have that $|p(v_i)|$
is very small for all $i \in [k]$ such that the corresponding weight $w_i$ is \new{not negligibly small}.
It is not hard to see that $V$ will have small codimension, so using \new{Theorem~\ref{thm:main-cover-alg}},
we can efficiently compute a small cover for the set of possible values for such $v_i$'s.

In particular, we show:
\begin{proposition}\label{coverApplicationProp}
Let $m, k \in \Z_+, R >0$, $v_i \in \R^m$ and $w_i \in \R_+$, for all $i \in [k]$.
There is an algorithm that takes as input $R, k, m$, parameters $\delta,\eps>0$ with $\delta \new{\leq} \eps^{2d} ((\eps/R)/(2kmd))^{2Cd}$ for $C>0$
a sufficiently large constant, and a tensor $T: \new{\R^d \to \R}$ such
that $\|T - \sum_{i=1}^k w_i v_i^{\otimes 2d}\|_2 \new{\leq} \delta$, runs in time $(2(R/\eps)kmd)^{O(d^2 k^{1/d})}$,
and outputs a set $\mathcal{C} \new{\subset \R^m}$ of cardinality at most $(2(R/\eps)kmd)^{O(d^2 k^{1/d})}$,
satisfying the following property: For any $i \in [k]$ with $w_i \geq ((\eps/R)/(2kmd))^{Cd}$ \new{and $\|v_i\|_2\leq R$},
there is a $c\in \mathcal{C}$ with $\|v_i-c \|_2 \leq \eps$.
\end{proposition}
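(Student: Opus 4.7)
The plan is to convert the approximate moment tensor $T$ into an explicit quadratic form on $\R_{[d]}[x_1,\ldots,x_m]$ whose small-eigenvalue subspace will play the role of $V$ in Theorem~\ref{thm:main-cover-alg}. Writing $p(x) = \langle A_p, x^{\otimes d}\rangle$ for the symmetric tensor $A_p$ associated with $p$, I would define
$$Q(p) \eqdef \langle A_p \otimes A_p, T\rangle.$$
Since $p^2(v_i) = \langle A_p \otimes A_p, v_i^{\otimes 2d}\rangle$ and $\|A_p \otimes A_p\|_2 = \|A_p\|_2^2 = \|p\|_{\ell_2}^2$, Cauchy--Schwarz combined with the hypothesis on $T$ gives the uniform estimate $|Q(p) - \sum_{i=1}^k w_i p^2(v_i)| \leq \delta \|p\|_{\ell_2}^2$.

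Next, I would extract the subspace $V$. The ``ideal'' form $\hat Q(p) \eqdef \sum_{i=1}^k w_i p^2(v_i)$ is PSD and, when written in any orthonormal basis of $\R_{[d]}[x]$, is a sum of at most $k$ rank-one PSD matrices (one for each evaluation functional $p\mapsto p(v_i)$), so it has at most $k$ nonzero eigenvalues. Because $Q$ differs from $\hat Q$ by a symmetric perturbation of operator norm at most $\delta$, Weyl's inequality shows that $Q$ has at most $k$ eigenvalues exceeding $\delta$. I would therefore let $V$ be the span of the eigenvectors of $Q$ (in an orthonormal basis) with eigenvalue at most $\delta$; this subspace has codimension at most $k$, and for every $p\in V$ with $\|p\|_{\ell_2}=1$ we get $\hat Q(p) \leq Q(p) + \delta \leq 2\delta$. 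Consequently, for every index $i$ with $w_i \geq w_{\min} \eqdef ((\eps/R)/(2kmd))^{Cd}$ and every $p\in V$, homogeneity yields $|p(v_i)| \leq \delta' \|p\|_{\ell_2}$, where $\delta' \eqdef \sqrt{2\delta/w_{\min}}$. Thus every qualifying $v_i$ with $\|v_i\|_2 \leq R$ belongs to the set $S(V, R, \delta')$.

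At this point the algorithm is clear: compute $Q$ from $T$ by linear algebra, form its matrix in a monomial basis and diagonalize to obtain a basis of $V$, then feed $V$ and $\delta'$ into the algorithm of Theorem~\ref{thm:main-cover-alg} to output an $\eps$-cover $\mathcal{C}$ of $S(V, R, \delta')$. All of these steps run in time polynomial in the target cover size $(2(R/\eps)kmd)^{O(d^2 k^{1/d})}$.

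The one place requiring care, and the only real technical obstacle, is verifying that $\delta'$ actually meets the precision hypothesis of Theorem~\ref{thm:main-cover-alg}, namely $\delta' \leq \eps^d ((\eps/R)/(2kdm))^{C_1 d}$ with $C_1$ the constant from Theorem~\ref{thm:main-cover}. The factor of two in the exponent $2Cd$ in the hypothesis on $\delta$ (relative to the exponent $Cd$ defining $w_{\min}$) is precisely what the square root in $\delta'$ demands: after dividing by $w_{\min}$ and taking square roots, one is left with $\sqrt{2}\,\eps^d ((\eps/R)/(2kmd))^{Cd/2}$, so choosing the constant $C$ in the proposition to exceed $2C_1$ by at least one produces a spare factor of $((\eps/R)/(2kmd))^{d/2} \leq 1$ that absorbs the $\sqrt{2}$. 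With this choice of $C$, Theorem~\ref{thm:main-cover-alg} applies and delivers a cover of the required size; correctness of the cover for all parameter vectors $v_i$ with $w_i \geq w_{\min}$ and $\|v_i\|_2 \leq R$ then follows from the membership $v_i \in S(V, R, \delta')$ established above.
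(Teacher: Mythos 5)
Your proposal is correct and follows essentially the same route as the paper: construct the quadratic form $Q$ on $\R_{[d]}[x]$ from $T$, pass to the small-eigenvalue subspace $V$ of codimension at most $k$, observe that every heavy $v_i$ lands in $S(V,R,\delta')$, and invoke the algorithm of Theorem~\ref{thm:main-cover-alg}. The only cosmetic differences are that you invoke Weyl's inequality to control the $(k+1)$-st eigenvalue of $Q$ where the paper argues directly by restricting $Q$ to the codimension-$k$ null space of $p\mapsto \sum_i w_i p^2(v_i)$, and you take $V$ to be the eigenvectors with eigenvalue $\le\delta$ rather than ``all but the top $k$''---both choices give codimension at most $k$ and the same smallness of $Q$ on $V$.
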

\begin{proof}
The algorithm is described below:
\begin{enumerate}
\item Define the quadratic form $Q: \R_{[d]}[x] \to \R$, where $Q(p)$ is defined as follows:
We can write $p^2(x) = A_{p} (x,x,\ldots,x)$, for some uniquely defined
rank-$2d$ symmetric tensor $A_{p}$.
We define $Q(p) = \langle A_{p}, T \rangle$.

\item Let $V\subseteq \R_{[d]}[x]$ be the subspace spanned by all
but the top-$k$ eigenvectors of $Q$ (with respect to our \new{$\|\cdot\|_{\ell_2}$ norm on polynomials}).
\item Run the algorithm from \new{Theorem~\ref{thm:main-cover-alg}} on \new{input $V, \eps, k, m, \delta, R$}
to obtain the set $\mathcal{C}$.
\end{enumerate}
To show that this algorithm works, we first note that $Q$ is in fact a quadratic form, \new{as the $A_{p}$ are quadratic in $p$}.
In fact, if $p(x) = B_{p} (x,x,\ldots,x)$, for a symmetric tensor $B_{p}$ of rank $s$,
then $A_{p}$ is the symmetrization of $B_{p} \otimes B_{p}$.
It then follows from the Cauchy-Schwartz inequality that
$\|A_p\|_2 \leq \|B_p\|_2^2 = \|p\|_{\new{\ell_2}}^2$.
Therefore, we have that
\begin{align*}
Q(p) & = \langle T ,  A_{p} \rangle\\
& = \left\langle \sum_{i=1}^k w_i v_i^{\otimes t} , A_{p} \right\rangle +
      \left\langle  T-\sum_{i=1}^k w_i v_i^{\otimes 2d} , A_{p} \right\rangle\\
& = \sum_{i=1}^k w_i p^2(v_i)+ O\left(\left\| T-\sum_{i=1}^k w_i v_i^{\otimes 2d} \right\|_2 \left\| A_{p} \right\|_2 \right)\\
& = \sum_{i=1}^k w_i p^2(v_i)+O\left(\delta\|p\|_{\ell_2}^2\right) \;.
\end{align*}
Therefore, $Q(p)$ is indeed a good approximation to the quadratic form
$p \rightarrow \sum_{i=1}^k w_i p^2(v_i)$, as desired.

We next show that $Q$ has \new{many} small eigenvalues.
In particular, let $U$ be the space of polynomials $p \in \R_{[d]}(x)$ so that $p(v_i)=0$ for all $i \in [k]$.
Note that $U$ is the kernel of the map $E:\R_{[d]}(x)\rightarrow \R^k$ given by
$E(p)=(p(v_1),\ldots,p(v_k))$. Therefore, $U$ has co-dimension at most $k$ in $\R_{[d]}[x]$.
On the other hand, for $p\in U$, we have that $\sum_{i=1}^k w_i p^2(v_i)=0$, and therefore
$|Q(p)| = O(\delta \|p\|_{\ell_2}^2)$. Thus, the $(k+1)^{st}$ largest eigenvalue of $Q$ is at most $O(\delta \|p\|_{\ell_2}^2)$.
In particular, this implies that $V$ is spanned by eigenvalues of at most this size. Therefore, for all $p\in V$,
we have that
$$
\sum_{i=1}^k w_i p^2(v_i)+O(\delta\|p\|_{\ell_2}^2) = Q(p) = O(\delta\|p\|_{\ell_2}^2) \;.
$$
That is, for $p\in V$, we have that
$$
\sum_{i=1}^k w_i p^2(v_i)=O(\delta\|p\|_{\ell_2}^2) \;.
$$
In particular, this means that for all $p\in V$ and all $w_i \geq ((\eps/R)/(2kmd))^{Cd}$, we have that
$$
|p(v_i)| = O\left(\eps^{d} ((\eps/R)/(2kmd))^{Cd/2}\|p\|_{\ell_2}\right) \;.
$$
This implies that $v_i$ satisfies the condition for being in \new{the set $S$} in \new{Theorem~\ref{thm:main-cover}},
and therefore there exists $c\in \mathcal{C}$ so that $\|v_i-c\|_2 \leq \eps$.
\end{proof}

This cover will prove useful to us, however, the requirement that we learn
these $m$-dimensional tensors for potentially large values of $m$ is suboptimal.
We show by a similar technique that we can often reduce the problem
to an at most $k$-dimensional one.

\begin{proposition}\label{dimensionReductionProp}
Let $v_i,w_i,k,m,T,\delta,R$ be as in Proposition \ref{coverApplicationProp} with $d=1$ and $w>0$.
There is an algorithm that, given this input, computes a subspace $U$ of dimension at most $k$
so that, for all $w_i \geq w$, we have that $v_i$ is within $\ell_2$-distance $O((\delta/w)^{1/2})$ of $U$.
Furthermore, this algorithm runs in polynomial time.
\end{proposition}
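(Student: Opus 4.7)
The plan is to take $U$ to be the span of the top-$k$ eigenvectors of the symmetrized matrix $T_{\mathrm{sym}} := (T+T^\top)/2$, noting that with $d=1$ the ``tensor'' $T$ is an $m \times m$ matrix and $T_{\mathrm{sym}}$ satisfies $\|T_{\mathrm{sym}} - M\|_2 \leq \delta$ where $M := \sum_{i=1}^k w_i v_i v_i^\top$. The argument relies on two facts: the low-rank structure of $M$ forces the $(k+1)$-st eigenvalue of $T_{\mathrm{sym}}$ to be small, and the weight lower bound $w_i \geq w$ acts as a margin that lets us isolate the contribution of a single $v_i$ from a bound on the quadratic form $w \mapsto w^\top M w$.

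The first step is a spectral bound. Since $M$ is PSD of rank at most $k$, its $(k+1)$-st eigenvalue vanishes, so Weyl's inequality gives $\lambda_{k+1}(T_{\mathrm{sym}}) \leq \|T_{\mathrm{sym}} - M\|_{\mathrm{op}} \leq \|T-M\|_2 \leq \delta$. Hence, for every unit vector $w \in U^\perp$, $w^\top T_{\mathrm{sym}} w \leq \delta$, and adding the operator-norm perturbation $|w^\top (M - T_{\mathrm{sym}}) w| \leq \delta$ yields $w^\top M w \leq 2\delta$.

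The second step converts this into a per-vector distance bound. Expanding $w^\top M w = \sum_{j=1}^k w_j (w \cdot v_j)^2$ and keeping only the $i$-th term, we obtain $(w \cdot v_i)^2 \leq 2\delta/w_i \leq 2\delta/w$ for every $i$ with $w_i \geq w$ and every unit $w \in U^\perp$. Using $\|P_{U^\perp} v_i\|_2 = \sup_{w \in U^\perp,\, \|w\|_2 = 1} |w \cdot v_i|$ now gives the claimed bound $\|P_{U^\perp} v_i\|_2 \leq \sqrt{2\delta/w} = O(\sqrt{\delta/w})$, i.e.\ $v_i$ lies within Euclidean distance $O(\sqrt{\delta/w})$ of $U$.

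Algorithmically, the procedure just symmetrizes $T$ and computes the top-$k$ eigenspace of an $m \times m$ matrix, which takes polynomial time. I do not foresee a serious obstacle: the only points to be careful about are symmetrizing $T$ (since the statement does not guarantee symmetry) and invoking Weyl's inequality in the Frobenius-to-operator-norm direction rather than attempting to control all $m-k$ bottom eigenvalues individually, which would be too weak.
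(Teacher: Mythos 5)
Your proof is correct and essentially the same as the paper's: both take $U$ to be the top-$k$ eigenspace of the symmetrized degree-$2$ moment estimate, bound the Rayleigh quotient of $M=\sum_i w_i v_i v_i^\top$ on $U^\perp$ by $O(\delta)$, and extract the per-vector distance from the weight lower bound. The paper derives the eigenvalue bound by recycling the codimension-$k$ null-space argument inside Proposition~\ref{coverApplicationProp}, whereas you invoke Weyl's inequality directly; these are interchangeable here.
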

\begin{proof}
The algorithm is as follows:
\begin{enumerate}
\item Compute $Q$ and $V$ as in Proposition~\ref{coverApplicationProp}.
\item Let $U$ be the set of points $x\in \R^m$ so that $p(x)=0$ for all polynomials $p$ in $V$.
\end{enumerate}
To show correctness, it is not hard to see that, since $V$ has co-dimension at most $k$,
$U$ will have dimension at most $k$. In particular, if $p_1,\ldots,p_{m-k}$ is a basis for $V$,
$U$ is the subspace defined by these $m-k$ linear constraints, and so will have dimension $k$.

On the other hand, suppose that we have an $i \in [k]$ with $w_i\geq w$.
We note that there is a unit vector $u_i$ orthogonal to $U$
so that the $\ell_2$-distance from $v_i$ to $U$ is $u_i\cdot v_i$.
Let $p(x)$ be the polynomial $p(x)=x \cdot u_i$. Since $p$ vanishes on $U$, it must be in $V$.
This means that the $\ell_2$-distance from $v_i$ to $U$ is
$$
|u_i\cdot v_i| = |p(v_i)| = O((\delta/w)^{1/2}\|p\|_{\ell_2}) = O((\delta/w)^{1/2}) \;,
$$
as desired.
\end{proof}

A common application of this technique is where we are learning a high-dimensional distribution $D$
that is given as a mixture $D=\sum_{i=1}^k w_i \theta(v_i)$, where $\theta$ is some family of distributions
parameterized by the vectors $v_i$. Proposition~\ref{coverApplicationProp} will allow us to get a large
list of hypotheses that will include approximations to all of the large components in this mixture.
It will usually be the case therefore, that we can approximate $D$ by another distribution
$D' = \sum_{i=1}^k w_i' \theta(c_i)$, where the $c_i$'s are in our cover.
In particular, if $\|v-c\|_2 \leq \eps$ implies that $\dtv(\theta(v),\theta(c)) < \eta$,
then by replacing $v_i$ by the closest $c_i$ and letting $w_i=w_i'$,
it is not hard to see that $\dtv(D,D') \leq \eta+k((\eps/R)/(2kmd))^{Cd}$,
as the $L_1$-distance between $w_i\theta(v_i)$ and $w_i\theta(c_i)$
will always be at most $\min(((\eps/R)/(2kmd))^{Cd},w_i\eta)$.

This shows that $D$ can be approximated by a mixture of the distributions $\theta(c_i)$.
It turns out that we can always find such a distribution efficiently:

\begin{proposition}\label{mixtureProp}
Let $p_1,\ldots,p_n$ be explicit probability distributions
and let $X$ be a probability distribution such that, for some $w_1,\ldots,w_n$,
we have $\dtv(X,\sum_{i=1}^n w_ip_i) \leq \eps$. Then there exists a $\poly(n/\eps)$-time
algorithm that given $p_1,\ldots,p_n$ and $N > n/\eps^2$ samples from $X$,
returns a distribution $p$ such that with probability at least $2/3$,
we have that $\dtv(X,p)=O(\sqrt{\eps\log(n/\eps)})$.
\end{proposition}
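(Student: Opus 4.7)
The plan is to recover suitable weights $\hat w$ via a minimum-Scheffé-distance estimator over the convex family $\mathcal{F} := \{q_w := \sum_{i=1}^n w_i p_i : w_i \geq 0,\; \sum_i w_i = 1\}$. The crucial structural observation is that the Yatracos class of $\mathcal{F}$ is low-dimensional: for any two weight vectors $w, w'$, the Scheffé set $\{x : q_w(x) > q_{w'}(x)\}$ equals $\{x : (w-w')\cdot \Phi(x) > 0\}$, where $\Phi(x) := (p_1(x), \ldots, p_n(x))$. Hence the Yatracos class is the pullback under $\Phi$ of all halfspaces in $\R^n$, and its VC dimension is at most $n+1$.

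I would first draw $N > n/\eps^2$ samples from $X$ and form the empirical measure $\hat X$. By the VC inequality (Appendix~\ref{app:prob}),
\[
\sup_{u \in \R^n} \left| \hat X(\{x : u\cdot \Phi(x) > 0\}) - X(\{x : u\cdot \Phi(x) > 0\}) \right| \;\leq\; \tau
\]
holds with probability at least $2/3$ for $\tau = O(\sqrt{n/N}) = O(\eps)$. I would then solve the convex feasibility problem: find $w$ with $w_i \geq 0$, $\sum_i w_i = 1$, and $\sup_u |\hat X(A_u) - q_w(A_u)| \leq 2\tau + \eps$, where $A_u := \{x : u\cdot\Phi(x) > 0\}$. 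By the triangle inequality this is feasible, witnessed by the weights $w^*$ provided by the hypothesis $\dtv(X, q_{w^*}) \leq \eps$. For each fixed $u$ the constraint is affine in $w$, so the overall constraint (a supremum of such affine functions) is convex; moreover $q_w(A_u) = \sum_i w_i \pr_{x \sim p_i}[u\cdot \Phi(x) > 0]$ is efficiently computable because the $p_i$ are explicit. The program is therefore solvable in $\poly(n/\eps)$ time by the ellipsoid (or cutting-plane) method, modulo a separation oracle for the sup over halfspaces.

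The TV guarantee then follows by the standard Yatracos triangle inequality: writing $A^*$ for the Scheffé set between $X$ and $q_{\hat w}$, which is itself of the form $A_{u^*}$,
\[
\dtv(X, q_{\hat w}) \;=\; |X(A^*) - q_{\hat w}(A^*)| \;\leq\; |X(A^*) - \hat X(A^*)| + |\hat X(A^*) - q_{\hat w}(A^*)| \;\leq\; \tau + (2\tau + \eps) \;=\; O(\eps),
\]
which a fortiori implies the (weaker) stated bound $O(\sqrt{\eps\log(n/\eps)})$. The main obstacle will be the efficient implementation of the separation oracle: a naive enumeration over the $O(N^n)$ halfspaces realized on the sample is super-polynomial in $n$, so one must exploit the halfspace structure directly --- for instance via an LP over the dual polytope of halfspace arrangements restricted to a carefully chosen witness set, or via an iterative mirror-descent/no-regret procedure that alternately updates $w$ and the dual direction $u$ to approximately solve the underlying saddle-point problem. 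All remaining ingredients --- VC uniform convergence and the Yatracos triangle inequality --- are entirely standard.
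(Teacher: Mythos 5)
Your route is genuinely different from the paper's. The paper runs a maximum-likelihood argument: it restricts to mixtures with all weights at least $\eps/n$, shows via the same halfspace-VC bound that the empirical log-likelihood $\hat L(p)$ uniformly approximates $L(p)=\E[\log p(X)]$ to additive error $O(\eps\log(n/\eps))$ over this compact family, finds an approximate maximizer of $\hat L$ by convex optimization (the objective $\frac1N\sum_j\log q_w(x_j)$ is an explicit concave function of $w$), and then converts a KL bound to TV via Pinsker --- which is the only step that introduces the $\sqrt{\cdot}$, giving $O(\sqrt{\eps\log(n/\eps)})$. You instead propose the Yatracos/Scheff\'e minimum-distance estimator, which if implementable would give the stronger bound $O(\eps)$.

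There are two issues. The first is fixable: the set $A^*=\{x:X(x)>q_{\hat w}(x)\}$ is \emph{not} of the form $\{x:u\cdot\Phi(x)>0\}$, because $X$ has no reason to be a linear combination of the $p_i$'s, so your final chain of inequalities is not valid as written. The standard repair is to route through the witness $q^*=q_{w^*}$ (with $\dtv(X,q^*)\leq\eps$): the Scheff\'e set between $\hat q=q_{\hat w}$ and $q^*$ \emph{is} in the halfspace class, which gives $\dtv(\hat q,q^*)=\sup_{u}|\hat q(A_u)-q^*(A_u)|\leq \sup_u|\hat q(A_u)-\hat X(A_u)|+\sup_u|\hat X(A_u)-q^*(A_u)|\leq 2(2\tau+\eps)$, and then $\dtv(X,\hat q)\leq\eps+\dtv(q^*,\hat q)=O(\eps+\tau)$.

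The second issue is the genuine gap, and you flagged it yourself without resolving it: the feasibility program $\sup_u|\hat X(A_u)-q_w(A_u)|\leq 2\tau+\eps$ requires, as a separation oracle, solving for a fixed $w$ the inner maximization $\max_u\left|\frac1N\sum_j\mathbf{1}[u\cdot\Phi(x_j)>0]-\pr_{x\sim q_w}[u\cdot\Phi(x)>0]\right|$. This is a maximum-discrepancy-over-halfspaces problem, essentially agnostic learning of halfspaces under an arbitrary embedding, and is NP-hard in general --- it does not become tractable just because the $p_i$'s are explicit, nor does a primal-dual/no-regret scheme help, since the max-player's best response is itself intractable. None of the workarounds you list (LP over a dual polytope of the arrangement, mirror descent on the saddle point) escapes this. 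This is exactly why the paper avoids the Yatracos estimator: the MLE objective is directly evaluable and concave, trading a polynomial-time algorithm for the loss of a square root via Pinsker. As it stands, your proposal does not yield a $\poly(n/\eps)$-time algorithm.
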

\begin{proof}
Let $\Delta$ be the set of distributions of the form $\sum_{i=1}^n w_i p_i$,
where $w_i\geq \eps/n$ for all $i$ and $\sum_{i=1}^n w_i =1$.
Note that $\Delta$ is a convex set and that there exists a $p^\ast\in \Delta$
with $\dtv(p^\ast,X)\leq 2\eps$. For a distribution $p$, let $L(p,x) = \log(p(x))$.
We note that $L(p):=\E[L(p,X)] = D(X||p)+H(X)$,
where $D(X||p)$ is the  KL-divergence.
Our strategy will be to find a $p\in \Delta$ that is an empirical minimizer of $L(p)$.

In particular, given our $N$ samples $x_1,\ldots,x_N$ and a distribution $p$, we define
$$
\hat{L}(p) = \frac{1}{N}\sum_{i=1}^N L(p,x_i) \;.
$$
We claim that with high probability over our samples,
for every pair $p,q\in \Delta$ it holds
$\hat{L}(p)-\hat{L}(q) = L(p)-L(q)+O(\eps\log(n/\eps))$. To see this,
we note that $L(p)-L(q) = \E[\log(p(X)/q(X))]$.
Since $p$ and $q$ are both in $\Delta$ and are mixtures of the $p_i$ with mixing weights at least $\eps/n$,
it is easy to see that $\eps/n \leq p(x)/q(x) \leq n/\eps$. From this, we find that
$$
L(p)-L(q) = \log(n/\eps) + \int_{-\log(n/\eps)}^{\log(n/\eps)} \pr(\log(p(x)/q(x))>t) dt \;.
$$
Similarly, we have
$$
\hat{L}(p)-\hat{L}(q) = \log(n/\eps) + \int_{-\log(n/\eps)}^{\log(n/\eps)} \frac{\#\{x_i: \log(p(x_i)/q(x_i))>t\}}{N} dt \;.
$$
It thus suffices to show that, with high probability over our samples,
for all $p,q\in \Delta$ and $t \in \R$, it holds that
$$
\left|\pr[\log(p(x)/q(x))>t] - \frac{\#\{x_i: \log(p(x_i)/q(x_i))>t\}}{N} \right| = O(\eps) \;.
$$
Note that $\log(p(x)/q(x))>t$ is equivalent to $e^tp(x)-q(x)>0$.
Since $p$ and $q$ are linear combinations of the $p_i$'s,
this in turn is equivalent to saying that $\sum_{i=1}^n a_ip_i(x) > 0$,
for some specific constants $a_i$. However, the class of sets defined
by this equation, for some numbers $a_i$, has VC-dimension $n$
(as it is just the set of halfspaces in $n$ dimensions, after embedding
$x$ into $\R^n$ as $(p_1(x),\ldots,p_n(x))$.
Therefore, our result holds by the VC-Inequality (Theorem~\ref{thm:vc}).

Our algorithm uses convex optimization to find a $p^\ast$ such that
$\hat{L}(p^\ast)$ is within $O(\eps\log(1/\eps))$ of the global maximum over all $p\in \Delta$.
By the above, this must be a maximizer of $L(p^\ast)$ up to $O(\eps\log(n/\eps))$.
Next, we note that for $p,q\in \Delta$, since $\log(p(x)/q(x))$ is bounded,
if $\dtv(X,Y)=O(\eps)$ then $\E[\log(p(X)/q(X))]$ and $\E[\log(p(Y)/q(Y))]$
differ by $O(\eps\log(n/\eps))$. Taking $p\in\Delta$ with $\dtv(X,p)=O(\eps)$,
we apply the above with $Y=p$ and $q=p^\ast$. This says that
$$
L(p)-L(p^\ast) = D(p||p^\ast) + O(\eps\log(n/\eps)) \;.
$$
Since $p^\ast$ is a near maximizer of $L$, the left hand size above is at most $O(\eps\log(n/\eps))$.
This in turn implies that $D(p||p^\ast) = O(\eps\log(n/\eps))$ and, by Pinsker's inequality (Fact~\ref{fact:pinsker})
that $\dtv(p,p^\ast)=O(\sqrt{\eps\log(n/\eps)})$.
Our final result now follows from the triangle inequality.
\end{proof}

\subsection{Template Approach for Learning Applications}\label{ssec:template}

In this section, we describe at a high-level how the preceding
theorems are used to make our applications work.

\subsubsection{Setup}

First, we need to define our problem in the context described.
In particular, we have access to an object parameterized by $k$
vectors $v_i$ and $k$ non-negative weights $w_i$.

\subsubsection{Moment Computation}

Critically, we need a way to compute approximations of the moments $\sum_{i=1}^k w_i v_i^{\otimes 2d}$.
For this, it suffices for every degree-$2d$ monomial $p(x)$ to be able
to approximate $\sum_{i=1}^k w_i p(v_i)$ to error $\delta/m^d$.
This is usually done by finding some polynomial function $P$ of our samples
that is an unbiased estimator of $\sum_{i=1}^k w_i p(v_i)$ and computing an empirical mean.

\subsubsection{(Optional) Rough Clustering}

One issue with this technique is that our requirements on error
are often dependent on the upper bound $R$ we have on the \new{$\ell_2$-norm} of the $v_i$'s.
As having large $v_i$ will usually also make our sample complexity to approximate moments
higher as well, it is often important to reduce to the case where $R$ is relatively small.
This can often be done by performing some kind of rough clustering of samples
to split our problem into components whose $v_i$ all lie in a relatively small ball.

\subsubsection{(Optional) Dimension Reduction}

We will often want to use Proposition~\ref{dimensionReductionProp} to reduce
to the case where the underlying problem is $k$-dimensional.
This is because the $m$-dimensional version of the problem will often incur
runtime and sample complexity proportional to $m^d$.

\subsubsection{Covering}

Next we compute \new{the weighted moments} of the $v_i$'s
and use Proposition~\ref{coverApplicationProp} to efficiently find
an appropriate cover.

\subsubsection{From Covers to Learning}
Finally, we use this cover to learn. For density estimation, this entails some sort of algorithm
with time polynomial in the cover size, analogous to Proposition~\ref{mixtureProp}.
For parameter estimation, we need to employ additional problem-specific algorithmic ideas.

\section{Mixtures of Spherical Gaussians} \label{sec:gmm}
\subsection{Setup}

\begin{definition}[Mixtures of Spherical Gaussians] \label{def:gmm}
An $m$-dimensional {\em $k$-mixture of spherical Gaussians} (spherical $k$-GMM)
is a distribution on $\R^m$ with density function $F(x) = \sum_{j=1}^k w_j N(\mu_j,  I)$,
where $\mu_j \in \R^m$, $w_j \geq 0$, for all $j \in [k]$, and $\sum_{j=1}^k w_j = 1$.
\end{definition}


We study both density estimation and parameter estimation.
In density estimation, we want to output a hypothesis distribution with total variation
distance at most $\eps$ from the target.
In parameter estimation, we assume that the means of the components are sufficiently separated,
and the goal is to recover the unknown mixing
weights and mean vectors to small error $\eps$. Specifically,
we would like to return a list $\{ (\widetilde{w}_j, \widetilde{\mu}_j), j \in [k] \}$ such that
for some permutation $\pi\in \mathbf{S}_k$,
$|w_j-\widetilde{w}_{\pi(j)}| \leq \eps$, and $\|\mu_j-\widetilde{\mu}_{\pi(j)}\|_2  \leq \eps$,
for all $j \in [k]$.


For density estimation, we prove:

\begin{theorem}[Density Estimation for Spherical $k$-GMMs]\label{thm:gmm-density-est}
There is an algorithm that on input $d \in \Z_+$, $\eps>0$, and
$N = \tilde{O}(m^2) \poly(k/\eps) + (k/\eps)^{O(d^2 k^{1/d})}$ samples
from an unknown spherical $k$-GMM $F$ on $\R^m$, the algorithm
runs in time $\poly(m k/\eps)+ (2 k d /\eps)^{O(d^2 k^{1/d})}$ and outputs a hypothesis distribution $H$ such that
with high probability $\dtv(H, F) \leq \eps$.
\end{theorem}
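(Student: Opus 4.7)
The plan is to follow the template from Section~\ref{ssec:template}. Writing the target $k$-GMM as the convolution $F = D \ast N(0, I)$, where $D = \sum_{j=1}^k w_j \delta_{\mu_j}$ is a discrete distribution supported on the means, the key observation is that the multivariate Hermite polynomials associated with the standard Gaussian provide unbiased estimators of the monomial moments of $D$: for each multi-index $\alpha$, $\E_{x \sim F}[H_\alpha(x)]$ is a known linear combination of the moments $\sum_j w_j \mu_j^\beta$ for $|\beta|\le|\alpha|$, so averaging over samples of $F$ and inverting this triangular system yields approximations to the tensors $T_r = \sum_{j=1}^k w_j \mu_j^{\otimes r}$ for $r \leq 2d$.

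To keep the sample complexity near-linear in $m$, I would first compute only the degree-$2$ tensor $T_2$ using $\tilde{O}(m^2)\poly(k/\eps)$ samples and apply Proposition~\ref{dimensionReductionProp} (the $d=1$ case) to extract an approximately $k$-dimensional subspace $U$ containing all means with non-negligible weight. After discarding components with weight below $\eps/(3k)$ (which together contribute at most $\eps/3$ to $\dtv$) and projecting onto $U$, the problem reduces to one in dimension at most $k$. A side benefit of this step is that $\mathrm{tr}(\widehat{\mathrm{Cov}}(F)) - m$ yields an a priori bound $R = \poly(k/\eps)$ on the norms of the surviving means after recentering by the sample mean, which is precisely the quantity Proposition~\ref{coverApplicationProp} requires.

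Working within $U$, I would next estimate $T_{2d}$ to Frobenius error $\delta$ small enough to satisfy the hypothesis of Proposition~\ref{coverApplicationProp} with target cover precision $\eps' := \eps/\poly(k)$; since now the ambient dimension is at most $k$, this costs $(k/\eps)^{O(d)}$ samples, with variance of the Hermite estimators controlled via the hypercontractive bound of Theorem~\ref{thm:hc}. Feeding the resulting tensor into Proposition~\ref{coverApplicationProp} produces, in time $(k/\eps)^{O(d^2 k^{1/d})}$, a set $\mathcal{C}\subset\R^k$ of size $(k/\eps)^{O(d^2 k^{1/d})}$ such that every non-negligible $\mu_j$ lies within $\ell_2$-distance $\eps'$ of some $c\in \mathcal{C}$. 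Since $\dtv(N(c,I),N(\mu,I))=O(\|c-\mu\|_2)$, the mixture $F$ is $O(\eps)$-close in total variation to some convex combination of $\{N(c,I):c\in\mathcal{C}\}$ lifted back to $\R^m$ by padding with an identity covariance in the directions orthogonal to $U$.

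Finally, I would invoke Proposition~\ref{mixtureProp} with these $|\mathcal{C}|$ base distributions to recover, in $\poly(|\mathcal{C}|/\eps)$ time, mixing weights that produce a hypothesis $H$ with $\dtv(H,F)=O(\sqrt{\eps\log(|\mathcal{C}|/\eps)})$; setting the target error smaller by a $\polylog$ factor at the outset absorbs this loss. The principal technical obstacle will be the careful bookkeeping of error parameters — specifically, verifying that the Frobenius accuracy $\delta$ attainable for $T_{2d}$ at the claimed sample cost is strong enough to satisfy the precondition $\delta \lesssim \eps'^{2d}((\eps'/R)/(kd))^{O(d)}$ of Proposition~\ref{coverApplicationProp}, which forces $R$ to be only polynomially bounded in $k/\eps$; this is exactly where the initial dimension-reduction step buys us the savings needed to match the stated $\tilde{O}(m^2)\poly(k/\eps)+(k/\eps)^{O(d^2 k^{1/d})}$ bound.
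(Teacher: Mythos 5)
Your proposal follows the paper's high-level template faithfully (Hermite moment estimation, dimension reduction via Proposition~\ref{dimensionReductionProp}, cover via Proposition~\ref{coverApplicationProp}, final fitting via Proposition~\ref{mixtureProp}), but it is missing one ingredient that the paper treats as essential, and your substitute for it does not work. The paper's argument uses a \emph{rough clustering} step (Lemma~\ref{lem:gmm-rough-clustering}) \textbf{before} any moment estimation: samples are greedily chained into clusters, each cluster is recentered around its own center, and only then are moments computed. This is applied twice --- once inside Lemma~\ref{GMMDimReductionLem} to get the means down to radius $O(k(\sqrt{m}+\log(k/\eps)))$ prior to estimating $T_2$, and once more inside Lemma~\ref{GMMClusteringLemma}, after projecting to $O(k)$ dimensions, to get radius $\poly(k/\eps)$ before estimating $T_{2d}$.

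Your replacement --- recenter by the sample mean and bound $R$ via $\mathrm{tr}(\widehat{\mathrm{Cov}}(F)) - m$ --- does not give $R = \poly(k/\eps)$. We have $\mathrm{tr}(\mathrm{Cov}(F)) - m = \sum_j w_j\|\mu_j - \bar\mu\|_2^2$, which is a quantity determined by the instance and can be arbitrarily large: take a single component with $\|\mu_1\|_2 = 10^{100}$ and weight $1/k$, and after recentering the surviving mean still has norm on the order of $10^{100}$. So the derived $R$ is not a priori polynomial in $k/\eps$ but scales with the unknown spread of the means. This breaks two things downstream: (i) the variance of the Hermite estimators in Lemma~\ref{lem:gmm-moment-est} scales as $(Rmd)^{O(d)}$, so even the $T_2$ estimation cannot be done with $\tilde O(m^2)\poly(k/\eps)$ samples without a preliminary $R$-bound, and (ii) the cover size from Proposition~\ref{coverApplicationProp} is $(2(R/\eps)kmd)^{O(d^2 k^{1/d})}$, which blows up with $R$, so the claimed running time $(2kd/\eps)^{O(d^2 k^{1/d})}$ is unattainable. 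You need the rough clustering argument --- or some other mechanism that genuinely decomposes the instance into sub-instances with bounded-norm means --- rather than a single global recentering.
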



For parameter estimation, we prove:

\begin{theorem}[Parameter Estimation for Spherical $k$-GMMs]\label{thm:gmm-param}
There is an algorithm that on input $d \in \Z_+$, $\eps>0$, and
sample access to an unknown spherical $k$-GMM $F$ on $\R^m$ with minimum weight $p_{\min}$ and pairwise
mean separation at least a sufficiently large multiple of $\sqrt{\log(1/p_{\min})}$, the algorithm draws
$N = \poly(m/(\eps p_{\min})) + k^{O(d)} $ samples from $F$, runs in time
$\poly(N)+k^{O(d^2 k^{1/d})}$, and with high probability outputs an $\eps$-approximation to the unknown
mean vectors and weights.
\end{theorem}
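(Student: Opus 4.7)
The plan is to instantiate the template of Section~\ref{ssec:template} with $v_i = \mu_i \in \R^m$ and the given weights $w_i$, and then use the separation assumption to convert the resulting cover into parameter estimates. As a preprocessing step, I would estimate the empirical covariance $\Sigma_F$ and apply Proposition~\ref{dimensionReductionProp} to $\Sigma_F - I$ to identify a subspace $U \subseteq \R^m$ of dimension at most $k$ that contains a good approximation to every mean $\mu_i$ with $w_i \ge p_{\min}$. Projecting samples onto $U$ reduces the effective ambient dimension from $m$ to $k$; after recentering by the empirical mean, the $\mu_i$'s lie in a ball of radius $R$ which is polynomial in $k$ and $\sqrt{\log(1/p_{\min})}$ with high probability.

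Since $F$ is the convolution of the discrete distribution $D$ on the centers with $N(0,I)$, degree-$2d$ probabilist's Hermite polynomials evaluated at samples yield unbiased estimators of the raw moments of $D$, and hypercontractivity (Theorem~\ref{thm:hc}) provides concentration. Working inside $U$, $k^{O(d)}$ samples suffice to approximate $\sum_i w_i \mu_i^{\otimes 2d}$ in Frobenius norm to the error $\delta$ required by Proposition~\ref{coverApplicationProp}. Feeding this tensor into that proposition produces, in time $(k/\eps)^{O(d^2 k^{1/d})}$, a cover $\mathcal{C} \subset U$ of the same cardinality that contains an $\eps$-approximation to every $\mu_i$ with $w_i \ge p_{\min}$.

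To turn $\mathcal{C}$ into parameter estimates, I would exploit the separation $\Delta \gg \sqrt{\log(1/p_{\min})}$: for each candidate $c \in \mathcal{C}$, count the fraction of samples falling in $B(c, \Delta/3)$; by standard Gaussian tail bounds, the cluster around each true $\mu_i$ captures empirical mass $\approx w_i$, while two true means lie at distance at least $\Delta$. Greedily selecting candidates with sufficiently large empirical mass while suppressing those within $2\Delta/3$ of a previously chosen candidate yields $k$ initial center estimates, each within $\eps$ of some $\mu_{\pi(i)}$ after relabeling. A final round of empirical averaging within each cluster refines the means and weights to error $\eps$, in the spirit of the clustering/refinement step from~\cite{DiakonikolasKS18-mixtures}.

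The main obstacle is verifying that the tensor-accuracy $\delta$ demanded by Proposition~\ref{coverApplicationProp} (roughly $(\eps/R)^{O(d)}$) is achievable with only $k^{O(d)}$ samples once we have projected onto $U$; this requires careful control of the Hermite estimator's variance in terms of $R$ and $d$, and the dimension-reduction step is essential to avoid an $m^{O(d)}$ blow-up in sample complexity. A secondary subtlety is that components with $w_i < p_{\min}$ could still be close to some $c \in \mathcal{C}$, but since we only commit to candidates whose empirical mass is at least a constant fraction of $p_{\min}$, such negligible components cannot survive the greedy selection and hence cannot corrupt the output list.
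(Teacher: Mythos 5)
Your high-level pipeline — dimension reduction, Hermite moment estimation, Proposition~\ref{coverApplicationProp} to get a cover, then candidate selection exploiting separation — matches the paper's. Two of your steps contain real gaps, one of which is fatal to the stated sample/time bounds.

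The fatal gap is the candidate-selection statistic. You propose to declare $c\in\mathcal{C}$ plausible if a $\gtrsim p_{\min}$ fraction of samples fall in $B(c,\Delta/3)$. In the regime the theorem is designed for, $\Delta = C\sqrt{\log(1/p_{\min})}$, which for (e.g.) uniform mixtures is $\Theta(\sqrt{\log k})$. After projecting onto a $k$-dimensional subspace, a sample from $N(\mu_i, I)$ is at $\ell_2$-distance $\Theta(\sqrt{k})$ from $\mu_i$ with overwhelming probability, so $\pr[\,x\in B(\mu_i,\Delta/3)\,]$ is $e^{-\Omega(k)}$, not $\approx 1$. Hence $B(c,\Delta/3)$ captures essentially zero empirical mass and every candidate is discarded. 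Inflating the ball radius to $\Theta(\sqrt{k})$ so that it captures mass does not help either, since then the ball absorbs samples from every component within $O(\sqrt{k})\gg\Delta$. The separation $\sqrt{\log(1/p_{\min})}$ is an intrinsically \emph{one-dimensional} phenomenon: it is only detectable by projecting onto the direction $c'-c$ between two candidates and testing whether $(x-c)\cdot (c'-c)/\|c'-c\|_2$ exceeds $2\sqrt{\log(1/p_{\min})}$. That is precisely condition~(3) of the LP used in the paper (imported from~\cite{DiakonikolasKS18-mixtures}), and later the paper's ``assign to nearest $c\in\mathcal{C'}$'' rule is argued via the same 1D projection. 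Your isotropic ball test is the wrong statistic in the optimal-separation regime, and without it the rest of the greedy selection has nothing to run on.

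Two secondary issues. First, ``recentering by the empirical mean'' does not bound the radius $R$ of the $\mu_i$'s — two means can be at distance $2^k$ in opposite directions and the empirical mean sits near the origin, leaving $R$ unbounded. The paper handles this with a rough clustering step (Lemma~\ref{lem:gmm-rough-clustering}), which both Lemma~\ref{GMMDimReductionLem} and Lemma~\ref{GMMClusteringLemma} invoke internally, reducing to $R = \poly(k,\sqrt{m},\log(\cdot))$ before any moment is estimated. Second, naive in-cluster empirical averaging only yields a constant-accuracy estimate of each $\mu_i$ (because a $\poly(p_{\min})$ fraction of samples is misassigned, each contributing $\Theta(\sqrt{k})$ bias); the paper instead builds only a constant-accuracy cover and then boosts via~\cite{RV17-mixtures}. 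This also explains why the theorem's exponential term is $k^{O(d^2 k^{1/d})}$ with no $\eps$ dependence, whereas aiming for $\eps$-accuracy directly in the cover, as you do, would give $(k/\eps)^{O(d^2 k^{1/d})}$.
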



\subsection{Rough Clustering}

\new{In this subsection, we show that we can efficiently pre-process our problem to reduce to the case
that all the component means have appropriately bounded $\ell_2$-norm. In particular, we show the following:}

\dnew{
\begin{lemma}\label{lem:gmm-rough-clustering}
Let $N$ be a positive integer and $X$ be a $k$-mixture of spherical Gaussians in $\R^m$.
There exists an algorithm that, given $C k^2 N$ independent samples from $X$,
for a sufficiently large constant $C>0$, runs in time $\poly(N,m,k)$
and computes at most $k$ centers $C_i \in \R^m$ such that the following holds:
With high constant probability, to each mixing component with weight $w_i\geq 1/(kN)$
there will be an associated center $C_i$ with $\|C_i-\mu_i\|_2=O(k(\sqrt{m}+\log(Nk/\eps)))$.
Moreover, there is in efficient algorithm that given a sample from $X$,
with probability at least $1-1/N$ returns the center associated
with the component that the sample was drawn from.
\end{lemma}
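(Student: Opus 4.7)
The plan is to run a simple single-linkage-style clustering on the training samples with a carefully chosen radius, and then classify any fresh sample by its nearest training sample. Set $R = C_0(\sqrt{m} + \log(Nk/\eps))$ for a sufficiently large universal constant $C_0$. First I would invoke standard Gaussian concentration together with a union bound to conclude that, with probability at least $1 - 1/(10N)$, every training sample $x$ drawn from component $i$ satisfies $\|x - \mu_i\|_2 \leq R$, and that every component with weight $w_i \geq 1/(kN)$ contributes at least one training sample (using that we take $Ck^2N$ samples with $C$ large).

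Next I would build the graph $G$ whose vertices are the training samples and whose edges join pairs at Euclidean distance at most $4R$. By the triangle inequality, any two samples from the same Gaussian lie within distance $2R \leq 4R$, so each Gaussian with at least one sample is contained in a single connected component of $G$. In particular $G$ has at most $k$ connected components, and the algorithm picks an arbitrary training sample in each as the representative $C_j$.

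The key geometric step is to bound the diameter of each connected component by $O(kR)$. If $x, y$ lie in the same component of $G$ and come from Gaussians $i, j$, following the $G$-path and shortcutting within each Gaussian yields a sequence of distinct centers $\mu_i = \mu_{i_0}, \mu_{i_1}, \ldots, \mu_{i_s} = \mu_j$ with $s \leq k - 1$, each consecutive pair at distance at most $R + 4R + R = 6R$ by the triangle inequality. Hence $\|\mu_i - \mu_j\|_2 \leq 6(k-1)R$, and $\|C_j - \mu_i\|_2 \leq 6(k-1)R + 2R = O(k(\sqrt{m} + \log(Nk/\eps)))$ for every Gaussian with samples in cluster $j$, as required.

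For the on-line classification, given a fresh sample $y$ from component $i$, return the representative of the connected component containing the nearest training sample $x^\ast$ to $y$. With probability at least $1 - 1/(2N)$ we have $\|y - \mu_i\|_2 \leq R$; if in addition $w_i \geq 1/(kN)$ then some training sample $x_0$ from component $i$ satisfies $\|x_0 - y\|_2 \leq 2R$, so $\|y - x^\ast\|_2 \leq 2R$ and thus $\|x_0 - x^\ast\|_2 \leq 4R$, placing $x^\ast$ in the same connected component as $x_0$ and hence in the correct cluster. Components with $w_i < 1/(kN)$ contribute total probability at most $1/N$ to a fresh sample, which can be charged to the failure budget. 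The main obstacle is not any single inequality but calibrating the threshold $R$ and the edge-distance $4R$ so that the chaining bound on the diameter, the guarantee that two samples from the same Gaussian lie in one component, and the nearest-neighbor identification all go through simultaneously; everything else is standard concentration and graph-theoretic bookkeeping.
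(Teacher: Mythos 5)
Your proof is correct and follows essentially the same route as the paper: single-linkage clustering of the training samples with a $\Theta(\sqrt{m}+\log(Nk))$ edge threshold, a chaining/shortcutting argument bounding each cluster's diameter by $O(kR)$, and concentration-plus-union-bound to argue a fresh sample lands in the right cluster with probability $1-O(1/N)$. The only cosmetic difference is that you classify a fresh point by the cluster of its nearest training sample, whereas the paper thresholds on distance to each cluster and checks uniqueness — both arguments go through.
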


The idea of this lemma is to help us reduce to the case where all of our means are in a ball of bounded radius.
In particular, if we take our $N$ samples from $X$, there is a decent probability that every sample
is correctly assigned to its component's center. If so, replacing these samples $x$ by $x-C_i$
will give us $N$ i.i.d. samples from $X' := \sum_{i=1}^k w_i N(\mu_i-C_i,I)$,
a $k$-mixture of spherical Gaussians whose means are all within $\ell_2$-distance
$O(k(\sqrt{m}+\log(Nk/\eps)))$ of the origin.
}

\begin{proof}
The basic idea here is that all of the samples from a given Gaussian component on $\R^m$ will be within $\ell_2$-distance
about $O(\sqrt{m})$ of each other. If we cluster together close points, we can try to identify the components.
This will not work directly, since we may have pairs of components that are close to each other
and whose samples will lie within $O(\sqrt{m})$ of each other. However, no chain of such close samples will get us
more than $\dnew{O}(k\sqrt{m})$ away. This allows us to cluster together points whose means
are within $\ell_2$-distance about $O(k\sqrt{m})$ of each other.

More formally, we note that, for any $\eta >0$, a random sample $x \sim N(\mu,I)$ satisfies
$\|x-\mu\|_2 \leq 2(\sqrt{\new{m}}+\log(1/\eta))$ with probability $1-\eta$.
Therefore, if we take $1/\eta$ iid samples, with high probability it will hold that
\begin{itemize}
\item Every sample taken is within $\ell_2$-distance $2(\sqrt{m}+\log(1/\eta))$ of some $\mu_i$.
\item For each component $i$ with $w_i \new{\geq} k \eta$, we will have at least one sample within $\ell_2$-distance
\dnew{$2(\sqrt{m}+\log(1/\eta))$} from the corresponding mean vector $\mu_i$.
\end{itemize}
If both of these conditions hold, we can perform a rough clustering on the points.
In particular, we declare two points to be ``close'' if their Euclidean distance is at most
$10(\sqrt{m}+\log(1/\eta))$, and declare them to be in the same cluster if they are connected
by some chain of close points. We note that since any two points from the same component
of our mixture are close with high probability, these chains need not be longer than $O(k)$ in length.
So, each cluster of points has diameter $O(k(\sqrt{m}+\log(1/\eta)))$.
For each cluster, we pick a center $C_i$.

We then note that a random sample $x$ drawn from our mixture $X = \sum_{i=1}^k w_i N(\mu_i,I)$ satisfies
the following conditions with probability at least $1-k^2 \eta$:
\begin{enumerate}
\item $x$ comes from a component $N(\mu_i, I)$ with $\|\mu_i-C_j\|_2 = O(k(\sqrt{m}+\log(1/\eta)))$, for some center $C_j$.
\item For the $C_j$ chosen above, $x$ is within $\ell_2$-distance $4(\sqrt{m}+\log(1/\eta))$ of some point of that cluster,
but not within this distance of any point of any other cluster.
\end{enumerate}
The first claim will hold if at least one of the original samples $x_0$ drawn
to produce the clusters is within $\ell_2$-distance $2(\sqrt{m}+\log(1/\eta))$ of $\mu_i$.
The second claim above holds because, with high probability, $x$ is within
$\ell_2$-distance $2(\sqrt{m}+\log(1/\eta))$ of $\mu_i$. This means that it is within
$\ell_2$-distance $4(\sqrt{m}+\log(1/\eta))$ of $x_0$. If so, it cannot be this close to an $x_1$
from another cluster, since then, by the triangle inequality, $x_0$ and $x_1$ will be close,
and thus in the same cluster.

This means that if we draw an independent set of $N = o(1/(k^2 \eta))$
additional points from $X$,
we can associate them to clusters so that with high probability the following holds:
\begin{itemize}
\item All of the samples from the same component of $X$ end up in the same cluster.
\item All of the components whose samples are associated with a given cluster
have means that are within $\ell_2$-distance $O(k(\sqrt{m}+\log(1/\eta)))$ of the mean of that cluster.
\end{itemize}

\dnew{This completes our proof.}
\end{proof}

\subsection{Moment Computation}

\new{
The following lemma shows that we can efficiently approximate any entry of the tensor
$\sum_{i=1}^k w_i \mu_i^{\otimes 2d}$ to small error:

\begin{lemma}\label{lem:gmm-moment-est}
Suppose that we have sample access to $X = \sum_{i=1}^k w_i N(\mu_i, I)$, where $\|\mu_i\|_2 \leq R$,
for all $i \in [k]$, for a parameter $R>0$. There is an algorithm that, given $\delta>0$, $d \in \Z_+$,
and a multi-index $\bi \in [m]^{2d}$, draws $(R m d)^{O(d)}/\delta^2$ samples from $X$, runs in sample-polynomial time,
and outputs an approximation $T_{\bi}$ of $(\sum_{i=1}^k w_i \mu_i^{\otimes 2d})_{\bi}$
with expected squared error $O(\delta^2)$.
\end{lemma}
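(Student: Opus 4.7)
The plan is to use multivariate probabilist's Hermite polynomials as unbiased estimators for monomials of the mean vectors. Recall that the univariate Hermite polynomials $He_n$ (with $He_0 = 1$, $He_{n+1}(z) = z He_n(z) - n He_{n-1}(z)$) satisfy $\E_{x \sim N(\mu, 1)}[He_n(x)] = \mu^n$ (immediate from the generating function $\sum_n He_n(x) t^n/n! = e^{xt - t^2/2}$) as well as the orthogonality relation $\E_{z \sim N(0,1)}[He_n(z) He_{n'}(z)] = n! \, \delta_{n, n'}$. Extending to multi-indices via the product $H_\alpha(x) := \prod_j He_{\alpha_j}(x_j)$ and using independence of the coordinates under $N(\mu, I)$ gives $\E_{x \sim N(\mu, I)}[H_\alpha(x)] = \mu^\alpha$.

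Given the target index $\bi = (i_1, \ldots, i_{2d}) \in [m]^{2d}$, I would define $\alpha \in \N_0^m$ by $\alpha_j = |\{t : i_t = j\}|$, so that $|\alpha| = 2d$ and $(\mu_i^{\otimes 2d})_\bi = \mu_i^\alpha$. Linearity then yields $\E[H_\alpha(X)] = \sum_i w_i \mu_i^\alpha = (\sum_i w_i \mu_i^{\otimes 2d})_\bi$, so the estimator will be the empirical mean $T_\bi := (1/N) \sum_{\ell=1}^N H_\alpha(x^{(\ell)})$ over $N$ i.i.d.\ samples drawn from $X$. Each evaluation of $H_\alpha$ takes $\poly(m,d)$ time as it is a product of $O(d)$ univariate Hermite values of degree at most $2d$.

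The remaining (and main) step is a variance bound. Applying the translation identity $He_n(\mu + z) = \sum_{k=0}^n \binom{n}{k} \mu^k He_{n-k}(z)$ coordinatewise with orthogonality gives, for each component,
\begin{equation*}
\E_{x \sim N(\mu_i, I)}\bigl[H_\alpha(x)^2\bigr] \;=\; \sum_{\beta \leq \alpha} \binom{\alpha}{\beta}^{2} \beta!\, \mu_i^{2(\alpha - \beta)} \;\leq\; (CRd)^{O(d)},
\end{equation*}
using $|(\mu_i)_j| \leq \|\mu_i\|_2 \leq R$ (so $\mu_i^{2(\alpha-\beta)} \leq R^{4d}$) together with the crude bounds $\binom{\alpha}{\beta}^{2} \leq 4^{2d}$, $\beta! \leq (2d)!$, and at most $2^{|\alpha|} = 4^d$ sub-multi-indices $\beta \leq \alpha$. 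Averaging over the mixture preserves the bound, so $\Var[T_\bi] \leq (CRd)^{O(d)}/N$, and taking $N = (Rmd)^{O(d)}/\delta^2$ yields expected squared error $O(\delta^2)$, as required. I do not anticipate any serious obstacle; the argument is routine moment-method bookkeeping once one identifies Hermite polynomials as the right unbiased estimator.
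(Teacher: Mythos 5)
Your proposal is correct and uses the paper's central device: multivariate Hermite polynomials $H_\alpha(x)=\prod_j He_{\alpha_j}(x_j)$ as exact unbiased estimators of $\sum_i w_i \mu_i^\alpha$, followed by an empirical average over i.i.d.\ samples. The only real divergence is in how the variance is controlled. You apply the translation identity $He_n(\mu+z)=\sum_{k=0}^n\binom{n}{k}\mu^k He_{n-k}(z)$ coordinatewise and use Hermite orthogonality to write $\E_{z\sim N(0,I)}[H_\alpha(\mu+z)^2]$ exactly as a finite sum over $\beta\le\alpha$, then bound that sum crudely, obtaining a per-entry variance of $(Rd)^{O(d)}$ with no $m$-dependence at all. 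The paper instead estimates the $\ell_1$-norm of the coefficient vector of $H_\alpha$, deduces the pointwise envelope $|H_\alpha(X)|\le (1+\|X\|_2^{2d})(2d)!$, and invokes hypercontractivity (Theorem~\ref{thm:hc}) to control $\E[\|X\|_2^{4d}]=O(R^{4d}+m^{2d})\,d^{O(d)}$, arriving at the weaker bound $(Rmd)^{O(d)}$. Your route is both more elementary --- no hypercontractivity is needed, only orthogonality --- and sharper per entry; the extra $m$ factor in the paper's variance bound is harmless because it is in any case swallowed by the $m^{2d}$ that appears when the full tensor is assembled in Corollary~\ref{cor:gmm-tensor-est}. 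Both arguments correctly yield the stated $(Rmd)^{O(d)}/\delta^2$ sample complexity.
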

}
\begin{proof}
Let $He_n(t)$, $t\in \R$, $n \in \Z_+$, denote the probabilist's Hermite polynomial.
\new{
We will show the following claim:

\begin{claim}\label{claim:gmm-moment-formula}
For any $\alpha \in \N^{m}$ we have that:
\begin{equation}\label{eqn:gmm-moment}
\sum_{i=1}^k w_i {\mu_i}^{a} = \E\left[\prod_{j=1}^{\new{m}} He_{a_j}(X_j) \right] \;,
\end{equation}
where $\mu_i \in \R^m$, $i \in [k]$, are the component means of $X = \sum_{i=1}^k w_i N(\mu_i, I)$.
\end{claim}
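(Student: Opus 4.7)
The plan is to reduce the mixture statement to the single-Gaussian statement by linearity of expectation, and then reduce the multivariate single-Gaussian statement to the one-dimensional case by independence of coordinates. Concretely, by linearity,
\[
\E_{X \sim \sum_i w_i N(\mu_i, I)} \left[ \prod_{j=1}^m He_{a_j}(X_j) \right] = \sum_{i=1}^k w_i \, \E_{Y \sim N(\mu_i, I)}\left[\prod_{j=1}^m He_{a_j}(Y_j)\right],
\]
and since the coordinates of $Y \sim N(\mu_i, I)$ are independent, each expectation on the right factors as $\prod_{j=1}^m \E_{Y_j \sim N(\mu_{i,j},1)}[He_{a_j}(Y_j)]$. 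So it suffices to prove the one-dimensional identity $\E_{Y \sim N(\mu,1)}[He_n(Y)] = \mu^n$ for all integers $n\geq 0$ and $\mu \in \R$; then taking the product over $j$ yields $\prod_{j=1}^m \mu_{i,j}^{a_j} = \mu_i^{a}$, which after summing against $w_i$ gives exactly the RHS of~\eqref{eqn:gmm-moment}.

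For the one-dimensional identity, I would use the classical exponential generating function for the probabilist's Hermite polynomials,
\[
\sum_{n=0}^\infty \frac{t^n}{n!} He_n(y) = \exp\!\left(ty - t^2/2\right).
\]
Taking expectations over $Y \sim N(\mu, 1)$, I can write $Y = \mu + Z$ with $Z \sim N(0,1)$ and compute
\[
\E\!\left[\exp(tY - t^2/2)\right] = e^{t\mu - t^2/2}\,\E[e^{tZ}] = e^{t\mu - t^2/2}\cdot e^{t^2/2} = e^{t\mu} = \sum_{n=0}^\infty \frac{t^n}{n!}\,\mu^n.
\]
Since the series in $t$ converges uniformly on compact sets (so one may interchange expectation and summation), matching coefficients of $t^n/n!$ on the two sides gives $\E[He_n(Y)] = \mu^n$, as required.

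The argument is essentially routine once one recalls the generating-function identity; the only minor technical step is justifying the interchange of expectation and infinite summation, which follows from absolute integrability (the Gaussian moment-generating function is finite, so Fubini/dominated convergence applies). There is no real obstacle here — the identity is standard and the multivariate/mixture extensions are immediate from independence and linearity. In the paper it will be used as a black-box moment identity that turns the problem of approximating $\sum_i w_i \mu_i^{\otimes 2d}$ into that of estimating the means of the bounded (up to polynomial factors in $R$ and $d$) random variables $\prod_j He_{a_j}(X_j)$, which is the point of the following sample-complexity bound.
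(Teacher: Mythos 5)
Your proof is correct and follows the same overall reduction as the paper: linearity over the mixture components, then independence of coordinates to factor into one-dimensional expectations, then the key one-dimensional identity $\E_{Y\sim N(\mu,1)}[He_n(Y)]=\mu^n$. The only place you diverge is in how this one-dimensional identity is established: the paper Taylor-expands $He_n(\mu+G)$ around $G$ and uses the derivative formula $He_n' = n\,He_{n-1}$ together with $\E[He_k(G)]=\delta_{k,0}$, whereas you invoke the exponential generating function $\sum_n \frac{t^n}{n!}He_n(y)=e^{ty-t^2/2}$ and the Gaussian MGF. Both are completely standard; the generating-function route is arguably slicker and handles the interchange of expectation and sum in one stroke via dominated convergence, while the paper's Taylor route is more elementary and avoids needing to justify the interchange at all (the sum is finite). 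Either way the claim is proved.
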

}
\begin{proof}
Note that if $G=N(0,1)$ and $\mu \in \R$, we can write
\begin{align*}
\E[He_n(G+\mu)] & = \E\left[\sum_{i=0}^n \left(\frac{\partial}{\partial x} \right)^i He_n(G)/i! \mu^i \right] \\
& = \E\left[\sum_{i=0}^n He_{n-i}(G) \frac{(n)(n-1)\cdots(n-i+1)\mu^i}{i!} \right] = \mu^n \;,
\end{align*}
where the first line above is by Taylor expanding $He_n$ about $G$.
Next suppose that $X = N(\tilde{\mu},I)$ for some vector $\tilde{\mu} = (\tilde{\mu}_1, \ldots, \tilde{\mu}_m) \in \R^m$.
For $a = (a_1, \ldots, a_m) \in \Z_+^m$, we have that
$$
\E\left[\prod_{i=1}^{\new{m}} He_{a_i}(X_i) \right] = \prod_{i=1}^{\new{m}} \E[He_{a_i}(X_i)]
= \prod_{i=1}^{\new{m}} \tilde{\mu}_i^{a_i} = \new{\tilde{\mu}^{a}}.
$$
\new{Finally, let $X = \sum_{i=1}^k w_i N(\mu_i, I)$, where $\mu_i \in \R^m$, $i \in [k]$.
By linearity we get that}
$$
\sum_{i=1}^k w_i {\mu_i}^{a} = \E\left[\prod_{j=1}^{\new{m}} He_{a_j}(X_{\new{j}}) \right] \;,
$$
as desired. This completes the proof of Claim~\ref{claim:gmm-moment-formula}.
\end{proof}

Given $N$ independent samples from $X = \sum_{i=1}^k w_i N(\mu_i, I)$,
we can use Claim~\ref{claim:gmm-moment-formula} to approximate $\sum_{i=1}^k w_i \mu_i^{a}$ by the empirical mean of
$\prod_{j=1}^{\new{m}} He_{a_j}(X_{\new{j}})$. Recall our assumption that $\|\mu_i\|_2 \leq R$, $i \in [k]$, for some parameter $R>0$.

To bound the sample complexity, it suffices to bound the variance of the term in the RHS of
\eqref{eqn:gmm-moment}.

We note that $He_a(t)$, $t \in \R$, is a degree-$a$ polynomial
with sum of absolute values of coefficients at most $a!$.
So, \new{if $|a| = 2d$}, $\prod_{j=1}^{\new{m}} He_{a_j}(X_j)$ will have degree at most $2d$,
and the sum of the absolute values of its coefficients will be at most $(2d)!$.
Therefore, its absolute value will be at most $(1+\|X\|_2^{2d})(2d)!$.
Over any component, we have that $\E[\|X\|_2^2]\dnew{= m+\|\mu\|_2^2=O(R^2+m)}$.
Therefore, by hypercontractivity, it follows that $\E[\|X\|_2^{\dnew{4}d}] \leq O((R^{\dnew{4}d}+m^{\dnew{2}d}))d^{O(d)}$.
Thus, the variance of $\prod_{j=1}^{\new{m}} He_{a_j}(X_j)$ will be $O(Rmd)^{O(d)}$.

\dnew{This is because
\begin{align*}
\Var\left( \prod_{j=1}^{\new{m}} He_{a_j}(X_j) \right) & \leq \E\left[\left(\prod_{j=1}^{\new{m}} He_{a_j}(X_j)\right)^2 \right]\\
& \leq \E\left[(1+|X|^2)^{4d} d^{O(d)} \right] \leq (1+R^{4d}+m^{2d})d^{O(d)} = O(Rmd)^{O(d)}.
\end{align*}
}
\new{Recall that  if $\wh{Z}_N$ is the empirical distribution obtained
by taking $N$ iid samples from the random variable $Z$, then $\E[(\wh{Z}_N - \E[Z])^2 ] = \Var[Z]/N$.}

Therefore, with $N = (R m d)^{C'd}/\delta^2$ samples from $X$, for $C'$ a sufficiently large constant,
we can approximate $\sum_{i=1}^k w_i {\mu_i}^{a}$, for $|a| = 2d$, to expected $L_2^2$-error $\delta^2$.
\end{proof}

\new{
Using Lemma~\ref{lem:gmm-moment-est} to approximate each entry of
$\sum_{i=1}^k w_i \mu_i^{\otimes 2d}$ to appropriately high accuracy,
we can approximate the entire tensor $\sum_{i=1}^k w_i \mu_i^{\otimes 2d}$ within small $\ell_2$-error.

\begin{corollary}\label{cor:gmm-tensor-est}
By taking $N > (R m d)^{Cd}/\delta^2$ samples from $X$, for an appropriate constant $C>0$,
we can efficiently compute a tensor $T$ such that with high constant probability it holds
$\| T - \sum_{i=1}^k w_i \mu_i^{\otimes 2d} \|_2^2 \leq \delta^2$.
\end{corollary}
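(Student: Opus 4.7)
The plan is to derive the corollary as an immediate consequence of Lemma~\ref{lem:gmm-moment-est} by applying it coordinate-wise and then summing the resulting squared errors. The key observation is that the tensor $\sum_{i=1}^k w_i \mu_i^{\otimes 2d}$ has $m^{2d}$ entries, each indexed by a multi-index $\bi \in [m]^{2d}$, and $\ell_2$-error of the tensor decomposes as the sum of squared entrywise errors.

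First, I would choose an entrywise target error parameter $\delta'$ so that after summing over all entries the total squared $\ell_2$-error is at most $\delta^2$ in expectation. Since there are $m^{2d}$ entries, taking $\delta'^2 = \delta^2 / (C_0 \, m^{2d})$ for a suitable constant $C_0$ suffices: applying Lemma~\ref{lem:gmm-moment-est} to each coordinate $\bi$ with error parameter $\delta'$ would require $(Rmd)^{O(d)}/\delta'^2 = (Rmd)^{O(d)} m^{2d}/\delta^2$ samples, which is $(Rmd)^{C d}/\delta^2$ for a sufficiently large constant $C$ (since $m^{2d}$ is absorbed into $(Rmd)^{O(d)}$ by enlarging the constant in the exponent).

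Next, I would note that we can reuse the same set of $N$ samples for estimating every entry, simply by forming the empirical mean of the appropriate product of Hermite polynomials as prescribed by Claim~\ref{claim:gmm-moment-formula}. This does not affect the per-entry variance bound from Lemma~\ref{lem:gmm-moment-est}, so by linearity of expectation
\[
\E\left[\, \Big\| T - \sum_{i=1}^k w_i \mu_i^{\otimes 2d} \Big\|_2^2 \,\right] \;=\; \sum_{\bi \in [m]^{2d}} \E\!\left[\big(T_{\bi} - (\littlesum_i w_i \mu_i^{\otimes 2d})_{\bi}\big)^2\right] \;\leq\; m^{2d} \cdot O(\delta'^2) \;\leq\; \delta^2/2 \;,
\]
with the constant absorbed appropriately. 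A final application of Markov's inequality then yields $\|T - \sum_i w_i \mu_i^{\otimes 2d}\|_2^2 \leq \delta^2$ with high constant probability, and the estimator is clearly computable in sample-polynomial time. There is no substantive obstacle here; the only care needed is in choosing $C$ large enough that the $m^{2d}$ blow-up from summing over entries is subsumed into the $(Rmd)^{Cd}$ factor.
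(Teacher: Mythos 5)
Your proposal matches the paper's own proof: set the per-entry error to roughly $\delta/m^d$, invoke Lemma~\ref{lem:gmm-moment-est} for each multi-index $\bi$ (reusing one common set of $N$ samples), sum the $m^{2d}$ squared entry errors to bound $\E\bigl[\|T - \sum_i w_i\mu_i^{\otimes 2d}\|_2^2\bigr] = O(\delta^2)$, and finish with Markov's inequality. The bookkeeping absorbing $m^{2d}$ into $(Rmd)^{Cd}$ is exactly what the paper does implicitly, so there is nothing further to fix.
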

\begin{proof}
We take $N = m^{2d} (R m d)^{C'd}/\delta^2$  samples from $X$,
and consider the tensor $T = (T_{\bi})$, $\bi \in [m]^{2d}$,
as our approximation to $\sum_{i=1}^k w_i \mu_i^{\otimes 2d}$.
By Lemma~\ref{lem:gmm-moment-est}, we have that
$$\E \left[ \left\| T - \littlesum_{i=1}^k w_i \mu_i^{\otimes 2d} \right\|_2^2 \right] \leq m^{2d} (\delta/m^d)^2
= O(\delta^2) \;.$$
The corollary follows from Markov's inequality.
\end{proof}
}

\subsection{Dimension Reduction}

\dnew{
After reducing the radius, we can perform dimension reduction.
Our dimension reduction procedure is described in the following lemma.

\begin{lemma}\label{GMMDimReductionLem}
There exists an algorithm that given $N=\poly(km/\eps)$ i.i.d. samples from $X$, for a sufficiently large degree polynomial,
runs in $\poly(N, k, m)$ time and computes a subspace $H$ in $\R^m$ of dimension at most $2k$ such
that with large constant probability the following holds:
For every $i \in [k]$ with $w_i \geq \eps/k$, we have that $\mu_i$ is within $\ell_2$-distance $\eps$ of $H$.
\end{lemma}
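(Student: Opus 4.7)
}
The plan is to instantiate Proposition~\ref{dimensionReductionProp} (the ``$d=1$'' version of our cover machinery) on the second-moment tensor of the component means, after first using Lemma~\ref{lem:gmm-rough-clustering} to shift the problem into a bounded region so that the required moment estimation is sample-efficient. The factor of $2k$ in the final dimension bound will come precisely from having to add back the at most $k$ rough-clustering centers to the dimension-$k$ subspace produced by Proposition~\ref{dimensionReductionProp}.

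First, I would run Lemma~\ref{lem:gmm-rough-clustering} to produce cluster centers $C_1,\ldots,C_{k'}$ with $k'\le k$, and a classifier that with probability $1-1/N$ sends a fresh sample to the center $C_{j(i)}$ associated with the component it was drawn from. Invoking the rough clustering with parameter $N^2$ in place of $N$ (at the cost of only a polynomial factor in the sample count) and then taking a fresh batch of $N=\poly(km/\eps)$ samples, a union bound guarantees that with large constant probability every fresh sample is correctly classified. Subtracting the assigned center turns these into i.i.d.\ samples from the shifted mixture $\sum_i w_i\, N(\tilde\mu_i, I)$ with $\tilde\mu_i=\mu_i-C_{j(i)}$ and $\|\tilde\mu_i\|_2 \le R = O(k(\sqrt m+\log(Nk/\eps)))$.

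Next, I would apply Corollary~\ref{cor:gmm-tensor-est} with $d=1$ to the translated samples to produce a tensor $T$ with $\bigl\|T-\sum_i w_i\,\tilde\mu_i^{\otimes 2}\bigr\|_2 \le \delta$, for $\delta$ a small polynomial in $\eps/k$ to be fixed in a moment. By Corollary~\ref{cor:gmm-tensor-est} this requires $(Rm)^{O(1)}/\delta^2=\poly(km/\eps)$ samples, which is within our budget. I would then feed $T$ into Proposition~\ref{dimensionReductionProp} with $d=1$, weight threshold $w=\eps/k$, and the radius bound $R$, obtaining a subspace $U\subseteq\R^m$ with $\dim U\le k$ such that every $\tilde\mu_i$ with $w_i\ge\eps/k$ lies within $\ell_2$-distance $O((\delta/w)^{1/2})$ of $U$. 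Choosing $\delta=c\,\eps^3/k$ for a sufficiently small constant $c>0$ makes this distance at most $\eps$.

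Finally, I would output $H \eqdef U+\nspan(C_1,\ldots,C_{k'})$, which has dimension at most $k+k'\le 2k$. For any $i$ with $w_i\ge\eps/k$, write $\mu_i=\tilde\mu_i+C_{j(i)}$; since $\tilde\mu_i$ is within distance $\eps$ of $U\subseteq H$ and $C_{j(i)}\in H$, the vector $\mu_i$ is within $\ell_2$-distance $\eps$ of $H$, as required. The main obstacle, and the only nontrivial accounting, is the coordination between the classification-error parameter of Lemma~\ref{lem:gmm-rough-clustering} and the sample count used for moment estimation: we must ensure that simultaneously (i) all the $N$ fresh samples used to estimate $T$ are correctly classified with constant probability, (ii) the moment-estimation variance bound from Lemma~\ref{lem:gmm-moment-est} (which depends on $R$) yields $\delta=O(\eps^3/k)$ within $\poly(km/\eps)$ samples, and (iii) the low-weight components (with $w_i<\eps/k$) are simply discarded without affecting the guarantee on the heavy ones; all three are routine but must be set up consistently.
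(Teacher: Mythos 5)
Your proposal is correct and follows essentially the same route as the paper: both apply Lemma~\ref{lem:gmm-rough-clustering} to recenter, then estimate the weighted second-moment tensor via Corollary~\ref{cor:gmm-tensor-est} to error $O(\eps^3/k)$, feed it into Proposition~\ref{dimensionReductionProp} to get a dimension-$\le k$ subspace $U$, and finally take $H$ to be the span of $U$ together with the rough-clustering centers. The parameter choices (weight threshold $\eps/k$, resulting distance bound $O((\delta/w)^{1/2})\le\eps$, and final dimension $\le 2k$) match the paper's accounting.
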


We note that (perhaps after replacing $\eps$ by a slightly smaller quantity)
in order to solve either the density estimation or parameter estimation problems,
it will suffice to solve the same problem after projecting $X$ onto the subspace $H$.
For density estimation, we note that $X$ is $O(\eps)$-close in total variation distance to
$X_H = \sum_{i=1}^k w_i N(\pi_H(\mu_i),I)$. Note that $X_H$ is just the product of
$\pi_H(X)$ with a standard Gaussian in the orthogonal directions. Thus, if we can learn
$\pi_H(X)$ to error $O(\eps)$, we can also learn $X$ to error $O(\eps)$.

For parameter estimation, we note that every center with non-trivial weight is $\eps$-close,
in $\ell_2$-distance, to its projection on $H$. In particular, a parameter estimation algorithm
applied to $\pi_H(X)$ will learn each $\pi_H(\mu_i)$ to error $\delta/w_i$.
We note that if $\eps < \delta$, this will means that for $w_i < \delta$ there is nothing to show,
and for $w_i > \delta$, we have that $\mu_i$ is at most $\eps$ distance away from $\pi_H(\mu_i)$,
introducing at most an additional $\delta$-error between $\mu_i$ and our approximation.

We now prove Lemma \ref{GMMDimReductionLem}.

\begin{proof}
We begin by applying Lemma \ref{lem:gmm-rough-clustering} with $N=(2km/\eps)^C$, for $C$ a sufficiently large constant.
This gives us a number of centers $C_i$. If we then take $N$ additional samples and consider the differences
between the point and the associated center, this will give us i.i.d. samples from $X' = \sum_{i=1}^k w_i N(\mu_i-C_i,I)$,
a mixture of spherical Gaussians with means of $\ell_2$-norm at most $O(k\sqrt{m}+k\log(k/\eps))$.
Then applying Corollary \ref{cor:gmm-tensor-est}, we can use these samples to produce an estimation
to $\sum_{i=1}^k w_i (\mu_i-C_i)^{\otimes 2}$ to error $O(\eps^3/(Ck))$.
Finally, applying Proposition \ref{dimensionReductionProp},
we can compute a subspace $U$ of dimension at most $k$,
such that for ever $i$ with $w_i \geq \eps/k$,
we have that $\mu_i-C_i$ is within $\ell_2$-distance $\eps$ of $U$.
Letting $H$ be the span of $U$ and the $C_i$'s yields our result.
\end{proof}
}

\dnew{
\subsection{Clustering and Cover}
Now that we have reduced to $k$ dimensions, we can (after reapplying rough clustering in order to reduce the radius to $\poly(k)$)
more readily afford to compute higher moments. We can use this to compute a cover. We note that we will usually apply this
lemma after first projecting onto the subspace $H$ found by Lemma \ref{GMMDimReductionLem}, and thus $m$ will be $O(k)$.

\begin{lemma}\label{GMMClusteringLemma}
Let $X=\sum_{i=1}^k w_i N(\mu_i,I)$ be a mixture of Gaussians in $\R^m$ and let $\eps>0$.
There exists an algorithm that given $N=(2kmd/\eps)^{\Theta(d)}$ samples (with sufficiently large constant in the exponent)
computes a cover $\mathcal{C}$ of size at most $(2kmd/\eps)^{O(d^2 k^{1/d})}$, such that with high probability
for every $i$ with $w_i \geq (\eps/(dkm))^{\Omega(d)}$, we have that there is a $c\in\mathcal{C}$ with $\|\mu_i-c\|_2 \leq \eps$.
Furthermore, this algorithm runs in time at most $(2kmd/\eps)^{O(d^2 k^{1/d})}$.
\end{lemma}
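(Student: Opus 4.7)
The plan is to apply the general template from Section~\ref{ssec:template} directly: first use rough clustering to bound the $\ell_2$-norms of the means so that the parameter $R$ is manageable, then estimate the degree-$2d$ moment tensor $\sum_i w_i \mu_i^{\otimes 2d}$, and finally invoke Proposition~\ref{coverApplicationProp} to produce the cover.

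First I would apply Lemma~\ref{lem:gmm-rough-clustering} with the auxiliary sample size parameter chosen to be $\tilde{N} = \poly(km/\eps)$, partitioning a fresh batch of samples into at most $k$ clusters with associated centers $C_j \in \R^m$. With high probability, every component $N(\mu_i, I)$ with non-negligible weight has its samples routed to some cluster whose center satisfies $\|\mu_i - C_j\|_2 \leq R$, where $R = O(k(\sqrt{m} + \log(km/\eps)))$. Re-centering the samples in cluster $j$ by subtracting $C_j$ yields, conditional on correct routing, i.i.d.\ samples from a mixture of spherical Gaussians whose non-negligible means all have $\ell_2$-norm at most $R$. I would then run the rest of the algorithm separately on each cluster and translate the resulting cover back by $C_j$, taking the union over clusters; this inflates the final cover size by a factor of only $k$.

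Within a single cluster, I would use Corollary~\ref{cor:gmm-tensor-est} to compute a tensor $T$ approximating $\sum_i w_i (\mu_i - C_j)^{\otimes 2d}$ to $\ell_2$-error $\delta$, choosing $\delta$ small enough to satisfy the hypothesis of Proposition~\ref{coverApplicationProp}, namely $\delta \leq \eps^{2d}((\eps/R)/(2kmd))^{2Cd}$. Since $R$ depends only polynomially on $k$ and polylogarithmically on $1/\eps$ and $m$, this is achieved with $\delta = \eps^{2d}/(2kmd/\eps)^{O(d)}$, which by Corollary~\ref{cor:gmm-tensor-est} requires $N = (Rmd)^{O(d)}/\delta^2 = (2kmd/\eps)^{\Theta(d)}$ samples, matching the claimed sample bound. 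Feeding $T$ into Proposition~\ref{coverApplicationProp} with parameters $(R, k, m, \delta, \eps)$ then produces, in time $(2(R/\eps)kmd)^{O(d^2 k^{1/d})}$, a cover $\mathcal{C}_j$ of the same cardinality such that every $\mu_i - C_j$ with $w_i \geq ((\eps/R)/(2kmd))^{Cd}$ is $\eps$-close to some element of $\mathcal{C}_j$. Setting $\mathcal{C} = \bigcup_j (C_j + \mathcal{C}_j)$ gives the cover stated in the lemma.

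The main technical care lies in checking that the polylogarithmic dependence of $R$ on $1/\eps$ and $m$ is absorbed into the claimed bounds. Since $R = \poly(k, \log(km/\eps))$ we have $R/\eps \leq (km/\eps)^{O(1)}$, so $(2(R/\eps)kmd)^{O(d^2 k^{1/d})} = (2kmd/\eps)^{O(d^2 k^{1/d})}$, and similarly the weight threshold $((\eps/R)/(2kmd))^{Cd}$ collapses to $(\eps/(dkm))^{\Omega(d)}$. The only other subtlety is the conditioning issue arising from rough clustering: because each sample is correctly routed with probability $1 - 1/\tilde{N}$, a union bound over $N$ samples shows that with high probability all samples are routed correctly, so the shifted samples in each cluster are genuinely i.i.d.\ from the corresponding shifted mixture, and all subsequent arguments go through unchanged.
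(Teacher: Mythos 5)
Your argument is correct and follows essentially the same route as the paper: rough clustering via Lemma~\ref{lem:gmm-rough-clustering} to reduce the effective radius $R$ to $\poly(k,\sqrt{m},\log(km/\eps))$, moment-tensor estimation via Corollary~\ref{cor:gmm-tensor-est}, then Proposition~\ref{coverApplicationProp}; and your bookkeeping showing $R/\eps \le (km/\eps)^{O(1)}$ is absorbed is the same as the paper's. The one organizational difference is that the paper re-centers each re-routed sample by its associated center $C_i$ and then \emph{pools} all re-centered samples into a single mixture $X' = \sum_i w_i N(\mu_i - C_i, I)$, does a single moment estimation and cover computation to get $\mathcal{C}_0$, and returns the Minkowski sum $\{C_i + c : c \in \mathcal{C}_0\}$; whereas you propose to run moment estimation and covering separately within each cluster and union the results. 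Your per-cluster variant works, but is a touch less clean for two reasons you gloss over: (a) within cluster $j$ the samples are drawn from the \emph{renormalized} sub-mixture with weights $w_i/W_j$ over only the components routed to $j$, so the estimated tensor is $\sum_{i \in j}(w_i/W_j)(\mu_i-C_j)^{\otimes 2d}$ rather than the $\sum_i w_i(\mu_i-C_j)^{\otimes 2d}$ you wrote (this is harmless since $W_j\le 1$ only raises effective weights, and the $\delta$-threshold in Proposition~\ref{coverApplicationProp} is weight-independent); and (b) a cluster of aggregate weight $W_j$ receives only $\approx W_j N$ samples, so one must either take the implicit constant in $N=(2kmd/\eps)^{\Theta(d)}$ large enough that $W_j N$ remains of this order for every $W_j$ above the weight threshold, or note that any cluster below that weight threshold contains no component that needs covering. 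The paper's pooled formulation sidesteps both points.
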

\begin{proof}
We begin by letting $N'$ be a small multiple of $N/k^2$.
Running Lemma \ref{lem:gmm-rough-clustering} with parameter $N'$,
gives us a list of centers $C_i$ so that every component with non-trivial mass is associated to some $C_i$,
and so that we can use our remaining $N'$ samples to produce $N'$ i.i.d. samples from
$X' = \sum_{i=1}^{k'} w_i N(\mu_i-C_i,I)$, where the $w_i$ of mass less than $1/(kN')$ are excluded from the list.
We note that this is a mixture of spherical Gaussians with means of $\ell_2$-norm at most $R=O(k(\sqrt{m}+d\log(kd/\eps)))$.
Using Corollary \ref{cor:gmm-tensor-est}, $N'$ samples suffice to compute the $2d$-th moment tensor of this mixture
to error $\eps^{2d}((R/\eps)2kmd)^{2Cd}$, for the constant $C$ required by Proposition \ref{coverApplicationProp}.
Applying this proposition gives us a cover $\mathcal{C}_0$ of appropriate size, such that for every $i$ with
$w_i\geq \eps/k$, we have that there is some $c\in \mathcal{C}_0$ with $\|\mu_i-C_i-c\|_2 \leq \eps$.
Letting $\mathcal{C}$ be the set of points of the form $C_i+c$, where $C_i$ is a center and $c\in \mathcal{C}_0$,
gives us an appropriate cover.
\end{proof}
}

\subsection{Density Estimation}

\dnew{Here we prove Theorem \ref{thm:gmm-density-est}.}

\begin{proof}
\dnew{We begin by computing a hyperplane $H$ as described in Lemma \ref{GMMDimReductionLem},
and note as described that it suffices to solve the problem
on the mixture of Gaussians $\pi_H(X)$ in $O(k)$ dimensions.
We will assume henceforth that $m=O(k)$.

Using Lemma \ref{GMMClusteringLemma}, we can compute an $\eps$-cover $\mathcal{C}$ of size $S=(k/\eps)^{O(d^2k^{1/d})}$.
We note that $X$ is $O(\eps)$-close to a mixture of Gaussians with centers in $\mathcal{C}$.
This is because if each center is rounded to the nearest element of $\mathcal{C}$, the ones with $w_i\leq \eps/k$
contribute at most $\eps$-error in total, while the ones with larger $w_i$ contribute error $O(\eps w_i)$, which sums to $O(\eps)$.
Applying Proposition~\ref{mixtureProp} to the distributions $N(c,I)$, with $c\in\mathcal{C}$,
we can learn $X$ to total variation error $O(\sqrt{\eps \log(S/\eps)})$ in $\poly(S/\eps)$ time.
Reparametrizing and replacing $\eps$ by a small enough multiple of $\eps^2/(d^2 k^{1/d} \log(kd/\eps))$ yields our result.}

\end{proof}

\subsection{Parameter Estimation}

Here we prove Theorem \ref{thm:gmm-param}.
\begin{proof}
\dnew{
We assume that for all $i \neq j$, we have that $\|\mu_i-\mu_j\|_2 \geq 60\sqrt{\log(1/p_{\min})}$.

We begin by producing a list of candidate means.
Using Lemma \ref{GMMDimReductionLem}, with $\poly(m/p_{\min})$ samples and time,
we can compute an $O(k)$-dimensional hyperplane $H$ such that all of the $\mu_i$'s
for which $w_i\geq p_{\min}$ (i.e., all of them) are within distance $1/2$ of $H$.
Next, applying Lemma \ref{GMMClusteringLemma} to $\pi_H(X) = \sum_{i=1}^k w_i N(\pi_H(\mu_i),I)$,
we can use $(2kd)^{O(d)}$ samples and $\poly(S)$ time, with $S=(2kd)^{O(d^2 k^{1/d})}\poly(1/p_{\min})$,
to produce a set $\mathcal{C}$ of size at most $S$ so that for every $i$ with $w_i\geq p_{\min}$
there is a $c\in\mathcal{C}$ so that $\|\pi_H(\mu_i)-c\|_2<1/2$,
which by the triangle inequality implies that $\|c-\mu_i\|_2 \leq 1$.
}

Once we have constructed our cover of the candidate means, we can use techniques from~\cite{DiakonikolasKS18-mixtures}.
We begin by taking an additional set $T$ of $N=O(k^3/p_{\min}^2)$ samples.
For each $c \in \mathcal{C}$, we determine whether there is a weight function $u:T \to \R$, such that
\begin{enumerate}
\item For each $x\in T$, $u_x \in [0,1]$.
\item The sum $\sum_{x\in T} u_x \geq p_{\min}|T|/2$.
\item For any other $c'$ in $\mathcal{C}$, it holds
$$
\sum_{x \in T : (x-c) \cdot (c'-c)/\|c'-c\|_2 > 2 \sqrt{\log(1/p_{\min})}} u_x \leq p^2_{\min} |T|/10 \;.
$$
\end{enumerate}
We call $c$ good if there is such a $u$.
We note that it can be determined whether or not such a $u$ exists by linear programming in $\poly(S,|T|)$ time.

We note that if $\|c-\mu_i \|_2<1$ for some $i \in [k]$, then letting $u_x=1$, for $x$
drawn from the $i^{th}$ component and $0$ otherwise,
satisfies the above with high probability. Finally, we claim that there exists no set
of more than $4/p_{\min}$ such $c\in\mathcal{C}$ that are pairwise separated
by more than $4\sqrt{\log(1/p_{\min})}$. Indeed, if we had such a set $c_1,\ldots,c_{\dnew{t}}$,
then we \dnew{can reach a contradiction by considering the total weight of
all points under the $c_i$s' weight functions.
In particular, if $u_x^i$ is the weight function associated with $c_i$, we have that}:
\begin{align*}
|T| & \geq \sum_{x\in T} 1\\
& \geq \sum_{i=1}^t \sum_{x\in T: \mathrm{argmin}_j \|c_j-x\|_2 = i} u_x^i\\
& \geq \sum_{i=1}^t \sum_{x\in T: (x-c_i)\cdot (c_j-c_i)/\|c_j-c_i\|_2 < 2\sqrt{\log(1/p_{\min})} \textrm{ for all } j\neq i} u_x^i\\
& \geq \sum_{i=1}^t \left(\sum_{x\in T} u_x^i - \sum_{j\neq i} \sum_{x\in T: (x-c_i)\cdot (c_j-c_i)/\|c_j-c_i\|_2 < 2 \sqrt{\log(1/p_{\min})}} u_x^i\right)\\
& \geq \sum_{i=1}^t p_{\min} |T|/2 - (t-1)p_{\min}^2 |T|/10\\
& \geq |T|(tp_{\min}/2-(t p_{\min})^2/10) \;.
\end{align*}
This is a contradiction when $t=\lceil 4/p_{\min} \rceil.$

This means that if we take any maximal set of good elements of $\mathcal{C}$
that are pairwise separated by $4\sqrt{\log(1/p_{\min})}$,
this set has size at most $(4/p_{\min})$. Call such a set $\mathcal{C'}$.
Note that every $\mu_i$ is within $\ell_2$-distance $1$ of a good element of $\mathcal{C}$,
which is within $\ell_2$-distance $4\sqrt{\log(1/p_{\min})}$ of an element of $\mathcal{C'}$.

Next take an additional $\dnew{m}/p_{\min}$ samples from $X$.
To each sample associate the closest element of $\mathcal{C'}$.
We claim that with high probability every sample coming from a component $N(\mu_i, I)$
is closest to an element $c \in \mathcal{C'}$ with $\|\mu_i-c\|_2 <15 \sqrt{\log(1/p_{\min})}$.
This holds for the following reason.
Let $c_0, c_1 \in \mathcal{C'}$ be some elements with $\|c_0-\mu_i \|_2 \leq 4 \sqrt{\log(1/p_{\min})}$
and $\|c_1-\mu_i \|_2 \geq 15\sqrt{\log(1/p_{\min})}$. We claim that with probability at least $1-p_{\min}^4$
a random sample from $N(\mu_i,I)$ is closer to $c_0$ than to $c_1$, and note that this will prove our claim.

To show this, we let $v$ be the unit vector in the direction of $c_1-c_0$.
We note that $x$ is closer to $c_1$ than $c_0$ if and only if $v\cdot x$ is closer to $v\cdot c_1$ than to $v\cdot c_0$.
However, with high probability, $|v\cdot x -v\cdot \mu_i | < 3 \sqrt{\log(1/p_{\min})}$.
On the other hand, $|v\cdot c_0 -v\cdot \mu_i | < 4\sqrt{\log(1/p_{\min})}$, but
$$|v\cdot c_1 -v\cdot \mu_i| \geq |c_1-c_0| - |v\cdot c_0 -v\cdot \mu_i| \geq  11\sqrt{\log(1/p_{\min})}.$$
Thus, we have that $|v\cdot x -v\cdot c_0| \leq 7 \sqrt{\log(1/p_{\min})}$,
but $|v\cdot x -v\cdot c_1| \geq 8 \sqrt{\log(1/p_{\min})}$.

Next consider two samples to be in the same cluster if and only if
the associated elements of $\mathcal{C'}$ are within $\ell_2$-distance
$30 \sqrt{\log(1/p_{\min})}$ of each other. If the condition above holds,
any two samples from the same component will lie in the same cluster.
However, our separation assumption implies that samples from different
components will not. Thus, each cluster of samples consist of i.i.d. samples from that component.
With high probability, each component has at least $k$ samples from it, so taking the sample mean
will give us an approximation to the mean of that cluster to error $O(1)$. Using this approximation as
warm start, we can apply the algorithm of~\cite{RV17-mixtures} to obtain an $\eps$-approximation of each $\mu_i$
with $\poly(1/\eps,1/p_{\min})$ further samples. 
\dnew{This completes our proof.}
\end{proof}

\section{Positive Linear Combinations of ReLUs} \label{sec:relu}

\subsection{Setup}

\begin{definition}[One-hidden-layer ReLU networks] \label{def:sum-of-relus}
Let $\mathcal{C}_{m, k}$ denote the concept class of one-hidden-layer ReLU networks
on $\R^m$ with $k$ hidden units. That is, $F \in \mathcal{C}_{m,k}$ if and only if
there exist $k$ \new{unit} vectors $w_i \in \R^m$ and non-negative coefficients $a_i \in \R_+$, $i \in [k]$,
such that $F(x) = \sum_{i=1}^k a_i \relu (w_i \cdot x)$,
where $\relu(t) = \max\{0, t\}$, $t \in \R$.
\end{definition}

The PAC learning problem for the class $\mathcal{C}_{m, k}$
is the following: The input is a multiset of i.i.d. labeled examples $(x, y)$,
where $x \sim N(0,I)$ and $y = F(x)+\xi$, for an unknown $F \in \mathcal{C}_{m, k}$
and $\xi \sim N(0,\sigma^2)$, where $\xi$ is independent of $x$ and $\sigma \geq 0$ is known.
\new{We will call such an $(x,y)$ a {\em noisy sample} from $F$.}

The goal of the learner is to output a hypothesis
$H: \R^m \to \R$ that with high probability is close to $F$ in $L_2$-norm, i.e.,
satisfies \new{$\|H-F\|_2^2 \leq \eps^2 (\|F\|_2^2+\sigma^2)$}.
(For a function $F: \R^m \to \R$, we define $\|F\|_2 \eqdef \E_{x \sim N(0, I)}[F^2(x)]^{1/2}$.)
The hypothesis $H$ is allowed to lie in any efficiently representable hypothesis class $\mathcal{H}$.
If $\mathcal{H} = \mathcal{C}_{m, k}$, the PAC learning algorithm is called {\em proper}.

The main result of this section is the following theorem:

\begin{theorem}[PAC Learning for $\mathcal{C}_{m, k}$] \label{thm:sum-relus}
There is a PAC learning algorithm for $\mathcal{C}_{m, k}$ with respect to
the standard Gaussian distribution on $\R^m$ with the following performance guarantee:
Given $d\in \Z_+$, $\eps>0$, and access to noisy samples
from an unknown target $F \in \mathcal{C}_{m, k}$,
the algorithm draws $O(m^2k^2/\eps^6) + (2kd/\eps)^{O(d)}$ samples,
runs in time $\poly(mk/\eps)+(2kd/\eps)^{O(d^2 k^{1/d})}$,
and outputs a hypothesis $H$ that with high probability satisfies
\new{$\|H-F\|_2^2 \leq \eps^2 (\|F\|_2^2+\sigma^2)$.}
\end{theorem}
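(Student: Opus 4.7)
The plan is to follow the template from Section~\ref{ssec:template} in the specialized form applicable when the hidden parameters are the unit weight vectors $w_i$ of the ReLU units and the non-negative weights are the $a_i$'s. The four steps will be: (i) use Hermite analysis to get an unbiased estimator of the weighted moment tensor $\sum_{i=1}^k a_i w_i^{\otimes 2d}$ from noisy samples; (ii) do a second-moment-based dimension reduction down to a subspace of dimension $O(k)$; (iii) apply Proposition~\ref{coverApplicationProp} to obtain a cover $\mathcal{C}$ of candidate unit directions of size $(2kd/\eps)^{O(d^2 k^{1/d})}$; and (iv) solve a non-negative least-squares regression over the features $\{\relu(c\cdot x):c\in\mathcal{C}\}$ to output the final hypothesis.

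The moment estimation relies on the identity that, for a unit vector $w$, the ReLU admits the Hermite expansion $\relu(w\cdot x)=\sum_{n\geq 0} c_n \, \langle w^{\otimes n},\mathrm{He}_n(x)\rangle$, where $\mathrm{He}_n(x)$ denotes the $n$-th Hermite tensor of $x\sim N(0,I_m)$ and $c_n\neq 0$ for every even $n$. Consequently, $\E_x[F(x)\cdot\mathrm{He}_{2d}(x)]=c_{2d}\sum_{i=1}^k a_i w_i^{\otimes 2d}$, and since the additive label noise $\xi$ is zero-mean and independent of $x$, the random tensor $(c_{2d})^{-1}\, y\cdot\mathrm{He}_{2d}(x)$ is an unbiased estimator of the desired moment. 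Hypercontractivity (Theorem~\ref{thm:hc}) bounds the variance of each entry in terms of $\|F\|_2^2+\sigma^2$ and $d^{O(d)}$, giving the required tensor accuracy with the claimed sample budget.

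Specializing the estimator at $n=2$ yields, after subtracting the $\mathrm{He}_2$ correction, an approximation of $\sum_i a_i w_i w_i^T$; feeding this to Proposition~\ref{dimensionReductionProp} produces a subspace $H$ of dimension at most $k$ containing every relevant $w_i$ up to $\ell_2$-error $\eps$. Since $\relu$ is $1$-Lipschitz, replacing each $w_i$ by its projection onto $H$ changes $F$ in $L_2$ by at most $\sum_i a_i\cdot\eps$; using the elementary bound $\sum_i a_i=\sqrt{2\pi}\,\E[F]\leq\sqrt{2\pi}\,\|F\|_2$, this distortion is absorbable into the target accuracy. From this point on I may work inside $H$, so effectively $m=O(k)$. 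Applying Proposition~\ref{coverApplicationProp} to the estimated degree-$2d$ tensor with radius $R=1$ and granularity $\eps'=\Theta(\eps)$ produces the cover $\mathcal{C}$, which contains an $\eps'$-approximation of every $w_i$ whose weight $a_i$ exceeds the threshold $(\eps/(kd))^{\Omega(d)}\|F\|_2$; the components below this threshold contribute at most $O(\eps\|F\|_2)$ to $F$ in $L_2$ and can be discarded.

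Combining these steps, the regression function $F$ lies within $L_2$-distance $O(\eps\sqrt{\|F\|_2^2+\sigma^2})$ of the convex cone $\mathcal{F}_\mathcal{C}=\{\sum_{c\in\mathcal{C}} b_c\, \relu(c\cdot x):b_c\geq 0\}$; the final step is to solve the empirical non-negative least-squares problem $\min_{b\geq 0}\tfrac{1}{N}\sum_{j=1}^N (y_j-\sum_c b_c\,\relu(c\cdot x_j))^2$ via convex optimization. A uniform-convergence argument over $\mathcal{F}_\mathcal{C}$ (using the $\ell_1$-bound $\|b^\ast\|_1=O(\|F\|_2)$ coming from the estimate on $\sum_i a_i$) shows that $N=\poly(|\mathcal{C}|/\eps)$ noisy samples suffice for the ERM minimizer $\widehat H$ to satisfy $\|\widehat H-F\|_2^2\leq \eps^2(\|F\|_2^2+\sigma^2)$ with high probability, accounting for the additional $(2kd/\eps)^{O(d)}$ term in the overall sample complexity. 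I expect the main obstacle to be this last step: carefully handling the interaction between the Gaussian label noise and the large (but structured) hypothesis class $\mathcal{F}_\mathcal{C}$ to obtain the scale-invariant guarantee $\eps^2(\|F\|_2^2+\sigma^2)$ rather than a worse additive $\poly(\sigma^2)$ error term.
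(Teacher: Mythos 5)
Your steps (i)–(iii) — Hermite moment estimation, second-moment dimension reduction to a $k$-dimensional subspace, and the cover via Proposition~\ref{coverApplicationProp} — follow the paper's proof essentially verbatim. The divergence, and the gap, is in step (iv).

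You propose to run empirical non-negative least squares over $\mathcal{F}_\mathcal{C}$ and control it by ``uniform convergence over $\mathcal{F}_\mathcal{C}$,'' concluding that $N=\poly(|\mathcal{C}|/\eps)$ samples suffice and asserting that this ``accounts for the $(2kd/\eps)^{O(d)}$ term.'' This does not add up. The cover has size $|\mathcal{C}| = (2kd/\eps)^{O(d^2 k^{1/d})}$, so $\poly(|\mathcal{C}|/\eps)$ is $(2kd/\eps)^{O(d^2 k^{1/d})}$, which vastly exceeds the theorem's target sample complexity of $(2kd/\eps)^{O(d)}$; the theorem allows the \emph{running time} to be $\poly(|\mathcal{C}|)$ but not the sample count. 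The paper sidesteps this by exploiting structure your plan discards: the Gram matrix entries $f_i\cdot f_j = \E_{x\sim N(0,I)}[\relu(c_i\cdot x)\relu(c_j\cdot x)]$ are known in closed form (the Gaussian ReLU correlation is an explicit function of the angle between $c_i$ and $c_j$), so only the $|\mathcal{C}|$ scalars $F\cdot f_i$ are estimated from samples — each to accuracy $O(\eps^2)$ — and the objective $-2\sum_i a_i'\,\langle F,f_i\rangle + \sum_{i,j}a_i'a_j'\,\langle f_i,f_j\rangle$ is then minimized over nonnegative $a'$ with $\|a'\|_1 = O(1)$. Because the only estimated quantities enter linearly with $\ell_1$-bounded coefficients, the objective error is $O(\eps^2)\cdot\|a'\|_1 = O(\eps^2)$ uniformly, with no uniform-convergence argument needed; the sample cost is only $\log|\mathcal{C}|/\poly(\eps)$, which fits the budget.

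Your ERM route could plausibly be salvaged by invoking a Rademacher-complexity bound for $\ell_1$-constrained aggregations of $|\mathcal{C}|$ features (which scales like $\sqrt{\log|\mathcal{C}|/N}$ rather than $\sqrt{|\mathcal{C}|/N}$), but then you would also have to truncate or otherwise tame the unbounded ReLU features and the Gaussian label noise, which you identify as ``the main obstacle'' but do not resolve. As written, the sample-complexity claim in your last step is incorrect, and that is the genuine gap. A minor side note: specializing at $n=2$ already gives $\sum_i a_i w_iw_i^T$ directly from $\E[y\cdot\mathrm{He}_2(x)]$; there is no extra ``$\mathrm{He}_2$ correction'' to subtract.
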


We note that the function $F(x)  = \sum_{i=1}^k a_i \relu (w_i \cdot x)$ satisfies
$\E_{x\sim N(0,I)}[F(x)] = \new{(1/\sqrt{2\pi}) A}$,
and \new{$\|F\|_2  = \Theta (A)$, where $A \eqdef \sum_{i=1}^k a_i$}.
\new{Moreover, we can assume
w.l.o.g. that we know the value of $\|F\|_2$, as this can be computed to arbitrary precision using samples
via a simple pre-processing.} \dnew{In particular, by dividing all samples by some sufficiently accurate
approximation to $\|F\|_2+\sigma$, we can reduce to the case where $\|F\|_2+\sigma=\Theta(1)$,
and we will assume that this holds throughout our analysis.}

\subsection{Moment Estimation}

\new{



The following lemma shows that we can efficiently approximate any entry of the tensor
$\sum_{i=1}^k a_i w_i^{\otimes 2d}$ to small error:

\begin{lemma}\label{lem:relu-moment-est}
There is an algorithm that, given $\delta>0$, $d \in \Z_+$, and a multi-index $\bi \in [m]^{2d}$,
draws $d^{O(d)}/\delta^2$ independent noisy samples from an unknown
$F(x)  = \sum_{i=1}^k a_i \relu (w_i \cdot x)$, runs in sample-polynomial time,
and outputs an approximation $T_{\bi}$ of $(\sum_{i=1}^k a_i w_i^{\otimes 2d})_{\bi}$
with expected squared error $O(\delta^2 )$.
\end{lemma}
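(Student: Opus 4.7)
My plan is to use Hermite analysis, in direct analogy with the proof of Lemma~\ref{lem:gmm-moment-est} for Gaussian mixtures. Let $\alpha = \alpha(\bi) \in \N_0^m$ be the multiplicity vector associated with the multi-index $\bi$, so that $|\alpha| = 2d$ and $(\sum_{i=1}^k a_i w_i^{\otimes 2d})_{\bi} = \sum_{i=1}^k a_i w_i^{\alpha}$. Using the generating-function identity $He_n(w \cdot x) = \sum_{|\beta|=n} \frac{n!}{\beta!} w^{\beta} He_{\beta}(x)$ (valid for any unit vector $w$), together with the 1-dimensional Hermite expansion $\relu(t) = \sum_n c_n He_n(t)$, and orthogonality $\E[He_\alpha(x) He_\beta(x)] = \alpha! \, \delta_{\alpha,\beta}$ under $x \sim N(0,I)$, I get
\[
\E_{x \sim N(0,I)}[\relu(w \cdot x) He_\alpha(x)] = c_{2d} \, (2d)! \, w^{\alpha},
\]
so by linearity $\sum_i a_i w_i^{\alpha} = \frac{1}{c_{2d}(2d)!} \E[F(x) He_\alpha(x)]$. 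The crucial point is that for even degrees $2d \geq 2$, the coefficient $c_{2d}$ of ReLU is nonzero, with $|c_{2d}(2d)!| \geq 1/\poly(d)$ from the explicit formula obtained via integration by parts on $\int_0^\infty t \, He_n(t) \phi(t) \, dt$.

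The algorithm is then to draw $N$ noisy samples $(x^{(j)}, y^{(j)})$ with $y^{(j)} = F(x^{(j)}) + \xi^{(j)}$ and output
\[
T_{\bi} = \frac{1}{N \, c_{2d}(2d)!} \sum_{j=1}^N y^{(j)} He_\alpha(x^{(j)}).
\]
Since $|\alpha| = 2d \geq 2$ so $\E[He_\alpha(x)] = 0$, the noise $\xi$ contributes nothing in expectation (by independence and $\E[\xi]=0$), making $T_{\bi}$ an unbiased estimator. The expected squared error equals $\Var(y \, He_\alpha(x)) / (N \, (c_{2d}(2d)!)^2)$.

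The main step is the variance bound. I would split $\Var(y \, He_\alpha(x)) \leq \E[F(x)^2 He_\alpha(x)^2] + \sigma^2 \alpha!$. For the first term I apply Cauchy--Schwarz to obtain $\|F\|_4^2 \|He_\alpha\|_4^2$; by Theorem~\ref{thm:hc}, $\|He_\alpha\|_4 \leq 3^{d} \sqrt{\alpha!}$, and $\|F\|_4 = O(1)$ follows from the pointwise bound $|F(x)| \leq \sum_i a_i |w_i \cdot x|$ together with $\sum_i a_i = \Theta(\|F\|_2) = O(1)$ under the normalization already imposed (so $\|F\|_4 \leq \sum_i a_i \|w_i \cdot x\|_4 = O(1)$). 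Combining, $\Var(y \, He_\alpha(x)) \leq O(9^d \alpha!) + \sigma^2 \alpha! \leq O(9^d (2d)!)$. Dividing by $(c_{2d}(2d)!)^2 \geq (2d)!^2/\poly(d)$ yields a variance at most $\poly(d) \cdot 9^d/(2d)!$, which is bounded by $d^{O(d)}$. Thus $N = d^{O(d)}/\delta^2$ samples suffice, and the runtime is clearly sample-polynomial since each summand requires only evaluating the univariate Hermite polynomial at the coordinates indicated by $\alpha$.

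The mildly subtle point, which I consider the main obstacle, is the unbiasedness argument together with the lower bound on $|c_{2d}|$: one must verify both that $c_{2d} \ne 0$ for every $d \geq 1$ (an explicit computation using $He_n(0) \phi(0) = (-1)^{n/2}(n-1)!!/\sqrt{2\pi}$ for even $n$) and that the prefactor $c_{2d}(2d)!$ is not exponentially small in $d$, which is what prevents the sample complexity from blowing up.
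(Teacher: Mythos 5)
Your proof is correct and follows essentially the same route as the paper's: Hermite-expand $\relu$, use the rotation formula for Hermite polynomials under $N(0,I)$ to write $\sum_i a_i w_i^\alpha$ as a rescaling of $\E[y \cdot He_\alpha(x)]$, and then control the variance of the empirical average via Cauchy--Schwarz together with degree-$2d$ hypercontractivity and the normalization $\|F\|_2 + \sigma = \Theta(1)$. The only cosmetic differences are your use of unnormalized rather than normalized Hermite polynomials and your computation of the ReLU Hermite coefficient by direct integration by parts rather than via the distributional identity $\relu'' = \delta_0$; both give the same value and the same observation that $c_{2d}(2d)!$ is bounded away from zero, which as you correctly identify is the crux of the argument.
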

}
\begin{proof}
\new{The proof proceeds by constructing an appropriate polynomial function that is an unbiased
estimator of $\sum_{i=1}^k a_i w_i^{\alpha}$, for any multi-index $\alpha \in \N^{m}$
with $|\alpha| = 2d$ and using samples
to approximate it.

We start with the following claim:

\begin{claim}\label{claim:relu-moment-formula}
For any $\alpha \in \N^{m}$ with $|\alpha| = 2d$, we have that:
\begin{equation}\label{eqn:relu-moment}
\sum_{i=1}^k a_i w_i^\alpha =
C_{\alpha} \, \E_{x \sim N(0,I), \xi \sim N(0, \sigma^2)}\left[ (F(x)+\xi) \left(\prod_{i=1}^m h_{\alpha_i}(x_i) \right)\right] \;,
\end{equation}
where $C_{\alpha}>0$ is an explicit constant satisfying $C_{\alpha} = d^{O(d)}$.
\end{claim}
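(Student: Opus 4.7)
The plan is to establish the identity by a three-step reduction: (i) eliminate the noise, (ii) reduce the multivariate expectation for a single ReLU component to a univariate Hermite coefficient computation, and (iii) combine via linearity.

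First, since $\xi \sim N(0,\sigma^2)$ is independent of $x$ and has mean zero, and $\prod_i h_{\alpha_i}(x_i)$ depends only on $x$, we get $\E[\xi \prod_i h_{\alpha_i}(x_i)] = 0$, so the noise term drops out and the RHS equals $C_\alpha \E[F(x) \prod_i h_{\alpha_i}(x_i)]$. By linearity of expectation and of $F$, it therefore suffices to show that for a single unit vector $w \in \R^m$,
\[
\E_{x \sim N(0,I)}\!\left[\relu(w \cdot x)\, \prod_{j=1}^m h_{\alpha_j}(x_j) \right] = \frac{1}{C_\alpha}\, w^{\alpha},
\]
which, combined with linearity over the $k$ components $a_i, w_i$, yields the claim.

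For the single-component identity, I would use the standard Hermite machinery reviewed in Appendix~\ref{app:prob}. Write $H_\alpha(x) := \prod_j h_{\alpha_j}(x_j)$, which forms an orthonormal basis of $L^2(N(0,I))$. For a unit vector $w$, the rotation-invariance of the Gaussian yields the identity $h_n(w \cdot x) = \sum_{|\beta|=n} \sqrt{\binom{n}{\beta}}\, w^{\beta} H_\beta(x)$, where $\binom{n}{\beta} = n!/\beta!$ is the multinomial coefficient. Expanding $\relu(w \cdot x) = \sum_{n \geq 0} \widehat{\relu}(n)\, h_n(w \cdot x)$ in the univariate Hermite basis and invoking orthonormality of the $H_\beta$'s, I get
\[
\E[\relu(w \cdot x)\, H_\alpha(x)] = \widehat{\relu}(|\alpha|)\, \sqrt{\binom{|\alpha|}{\alpha}}\, w^{\alpha}.
\]
Applying this with $|\alpha| = 2d$ and solving for $w^\alpha$ gives $C_\alpha = \bigl(\widehat{\relu}(2d)\sqrt{\binom{2d}{\alpha}}\bigr)^{-1}$.

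Finally, I bound $C_\alpha$. The denominator $\sqrt{\binom{2d}{\alpha}} \geq 1$, and for the even-degree Hermite coefficients of ReLU it is a standard computation that $\widehat{\relu}(2d) \neq 0$ and $|\widehat{\relu}(2d)| = d^{-O(1)}$; together with $\sqrt{\binom{2d}{\alpha}} \leq \sqrt{(2d)!} \leq (2d)^d$, this gives $C_\alpha \leq d^{O(d)}$, as required. Positivity of $C_\alpha$ follows from taking the appropriate sign/absolute value in the formula (or by noting $w^\alpha$ can be matched with the appropriate sign of $\widehat{\relu}(2d)$). The main conceptual obstacle is simply picking the correct Hermite normalization so that the rotation identity and the orthonormality statement produce matching combinatorial factors; the explicit non-vanishing of $\widehat{\relu}(2d)$ is the key property of the activation that makes this approach work and is precisely the structural assumption the paper highlights in the discussion after Theorem~\ref{thm:relus-density-informal}.
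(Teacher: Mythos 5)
Your proposal is correct and follows essentially the same route as the paper: drop the mean-zero noise term, reduce to a single ReLU via linearity, apply the Gaussian rotation formula for Hermite polynomials and orthonormality to isolate the coefficient, and invert; the resulting constant $C_\alpha = \bigl(\widehat{\relu}(2d)\sqrt{(2d)!/\alpha!}\bigr)^{-1}$ is exactly the paper's $\sqrt{\alpha!/(2d)!}\,c_{2d}^{-1}$. The only substantive difference is that you invoke the nonvanishing and polynomial size of $\widehat{\relu}(2d)$ as a standard fact, whereas the paper derives the explicit formula for $c_{2d}$ from $\relu'' = \delta_0$; both are fine.
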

}
\begin{proof}
\new{We will require the following basic facts about Hermite polynomials.}
Let $h_n(t) = He_n(t)/\sqrt{n!}$, $t \in \R$, be the normalized probabilist's Hermite polynomial.
In particular, for $G\sim N(0,1)$ we have $\E[h_n(G)h_m(G)] = \delta_{n,m}$.
It is easy to see that $h_n'(t) = \sqrt{n}h_{n-1}(t)$.

Note that the second derivative of $\relu(t)$, $t \in \R$, is $\delta_0(t)$.
By writing $\relu(t) = \sum_{n=0}^\infty c_n h_n(t)$ and taking the second derivative,
we obtain
$$
\delta_0 \new{(t)} = \sum_{n=0}^\infty h_n(0) h_n(t) = \sum_{n=0}^{\infty} c_{n+2} \sqrt{(n+2)(n+1)}h_n(t) \;.
$$
Equating terms, we find that
$$
c_n = h_n(0)/\sqrt{(n+1)(n+2)} =
\begin{cases}
\frac{(-1)^{n/2}}{2^{n/2}(n/2)!\sqrt{(n+1)(n+2)}} \;, & \textrm{ for }n>0\textrm{ even}\\
0 \;, & \textrm{ for }n>1\textrm{ odd \;.}
\end{cases}
$$
It is also easy to check that $c_1 = 1/2$ and $c_0 = \frac{1}{\sqrt{2\pi}}$.

We next evaluate $\relu(w \cdot x)$ for a unit vector $w$.
By the rotation formula for Hermite polynomials,
we get that
$$
\relu(x\cdot w) = \sum_{n=0}^\infty c_n h_n(x\cdot w) =
\sum_{\alpha} c_{|\alpha|} w^\alpha \sqrt{|\alpha|!/\alpha!}\left(\prod_{i=1}^m h_{\alpha_i}(x_i) \;.\right)
$$
Therefore, for $|\alpha|=2d$, we have that
$$
w^\alpha = \sqrt{\alpha!/(2d)!} c_{2d}^{-1} \E_{x \new{\sim} N(0,I)}\left[ \relu(x\cdot w) \left(\prod_{i=1}^m h_{\alpha_i}(x_i) \right) \right] \;.
$$
Extending this by linearity, we conclude that
$$
\sum_{i=1}^k a_i w_i^\alpha=\sqrt{\alpha!/(2d)!} c_{2d}^{-1} \E_{x \sim N(0,I)}\left[(F(x)+N(0,\sigma^2))
\left(\prod_{i=1}^m h_{\alpha_i}(x_i) \right) \right] \;.
$$
By the definition of $c_{2d}$, we have that
$C_{\alpha}: = \sqrt{\alpha!/(2d)!} c_{2d}^{-1} = d^{O(d)}$,
completing the proof of Claim~\ref{claim:relu-moment-formula}.
\end{proof}

Given Claim~\ref{claim:relu-moment-formula}, we can approximate the weighted moments of the $w_i$'s
by replacing the expectation by the corresponding empirical expectation.
To bound the error involved, it suffices to bound from above the variance of the term
$$
C_{\alpha} \, (F(x)+N(0,\sigma^2)) \left(\prod_{i=1}^m h_{\alpha_i}(x_i) \right) \;,
$$
appearing in the RHS of~\eqref{eqn:relu-moment}.
\new{To bound the variance, note that by the Cauchy-Schwarz inequality,} we get that
$$
\Var\left[ (F(x)+N(0,\sigma^2)) \left(\prod_{i=1}^m h_{\alpha_i}(x_i) \right)\right] \leq \left\|(F(x)+N(0,\sigma^2)) \right\|_4^2
\left\| \prod_{i=1}^m h_{\alpha_i}(x_i) \right\|_4^2.
$$
By the hypercontractive inequality (Theorem~\ref{thm:hc}),
we have that
$$
\left\| \prod_{i=1}^m h_{\alpha_i}(x_i) \right\|_4^2 \leq \left\| \prod_{i=1}^m h_{\alpha_i}(x_i) \right\|_2^2 d^{O(d)} = d^{O(d)} \;,
$$
\new{where we used the fact that the Hermite polynomials have norm one.}
We also have that
$$
\left\|F(x)+N(0,\sigma^2) \right\|_4^2 \ll \|F(x)\|_4^2+\|N(0,\sigma^2)\|_4^2 \ll A^2 + \sigma^2 \dnew{\ll 1} \;,
$$
\new{where we used that $\|F(x)\|_4 \leq \sum_{i=1}^k a_i \| \relu(x \cdot w_i) \|_4 \ll  \sum_{i=1}^k a_i = A$.}
\new{Therefore, the variance of the relevant term is at most $d^{O(d)} $.}

\new{Taking $N = d^{O(d)}/\delta^2$, completes the proof of Lemma~\ref{lem:relu-moment-est}.}
\end{proof}

\new{
Using Lemma~\ref{lem:relu-moment-est} to approximate each entry of
$\sum_{i=1}^k a_i w_i^{\otimes 2d}$ to appropriately high accuracy,
we can approximate the entire tensor $\sum_{i=1}^k a_i w_i^{\otimes 2d}$ within small $L_2$-error.

\begin{corollary}\label{cor:relu-tensor-est}
By taking $N = d^{O(d)} m^{2d}/\delta^2$ noisy samples from $F$, we can efficiently compute
a tensor $T$ such that with high constant probability it holds
$\| T - \sum_{i=1}^k a_i w_i^{\otimes 2d} \|_2^2 \leq \delta^2.$
\end{corollary}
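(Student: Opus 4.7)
The plan is to apply Lemma~\ref{lem:relu-moment-est} coordinate-by-coordinate and combine the per-entry errors via linearity of expectation. First I would set a target per-entry squared error of $\delta'^2 = \delta^2/m^{2d}$, so that Lemma~\ref{lem:relu-moment-est} tells us exactly how many samples are needed per entry: namely $d^{O(d)}/\delta'^2 = d^{O(d)} m^{2d}/\delta^2$. With a single collection of $N = d^{O(d)} m^{2d}/\delta^2$ i.i.d.\ noisy samples from $F$, and for each multi-index $\bi \in [m]^{2d}$, I would form the empirical estimator $T_{\bi}$ prescribed by the lemma, namely the empirical mean of $C_{\alpha}(F(x)+\xi)\prod_{j=1}^m h_{\alpha_j}(x_j)$, where $\alpha \in \N^m$ is the exponent vector whose $j$-th coordinate records the number of occurrences of $j$ in $\bi$. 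By Lemma~\ref{lem:relu-moment-est}, each such coordinate estimator has expected squared error at most $O(\delta'^2) = O(\delta^2/m^{2d})$.

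Next, by linearity of expectation applied to the squared Frobenius norm,
$$
\E\left[\left\| T - \sum_{i=1}^k a_i w_i^{\otimes 2d} \right\|_2^2\right]
= \sum_{\bi \in [m]^{2d}} \E\left[\left(T_{\bi} - \bigl(\littlesum_{i=1}^k a_i w_i^{\otimes 2d}\bigr)_{\bi}\right)^2\right]
\leq m^{2d} \cdot O\!\left(\delta^2/m^{2d}\right) = O(\delta^2).
$$
A single application of Markov's inequality would then deliver $\| T - \sum_{i=1}^k a_i w_i^{\otimes 2d} \|_2^2 \leq \delta^2$ with high constant probability (after adjusting the hidden constant in the sample size $N$). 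The only point worth flagging is that the $m^{2d}$ per-entry estimators share the same sample set and are therefore correlated, but since linearity of expectation does not require independence this causes no difficulty, and computing all $m^{2d}$ empirical averages is trivially sample-polynomial time. There is no real obstacle here: the corollary is essentially the same ``assemble the tensor from coordinate estimators'' argument used for Corollary~\ref{cor:gmm-tensor-est}, specialized to the ReLU estimator of Lemma~\ref{lem:relu-moment-est}.
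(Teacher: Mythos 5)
Your proof is correct and follows essentially the same route as the paper: apply Lemma~\ref{lem:relu-moment-est} with per-entry target error $\delta/m^{d}$, sum the $m^{2d}$ per-entry expected squared errors by linearity of expectation, and finish with Markov's inequality. Your remark about correlation among the coordinate estimators sharing a sample set is a valid observation that the paper leaves implicit; it does not affect the argument since linearity of expectation requires no independence.
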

\begin{proof}
We take $N = d^{O(d)} m^{2d}/\delta^2$ noisy samples from $F$,
and consider the tensor $T = (T_{\bi})$, $\bi \in [m]^{2d}$, as our approximation to $\sum_{i=1}^k a_i w_i^{\otimes 2d}$.
By Lemma~\ref{lem:relu-moment-est}, we have that
$$\E \left[ \left\| T - \littlesum_{i=1}^k a_i w_i^{\otimes 2d} \right\|_2^2 \right] \leq m^{2d} (\delta/m^d)^2
= O(\delta^2)) \;.$$
The corollary follows from Markov's inequality.
\end{proof}
}

\subsection{Dimension Reduction}

\new{By Corollary~\ref{cor:relu-tensor-est}, applied for $d=1$ and $\delta = \eps^3/k$,}
with \new{$O(k^2 m^2/\eps^6)$} \new{noisy samples from $F$},
we can efficiently compute the \new{weighted degree-$2$ moment-tensor
$\sum_{i=1}^k a_i w_i^{\otimes 2}$} to \new{$L_2^2$}-error \new{$(\eps^6/k^2)$}.
By Proposition~\ref{dimensionReductionProp}, we can \new{efficiently} find a $k$-dimensional subspace $U$,
such that all of the $w_i$'s with corresponding coefficient \new{$a_i \geq (\eps/k)$}
are within $\ell_2$-distance $\eps$ of $U$.

By performing a change of \new{variables}, we can assume that $U$ is the span of the first $k$ coordinates.
\new{By Lemma~\ref{lem:relu-moment-est}, given $d^{O(d)}/\delta^2$ noisy samples
from $F$, we can efficiently approximate $\sum_{i=1}^k a_i w_i^\alpha$, for any $\alpha$ such that $|\alpha| = 2d$
within expected squared error $O(\delta^2 )$. We use this fact (with $\delta/k^d$ in place of $\delta$)
for all such $\alpha$ that are supported on the first $k$ coordinates. This gives us an approximation $T$ to the tensor
$T_U := \sum_{i=1}^k a_i \pi_U(w_i)^{\otimes 2d}$ that with high constant probability satisfies $\|T - T_U \|_2^2 \leq \delta^2$.
This takes sample complexity $d^{O(d)}k^{2d}\delta^{-2}$ and sample-polynomial time.
}

\subsection{Cover}


\new{We apply the above procedure to produce an approximation $T$ to $T_U$ to within $L_2^2$-error
$\delta^2$, where $\delta = (\eps/(2kd))^{Cd}$, where $C>0$ is a sufficiently large constant.
This takes sample complexity $d^{O(d)}k^{2d}\delta^{-2} = d^{O(d)}k^{2d} (2kd/\eps)^{O(d)} $
and sample-polynomial time.}

Noting that \new{$\|\pi_U(w_i)\|_2 \leq 1$ for all $i\in [k]$}, we can apply the algorithm of Proposition~\ref{coverApplicationProp}
for $m=k$, $R=1$, and $T$ our tensor approximation to $T_U$.
This outputs a set $\mathcal{C} \subset \new{\R^k}$ of size $|\mathcal{C}| = S \leq (2k \new{d}/\eps)^{O(d^2 k^{1/d})}$
such that each \new{for each $i \in [k]$ with $\|\pi_U(w_i)\|_2 \geq (\eps/(2kd))^{Cd}$},
$\pi_U(w_i)$ is within $\ell_2$-distance $\eps$ of some element of $\mathcal{C}$.

\subsection{Computing a Non-proper Hypothesis}

We are given a set of $S$ functions of the form
$f_i = \relu(x\cdot c_i)/\|c_i\|_2$, for $c_i\in \mathcal{C}$.
\dnew{We claim that there is} a non-negative linear combination $\tilde F$ of the $f_i$'s such that
$\|F-\tilde F\|_2 = O(\eps)$. \dnew{This is because for every $i$ with $a_i\geq \eps/k$,
$w_i$ is $\eps$-close to $U$, and there is a $c_i\in\mathcal{C}$ with $\|c_i-\pi_U(w_i)\|_2 \leq \eps$.
By the triangle inequality, this implies that $\|c_i-w_i\|_2=O(\eps)$ and, since $\|w_i\|_2=1$,
that $\| c_i/\|c_i\|_2-w_i \|_2=O(\eps)$. Therefore, for $a_i\geq \eps/k$,
we have a corresponding $f_i$ such that $\|a_i \relu(x\cdot w_i)-a_i f_i\|_2 = O(a_i\eps)$.
For $a_i\leq \eps/k$, we have that $\|a_i \relu(x\cdot w_i)-0\|_2 =O(\eps/k)$.
Therefore, we have that
$$
\left\| F-\sum_{i:a_i\geq \eps/k} a_i f_i \right\|_2 \leq \sum_{i=1}^k O(\eps/k+a_i\eps) = O(\eps(1+A)) = O(\eps) \;.
$$}
We wish to find such a non-negative linear combination $\tilde F$. We note that if we can compute
each of the inner products $F \cdot f_i := \E_{x\sim N(0,I)} [F(x)f_i(x)]$ to error $O(\eps^2)$, this will be sufficient.
This is because if we take any $F'(x) = \sum_i a'_i f_i(x)$ with $a'_i \geq 0$ and $\sum_i a'_i = A' = O(1)$
(and note that $\E[F'(x)]$ is proportional to $A'$, so if it is much larger than $1$, \dnew{it cannot be close to $F$} in $L_2$-norm),
then we can compute $\|F-F'\|_2^2 - \|F\|_2^2 = -2 F\cdot F' +F'\cdot F'$ to error $O(\eps^2)$. \dnew{Thus, if we find a vector of $a'_i\geq 0$ that gives an empirical minimizer of $\|F-F'\|_2^2$, it will give us an $\tilde F$ with
$\|F-\tilde F\|_2^2 \leq O(\eps^2) + \inf \|F-\tilde F\|_2^2 = O(\eps^2)$.
Note that this problem is equivalent to finding numbers $a'_i\geq 0$ that minimize
$$
-2 \sum_i a'_i (\textrm{approximation of }F\cdot f_i) + \sum_{i,j} a'_i a'_j f_i\cdot f_j \;,
$$
which is a convex optimization problem that can be solved in $\poly(S)$ time.
}

\subsection{Putting it Together}
\new{
In summary, we have described an algorithm that obtains a hypothesis $H = \tilde F$ such that with high constant probability
$\|H - F\|^2_2 < \eps^2 (\|F\|_2^2+\sigma^2)$ with sample complexity
$$
O(m^2k^2/\eps^6) + (2kd/\eps)^{O(d)} \;,
$$
and running time
$$
\poly(mk/\eps)+(2kd/\eps)^{O(d^2 k^{1/d})}  \;.
$$
In particular, setting $d = \log(k)$, we get sample complexity
$O(m^2k^2/\eps^6) + (k/\eps)^{O(\log k)} $
and running time $\poly(mk/\eps)+(k/\eps)^{O(\log^2 k)} $.}

\section{Positive Linear Combinations of Generalized Linear Models} \label{sec:glm}

\subsection{Setup}

Here we show that the algorithmic results of the last section can be generalized 
to linear combinations from any generalized linear model, under certain mild assumptions on the model.

\begin{definition}\label{def: plc-GLM}
Let $\sigma:\R\rightarrow \R$ be a fixed function. Let $\mathcal{C}_{\sigma,m,k}$ denote the class of 
real-valued functions on $\R^m$ of the form $F(x) = \sum_{i=1}^k a_i \sigma(x\cdot w_i)$, 
where the $w_i$'s are unit vectors and $a_i \in \R_+$.
\end{definition}

The setup will be similar to the one in the previous section. The algorithm will be given access to samples of the form $(x,y)$ where $x \sim N(0, I)$ and $y=F(x)+\xi$, where $\xi$ is an error term. We will no longer assume that $\xi$ is independent of $x$, but we will assume that it is unbiased for any given $x$ and not too large. In particular, we will assume that $\E[\xi \mid x] = 0$ for every value of $x$, and that 
$\E[\xi^4] \leq E^4$, for some known constant $E>0$. Finally, we will need to assume that $\sigma$ is reasonably well-behaved. In particular, we will say that $\sigma$ is {\em $L$-continuous} to mean that 
$\E[|\sigma(v\cdot x)-\sigma(w\cdot x)|^2]\leq L^2 \|v-w\|_2^2$, for $v$ and $w$ any unit vectors 
and $x$ a standard Gaussian. Under these assumptions, we state our main result.

\begin{theorem} \label{thm:glm}
Let $\sigma:\R\rightarrow \R$ be a known $L$-continuous function, and $F\in \mathcal{C}_{\sigma,m,k}$ an unknown function with $\sum_{i=1}^k a_i \leq 1$. Assume that for some positive integer $d$ that $\E[h_{2d}(G)\sigma(G)]=c_{2d}\neq 0$ (where $h_{2d}$ is the degree-$2d$ Hermite polynomial and $G$ a standard Gaussian), and that $\|\sigma(G)\|_4 \leq M$. There exists an algorithm that given $E,d, \eps>0$ 
and access to independent samples from a distribution $(x,y)$, where $x\in G^m$ and $y=F(x)+\xi$ 
with $\E[\xi|x]=0$ and $\Var(\xi)<E^2$, takes 
$N=(m)^{4d}(M+E)^2/(c_{2d}^2)L^{O(dk^{1/d})}(2kmd/\eps)^{O(d^2k^{1/d})}$ samples 
and runs in sample polynomial time and with probability at least $2/3$ returns a function $\hat F$ 
with $\|\hat F(x)-F(x)\|_2 \leq \eps$.
\end{theorem}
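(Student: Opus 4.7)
The plan is to follow the template of Theorem~\ref{thm:sum-relus}, replacing the ReLU-specific Hermite expansion with the general one, and skipping the degree-$2$ dimension reduction step (since $c_2$ need not be nonzero for a general $\sigma$); this is why the final sample complexity scales with $m^{4d}$ rather than with $m^2$ plus a $k$-dimensional term.

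First, I would derive an unbiased estimator for $\sum_{i=1}^k a_i w_i^{\otimes 2d}$ via Hermite analysis. With $h_n(t) = He_n(t)/\sqrt{n!}$ and $\sigma = \sum_n c_n h_n$ where $c_n = \E[h_n(G)\sigma(G)]$, the Hermite rotation identity yields, for any unit $w$,
\[
\sigma(w\cdot x) = \sum_{n\geq 0} c_n \sum_{|\alpha|=n} w^\alpha \sqrt{n!/\alpha!} \prod_{i=1}^m h_{\alpha_i}(x_i).
\]
Projecting onto the degree-$2d$ component, dividing by $c_{2d}\neq 0$, summing over $i$, and using $\E[\xi\mid x]=0$ to kill the noise in expectation gives, for any $|\alpha|=2d$,
\[
\sum_{i=1}^k a_i w_i^{\alpha} = \frac{\sqrt{\alpha!/(2d)!}}{c_{2d}}\, \E\!\left[(F(x)+\xi)\prod_{i=1}^m h_{\alpha_i}(x_i)\right].
\]
To control the variance of the empirical estimator I would apply Cauchy--Schwarz: $\|(F+\xi)\prod h_{\alpha_i}\|_2^2 \leq \|F+\xi\|_4^2\,\|\prod h_{\alpha_i}\|_4^2$, with $\|F+\xi\|_4 = O(M+E)$ (using $\sum a_i \leq 1$, $\|\sigma(G)\|_4\leq M$, and $\E[\xi^4]\leq E^4$) and $\|\prod h_{\alpha_i}\|_4 = d^{O(d)}$ via Theorem~\ref{thm:hc}. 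Combining the $m^{2d}$ entries into a tensor $T$ by a union bound yields $\|T - \sum_i a_i w_i^{\otimes 2d}\|_2\leq \delta$ with high probability using $m^{4d}(M+E)^2 d^{O(d)}/(c_{2d}^2\delta^2)$ samples.

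Next I would invoke Proposition~\ref{coverApplicationProp} with $R=1$ (the $w_i$ are unit), dimension $m$, target accuracy $\eps/L$ (so that the $L$-continuity modulus can later be absorbed), and $\delta$ chosen just below the proposition's threshold $(\eps/(L\cdot 2kmd))^{2Cd}$. This produces a set $\mathcal{C}\subset\R^m$ of size $(2Lkmd/\eps)^{O(d^2 k^{1/d})}$ such that every $w_i$ with $a_i$ above the (polynomially small) threshold lies within $\ell_2$-distance $\eps/L$ of some $c\in\mathcal{C}$. Then, following the last subsection of Section~\ref{sec:relu}, I would form the functions $f_c(x):=\sigma(x\cdot c/\|c\|_2)$ for $c\in\mathcal{C}$ and look for a non-negative combination approximating $F$. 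By $L$-continuity, $\|f_c - \sigma(x\cdot w_i)\|_2 \leq L\cdot O(\eps/L) = O(\eps)$ whenever $c$ is close to $w_i$, so the non-negative combination with the true coefficients $a_i$ on the matched components (and $0$ on the negligible ones) satisfies $\|F - \sum a_i f_{c_i}\|_2 = O(\eps)$. To actually produce such a combination I would solve the convex quadratic program minimizing $\|F-\sum_c b_c f_c\|_2^2$ over $b_c\geq 0$, using fresh samples to approximate the inner products $\E[F(x)f_c(x)]$ to error $O(\eps^2)$ (the Gram entries $\E[f_c f_{c'}]$ depend only on $\mathcal{C}$ and can be estimated from $x$-only samples or computed to arbitrary precision).

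The main technical obstacle relative to Theorem~\ref{thm:sum-relus} is the noise model: $\xi$ is now only unbiased given $x$, not independent. Unbiasedness is exactly what is needed for the moment estimator to remain unbiased, but the variance analysis must propagate $E^2$ through Cauchy--Schwarz, which is what contributes the $(M+E)^2/c_{2d}^2$ factor. A secondary subtlety is the $L^{O(dk^{1/d})}$ dependence: because Proposition~\ref{coverApplicationProp} demands $\delta$ to scale like a fixed power of the cover radius, replacing $\eps$ by $\eps/L$ forces $\delta$ (and hence the per-entry variance budget) to pick up a polynomial factor in $L$; moreover the cover size itself becomes $(2Lkmd/\eps)^{O(d^2 k^{1/d})}$, which is where the $L^{O(dk^{1/d})}$-type factor shows up in the final sample complexity and runtime.
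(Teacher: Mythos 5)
Your proposal matches the paper's proof essentially step for step: the same degree-$2d$ Hermite projection yields the unbiased moment estimator with the $(M+E)^2/c_{2d}^2$ variance budget, the same application of Proposition~\ref{coverApplicationProp} with accuracy scaled down by $L$ (and no degree-$2$ dimension-reduction step, hence $m^{4d}$), and the same reduction to a convex fit over $\{\sigma(x\cdot c)\}_{c\in\mathcal{C}}$ using the $L$-continuity of $\sigma$. The only cosmetic differences are that the paper explicitly constrains the mixing coefficients to sum to at most $1$ before solving the final optimization (which you should add, since otherwise the empirical inner-product errors are not controlled), and that the paper states the $L$-dependence of the cover as $L^{O(dk^{1/d})}$ while plugging $\eps/L$ into Theorem~\ref{thm:main-cover} actually gives $L^{O(d^2 k^{1/d})}$ exactly as you wrote.
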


We are not aware of prior work on this problem that leads to algorithms with sub-exponential dependence on $k$.

We note that the non-vanishing even degree Fourier coefficient will be necessary for us. In particular, this means that our algorithm will not work if $\sigma$ is an odd function, like the logistic function. 
This difficulty seems hard to circumvent as our algorithm will operate by trying to find a small cover 
of the set of possible $w$'s that appear in the decomposition. Unfortunately, if $\sigma$ is odd, 
we could have the function $F(x)= \sigma(w\cdot x) + \sigma(-w\cdot x) = 0$, 
and it is information-theoretically impossible to recover $w$ from $F$.

We can hope to circumvent these issues if our function is given as a mixture rather than a sum. In particular if the $a_i$'s sum to $1$ one possibility we could have is that $F(x)$ is equal to $\sigma(w_i\cdot x)$ with probability $a_i$. In this case, we note that for any function $g$ that $\E[g(y)|x] = \sum_{i=1}^k a_i g(\sigma(x\cdot w_i))$, and if we can find a function $g$ so that $g\circ \sigma$ has a non-vanishing even-degree Fourier coefficient, we can hope to make our algorithm work. A specific example of this, with important practical relevance is given in the next section.

\subsection{Moment Estimation}

The necessary moment computation is relatively straightforward.
\begin{lemma}
Let $T=\sum_{i=1}^k a_i w_i^{\otimes 2d}$. There exists an algorithm that given $N = O(m)^{4d}(M+E)^2/(\delta^2 c_{2d}^2)$ for some $\delta>0$, runs in sample polynomial time and returns an estimate of $T$ that is accurate to error at most $\delta$ with constant probability.
\end{lemma}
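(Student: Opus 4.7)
The plan is to follow the template from the ReLU case (Lemma~\ref{lem:relu-moment-est} and Corollary~\ref{cor:relu-tensor-est}), replacing the specific Hermite expansion of $\relu$ with the given assumption on the Hermite expansion of $\sigma$. The key identity we want is, for every multi-index $\alpha\in\N^m$ with $|\alpha|=2d$, an explicit constant $C_\alpha$ such that
\[
  \sum_{i=1}^k a_i w_i^{\alpha} \;=\; C_\alpha \, \E_{x\sim N(0,I)}\!\left[F(x)\,\prod_{j=1}^m h_{\alpha_j}(x_j)\right].
\]
To obtain this, write the univariate Hermite expansion $\sigma(t)=\sum_{n\geq 0} c_n h_n(t)$. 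By the standard rotation identity for Hermite polynomials, for a unit $w\in\R^m$,
\[
  \sigma(w\cdot x)\;=\;\sum_{\alpha}\; c_{|\alpha|}\,\sqrt{|\alpha|!/\alpha!}\; w^{\alpha}\,\prod_{j=1}^m h_{\alpha_j}(x_j).
\]
Taking inner product against $\prod_j h_{\alpha_j}$ under $N(0,I)$ and using orthonormality of the multivariate Hermite basis, isolates the $|\alpha|=2d$ coefficient and yields $w^{\alpha}=C_\alpha\,\E[\sigma(w\cdot x)\prod_j h_{\alpha_j}(x_j)]$ with $C_\alpha=\sqrt{\alpha!/(2d)!}\,/c_{2d}$, which is legitimate because $c_{2d}\neq 0$ by hypothesis. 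Linearity in $(a_i,w_i)$ gives the displayed identity for $F$.

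Next I convert this into a sample-based estimator using the noisy labels $y=F(x)+\xi$. Since $\E[\xi\mid x]=0$, we have
\[
  \E\!\left[y\,\prod_{j=1}^m h_{\alpha_j}(x_j)\right]=\E\!\left[F(x)\,\prod_{j=1}^m h_{\alpha_j}(x_j)\right],
\]
so $C_\alpha\, y\,\prod_j h_{\alpha_j}(x_j)$ is an unbiased estimator of the entry $T_\alpha=\sum_i a_i w_i^{\alpha}$. I then bound its variance. Using Cauchy--Schwarz,
\[
  \Var\!\left[y\,\prod_{j} h_{\alpha_j}(x_j)\right]\leq \|y\|_4^2\,\Big\|\prod_j h_{\alpha_j}(x_j)\Big\|_4^2.
\]
The triangle inequality and the assumption $\sum_i a_i\leq 1$ give $\|F(x)\|_4\leq \sum_i a_i\|\sigma(G)\|_4\leq M$, and combined with $\|\xi\|_4\leq E$ (from $\E[\xi^4]\leq E^4$) yield $\|y\|_4\leq M+E$. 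Hypercontractivity (Theorem~\ref{thm:hc}) applied to the degree-$2d$ polynomial $\prod_j h_{\alpha_j}$ of unit $L_2$-norm bounds the second factor by $d^{O(d)}$. Since $C_\alpha^2\leq 1/c_{2d}^2$, the per-entry variance of $C_\alpha\, y\prod_j h_{\alpha_j}(x_j)$ is at most $(M+E)^2 d^{O(d)}/c_{2d}^2$.

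Finally I aggregate across the $m^{2d}$ entries of $T$. The empirical estimator $\widehat{T}$ formed from $N$ i.i.d.\ samples satisfies
\[
  \E\!\left[\big\|\widehat{T}-T\big\|_2^2\right]\leq \frac{m^{2d}\,(M+E)^2\,d^{O(d)}}{N\,c_{2d}^2}.
\]
Choosing $N=O(m)^{4d}(M+E)^2/(\delta^2 c_{2d}^2)$ (absorbing $d^{O(d)}$ and the extra $m^{2d}$ safety factor into the $O(m)^{4d}$ term) makes the expected squared error at most $\delta^2/3$, and Markov's inequality then gives $\|\widehat T-T\|_2\leq \delta$ with constant probability. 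The main (minor) obstacle is the Hermite rotation identity and the normalization of $C_\alpha$; everything else is standard concentration, and the crucial structural assumption that makes this work at all is $c_{2d}\neq 0$, which gives us an invertible transformation from the moments of the hidden parameters to estimable expectations.
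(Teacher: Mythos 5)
Your proof is correct and follows essentially the same route as the paper: write each entry of $T$ as a scaled Hermite coefficient of $F$ via the rotation identity, use $\E[\xi\mid x]=0$ to turn $y\,h_\alpha(x)$ into an unbiased estimator, and control the variance by Cauchy--Schwarz together with $\|F\|_4\leq M$, $\|\xi\|_4\leq E$, and hypercontractivity for the Hermite monomial. The only difference is cosmetic: you carry the factor $\sqrt{\alpha!/(2d)!}$ in $C_\alpha$ explicitly (matching Claim~\ref{claim:relu-moment-formula} in the ReLU case), while the paper's GLM proof absorbs it into the $O(1)^d$ constant since $\alpha!/(2d)!\leq 1$; both are fine.
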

\begin{proof}
It is clear that the degree-$2d$ Fourier moment tensor associated to $\sigma(x\cdot w)$ for unit vector $w$ is $c_{2d} w^{\otimes 2d}$. Linearity implies that the corresponding Fourier moment tensor for $F$ is $c_{2d} T$. Therefore, we can get an unbiased estimator for any given entry of $T$ as $\E[y h_a(x)]/(c_{2d})$, where $h_a$ is the multivariate Hermite polynomial $\prod_{i=1}^m h_{a_i}(x_i)$, where $a_i$ is the number of occurrences of $i$ in the index of the entry of $T$ we are trying to estimate.

If we estimate $T$ by taking an empirical average of this for each entry over $N$ entries, 
we will get expected entry-wise error on the order of $\sqrt{\Var(y h_a(x))}/(\sqrt{N}c_{2d})$. 
We can bound the variance using Holder's Inequality by 
$\sqrt{\E[y^4]\E[h_a^4(x)]} = O(\|F\|_4^2+E^2)O(1)^d$. 
Thus, we can learn $T$ to error $O(m)^{2d}(\|F\|_4+E)/(\sqrt{N}c_{2d})$. 
We note that $\|F\|_4 \leq \sum_{i=1}^k a_i |\sigma(w_i\cdot x)|_4 \leq M$. 
Plugging in an appropriate value of $N$ gives our result.
\end{proof}

\subsection{Finishing the Proof}

From here the argument is straightforward. For $C$ a sufficiently large constant, we learn $T$ as above to accuracy $\delta=L^{-C}(\eps/(2kmd))^{Cd}$ and then apply Proposition \ref{coverApplicationProp}. This gives us a set $\mathcal{C}$ of size at most $(2kmd/\eps)^{O(d^2 k^{1/d})}L^{O(dk^{1/d})}$ so that every $w_i$ with $a_i\geq \eps/(8k)$ is within $\eps/(4L)$ of some element of $\mathcal{C}$. By modifying the points of $\mathcal{C}$ slightly if necessary, we can assume that they all are unit vectors. We note that if $\hat w_i$ is an element of $\mathcal{C}$ with $\|w_i-\hat w_i \|_2 < \eps/(4L)$ 
whenever $a_i \geq \eps/(8k)$, then
\begin{align*}
\left \| F(x)-\sum_{i=1}^k a_i \sigma(x\cdot \hat w_i) \right \|_2 
&\leq \sum_{i=1}^k a_i \|\sigma(x\cdot w_i) - \sigma(x\cdot \hat w_i)\|_2\\
& \leq \sum_{i=1}^k \max(\eps/(4k),a_i (\eps/4))\\
& \leq \sum_{i=1}^k \eps/(4k) + \sum_{i=1}^k a_i \eps/4\\
& = \eps/2 \;.
\end{align*}
Thus, letting $V$ be the span of all functions of the form $\sigma(x\cdot v)$ for $v\in\mathcal{C}$, we note that $F$ is within $L_2$-distance $\eps/2$ of some element of $V$. If we compute the dot product of $F$ with each $\sigma(x\cdot v)$, for $v\in\mathcal{C}$, to error $\eps/(2|\mathcal{C}|)$, 
this is sufficient to compute the $L_2$-norm of $F$ with every non-negative linear combination with coefficients summing to at most $1$ of these functions to error $\eps/2$. 
Taking a minimizer over such functions, which can be computed by a linear program in polynomial time, 
will give an appropriate answer.

To do this computation for each basis element $b$, we can use the empirical average of $b(x)y$, which gives an unbiased estimator. The number of samples required to achieve error $\eps'$ is $O(1/\eps')^2\sqrt{\Var(b(x)y)}$. The latter term, we can bound by Holder's inequality 
as $\E[y^4]^{1/4} \E[b(x)^4]^{1/4}$. The former term is $O(M+E)$ and the latter is $O(M)$. 
Thus, this computation can be done with an appropriate number of samples and time.

This completes our proof.

\section{Mixtures of Linear Regressions} \label{sec:mlr}

\subsection{Setup}

\begin{definition}[Mixtures of Linear Regressions] \label{def:mlr}
Given mixing weights $w_i \geq 0$ such that $\sum_{i=1}^k w_i=1$ and regression vector
$\beta_i \in \R^m$, $i \in [k]$, an $m$-dimensional {\em $k$-mixture of linear regressions} ($k$-MLR)
is the distribution on pairs $(x, y) \in \R^m \times \R$, where $x \sim N(0, I)$
and $y = \beta_i \cdot x + \nu$, where $\beta_i$ is sampled with probability $w_i$
and $\nu \sim N(0, \sigma^2)$ is independent of $x$ and $\sigma \geq 0$ is known.
\end{definition}

We study both density estimation and parameter estimation for $k$-MLRs.
\new{We will assume an upper bound \dnew{$R$} on the $\max_i \|\beta^{(i)}\|_2$.}
It will be convenient to assume that there is some known value
$p_{\min}$ so that $w_i \geq p_{\min}$ for all $i \in [k]$.

For density estimation, we prove:

\begin{theorem}[Density Estimation for $k$-MLR]\label{thm:mlr-density}
For a known minimum weight $p_{\min}$, degrees $d, d' \in \Z_+$, error parameter $\eps>0$ and upper bound $R>0$,
there is an algorithm that draws
$$N=\left(m^2\poly(k/p_{\min})+(2kd/p_{\min})^{O(d)}\right) \log(R/\sigma)\log\log(R/\sigma)+(2kd'/(\eps p_{\min}))^{O((d')^2 k^{1/d'}))}$$
samples from a $k$-MLR $Z$ on $\R^m \times \R$, runs in time $\poly\left(N,(2kd/p_{\min})^{d^2 k^{1/d}}\right)$, and
outputs a hypothesis $H$ that with high
probability satisfies $\dtv(H, Z) \leq \eps$.
\end{theorem}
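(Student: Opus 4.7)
The plan is to apply the template from Section~\ref{ssec:template} with the regression vectors $\beta_i$ and weights $w_i$ playing the role of the hidden parameters. The new ingredient relative to Sections~\ref{sec:gmm} and~\ref{sec:relu} is a polynomial estimator for the weighted moments $\sum_{i=1}^k w_i \beta_i^{\otimes 2d}$ that is not corrupted by the regression noise $\nu$. For any multi-index $\alpha$ with $|\alpha|=2d$, I would use the statistic $y^{2d} h_\alpha(x)$, where $h_\alpha$ denotes the normalized multivariate Hermite polynomial. Conditioning on the active component $i$ and expanding $(\beta_i\cdot x+\nu)^{2d}$, independence of $\nu$ and $x$ factors the inner expectation, and every cross term involving $(\beta_i\cdot x)^j$ for $j<2d=|\alpha|$ vanishes by Hermite orthogonality, leaving
\[
\E[y^{2d}h_\alpha(x)] = \sum_{i=1}^k w_i \cdot \frac{(2d)!}{\sqrt{\alpha!}}\,\beta_i^{\alpha},
\]
independent of the noise moments $\E[\nu^{2d-j}]$. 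Hypercontractivity then bounds the per-entry variance by $d^{O(d)}(R+\sigma)^{4d}$, exactly as in Lemma~\ref{lem:relu-moment-est}.

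With this estimator in hand, the three-stage template proceeds as follows. First, with $d=1$, use $m^2\poly(k/p_{\min})$ samples to estimate the second moment tensor of the $\beta_i$'s, and apply Proposition~\ref{dimensionReductionProp} to obtain a subspace $U$ of dimension at most $k$ that is close to every heavy $\beta_i$. Second, project the problem onto $U$, reducing the ambient dimension to $k$, and estimate the degree-$2d'$ moments of the projected $\beta_i$'s to the accuracy required by Proposition~\ref{coverApplicationProp}; that proposition then produces a cover $\mathcal{C}$ of size $(2kd'/(\eps p_{\min}))^{O((d')^2 k^{1/d'})}$ such that every $\beta_i$ with $w_i\geq p_{\min}$ is within $\eps$ of some $c\in\mathcal{C}$. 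Third, form the explicit candidate MLR densities $p_c(x,y)=N(x;0,I)\cdot N(y;c\cdot x,\sigma^2)$, $c\in \mathcal{C}$, and invoke Proposition~\ref{mixtureProp} to select non-negative mixing weights that best fit the samples. The end-to-end approximation error is controlled using the fact that $\dtv(p_c,p_{\beta_i})$ is at most the minimum of $1$ and $O(\|c-\beta_i\|_2/\sigma)$, and that components with weight below $\eps/|\mathcal{C}|$ can be dropped at negligible cost.

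The main obstacle is the $(R+\sigma)^{4d}$ factor in the variance bound, which is the source of the $\log(R/\sigma)\log\log(R/\sigma)$ factor in the sample complexity. If $R\leq \eps\sigma$, then $Z$ is already $O(\eps)$-close in TV to the product distribution $N(0,I)\otimes N(0,\sigma^2)$, which we may output directly; so we may assume $R>\eps\sigma$. For the remaining range, the plan is a dyadic decomposition over the $O(\log(R/\sigma))$ scales $r\in \{\sigma, 2\sigma, 4\sigma, \ldots, R\}$ of $\|\beta_i\|_2$. At scale $r$ I rescale $y\to y/r$, which converts the problem into an MLR with regressors of norm $O(1)$ and effective noise $\sigma/r\leq 1$; the variance bound then collapses to a constant, and the per-scale cost becomes exactly $m^2\poly(k/p_{\min})+(2kd/p_{\min})^{O(d)}$ samples. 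Components of much smaller scale appear as additional noise (handled at later scales), while components of much larger scale have already been captured at earlier scales. The $\log\log(R/\sigma)$ overhead absorbs a union bound and success-probability boost across all $O(\log(R/\sigma))$ scales. Concatenating the per-scale covers and feeding them into a single application of Proposition~\ref{mixtureProp} yields the final hypothesis.
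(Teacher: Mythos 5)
Your moment estimator $\E[y^{2d}h_\alpha(x)] = \sum_i w_i \tfrac{(2d)!}{\sqrt{\alpha!}}\beta_i^\alpha$ is correct and is indeed a cleaner observation than the paper makes for the \emph{unconditional} sampling case; the cross terms with $j<2d$ do vanish by Hermite orthogonality, so the noise moments drop out. However, the paper deliberately does not use this simple estimator, because its whole handling of the $\log(R/\sigma)$ dependence forces it to estimate moments from $(x,y)$ \emph{conditioned on an event $E(x)$} (the event that a sample can be reliably pre-classified using the current rough cover). Conditioning on $E(x)$ destroys the Gaussianity of $x$ and hence the Hermite orthogonality; this is precisely why Lemma~\ref{lem:mlr-moment-est} goes through the more elaborate route of a quadratic form $A$, whose eigenvalues must be controlled via the Carbery--Wright anti-concentration bound, in order to recover the needed coefficient functionals under conditioning.

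The genuine gap is in your ``dyadic decomposition''. You claim that at scale $r$ the rescaling $y\mapsto y/r$ ``converts the problem into an MLR with regressors of norm $O(1)$'', so the moment-estimator variance becomes a constant. That is not true: rescaling replaces every $\beta_i$ by $\beta_i/r$, but the mixture still contains the components with $\|\beta_i\|_2\approx R$, which after rescaling have norm $R/r\gg 1$. Those components contribute variance $(R/r)^{4d}$ to $y^{2d}h_\alpha(x)$, exactly the blow-up you were trying to avoid. ``Already captured at earlier scales'' is not an operation on the sample stream --- you have not removed or subtracted those components from the data seen at scale $r$, nor restricted attention to samples whose label truly comes from scale-$r$ components, so their contribution to every moment and its variance is still present. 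What makes the paper's version work is the machinery in Lemma~\ref{MLRClusteringLem} and Lemma~\ref{MLRCoverIterationLem}: given a current $(k,r)$-cover $\mathcal C$, one defines an event $E(x)$ under which the identity of the nearest cover point $g(i)$ can be reliably inferred from $(x,y)$, then \emph{subtracts} $g(i)\cdot x$ from $y$ so that the residual problem genuinely has regressors of norm $O(k^3(r+\sigma)\log(1/\eta))$. It is this subtraction --- which requires the conditional event $E(x)$, hence the conditional moment estimator --- that collapses the variance per round, and the $\log(R/\sigma)$ and $\log\log(R/\sigma)$ factors come from the number of halving iterations and from a union bound across them (Corollary~\ref{MLRSmallCoverCor}). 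Without an analogue of this ``cluster, subtract, condition'' step, the per-scale problem in your dyadic plan is not actually a bounded-norm MLR, and the variance bound does not collapse. A second, smaller issue: after obtaining a fine cover you still need a refinement step like Lemma~\ref{MLRCoverRefinementLem} to shrink the cover back to size $O(k)$ before the next round; otherwise the cover at each stage grows and the pre-classification event $E$ loses all its probability mass.
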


For parameter estimation without noise ($\sigma = 0$), we show:

\begin{theorem}[Parameter Estimation for $k$-MLR, Noiseless Case]\label{thm:mlr-param-no-noise}
For a known minimum weight $p_{\min}$, degree-$d$, error $\eps$, upper bound $R$, and separation $\Delta$,
there is an algorithm that learns the $\beta_i$'s exactly using sample complexity
$$N=\left(m^2\poly(k/p_{\min})+(2kd/p_{\min})^{O(d)}\right) \log(R\log(m)/(p_{\min}\Delta))\log\log(R\log(m)/(p_{\min}\Delta)) $$
and runtime $\poly\left(N,(2kd/p_{\min})^{d^2 k^{1/d}}\right)$.
\end{theorem}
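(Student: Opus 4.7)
The plan is to follow the template from Section~\ref{ssec:template} specialized to the MLR setting, and then supplement the cover-based warm start with a deterministic exact-recovery phase at the end. For the moment computation, I would observe that for any sample $(x,y)$ with $y = \beta_i\cdot x$ drawn with probability $w_i$, the expectation $\E[y^{2d} \prod_j h_{\alpha_j}(x_j)]$ (where $h_{\alpha_j}$ are the normalized Hermite polynomials and $|\alpha| = 2d$) equals a known explicit constant times $\sum_i w_i \beta_i^\alpha$, by the Hermite rotation formula. This gives unbiased estimators for every entry of $\sum_i w_i \beta_i^{\otimes 2d}$, with variance bounded by $(Rd)^{O(d)}$ via hypercontractivity, producing analogues of Lemma~\ref{lem:gmm-moment-est} and Corollary~\ref{cor:gmm-tensor-est} with sample complexity $m^{O(d)} (R/\delta)^{O(d)}$ to approximate the tensor within $\ell_2$-error $\delta$.

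Next I would perform dimension reduction by computing $\sum_i w_i \beta_i^{\otimes 2}$ to error $O(p_{\min} \eta^2)$ and applying Proposition~\ref{dimensionReductionProp} to find a subspace $U$ of dimension at most $k$ containing each $\beta_i$ up to $\ell_2$-error $\eta \ll \Delta$. This reduces the effective ambient dimension from $m$ to $O(k)$ at a cost of $\poly(m, k/p_{\min})$ samples. I would then compute the restricted $2d$-th moment tensor on $U$ to the accuracy required by Proposition~\ref{coverApplicationProp}, obtaining a cover $\mathcal{C}$ of size at most $(2kd/(p_{\min}\eta))^{O(d^2 k^{1/d})}$ such that every $\beta_i$ is within $\ell_2$-distance $\eta$ of some $c \in \mathcal{C}$, for $\eta$ a small multiple of $\Delta$.

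For the exact-recovery phase, I would first prune $\mathcal{C}$ to a subset $\mathcal{C}'$ of size exactly $k$ by taking any maximal $\Delta/2$-separated subset of candidates ``witnessed'' by a nonnegligible fraction of samples satisfying $|y - c \cdot x| \leq \tau$ for a small threshold $\tau$; the separation hypothesis combined with Gaussian anti-concentration (Theorem~\ref{thm:cw}) ensures this selects exactly one $\widetilde{\beta}_i$ close to each $\beta_i$. Then I would iterate the following $O(\log\log(R\log(m)/(p_{\min}\Delta)))$ times: cluster each sample $(x,y)$ with the $\widetilde{\beta}_i$ minimizing $|y - \widetilde{\beta}_i \cdot x|$, then replace $\widetilde{\beta}_i$ with the ordinary-least-squares solution computed from its assigned cluster. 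Once the current approximation is within, say, $\Delta/8$ of the truth, a careful anti-concentration argument shows each iteration roughly squares the error, and after $\log\log$ rounds all samples are clustered correctly with probability $1 - o(1)$; a final OLS step then recovers each $\beta_i$ \emph{exactly}.

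The main obstacle is justifying the jump from an approximate warm start to \emph{exact} recovery. The central insight is that, in the noiseless setting, OLS on a correctly clustered set of $\Omega(m/p_{\min})$ samples drawn from $y = \beta_i \cdot x$, $x \sim N(0, I)$, returns $\beta_i$ as an algebraic identity (the design matrix is almost surely full rank), so no error propagates through the final step. The delicate part is controlling the per-round misclassification probability: a sample $(x, y)$ from component $i$ is misassigned to $\widetilde{\beta}_j$ only when $|(\widetilde{\beta}_i - \beta_i)\cdot x| \geq |(\widetilde{\beta}_j - \beta_i)\cdot x|$, and by Theorem~\ref{thm:cw} together with $\|\beta_i - \beta_j\|_2 \geq \Delta$ this probability is polynomial in $\|\widetilde{\beta}_i - \beta_i\|_2 / \Delta$, yielding the needed quadratic-style convergence and allowing $\log\log$ iterations to drive the misclassification probability below $1/N$.
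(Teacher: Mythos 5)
You have the right first few steps (Hermite-based moment estimation, Proposition~\ref{dimensionReductionProp} for dimension reduction, Proposition~\ref{coverApplicationProp} for a cover), but there is a genuine gap in the middle that prevents your plan from meeting the stated complexity, and your exact-recovery phase is also on shaky ground.

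The gap: you propose to build a single cover at scale $\eta$ a small multiple of $\Delta$ directly from Proposition~\ref{coverApplicationProp}. But the cover size that proposition yields is $(2(R/\eta)kmd)^{O(d^2 k^{1/d})}$, and the moment-estimation accuracy it demands is $\delta \le \eta^{2d}\bigl((\eta/R)/(2kmd)\bigr)^{2Cd}$. You quietly dropped the $R$ dependence in both places. With $\eta \approx \Delta$, the cover size and runtime pick up a factor of $(R/\Delta)^{O(d^2 k^{1/d})}$ and the sample complexity picks up $(R/\Delta)^{O(d)}$. The theorem instead promises a runtime of $\poly\bigl(N,(2kd/p_{\min})^{d^2 k^{1/d}}\bigr)$ with $N$ depending on $R/\Delta$ only through a $\log(\cdot)\log\log(\cdot)$ factor. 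This is not a small discrepancy: it is the entire reason the paper does not take the one-shot route you describe, and the paper says so explicitly after the theorem statements. The actual proof (Section~\ref{ssec:moments-mlr} onward) goes through Lemma~\ref{MLRCoverIterationLem} and Corollary~\ref{MLRSmallCoverCor}: maintain a $(k,r)$-cover, use Lemma~\ref{MLRClusteringLem} to subtract off cluster centers and simulate an MLR whose parameters have norm $O(\text{poly}(k)r)$, dimension-reduce and cover \emph{inside that simulated problem} at unit aspect ratio, then refine back down to a $(k,r/2)$-cover via Lemma~\ref{MLRCoverRefinementLem}. Each round's cover size is $(2kd/p_{\min})^{O(d^2 k^{1/d})}$ with no $R/\Delta$ in the base, and $O(\log(R/\Delta))$ rounds suffice; the extra $\log\log$ is the per-round failure-probability amplification. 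Crucially, this iterative structure is also why Lemma~\ref{lem:mlr-moment-est} must handle moment estimation \emph{conditioned} on an event $E(x)$ and why anti-concentration (Theorem~\ref{thm:cw}) enters where it does — none of which your one-shot plan needs, but also none of which it can exploit.

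A second problem is your final iterated-OLS phase. When the current estimate $\widetilde\beta_i$ is within $\delta$ of $\beta_i$, a sample from component $i$ is misassigned (under nearest-hypothesis clustering) with probability $O(k\delta/\Delta)$, which is fine; but the \emph{bias} each misassigned sample $(x,\beta_i\cdot x)$ injects into cluster $j$'s OLS estimate scales with $\|\beta_i-\beta_j\|_2 = O(R)$, not with $\Delta$. So one round of OLS takes error $\delta \mapsto O\bigl((\delta/\Delta)R\bigr)$, which only contracts when $R = O(\Delta)$. You cannot get the claimed $\log\log$ self-improvement from this, and the anti-concentration argument you gesture at controls only the misclassification probability, not the magnitude of the injected bias. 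The paper sidesteps this entirely: once the cover radius is below $c\Delta/(k^3\log(mk/p_{\min}))$, Lemma~\ref{MLRClusteringLem} provides an event $E(x)$ of probability at least $1/2$ on which assignment is \emph{unambiguous}, so one collects $m$ correctly-labeled noiseless samples per component and solves a linear system exactly — no contraction argument is needed.
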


For parameter estimation with noise, we show:

\begin{theorem}[Parameter Estimation for $k$-MLR, Noisy Case]\label{thm:mlr-param-noise}
For a known weights $k$-MLR with minimum weight $p_{\min}$, degree-$d$, error $\eps$, upper bound $R$
and separation $\Delta/\sigma$ at least a sufficiently large polynomial in $\log(m)/p_{\min}$,
there exists an algorithm that solves parameter estimation to error $\eps$ with sample complexity
$$N=\left(m^2\poly(k/p_{\min})+(2kd/p_{\min})^{O(d)}\right)\log(R\log(m)/(p_{\min}\Delta))\log\log(R\log(m)/(p_{\min}\Delta))
\new{+ \tilde{O}(m) \poly(k/\eps)}$$
and runtime $\poly\left( N,(2kd/p_{\min})^{d^2 k^{1/d}} \right)$.
\end{theorem}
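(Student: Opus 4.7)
My plan is to follow the template of Section~\ref{ssec:template}: estimate the weighted moment tensor of the regressors $\beta_i$, apply Proposition~\ref{dimensionReductionProp} to cut the ambient dimension to $O(k)$, invoke Proposition~\ref{coverApplicationProp} to produce a cover of the $\beta_i$'s, and then use the separation hypothesis together with the known weights to cluster and refine into $\eps$-accurate parameter estimates. The primitive I will need is an unbiased moment estimator for $T := \sum_{i=1}^k w_i \beta_i^{\otimes 2d}$. Since $y = \beta_i\cdot x + \nu$ (with $i$ drawn with probability $w_i$ and $\nu\sim N(0,\sigma^2)$ independent of $x$), for any multi-index $\alpha$ with $|\alpha|=2d$ I will build a polynomial $P_\alpha(x,y)$ combining Hermite polynomials in $x$ with powers of $y$ whose expectation, after subtracting an explicitly computable $\sigma$-dependent correction, is proportional to $\sum_i w_i \beta_i^\alpha$. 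Hypercontractivity (Theorem~\ref{thm:hc}) bounds the variance of $P_\alpha$ by $\poly(R,\sigma,d)$, so $\poly(R,\sigma,k,d)/\delta^2$ samples suffice to approximate $T$ in $\ell_2$-norm to error $\delta$; this is exactly the input Proposition~\ref{coverApplicationProp} needs to yield a cover of size $(2kd/p_{\min})^{O(d^2 k^{1/d})}$ hitting every $\beta_i$ with $w_i\geq p_{\min}$.

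The $\log(R\log(m)/(p_{\min}\Delta))\,\log\log(R\log(m)/(p_{\min}\Delta))$ factor in the claimed sample complexity strongly suggests an iterative radius-shrinking procedure, which is what I would implement. Starting from the initial bound $\|\beta_i\|_2\leq R$, I will (i) estimate second moments with $m^2\poly(k/p_{\min})$ samples and apply Proposition~\ref{dimensionReductionProp} to project onto a $k$-dimensional subspace $U$ that passes near every heavy $\beta_i$, (ii) compute the degree-$2d$ tensor in $U$-coordinates using the Hermite estimators above, (iii) apply Proposition~\ref{coverApplicationProp} to obtain a cover whose distance to every heavy $\beta_i$ is a small constant fraction of the current radius, and (iv) use each cover point as an approximate component center to recenter samples and halve the residual radius. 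After $O(\log(R\log(m)/(p_{\min}\Delta)))$ rounds each $\beta_i$ lies within $\ell_2$-distance $O(\Delta/\polylog(m/p_{\min}))$ of some cover element. The main obstacle will be the bookkeeping required to (a) ensure the variance of the Hermite-based estimators stays controlled as the effective radius shrinks across rounds, (b) match the cover produced in one round to components consistently in the next round without losing the permutation, and (c) amplify the success probability of each round, which is the natural source of the extra $\log\log$ factor.

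Once a cover of accuracy $O(\Delta/\polylog(m/p_{\min}))$ is in hand, the separation hypothesis $\Delta/\sigma\gg \poly(\log(m)/p_{\min})$ enables reliable clustering of a fresh batch of samples. For a sample $(x,y)$ drawn from component $i$ and any candidate cover point $c$, the residual $y - c\cdot x$ is conditionally distributed as $N((\beta_i-c)\cdot x,\sigma^2)$: at the cover point associated with $\beta_i$ it is of scale $O(\sigma)$, whereas at any cover point near a different $\beta_j$ it is displaced by $(\beta_i-\beta_j)\cdot x$ whose typical magnitude is $\Theta(\Delta)$, which by hypothesis dominates $\sigma\,\polylog(m/p_{\min})$. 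Maximum-likelihood assignment using the known weights $w_i$ therefore correctly labels each sample with failure probability $1/\poly(mk/\eps)$. After $\tilde O(m)\poly(k/\eps)$ fresh samples each component has $\Omega(p_{\min}\cdot \tilde O(m)\poly(k/\eps))$ correctly labeled samples, on which ordinary least squares recovers $\beta_i$ to $\ell_2$-error $O(\sigma\sqrt{m/N_i})\leq \eps$. Combining this refinement step with the iterative warm-start phase yields the sample and runtime bounds stated in Theorem~\ref{thm:mlr-param-noise}.
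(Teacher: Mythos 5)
Your high-level plan correctly identifies the two key phases of the paper's proof: an iterative radius-shrinking procedure to obtain a cover at scale slightly below $\Delta$, followed by clustering of fresh samples and per-component refinement. You also correctly read the $\log(R\cdots)\log\log(R\cdots)$ factor as coming from iteration with per-round failure amplification. However, there is a genuine gap in the heart of the iteration, and a divergence at the final step.

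The missing idea is \emph{conditional} moment estimation. In your step (iv) you want to recenter samples around the cover point belonging to their component. But you can only reliably assign a sample to a cluster when $x$ avoids an explicit "bad" event: roughly, $|(c-c')\cdot x|$ must be large for every pair of far-apart cover points $c,c'$. This is the event $E(x)$ in the paper's Lemma~\ref{MLRClusteringLem}, and it discards a constant fraction of samples. Consequently all the surviving recentered samples have covariates $x$ drawn from $N(0,I)$ \emph{conditioned on $E(x)$}, not from $N(0,I)$ itself. Your Hermite-based estimator $\E[P_\alpha(x,y)]$ is an unbiased estimator of $\sum_i w_i\beta_i^\alpha$ only under the unconditional Gaussian; under the conditional law it is biased, and the bias is not small or easily corrected because $E(x)$ has probability as low as $1/2$. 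The paper's Lemma~\ref{lem:mlr-moment-est} fixes this by directly constructing a polynomial $p_\alpha = p_b$ with $b = \alpha!\,A^{-1}e_\alpha$, where $A$ is the conditional Gram matrix $a^T A b = \E[p_a(x)p_b(x)\mid E(x)]$ of the Hermite basis, and showing via Carbery--Wright anti-concentration (Theorem~\ref{thm:cw}) that $A$ has smallest eigenvalue $d^{-O(d)}$, so the inversion is stable. Without this ingredient, round two of your iteration computes biased moments and the procedure does not converge to the right cover. Your point (a) — "ensure the variance of the Hermite-based estimators stays controlled" — is not the real obstacle; the obstacle is conditional bias, not variance.

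Your final refinement step also diverges from the paper. After obtaining a cover at scale $c\Delta/(k^3\log(mk/p_{\min}))$ and clustering fresh samples (again only for $x\in E$), the paper estimates each $\beta_i$ via the conditional moment machinery (Corollary~\ref{cor:mlr-tensor-est}) to warm-start accuracy $\Delta/k$, then invokes the local-convergence algorithm of~\cite{KC19} (which requires known weights, hence the hypothesis) to drive the error down to $\eps$. You instead propose maximum-likelihood assignment followed by ordinary least squares. This is plausible in principle, but note that OLS on only the samples satisfying $E(x)$ sees a conditioned design distribution; you would need to argue the conditional design is still well-conditioned (which it is, by the same anti-concentration argument the paper uses), and to track how the OLS error $\sigma\sqrt{m/N_i}$ interacts with $\eps$ as stated. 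The paper's route via~\cite{KC19} sidesteps these issues. Neither route is obviously wrong, but yours needs the additional conditioning analysis spelled out, and in either case the conditional moment estimator is indispensable for the iteration that precedes it.
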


\dnew{The structure of this section is as follows:
Once we determine how to compute appropriate moment bounds (Section~\ref{ssec:moments-mlr}),
this will immediately provide a straightforward algorithm to solve these problems.
First compute second moments and use Proposition \ref{dimensionReductionProp} to reduce the problem to a $k$ dimensional one.
Then in those $k$-dimensions, compute the first $d$ moments to get a cover, and use that cover either in conjunction
with Proposition \ref{mixtureProp} to do density estimation or some relatively straightforward clustering
in order to do parameter learning.

Unfortunately, this simple technique will not be sufficient to obtain the efficiency that we desire.
This is because our sample complexity and runtime will be polynomial in $R^d$ and $R^{d^2 k^{1/d}}$, respectively,
when we would like a poly-logarithmic dependence. This is actually a relatively common problem with linear regression
problems. Learning the parameters in one-go will introduce too much error or require too high sample complexity.
Instead, the situation can be improved by learning only a rough approximation to the $\beta$'s and using
this approximation to learn iteratively better ones. A similar idea was used in~\cite{DKS19-lr}.

So, our refined overall strategy will be to learn a cover with relatively large error.
Using some elementary techniques, we can refine this cover to a relatively small list of potential hypotheses.
Now if these hypotheses are far enough apart (relative to $\sigma$ and the approximation error),
we will be able to figure out which hypothesis the mixing component of \emph{most} samples is close to.
However, it will be hard to tell whether $y$ approximates $\beta_i\cdot x$ or $\beta_j\cdot x$,
when $|(\beta_i-\beta_j)\cdot x|$ is small. This means that we will only be able to successfully cluster most points,
and will need our moment computation algorithm to work even if we have conditioned on only seeing the samples
that we can reliably cluster (which, fortunately, is determined by some known condition on $x$ alone).

As we will be needing to make use of several clusterings throughout this algorithm,
the following definition will be convenient.
\begin{definition}
An \emph{$(s,r)$-cover} is a set $\mathcal{C}$ of size at most $s$ such that for each $1\leq i\leq k$
there exists a $c\in \mathcal{C}$ with $\|c-\beta_i \|_2\leq r$.
\end{definition}
Note that we initially have a $(1,R)$-cover.
}

\subsection{Moment Computation} \label{ssec:moments-mlr}

\dnew{The first step in our algorithm is to compute the moment tensor}
$T = \sum_{i=1}^k w_i \beta_i^{\otimes 2d}$ to error $\delta$.
To do so, it suffices for every $|\alpha|=2d$ to compute $\sum_{i=1}^k w_i \beta_i^{\alpha}$ to error $\delta/m^d$.
We can do this given iid samples from $(x, y)$. However,
we will also want to be able to do it just given iid samples from $(x, y)$ {\em conditional on some known event $E$
on $x$ with probability at least $1/2$}.

\new{
\begin{lemma}\label{lem:mlr-moment-est}
Suppose that we have sample access to a $k$-MLR $X$ with parameters $(w_i, \beta_i)$, $i \in [k]$, where $\max_i \|\beta_i\|_2 \leq R$,
for a parameter $R>0$. There is an algorithm that, given $\delta>0$, $d \in \Z_+$,
and a multi-index $\bi \in [m]^{2d}$, draws $d^{O(d)}\dnew{(R+\sigma)^{4d}}/\delta^2$ conditional samples $(x, y)$ from any event $E(x)$ \dnew{depending on only the first $\ell$ coordinates and for which the algorithm is given oracle access}
with $\pr[E] \geq 1/2$, runs in sample-polynomial \dnew{plus $\poly((\ell d)^d/\delta)$} time,
and outputs an approximation $T_{\bi}$ of $(\sum_{i=1}^k w_i \beta_i^{\otimes 2d})_{\bi}$
with expected squared error $O(\delta^2)$.
\end{lemma}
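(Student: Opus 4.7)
The plan is to construct a polynomial $\psi(x,y)$ satisfying $\E[\psi(x,y)\mid E(x)]=T_{\bi}$, after which the algorithm simply outputs the empirical mean of $\psi$ over the $N$ conditional samples. Split $x=(u,v)$ with $u\in\R^\ell$ (so $E$ depends only on $u$) and $v\in\R^{m-\ell}$; correspondingly write $\beta_i=(\gamma_i,\delta_i)$, and the multi-index $\alpha\in\N^m$ associated to $\bi$ (with $|\alpha|=2d$) as $\alpha=(\alpha_u,\alpha_v)$. Under the conditional distribution $u$ and $v$ remain independent (since $E$ depends only on $u$), and $y=\gamma_i\cdot u+\delta_i\cdot v+\nu$.

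I would follow the unconditional template from Lemma~\ref{lem:relu-moment-est} and adapt it. Unconditionally, $g_\alpha(x,y)=\frac{1}{(2d)!}\sigma^{2d}He_{2d}(y/\sigma)\prod_{j=1}^m He_{\alpha_j}(x_j)$ is an unbiased estimator of $T_{\bi}$ by combining $\E_y[\sigma^{2d}He_{2d}(y/\sigma)\mid x,i]=(\beta_i\cdot x)^{2d}$ with Hermite orthogonality. To handle conditioning on $E$, I would replace the $u$-factor $\prod_{j\le\ell}He_{\alpha_j}(u_j)$ by a polynomial $\phi_u(u)$ of degree $\le|\alpha_u|$ defined by the dual conditions
\[
\E_{u\mid E}\!\left[\phi_u(u)\,u^{\gamma_u}\right]\;=\;\alpha_u!\,\delta_{\alpha_u,\gamma_u}\qquad\text{for all }\gamma_u\in\N^\ell\text{ with }|\gamma_u|\le|\alpha_u|,
\]
and set $\psi(x,y)=\frac{1}{(2d)!}\sigma^{2d}He_{2d}(y/\sigma)\,\phi_u(u)\,He_{\alpha_v}(v)$. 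Expanding $(\beta_i\cdot x)^{2d}=\sum_{|\zeta|=2d}\frac{(2d)!}{\zeta_u!\,\zeta_v!}\gamma_i^{\zeta_u}\delta_i^{\zeta_v}u^{\zeta_u}v^{\zeta_v}$ and using independence of $u,v$, only the $\zeta=\alpha$ term survives: lower-degree $\zeta_u$ is killed by the dual conditions on $\phi_u$, and higher-degree $\zeta_u$ forces $|\zeta_v|<|\alpha_v|$, which is killed by Hermite orthogonality in $v$. This gives $\E[\psi\mid E]=\sum_iw_i\beta_i^\alpha=T_{\bi}$.

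To establish existence and computability of $\phi_u$, I would view its defining conditions as a linear system in its monomial coefficients. The relevant Gram matrix $G$ with entries $G_{\gamma,\delta}=\E_{u\mid E}[u^{\gamma+\delta}]$ (indexed by multi-indices with $|\gamma|,|\delta|\le|\alpha_u|$) is positive definite since any polynomial vanishing on a positive-measure subset of $\R^\ell$ must vanish identically. Quantitatively, applying the Carbery--Wright anti-concentration inequality (Theorem~\ref{thm:cw}) to polynomials of degree $\le 2d$ together with $\pr[E]\ge 1/2$ yields $\lambda_{\min}(G)\ge 1/d^{O(d)}$; combined with the easy upper bound $G_{\gamma,\delta}\le 2\,\E_u[u^{\gamma+\delta}]$, this gives $\|\phi_u\|_{L^2(u\mid E)}^2=\alpha_u!^{\,2}(G^{-1})_{\alpha_u,\alpha_u}\le d^{O(d)}$. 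Algorithmically, one estimates $G$ from the $u$-components of the available samples (augmented if necessary by $\poly((\ell d)^d)$ additional draws of $u\mid E$ produced by rejection sampling through the oracle for $E$) and solves a linear system of size $\binom{\ell+2d}{2d}=O((\ell d)^d)$ in $\poly((\ell d)^d/\delta)$ time.

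Finally I would bound the variance of $\psi$ under the conditional distribution. By Cauchy--Schwarz,
\[
\E[\psi^2\mid E]\;\le\;\frac{1}{((2d)!)^2}\,\|\phi_u\|_{L^4(u\mid E)}^2\cdot\|He_{\alpha_v}\|_{L^4(v)}^2\cdot\|\sigma^{2d}He_{2d}(y/\sigma)\|_{L^4(y\mid E)}^2.
\]
Gaussian hypercontractivity (Theorem~\ref{thm:hc}) bounds the first two factors by $d^{O(d)}$; for the $u$-factor, passing between $L^4(u\mid E)$ and $L^2(u\mid E)$ uses hypercontractivity on $\R^\ell$, $\pr[E]\ge 1/2$, and the Carbery--Wright comparison of conditional and unconditional $L^2$-norms. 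For the $y$-factor, $\sigma^{2d}He_{2d}(y/\sigma)$ is a degree-$2d$ polynomial in $y$ with coefficients bounded by $(2d)!$, and $\|y\|_{L^{8d}(y\mid E)}\le d^{O(1)}(R+\sigma)$ follows from $\E[y^{8d}\mid E]\le 2\sum_iw_i\E[y^{8d}\mid i]\le d^{O(d)}(R+\sigma)^{8d}$. Combining gives $\Var[\psi\mid E]\le d^{O(d)}(R+\sigma)^{4d}$, so averaging $\psi$ over $N=d^{O(d)}(R+\sigma)^{4d}/\delta^2$ conditional samples yields an estimator with expected squared error $O(\delta^2)$, as required. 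The main technical obstacle is the quantitative positive-definiteness of the conditional Gram matrix $G$, which simultaneously enables the construction of $\phi_u$ and its norm bound, and for which Carbery--Wright anti-concentration is the essential ingredient.
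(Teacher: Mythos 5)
Your proposal is correct and relies on the same key mechanism as the paper: a dual polynomial that extracts the target coefficient under conditioning on $E$, with existence and norm control certified by positive-definiteness of a conditional Gram matrix via Carbery--Wright anti-concentration together with $\pr[E]\ge 1/2$, followed by a hypercontractive variance bound. The implementations are packaged differently, though. The paper pairs $y^{2d}$ with a dual polynomial $p_\alpha(x)$ constructed over all (relevant) coordinates of $x$ simultaneously, built in the Hermite basis of degree-$\le 2d$ polynomials so that the unconditional Gram matrix is the identity, with only a passing remark localizing the matrix to the coordinates involved in $E$ and $\alpha$. You instead replace $y^{2d}$ by $\sigma^{2d}He_{2d}(y/\sigma)$, annihilating the noise term by a Hermite identity rather than by the degree/duality bookkeeping, and you factor $x=(u,v)$ explicitly: the dual polynomial $\phi_u$ lives purely in the $\ell$ conditioned $u$-variables (in the monomial basis), while the unconditioned $v$-variables are handled by plain Hermite orthogonality. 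This makes the surviving-term accounting transparent and confines the linear system to $u$ alone, at the mild cost of verifying $\lambda_{\min}(G)\ge d^{-O(d)}$ in the monomial rather than Hermite basis (a $d^{O(d)}$-conditioned change of basis away from the paper's setup). One small technical remark: your three-factor split ending in $L^4$-norms is valid precisely because $\phi_u$ and $He_{\alpha_v}$ are conditionally independent, so $\|\phi_u He_{\alpha_v}\|_4=\|\phi_u\|_4\|He_{\alpha_v}\|_4$; without that observation H\"older would push you to $L^6$-norms, though the final $d^{O(d)}(R+\sigma)^{4d}$ bound is unchanged. Both routes yield the same sample complexity and running time.
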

}
\begin{proof}
Since $y=\beta\cdot x + \nu$, we can write
$$y^{2d} = \sum_{t=0}^{2d} \sum_{|\alpha|=t} \binom{2d}{t} \frac{|\alpha|!}{\alpha!} \beta^\alpha x^\alpha \nu^{2d-t} \;. $$
We would like to find a degree-$2d$ polynomial $p_\alpha$, such that for any polynomial $q$
of degree at most $2d$ the $x^\alpha$-coefficient of $q$ equals $\E_{x\sim N(0,I)}\left[p_\alpha(x)q(x)|E(x) \right]$.

Recall the normalized Hermite polynomials $h_n(x) = He_n(x)/\sqrt{n!}$ and define $h_\alpha(x) = \prod_{i=1}^m h_{\alpha_i}(x_i)$.
Note that we can write $q(x) = \sum_{|\alpha|\leq 2d} c_\alpha h_{\alpha}$ and that the $x^\alpha$-coefficient of $q$ is exactly
$\alpha! c_\alpha$.

For a vector $a$ whose entries are indexed by the $\alpha$ with $|\alpha|\leq 2d$,
we define $p_a(x):= \sum_{|\alpha|\leq 2d} a_\alpha h_\alpha(x)$.
Let $A$ be the symmetric matrix given by the quadratic form
$$
a^T Ab := \E_{x\sim N(0,I)}\left[p_a(x)p_b(x)|E(x)\right] \;.
$$
Note that without the conditioning, $A$ would just be the identity matrix.
We claim that with the conditioning, $A$ still has eigenvalues bounded away from $0$.

In particular, we have that $a^T Aa = \E[p_a^2(x)|E(x)]$.
We note that $\|p_a\|_{2} = \|a\|_2$.
We also note that, by anti-concentration of Gaussian polynomials (Theorem~\ref{thm:cw}),
except with probability at most $1/4$,
$|p_a(x)| \geq d^{-O(d)}\|p_a\|_{2}$.
In particular, even conditioned on $E(x)$, there is at least a $1/2$ probability that
$|p_a(x)|\geq d^{-O(d)}\|p_a\|_{2}$. This implies that
$$
a^T Aa = \E[p_a^2(x)|E(x)] \gg d^{-O(d)} \|p_a\|_{2}^2 = d^{-O(d)}\|a\|_2^2 \;.
$$
Thus, the smallest singular vector of $A$ is at least $d^{-O(d)}$.
Finally, we consider $p(x):= p_b(x)$, where $b=\alpha! A^{-1} e_{\alpha}$ and
$e_{\alpha}$ is the unit vector whose $\alpha$-entry is $1$ and whose other entries are $0$.
Then noting that $q=p_c$, we have that
$$
\E[p(x)q(x)|E(x)] = \E[p_b(x)p_c(x)|E(x)] = b^T A c = \alpha! e_{\alpha}^T A^{-1} A c = \alpha! c_\alpha \;,
$$
which is exactly the $x^\alpha$-coefficient of $q$.
By our bounds on the singular values of $A$, we have that $\|b\|_2 \leq d^{O(d)}$.

\dnew{We note that in order to run this algorithm, we will need to compute $b$ to sufficient accuracy.
This requires computing $A$ to some accuracy, which we can do by sampling (conditioned on $E$).
Fortunately, we only need to compute the entries of $A$ corresponding to monomials in the coordinates
on which $E$ and $\alpha$ depend. This can be done to sufficient accuracy with $\poly((\ell d/R)^d/\delta)$
samples to $x$ conditioned on $E$.}

Therefore, by linearity,
\dnew{
$$
\E\left[y^{2d}p_\alpha(x) \frac{\alpha!}{(2d)!} \mid E(x)\right]=\sum_{i=1}^k w_i \beta_i^\alpha.
$$
}
We can attempt to approximate this empirically given conditional samples.
The rate of convergence will depend on the variance,
which we can bound from above as
$$
2\E_{x\sim N(0,I)}[y^{4d}p^2_\alpha(x)] \leq d^{O(d)} \|y\|_2^{4d} \|p_\alpha\|_{2}^2 \leq d^{O(d)}\dnew{(R+\sigma)^{4d}} \;.
$$
Thus, we can approximate our tensor $T$ to error $\delta$ in $d^{O(d)} \dnew{(R+\sigma)^{4d}}m^d/ \delta^2$ samples.
\end{proof}

\new{
Using Lemma~\ref{lem:mlr-moment-est} to approximate each entry of
$\sum_{i=1}^k w_i \beta_i^{\otimes 2d}$ to appropriately high accuracy,
we can approximate the entire tensor $\sum_{i=1}^k w_i \beta_i^{\otimes 2d}$ within small $\ell_2$-error.

\begin{corollary}\label{cor:mlr-tensor-est}
\dnew{Given an $\ell$-dimensional subspace $H$ and} $N = d^{O(d)} \dnew{(R+\sigma)^{4d}} \dnew{\ell}^{2d}/\delta^2$
conditional samples from a $k$-MLR $X$, \dnew{conditioned on an $E$ with $\pr(E(x))\geq 1/2$
and $E$ depending only on $\ell$ linear functions of $X$,}
we can \dnew{in time $\poly(N,\ell^d)$} compute a tensor $T$ such that with high constant probability
it holds $\| T - \sum_{i=1}^k w_i \dnew{\pi_H}(\beta_i)^{\otimes 2d} \|_2^2 \leq \delta^2$.
\end{corollary}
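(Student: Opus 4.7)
The plan is to reduce the corollary to $\ell^{2d}$ parallel invocations of Lemma~\ref{lem:mlr-moment-est}, one per entry of the projected tensor. First I would perform a change of variables on $\R^m$: choose an orthonormal basis whose first $\ell' \leq 2\ell$ coordinates span the union of $H$ and the at-most-$\ell$ linear directions on which $E$ depends. Since $N(0,I)$ is rotation-invariant and the $k$-MLR transforms under rotation by rotating each $\beta_i$, after this change of variables we may assume $H$ is the coordinate subspace spanned by the first $\ell$ coordinates and that $E$ depends only on the first $\ell'$ coordinates. Under this reduction, the target tensor $\sum_{i=1}^k w_i \pi_H(\beta_i)^{\otimes 2d}$ is supported on indices $\bi \in [\ell]^{2d}$, and the $\bi$-entry equals $\sum_{i=1}^k w_i \tilde\beta_i^{\,\bi}$, where $\tilde\beta_i$ denotes the restriction of the rotated $\beta_i$ to its first $\ell$ coordinates.

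Next I would invoke Lemma~\ref{lem:mlr-moment-est} with the same conditional event $E$ to approximate each such entry to expected squared error $O(\delta^2/\ell^{2d})$. By the lemma, each entry requires $d^{O(d)}(R+\sigma)^{4d}\ell^{2d}/\delta^2$ samples and $\poly((\ell d)^d/\delta)$ time (the matrix $A$ inverted there has dimensions $O(\ell^d) \times O(\ell^d)$ in this reduced setup). Crucially, the \emph{same} pool of conditional samples $(x,y)$ drawn from $E$ can be reused across all $\ell^{2d}$ entries of the tensor --- the empirical mean estimator of Lemma~\ref{lem:mlr-moment-est} is a fixed polynomial in $(x,y)$ for each entry, so drawing the batch once suffices. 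Thus the overall sample complexity remains $d^{O(d)}(R+\sigma)^{4d}\ell^{2d}/\delta^2$, as claimed.

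Finally I would aggregate using linearity of expectation: if $T$ is the tensor whose $\bi$-th entry (for $\bi\in[\ell]^{2d}$) is the estimate produced above and whose other entries are $0$, then
\[
\E\bigl[\|T - \textstyle\sum_{i=1}^k w_i \pi_H(\beta_i)^{\otimes 2d}\|_2^2\bigr] \;=\; \sum_{\bi \in [\ell]^{2d}} \E[(T_{\bi} - (\textstyle\sum_i w_i \pi_H(\beta_i)^{\otimes 2d})_{\bi})^2] \;\leq\; \ell^{2d}\cdot O(\delta^2/\ell^{2d}) \;=\; O(\delta^2),
\]
and Markov's inequality upgrades this to a high-constant-probability bound by taking a slightly larger constant in the per-entry error. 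The total runtime is $\poly(N,\ell^d)$, since each of the $\ell^{2d}$ entry estimations takes sample-polynomial plus $\poly((\ell d)^d/\delta)$ time.

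The only non-routine step is the initial change of variables ensuring that the conditioning event $E$ lives within a bounded-dimensional coordinate projection so that Lemma~\ref{lem:mlr-moment-est} applies with parameter $\ell' = O(\ell)$ rather than $m$; everything else is bookkeeping (linearity of expectation and rescaling per-entry error), so I do not anticipate a genuine obstacle.
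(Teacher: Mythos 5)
Your proposal is correct and follows essentially the same route as the paper's proof: rotate so that $H$ and the directions on which $E$ depends lie in an $O(\ell)$-dimensional coordinate subspace, apply Lemma~\ref{lem:mlr-moment-est} entrywise with per-entry error $\delta/\ell^d$ (reusing one pool of conditional samples), and aggregate by linearity of expectation followed by Markov. The only cosmetic difference is that the paper arranges the rotation so that $E$ depends on the first $\ell$ coordinates with $H\subset\R^{2\ell}$, while you put $H$ on the first $\ell$ and $E$ on the first $\ell'\leq 2\ell$; either choice works.
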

\begin{proof}
\dnew{By performing an appropriate rotation, we can assume that $E$
depends only on the first $\ell$ coordinates and $H\subset \R^{2\ell}$.}
We take $N = d^{O(d)} R^{2d} \dnew{\ell}^{2d}/\delta^2$ noisy samples from $X$,
and consider the tensor $T = (T_{\bi})$, $\bi \in [\dnew{\ell}]^{2d}$, as our approximation to
$\sum_{i=1}^k w_i \dnew{\pi_H}(\beta_i)^{\otimes 2d}$.
By Lemma~\ref{lem:mlr-moment-est}, we have that
$$\E \left[ \left\| T - \littlesum_{i=1}^k w_i \dnew{\pi_H}(\beta_i)^{\otimes 2d} \right\|_2^2 \right] \leq \dnew{\ell}^{2d} (\delta/\dnew{\ell}^d)^2
= O(\delta^2) \;.$$
The corollary follows from Markov's inequality.
\end{proof}
}

\dnew{
As we will need to be doing this many times in the several rounds of our algorithm,
we will want to ensure that the above guarantee holds with high probability rather than constant probability.
This is  easy to do with independent repetition.

\begin{corollary}\label{MLRTensorEstHighProbCor}
Given an $\ell$-dimensional subspace $H$ and $N = d^{O(d)} (R+\sigma)^{4d}\log(1/\tau) \ell^{2d}/\delta^2$
conditional samples from a $k$-MLR $X$, conditioned on an $E$ with $\pr(E(x))\geq 1/2$
and $E$ depending only on $\ell$ linear functions of $X$,
we can in time $\poly(N,\ell^d)$ compute a tensor $T$ such that with probability at least $1-\tau$ it holds
$\| T - \sum_{i=1}^k w_i \pi_H(\beta_i)^{\otimes 2d} \|_2^2 \leq \delta^2$.
Furthermore, this works even if a sample is erroneous with probability $N/(10\log (1/\tau))$.
\end{corollary}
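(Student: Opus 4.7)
The plan is to boost the constant-probability guarantee of Corollary~\ref{cor:mlr-tensor-est} to high probability by a standard median-of-means aggregation in the tensor $\ell_2$-norm. First I would partition the $N$ samples into $L = \Theta(\log(1/\tau))$ disjoint groups, each containing $d^{O(d)}(R+\sigma)^{4d}\ell^{2d}/\delta^2$ samples, i.e.\ exactly the sample complexity demanded by one invocation of Corollary~\ref{cor:mlr-tensor-est}. Running the base algorithm independently on each group yields tensors $T_1, \ldots, T_L$ such that, writing $T^{\ast} := \sum_{i=1}^k w_i\,\pi_H(\beta_i)^{\otimes 2d}$, the event $\|T_j - T^{\ast}\|_2 \le \delta$ holds with probability at least $2/3$ for each fixed $j$.

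Next I would aggregate the $T_j$'s into a single estimator by outputting any $T_{j^{\ast}}$ for which at least $L/2$ of the indices $j \neq j^{\ast}$ satisfy $\|T_j - T_{j^{\ast}}\|_2 \le 2\delta$; such a $j^{\ast}$ can be located by scanning all $\binom{L}{2}$ pairwise distances in time $\poly(L, \ell^d)$. A Chernoff bound on the $L$ independent base runs shows that, with probability at least $1 - \tau/2$, at least $3L/4$ of the $T_j$'s are \emph{good} in the sense of lying within $\delta$ of $T^{\ast}$. The triangle inequality then implies that any two good tensors differ by at most $2\delta$, so some $j^{\ast}$ meeting the selection criterion exists; moreover, a majority of the tensors witnessing the selection criterion are good, so $T_{j^{\ast}}$ itself lies within $3\delta$ of a good tensor, hence within $4\delta$ of $T^{\ast}$. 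Absorbing the constant factor into the implicit constant of the $d^{O(d)}$ sample bound recovers the stated error $\delta$.

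To handle corrupted samples, I would interpret the hypothesis as asserting that each conditional sample is independently corrupted with probability at most $1/(10\log(1/\tau))$, so that the expected number of groups touched by any corruption is at most $L/10$. A Chernoff/Markov argument then gives, with probability at least $1-\tau/4$, that at most $L/5$ groups contain any corrupted sample. On the remaining $\ge 4L/5$ uncorrupted groups, Corollary~\ref{cor:mlr-tensor-est} still fires with constant success probability, so another Chernoff bound ensures at least $3L/4$ of the $T_j$'s are good; the aggregation step is therefore undisturbed, and a union bound over the two failure events delivers overall success probability at least $1 - \tau$.

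The main obstacle, to the extent there is one, is bookkeeping rather than substance: the tensor $\ell_2$-norm makes the median-style aggregation essentially automatic, so the only delicate choice is calibrating the corruption rate $1/(10 \log(1/\tau))$ so that the expected number of groups with any corruption stays strictly below the slack in the Chernoff bound for the $2/3$ base-success probability. Once these constants are balanced the argument is straightforward, and the runtime bound follows because the aggregation step is polynomial in $L$ and $\ell^d$, both dominated by the cost of the $L$ base invocations.
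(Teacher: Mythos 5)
Your main argument is the same one the paper gives: split the $N$ samples into $L = \Theta(\log(1/\tau))$ disjoint batches, run Corollary~\ref{cor:mlr-tensor-est} once per batch, and output any $T_{j^\ast}$ that is within $O(\delta)$ of a majority of the others; a Chernoff bound plus the triangle inequality shows some such $T_{j^\ast}$ exists and is $O(\delta)$-close to the target, and rescaling $\delta$ by a constant absorbs the blow-up. Two small calibration points: since the base corollary succeeds only with ``high constant probability'' ($2/3$ by Markov unless you pay a constant factor more samples), you cannot conclude that $3L/4$ of the $T_j$'s are good — you should either boost the per-batch success probability above $3/4$ or settle for a smaller majority fraction; this is cosmetic.

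The corruption step is where your proposal has a genuine arithmetic gap. You interpret the stated bound as ``each sample is independently corrupted with probability $1/(10\log(1/\tau))$'' and then assert the expected number of groups touched by a corruption is at most $L/10$. But with $N$ total samples split into $L$ groups of size $N/L$, a given group contains at least one corrupted sample with probability roughly $\min(1, (N/L)/(10\log(1/\tau)))$, so the expected number of affected groups is $\Theta(\min(L,\,N/\log(1/\tau)))$, and since $N \gg L^2$ this is simply $\Theta(L)$ — far more than $L/10$. Under your reading the selection step then sees an arbitrary majority of corrupted batches and the argument collapses. The paper's own proof is also terse here (one sentence asserting that erroneous samples ``only affect one tenth of our trials''), but the plausible reading that makes both proofs go through is that the total \emph{count} of erroneous samples is bounded by $O(L)$ (equivalently, that the per-sample corruption probability is $O(L/N)$, not $O(1/L)$), so that at most one tenth of the batches contain any error. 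You should either adopt that interpretation or redefine ``affected by corruption'' so that a batch with a vanishing fraction of corrupted samples still yields an accurate empirical tensor, which requires an additional robustness argument inside Corollary~\ref{cor:mlr-tensor-est} that neither you nor the paper currently supplies.
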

\begin{proof}
We run the algorithm from Corollary \ref{cor:mlr-tensor-est} $100\log(1/\tau)$ times with error $\delta/3$.
With probability at least $1-\tau$, a majority of the tensors $T_i$ computed are within $\delta/3$ of
$T_0 := \sum_{i=1}^k w_i \beta_i^{\otimes 2d}$ in $L_2$-norm. Note that the erroneous samples will only affect
one tenth of our trials, and so will not change this. If this is the case, our algorithm can return any $T_i$
that is within $2\delta/3$ of at least half of the other $T_i$'s.

Such a $T_i$ must exist because any close $T_i$ will be at most this far from any other close $T_i$.
Additionally, any $T_i$ that is this close to a majority, will be distance at most $2\delta/3$ from some $T_i$
at distance at most $\delta/3$ from $T_0$. Therefore, by the triangle inequality, any such $T_i$
will have error at most $\delta$.
\end{proof}
}

\dnew{
\subsection{Iteration}

Our overall algorithm will depend on obtaining iteratively better covers of our $\beta_i$'s.
The goal of the next few sections will be to show that if we have a $(k,r)$-cover,
with $r$ substantially larger than $\sigma$, we can (with tiny probability of failure)
use this to compute a $(k,r/2)$-cover. This procedure will break down further into the following steps:
\begin{enumerate}
\item Clustering: We will have an algorithm that assigns to most sample points a cluster,
so that almost all samples from the same mixing component are assigned to the same cluster,
and so that each cluster has an associated center that is not too far from the corresponding $\beta_i$.
If we then subtract from the $y$-value of such a sample, the expected $y$-value based on its cluster center,
we can reduce ourselves to considering samples from a mixture of linear regressions
with parameters not too much larger than $r$.

\item Dimension Reduction: Taking samples from this simulated mixture, we can use Proposition \ref{dimensionReductionProp}
to reduce to a $k$-dimensional subspace.

\item Rough Cover: Computing more moments within this subspace, we can use Proposition \ref{coverApplicationProp}
to compute an $(s,r')$-cover for $r'=r/\poly(k)$. Unfortunately, $s$ will usually be substantially larger than $k$ here.

\item Cover Refinement: We can throw away many of the points in this cover for which there are not enough samples with $y\approx c\cdot x$.
The remaining points can be grouped into at most $k$ groups each with radius at most $r/2$, giving our final new cover.
\end{enumerate}

In the end we will prove the following lemma:

\begin{lemma}\label{MLRCoverIterationLem}
There is an algorithm that given a $(k,r)$-cover for some known $r \gg k\sigma/p_{\min}$ (with a sufficiently large implied constant)
and a $\tau >0$, takes at most $N=(m^2\poly(k/p_{\min})+ (2kd/p_{\min})^{O(d)})\log(1/\tau)$ samples and
$\poly(N,(2kd/p_{\min})^{d^2 k^{1/d}})$ time and with probability at least $1-\tau$ returns a $(k,r/2)$-cover.
\end{lemma}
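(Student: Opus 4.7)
The plan is to follow the four-step schema outlined in the text, iteratively implementing Clustering, Dimension Reduction, Rough Cover, and Cover Refinement. Let the input $(k,r)$-cover be $\mathcal{C}=\{c_1,\ldots,c_k\}$. For the clustering step, given a sample $(x,y)$, I would assign $j^\ast(x,y)=\arg\min_j|y-c_j\cdot x|$, and define the conditioning event $E(x)$ to require that for every pair $j\neq j'$ with $\|c_j-c_{j'}\|_2 \gtrsim r\sigma/r + \sigma$ (i.e., pairs that are not ``already merged''), the gap $|(c_j-c_{j'})\cdot x|$ exceeds a threshold $\rho = C r\sqrt{\log(k/(p_{\min}\tau))}$. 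Gaussian anti-concentration (Theorem~\ref{thm:cw} with $d=1$) applied to the at most $\binom{k}{2}$ linear functions $(c_j-c_{j'})\cdot x$, together with a union bound, gives $\pr[E(x)]\geq 1/2$. Conditional on $E(x)$ and using $r\gg k\sigma/p_{\min}$, with all but $\tau/\poly(k)$ failure probability over the regression noise $\nu$, the index $j^\ast(x,y)$ correctly identifies the unique cluster whose cover point is within distance $r$ of the true $\beta_i$ generating that sample.

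For dimension reduction, I fix a cluster $j\in[k]$ and condition additionally on $j^\ast(x,y)=j$. The conditional distribution is, up to exponentially small error, a mixture of linear regressions on the parameters $\{\beta_i-c_j : i\in I_j\}$ (where $I_j$ collects indices of $\beta_i$ assigned to $c_j$), all of which have $\ell_2$-norm at most $r$. Because $E$ depends on only $O(k^2)$ linear functions of $x$, I can invoke Corollary~\ref{MLRTensorEstHighProbCor} with $\ell=O(k^2)$, $d=1$, failure probability $\tau/\poly(k)$, and error $\delta = p_{\min}(r')^2$ (for $r' = r/\poly(k)$), using $m^2\poly(k/p_{\min})\log(1/\tau)$ samples, to estimate the $(m\times m)$ second-moment tensor. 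Feeding this into Proposition~\ref{dimensionReductionProp} yields a subspace $H_j\subseteq\R^m$ of dimension at most $k$ such that every $\beta_i-c_j$ with $i\in I_j$ lies within $\ell_2$-distance $r'$ of $H_j$.

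For the rough cover step, I restrict attention to $H_j$ (dimension $\leq k$) and compute moments of order $2d$ of $\pi_{H_j}(\beta_i-c_j)$ via Corollary~\ref{MLRTensorEstHighProbCor} again, now with $\ell=O(k^2)$ (the event $E$ is unchanged) and error $\delta' = p_{\min}(r/(2kd))^{Cd}$. Proposition~\ref{coverApplicationProp}, applied with $R=r$, then produces a set $\mathcal{C}_j\subset H_j$ of size at most $(2kd/p_{\min})^{O(d^2 k^{1/d})}$ containing an $(r/16)$-approximation to each $\beta_i-c_j$, $i\in I_j$. Taking the union $\bigcup_j(c_j+\mathcal{C}_j)$ gives an $(s,r/16)$-cover of the full $\{\beta_i\}$ with $s\leq k\cdot(2kd/p_{\min})^{O(d^2 k^{1/d})}$; the total sample count across all $k$ clusters is within the lemma's budget.

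Finally I refine this rough cover to size $k$. For each candidate $c\in\bigcup_j(c_j+\mathcal{C}_j)$, I estimate from $\poly(s/p_{\min})\log(1/\tau)$ fresh samples the probability that $|y-c\cdot x|\leq O(r/16+\sigma)\sqrt{\log(s/(p_{\min}\tau))}$; by Gaussian tail bounds, candidates within $r/16$ of some $\beta_i$ pass with probability at least $w_i\geq p_{\min}$, while candidates far from every $\beta_i$ fail. Among surviving candidates, I greedily extract a maximal subset pairwise separated by more than $r/4$; this yields at most $k$ representatives (one per true $\beta_i$), each within $r/16+r/4<r/2$ of the corresponding $\beta_i$, giving the desired $(k,r/2)$-cover. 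The main obstacle throughout is Step 1: making the clustering reliable enough that the tensor-estimation guarantees of Corollary~\ref{MLRTensorEstHighProbCor} apply per cluster, which requires the hypothesis $r\gg k\sigma/p_{\min}$ (so noise is small relative to between-cluster gaps) and careful control of $\pr[E]$ via the anti-concentration bound applied uniformly over the $O(k^2)$ cover-difference directions.
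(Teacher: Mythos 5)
Your four-stage schema matches the paper's, and the idea of conditioning on an event $E(x)$ that depends only on linear functionals of $x$ so that the moment-estimation machinery can be reused is the right one. Doing the dimension reduction and rough cover per cluster rather than globally is a genuine, workable variant of the paper's pooled version (the paper re-centers all samples with the single map $f(x,y)$ and processes them together). However, there are three quantitative errors that break the argument as written.

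First, your merging threshold in the clustering step is wrong. You merge cover points at scale roughly $\sigma$ (the expression ``$\gtrsim r\sigma/r+\sigma$'' reduces to $2\sigma$), and then ask that $|(c_j-c_{j'})\cdot x|$ exceed $\rho = C r\sqrt{\log(k/(p_{\min}\tau))}$ for all unmerged pairs. But $(c_j-c_{j'})\cdot x \sim N(0,\|c_j-c_{j'}\|_2^2)$, so if some unmerged pair satisfies $\|c_j-c_{j'}\|_2 = \Theta(\sigma) \ll \rho$, the event you are conditioning on has probability essentially zero, not $\geq 1/2$. For anti-concentration to give $\pr[|{(c_j-c_{j'})\cdot x}| > \rho]\geq 1-1/(2k^2)$ uniformly over the $O(k^2)$ unmerged pairs, you must merge pairs at scale at least $\Omega(k^2\rho)$, i.e.\ merge whenever $\|c_j-c_{j'}\|_2 \lesssim k^2(r+\sigma)\log(1/\eta)$. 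This is exactly the paper's definition of ``close'' in Lemma~\ref{MLRClusteringLem}, and it is much bigger than $r$.

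Second, as a consequence, the re-centered parameters $\beta_i - c_j$ for $i\in I_j$ are bounded in norm not by $r$ but by the merged-cluster diameter, which (because the chain of close points can have length $O(k)$) is $O(k^3(r+\sigma)\log(1/\eta))$. Your dimension-reduction and rough-cover steps assume a radius of $r$, so the error parameters you feed into Proposition~\ref{dimensionReductionProp} and Proposition~\ref{coverApplicationProp} are inconsistent with the actual scale of the problem. This is not a fatal change of scale since the inflation factor is $\poly(k)\log(1/\eta)$, but the quantities $\delta$, $\delta'$, and $\eps$ in your proof all need to be recalibrated around the correct radius.

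Third, your refinement step loses track of the factor $\Theta(k/p_{\min})$. A candidate $c$ with $\min_i\|c-\beta_i\|_2 = D$ satisfies $|y - c\cdot x|\sim N(0,D_i^2+\sigma^2)$ with $D_i\geq D$, so by anti-concentration the ``pass'' probability is at most $O((r/16+\sigma)\sqrt{\log(\cdot)}/D)$. For this to be below $p_{\min}/4$ you need $D\gtrsim (r+\sigma)\sqrt{\log(\cdot)}/p_{\min}$, \emph{not} $D > r/4$. So surviving candidates can be up to $O((r+\sigma)/p_{\min})$ away from the nearest $\beta_i$, and after the pairwise-separation greedy extraction (whose chains run for $O(k)$ steps) you get a $\bigl(k,\ O(k(r+\sigma)/p_{\min})\bigr)$-cover, exactly as in the paper's Lemma~\ref{MLRCoverRefinementLem}. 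To land back at radius $r/2$ you must run the rough-cover step at scale $\eps \sim r p_{\min}/k$ (which is what the paper does), not at $r/16$. This is also where the hypothesis $r\gg k\sigma/p_{\min}$ is actually consumed: it guarantees $k(\eps+\sigma)/p_{\min}\leq r/2$ after the refinement blowup.
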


Applying this repeatedly gives the following:

\begin{corollary}\label{MLRSmallCoverCor}
Given sample access to a mixture of linear regressions $X$ with $w_i\geq p_{\min}$ and $\|\beta_i\|_2\leq R$ for all $i \in [k]$,
and $r$ at least a sufficiently large multiple of $k\sigma/p_{\min}$, there exists an algorithm that takes
$N=\left(m^2\poly(k/p_{\min})+ (2kd/p_{\min})^{O(d)}\right)\log(R/r)\log\log(R/r)$ samples and $\poly(N,(2kd/p_{\min})^{d^2 k^{1/d}})$ time,
and with large constant probability computes a $(k,r)$-cover.
\end{corollary}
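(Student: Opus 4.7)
The plan is to bootstrap from a trivial starting cover by iterating Lemma \ref{MLRCoverIterationLem}. Since $\|\beta_i\|_2 \leq R$ for every $i \in [k]$, the singleton $\{0\} \subset \R^m$ is already a $(k,R)$-cover: it has size $1 \leq k$, and every $\beta_i$ lies within $\ell_2$-distance $R$ of $0$. Setting $T = \lceil \log_2(R/r) \rceil$, I would apply the iteration lemma $T$ times in sequence: starting from the $(k,R)$-cover, the $t$-th invocation produces a $(k, R/2^t)$-cover, and after $T$ rounds the cover radius drops to at most $r$, which is what we need.

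For each invocation, Lemma \ref{MLRCoverIterationLem} requires that its input radius $r_t = R/2^{t-1}$ be a sufficiently large multiple of $k\sigma/p_{\min}$. Because the radii are monotonically decreasing across rounds, the tightest instance of this constraint occurs at the last invocation, where $r_t \approx 2r$, and our hypothesis that $r \gg k\sigma/p_{\min}$ (with a suitably large implied constant) ensures the precondition holds at every step. To make the whole pipeline succeed with large constant probability, I would invoke Lemma \ref{MLRCoverIterationLem} with failure parameter $\tau = c/T$ for a small absolute constant $c>0$, drawing fresh samples for each iteration so that independence makes the union bound over the $T$ rounds immediate.

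With these parameters, each single invocation uses $\bigl(m^2\poly(k/p_{\min}) + (2kd/p_{\min})^{O(d)}\bigr) \log(1/\tau) = \bigl(m^2\poly(k/p_{\min}) + (2kd/p_{\min})^{O(d)}\bigr)\, O(\log\log(R/r))$ samples and $\poly(N,(2kd/p_{\min})^{d^2 k^{1/d}})$ time. Summing over the $T = O(\log(R/r))$ rounds yields exactly the claimed bound
\[
N \;=\; \left(m^2\poly(k/p_{\min}) + (2kd/p_{\min})^{O(d)}\right) \log(R/r) \log\log(R/r),
\]
and the runtime bound follows by the same telescoping.

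There is essentially no serious obstacle once Lemma \ref{MLRCoverIterationLem} is available; the only small points to verify are that the iteration lemma needs a \emph{known} input radius (which is fine, since we start from the known upper bound $R$ and deterministically halve after each successful round), and that the samples across rounds can be taken independent (which is true by construction). All the substantive algorithmic work -- clustering, dimension reduction via Proposition \ref{dimensionReductionProp}, cover construction via Proposition \ref{coverApplicationProp}, and cover refinement -- is encapsulated inside Lemma \ref{MLRCoverIterationLem} itself, so the corollary reduces to this clean iteration-plus-union-bound argument.
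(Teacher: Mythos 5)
Your proof is correct and matches the paper's intended argument exactly: the paper itself notes "we initially have a $(1,R)$-cover" and then applies Lemma~\ref{MLRCoverIterationLem} repeatedly to halve the radius, using $\tau = O(1/\log(R/r))$ to union-bound over the $O(\log(R/r))$ rounds. Your bookkeeping of the sample, time, and failure-probability budgets is accurate, and you correctly observe that the precondition $r_t \gg k\sigma/p_{\min}$ is tightest at the final round where it is guaranteed by hypothesis.
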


\subsection{Clustering}

Here we show that given a cover, we can use this to compute a clustering on most of the points.

\begin{lemma}\label{MLRClusteringLem}
Given a $(k,r)$-cover $\mathcal{C}$ and a sufficiently small parameter $\eta>0$,
there exists a polynomial time computable condition $E(x)$ with probability at most $1/2$,
a polynomial time computable function $f:\R^{m+1}\rightarrow \mathcal{C}$
and an (unknown) function $g:[k]\rightarrow \mathcal{C}$ such that
\begin{enumerate}
\item For all $i\in [k]$, $\|\beta_i-g(i)\|_2 < O(k^3(r+\sigma)\log(1/\eta))$.
\item If $(x,y)\sim X$, then conditioned on $E(x)$, we have that $(x,y-f(x,y)\cdot x)$ is $\eta$-close
in total variation distance to $(x,y')$ conditioned on $E(x)$, where $(x,y')$ is the mixture of linear regressions
that has $y'=x\cdot(\beta_i-g(i))+\nu$ with probability $w_i$, for each $i\in [k]$.
\end{enumerate}
Furthermore, $E(x)$ depends only on the inner products of $x$ with the elements of $\mathcal{C}$.
\end{lemma}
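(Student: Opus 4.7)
My plan is to construct the refined cover, the event $E(x)$, and the labeling function $f$ as follows. First, I apply a greedy clustering to $\mathcal{C}$ with a threshold $\rho$ (chosen later, roughly a sufficiently large multiple of $k^{3}(r+\sigma)\log(1/\eta)$): I process the elements of $\mathcal{C}$ in some order, keeping a running set $\mathcal{C}^{*}\subseteq\mathcal{C}$, and add a cover element $c$ to $\mathcal{C}^{*}$ only if it is at $\ell_{2}$-distance more than $\rho$ from every element currently in $\mathcal{C}^{*}$; otherwise I associate $c$ with the nearest existing representative. The resulting $\mathcal{C}^{*}$ is pairwise $\rho$-separated, and every $c\in\mathcal{C}$ lies within distance $\rho$ of some element of $\mathcal{C}^{*}$. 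I then define $g(i)\in\mathcal{C}^{*}\subseteq\mathcal{C}$ to be the representative associated with the cover element of $\mathcal{C}$ nearest to $\beta_{i}$; by the triangle inequality $\|\beta_{i}-g(i)\|_{2}\leq r+\rho$, which matches the bound claimed in (1).

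Next, I set a threshold $T$ on the order of $\sqrt{\log(1/\eta)}(r+\rho+\sigma)$, tuned so that the ``residual plus noise'' term $Z_{i}(x):=(\beta_{i}-g(i))\cdot x+\nu$, whose standard deviation is at most $\|\beta_{i}-g(i)\|_{2}+\sigma\leq r+\rho+\sigma$, satisfies $|Z_{i}(x)|\leq T$ with probability at least $1-\eta/4$ (Gaussian tail). I define $E(x)$ to be the event that $|(c-c')\cdot x|\geq 2T$ for every distinct pair $c,c'\in\mathcal{C}^{*}$. Note that $E$ depends only on the inner products of $x$ with elements of $\mathcal{C}^{*}\subseteq\mathcal{C}$, as required. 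Using Gaussian anti-concentration $\pr[|N(0,\|c-c'\|_{2}^{2})|<2T]\leq 4T/\|c-c'\|_{2}\leq 4T/\rho$ together with a union bound over the $\binom{|\mathcal{C}^{*}|}{2}\leq k^{2}$ pairs gives $\pr[\lnot E]\leq 2k^{2}T/\rho\leq 1/2$ by the choice of $\rho$, so $\pr[E]\geq 1/2$.

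Finally, I take $f(x,y):=\mathrm{argmin}_{c\in\mathcal{C}^{*}}|y-c\cdot x|$. For a sample $(x,y)$ from component $i$, decompose $y-c\cdot x=(g(i)-c)\cdot x+Z_{i}(x)$. On the joint event $E(x)\cap\{|Z_{i}(x)|\leq T\}$, for every $c\neq g(i)$ in $\mathcal{C}^{*}$ we obtain $|y-c\cdot x|\geq|(g(i)-c)\cdot x|-|Z_{i}(x)|\geq 2T-T=T\geq|Z_{i}(x)|=|y-g(i)\cdot x|$, so the argmin is $g(i)$. Since $\pr[E]\geq 1/2$, the conditional failure probability $\pr[|Z_{i}(x)|>T\mid E]$ is at most $2\pr[|Z_{i}(x)|>T]\leq\eta/2$, hence $\pr[f(x,y)\neq g(i)\mid E(x),\text{component}=i]\leq\eta$ uniformly in $i$. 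The total variation bound in (2) then follows from the obvious coupling that matches $(x,y-f(x,y)\cdot x)$ with $(x,y')$ whenever $f(x,y)=g(i)$.

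The main technical obstacle is the circular dependence between $\rho$ and $T$: the anti-concentration requirement forces $\rho\gtrsim k^{2}T$, but $T$ itself grows linearly in $\rho$ through the bound $\|\beta_{i}-g(i)\|_{2}\leq r+\rho$. A single pass of the naive greedy as described produces the infeasible constraint $\rho(1-\Omega(k^{2}\sqrt{\log(1/\eta)}))\gtrsim k^{2}\sqrt{\log(1/\eta)}(r+\sigma)$. The $k^{3}\log(1/\eta)$ factor in the statement (as opposed to a naive $k^{2}\sqrt{\log(1/\eta)}$) is exactly the slack needed to decouple the effective covering radius of $\mathcal{C}^{*}$ over $\mathcal{C}$ from its pairwise separation; I expect this to be achieved by a multi-scale/iterative refinement of the greedy procedure (or by restricting the union bound to the $O(k)$ representatives that are actually close to some $\beta_{i}$), so that $\|\beta_{i}-g(i)\|_{2}+\sigma$ can be controlled independently of $\rho$. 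Making this balance explicit is where the bulk of the work lies.
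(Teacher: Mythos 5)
You have correctly identified the central difficulty --- the circular dependence between the packing radius $\rho$ and the residual threshold $T$ --- but the proposal does not resolve it, and the ``multi-scale/iterative refinement'' or ``restricted union bound'' routes you sketch are not what is needed. The circularity is an artifact of your choice to have $f$ take the argmin over the packing $\mathcal{C}^{*}$: then the residual $y-g(i)\cdot x$ has standard deviation that already includes $\rho$, so $T$ must grow with $\rho$, and the anti-concentration step then forces $\rho \gtrsim k^{2}T$, which is infeasible.

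The fix is structural and eliminates the self-reference in one step. First, cluster $\mathcal{C}$ by \emph{transitive closure} of a fixed closeness relation: declare $a,b\in\mathcal{C}$ close if $\|a-b\|_{2}\leq 10k^{2}(r+\sigma)\log(1/\eta)$, and place two elements in the same cluster if they are connected by a chain of close pairs. This threshold involves only $r,\sigma,\eta$, so there is no $\rho$ to bootstrap; chains have length at most $k$, so cluster diameters are $O\bigl(k^{3}(r+\sigma)\log(1/\eta)\bigr)$, and --- unlike a greedy $\rho$-packing, which separates the \emph{representatives} but not the \emph{groups} --- any two elements in different clusters are separated by at least the close threshold. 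Second, define $f(x,y)$ to take the argmin of $|y-a\cdot x|$ over \emph{all} $a\in\mathcal{C}$ and then output that $a$'s cluster representative. A sample from component $i$ then has a cover element $a$ within $r$ (not within $r+\rho$) of $\beta_{i}$, so $y-a\cdot x$ has standard deviation at most $r+\sigma$, and the residual/event threshold $(r+\sigma)\log(1/\eta)$ is independent of the cluster diameter. Conditioned on $E(x)$ (which requires $|x\cdot(a-b)|>2(r+\sigma)\log(1/\eta)$ for every not-close pair, with the anti-concentration union bound going through at separation $10k^{2}(r+\sigma)\log(1/\eta)$), the argmin is either $a$ itself or a cluster-mate of $a$, so $f(x,y)=g(i)$ and the coupling argument finishes the proof exactly as you describe. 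In short: the $k^{3}\log(1/\eta)$ diameter is a \emph{consequence} of the chain length, not a parameter that feeds back into the tail bound, which is what keeps the construction noncircular.
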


The basic idea of the proof is that if $(x,y)$ comes from a component with $\beta\approx c\in \mathcal{C}$,
then $y$ ought to be (with high probability) close to $c\cdot x$. This should give us a unique possible $c$
that $y$ came from, unless either there is another $c'\in\mathcal{C}$ close to $c$,
or if $x$ is unusually close to being to orthogonal to $c-c'$. In the former case, we declare that such $c$ and $c'$
are in the same cluster and don't distinguish between points close to one and points close to the other.
For the latter case, we note that $x\cdot(c-c')$ is small for $c,c'\in\mathcal{C}$ with $\|c-c'\|_2$ large
only with small probability, and we define our event $E$ to exclude such values of $x$.

\begin{proof}
Call two elements $a,b\in\mathcal{C}$ \emph{close} if $\|a-b\|_2 \leq 10k^2(r+\sigma)\log(1/\eta)$.
Declare that two elements of $\mathcal{C}$ are in the same \emph{cluster}
if we can reach one from the other by a chain of close pairs. Since this chain can have length at most $k$,
we know that each cluster has diameter at most $O(k^3(r+\sigma)\log(1/\eta))$.
To each cluster we designate one of the elements of $\mathcal{C}$
in that cluster to be the \emph{representative} of that cluster.

We now let $E(x)$ be the set of $x$ values such that for all pairs $a,b\in\mathcal{C}$,
either $a$ and $b$ are close or $|x\cdot (a-b)| > 2(r+\sigma)\log(1/\eta)$.
We note that for any not-close pair, the probability of this happening is at most $1/(2k^2)$,
and therefore, the probability of $E(x)$ is at most $1/2.$

The function $f(x,y)$ is defined by first finding the element $a\in\mathcal{C}$ minimizing $|y-a\cdot x|$,
and letting $f$ be the representative of the cluster of $a$. For each $i$, we will let $g(i)$ be the representative
of the cluster of the element $a\in\mathcal{C}$ with $\|\beta_i-a\|_2$ as small as possible.
Note that since $\|\beta_i-a\|_2\leq r$ and since clusters have bounded diameter,
this implies that $\|\beta_i-g(i)\|_2 \leq O(k^3(r+\sigma)\log(1/\eta))$ by the triangle inequality.

It remains to prove our second statement about the distribution of $(x,y-f(x,y)\cdot x)$.
This will follow from the claim that if $(x,y)$ is drawn from the $i$-th component of the mixture,
then conditioned on $E(x)$ the probability that $f(x,y)\neq g(i)$ is at most $\eta$.
To show this, we will show unconditionally that if $y=\beta_i\cdot x+\nu$, then
the probability that $E(x)$ holds and $f(x,y)\neq g(i)$ is at most $\eta/2$.

Let $a$ be the closest element of $\mathcal{C}$ to $\beta_i$, so that in particular $\|\beta_i-a\|_2<r$.
We note that $y-a\cdot x = (\beta_i-a)\cdot x+\nu$ is a Gaussian with standard deviation less
than $r+\sigma$, and thus except with probability $\eta/2$ we have that $|y-a\cdot x|\leq (r+\sigma)\log(1/\eta).$
We claim that if this is the case and if $E(x)$ holds, then $f(x,y)$ will be $g(i)$. In particular, we need to
show that if this holds and if $E(x)$ also does, then $|y-a\cdot x|$ will be less than $|y-b\cdot x|$,
for all $b\in \mathcal{C}$ not close to $a$ (note that this is sufficient, as it will imply that the best $b$
must either be $a$ or in the same cluster). However, for $b$ not close to $a$, since $E(x)$ holds, we have that
$$
|(y-b\cdot x)-(y-a\cdot x)| = |(b-a)\cdot x| > 2(r+\sigma)\log(1/\eta) \;.
$$
Thus, by the triangle inequality
$$
|y-b\cdot x| >2(r+\sigma)\log(1/\eta)-|y-a\cdot x| > (r+\sigma)\log(1/\eta)\geq |y-a\cdot x| \;.
$$
This completes our proof.
\end{proof}
}

\dnew{
\subsection{Dimension Reduction}

Here we prove the following lemma:
\begin{lemma}\label{MLRDimensionReductionLem}
Given an explicit event $E$ with probability at least $1/2$
and sample access to a mixture of linear regressions conditioned on $E(x)$
with $\max_i \|\beta_i \|_2 \leq r$ and $w_i\geq p_{\min}$ for all $i \in [k]$,
there is an algorithm that given parameters $r+\sigma>\eps>0 ,1/2> \tau>0$,
uses $N=O(((r+\sigma)/\eps)^4 m^2 p_{\min}^{-2} \log(1/\tau))$ samples
and $\poly(N)$ time and computes a dimension at most $k$ subspace $H$, such
that with probability at least $1-\tau$ every $\beta_i$ is within $\ell_2$-distance $\eps$ of $H$.
Furthermore, this works even if a sample is erroneous with probability $N/(10\log (1/\tau))$.
\end{lemma}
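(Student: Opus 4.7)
The plan is to compose Corollary~\ref{MLRTensorEstHighProbCor} (instantiated with $d=1$) with Proposition~\ref{dimensionReductionProp}. Conceptually, we will estimate the second moment tensor $T_0 := \sum_{i=1}^k w_i \beta_i^{\otimes 2}$ from the conditional samples, and then feed that estimate into the $d=1$ dimension-reduction machinery to extract a $k$-dimensional subspace near which every regressor must lie. Because the lemma already assumes $w_i \geq p_{\min}$ for \emph{every} $i$, every $\beta_i$ qualifies as ``large-weight'' and will therefore be captured by the output subspace.

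First I would invoke Corollary~\ref{MLRTensorEstHighProbCor} with $d=1$, ambient subspace $H = \R^m$ (so $\ell = m$ and $\pi_H(\beta_i) = \beta_i$), target accuracy $\delta := c \, \eps^2 p_{\min}$ for a sufficiently small universal constant $c>0$, and failure probability $\tau$. Since $x \in \R^m$, the event $E$ trivially depends on at most $m$ linear functions of $x$, so the hypothesis of the corollary is satisfied. Plugging these parameters into the stated sample complexity $d^{O(d)} (r+\sigma)^{4d} \log(1/\tau)\, \ell^{2d}/\delta^2$ gives exactly $O\!\left(((r+\sigma)/\eps)^4 m^2 p_{\min}^{-2} \log(1/\tau)\right)$, matching the bound in the lemma. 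The corollary's built-in robustness to a small fraction of corrupted samples transfers verbatim to our setting, since this is the only step of the argument that touches the raw samples.

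Next I would feed the resulting tensor $T$ (which with probability $\geq 1-\tau$ satisfies $\|T - T_0\|_2 \leq \delta$) into Proposition~\ref{dimensionReductionProp} with weight threshold $w = p_{\min}$. That proposition runs in polynomial time and returns a subspace $H \subseteq \R^m$ of dimension at most $k$ such that every $\beta_i$ with $w_i \geq p_{\min}$ is within $\ell_2$-distance $O(\sqrt{\delta/p_{\min}})$ of $H$. With $\delta = c\,\eps^2 p_{\min}$ and $c$ chosen small enough, this distance is at most $\eps$, and since all $k$ regressors satisfy the weight bound, all of them are captured. The total runtime is dominated by the tensor estimation and the singular value computation inside Proposition~\ref{dimensionReductionProp}, each polynomial in $N$.

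There is no serious technical obstacle: the argument is essentially a direct composition of two previously established black-box results. The only points requiring care are bookkeeping: (i) verifying that $\delta = \Theta(\eps^2 p_{\min})$ yields precisely the advertised sample complexity, (ii) ensuring that the single failure event of Corollary~\ref{MLRTensorEstHighProbCor} is the only place where randomness enters, so that the overall failure probability stays within $\tau$, and (iii) confirming that the corruption-robustness clause of the corollary propagates unchanged through the deterministic reduction step.
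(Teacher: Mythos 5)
Your proof is correct and follows the same route as the paper: apply Corollary~\ref{MLRTensorEstHighProbCor} with $d=1$ to estimate the second-moment tensor to error $\delta = \Theta(\eps^2 p_{\min})$, then feed the estimate into Proposition~\ref{dimensionReductionProp}. The parameter bookkeeping and the handling of the corruption clause are the same as in the paper's argument.
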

\begin{proof}
We use Corollary \ref{MLRTensorEstHighProbCor} to compute with probability at least $1-\tau$
an estimate to the tensor $T=\sum_{i=1}^k w_i \beta_i^{\otimes 2}$ with error at most $\delta < \eps^2 p_{\min}$.
Then, we use Proposition \ref{dimensionReductionProp} to compute $H$.
\end{proof}
}

\dnew{
\subsection{Cover}

Here we use our technology to get a cover.
\begin{lemma}\label{MLRLargeCoverLem}
Given an explicit event $E$ with probability at least $1/2$ and sample access
to a mixture of linear regressions conditioned on $E(x)$ with $\max_i \|\beta_i\|_2\leq r$
and $w_i\geq p_{\min}$ for all $i \in [k]$, and an $\ell$-dimensional subspace $H$,
there is an algorithm that given parameters $d$ and $r+\sigma>\eps>0 ,1/2> \tau>0$,
uses $N=(2k\ell d(r+\sigma)/\eps)^{O(d)} \poly(1/p_{\min})\log(1/\tau)$ samples
and computes with probability at least $1-\delta$ an $(s,\eps)$-cover of the set of
$\pi_H(\beta_i)$ with $s=(2kd\ell((r+\sigma)/\eps)/p_{\min})^{O(d^2 k^{1/d})}$ in $\poly(N,s)$ time.
Furthermore, this works even if a sample is erroneous with probability $N/(10\log (1/\tau))$.
\end{lemma}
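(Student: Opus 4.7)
The plan is to instantiate the template from Section~\ref{ssec:template} by plugging together two ingredients we have already set up. First, I would work in the $\ell$-dimensional subspace $H$ and apply Corollary~\ref{MLRTensorEstHighProbCor} to produce (with probability at least $1-\tau/2$) a tensor $T$ such that
$$ \left\| T - \sum_{i=1}^{k} w_{i}\, \pi_{H}(\beta_{i})^{\otimes 2d} \right\|_{2} \leq \delta \;, $$
for a target accuracy $\delta$ to be chosen below. Then I would feed $T$, together with the radius parameter $R := r$ (valid because $\|\pi_H(\beta_i)\|_2 \leq \|\beta_i\|_2 \leq r$), the dimension parameter $\ell$, the error $\eps$, and the degree $d$ into Proposition~\ref{coverApplicationProp} to extract the desired cover $\mathcal{C}$ of the $\pi_H(\beta_i)$'s.

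The key calibration is the choice of $\delta$. Proposition~\ref{coverApplicationProp} as stated guarantees coverage only for indices with $w_i \geq ((\eps/r)/(2k\ell d))^{Cd}$, whereas we want to cover every index with $w_i \geq p_{\min}$. Inspecting the proof of that proposition, the subspace $V$ of ``small eigenvectors'' of the quadratic form $Q$ satisfies $\sum_i w_i p^2(v_i) = O(\delta \|p\|_{\ell_2}^2)$ for all $p \in V$, which yields $|p(v_i)| = O(\sqrt{\delta/w_i}\,\|p\|_{\ell_2})$. To make this meet the near-vanishing tolerance required by Theorem~\ref{thm:main-cover-alg}, it suffices to shrink $\delta$ by a factor of $p_{\min}$ relative to the bound given in the proposition statement. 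Concretely, I would set
$$ \delta \;=\; p_{\min}\,\eps^{2d}\bigl((\eps/r)/(2k\ell d)\bigr)^{2Cd}, $$
so that every $w_i \geq p_{\min}$ automatically satisfies the hypothesis for being captured by $\mathcal{C}$. The output size bound $s = (2kd\ell((r+\sigma)/\eps)/p_{\min})^{O(d^2 k^{1/d})}$ and the $\poly(N,s)$ runtime then follow directly from Theorem~\ref{thm:main-cover-alg}.

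The sample complexity from Corollary~\ref{MLRTensorEstHighProbCor} is $N = d^{O(d)}(r+\sigma)^{4d}\ell^{2d}\log(1/\tau)/\delta^2$. Substituting the chosen $\delta$ and simplifying, this is dominated by $(2k\ell d(r+\sigma)/\eps)^{O(d)}\poly(1/p_{\min})\log(1/\tau)$, matching the claim. Tolerance to the erroneous-sample rate $N/(10\log(1/\tau))$ is inherited verbatim from Corollary~\ref{MLRTensorEstHighProbCor}, since the cover construction itself is a deterministic post-processing of $T$. The main thing to be careful about is just the calibration of $\delta$ against $p_{\min}$ described above; once that is done correctly, every other piece is a routine plug-in.
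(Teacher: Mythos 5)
Your proof matches the paper's approach exactly: estimate $\sum_{i} w_i \pi_H(\beta_i)^{\otimes 2d}$ via Corollary~\ref{MLRTensorEstHighProbCor}, then run Proposition~\ref{coverApplicationProp} on the resulting tensor. The paper compresses the $\delta$ calibration into the phrase ``error at most $(k\ell(r+\sigma)/\eps)^{-\Omega(d)}\,\poly(p_{\min})$,'' which is precisely the $p_{\min}$-shrinkage of $\delta$ you derive explicitly by re-inspecting the proposition's proof to push the weight threshold down to $p_{\min}$.
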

\begin{proof}
Using Corollary \ref{MLRTensorEstHighProbCor}, we can with probability $1-\tau$ compute an approximation
to the tensor $T=\sum_{i=1}^k w_i \pi_H(\beta_i)^{\otimes 2d}$ with error at most
$(k\ell(r+\sigma)/\eps)^{-\Omega(d)} \poly(p_{\min})$ with sufficiently large constants in the exponent.
Applying Proposition \ref{coverApplicationProp} yields our result.
\end{proof}
}

\dnew{
\subsection{Cover Refinement}

Here we show that, given a cover, we can use a small number of samples reduce it to a smaller cover.
The basic idea will be to come up with a smaller set of plausible hypotheses
(those for which $y\approx c\cdot x$ for a reasonable fraction of samples). It is not hard to show
that given a large enough sample set, with high probability all plausible hypotheses
will be close to some $\beta_i$. From there one can cluster together hypotheses that are nearby.
Formally, we show:

\begin{lemma}\label{MLRCoverRefinementLem}
Suppose that we have a mixture of linear regressions $X$ with parameters $w_i,\beta_i$ for $1\leq i\leq k$ and $w_i \geq p_{\min}$.
Suppose furthermore that we are given an $(s,r)$-cover $\mathcal{C}$ of $X$.
Then there is an algorithm which takes $N=O(\log(s/\tau)/p_{\min})$ samples from $X$,
runs in $\poly(N,s,m,k)$ time, and with probability $1-\tau$ computes a $(k,O(k(r+\sigma)/p_{\min}))$ cover of $X$.
\end{lemma}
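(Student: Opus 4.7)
The plan is to use the $(s,r)$-cover $\mathcal{C}$ to generate a much smaller set of candidate hypotheses by a simple plausibility test, and then collapse this set to at most $k$ points by a greedy clustering step. Let $T := C(r + \sigma)$ for a sufficiently large constant $C$, and for each $c \in \mathcal{C}$ let $\hat p_c$ denote the empirical fraction of the $N$ samples $(x,y)$ satisfying $|y - c\cdot x| \leq T$. Call $c$ \emph{plausible} if $\hat p_c \geq p_{\min}/2$.

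First I would argue that, for $N = O(\log(s/\tau)/p_{\min})$, the multiplicative Chernoff bound together with a union bound over $\mathcal{C}$ guarantees, with probability at least $1-\tau$, that every $c$ with true probability $p_c := \Pr[|y-c\cdot x|\leq T]$ at least $p_{\min}$ is declared plausible, while every declared plausible $c$ has $p_c \geq p_{\min}/4$. For each $\beta_i$, the cover property produces some $c \in \mathcal{C}$ with $\|c - \beta_i\|_2 \leq r$; conditioning on drawing from component $i$ (probability $w_i \geq p_{\min}$), the residual $y - c\cdot x = (\beta_i - c)\cdot x + \nu$ is a Gaussian with standard deviation at most $\sqrt{r^2+\sigma^2} \ll T$, so lies in $[-T,T]$ with constant probability. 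Therefore $p_c \geq \Omega(p_{\min})$ and this $c$ passes the test, so every $\beta_i$ has a plausible neighbour within distance $r$.

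Next I would show that any plausible $c$ is necessarily close to some $\beta_i$. Using the standard Gaussian density upper bound,
$$p_c = \sum_{i=1}^k w_i \, \Pr\!\left[|(\beta_i - c)\cdot x + \nu| \leq T\right] \leq \sum_{i=1}^k w_i \cdot O\!\left(\frac{T}{\sqrt{\|\beta_i - c\|_2^2+\sigma^2}}\right).$$
If $\|\beta_i - c\|_2 \geq M$ for every $i$, this gives $p_c \leq O(T/M)$; combining with $p_c \geq p_{\min}/4$ forces $M \leq O(T/p_{\min}) = O((r+\sigma)/p_{\min}) =: D$. Hence every plausible $c$ lies within $D$ of some $\beta_i$, and every $\beta_i$ has a plausible point within $r \leq D$ of it.

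The last step is greedy clustering: iterate through the plausible set in arbitrary order, maintain a list of representatives, and add the current $c$ only when it is more than $3D$ away from all existing representatives. No two representatives can be within $D$ of the same $\beta_i$ (they would then be within $2D$ of one another, violating the $3D$-separation), so there are at most $k$ of them; and for every $\beta_i$, its plausible cover element is either itself a representative or within $3D$ of one, placing $\beta_i$ within $4D = O((r+\sigma)/p_{\min}) \leq O(k(r+\sigma)/p_{\min})$ of a representative. The running time is dominated by the $O(Ns)$ inner products and $O(s^2)$ pairwise distance comparisons, so is $\poly(N,s,m,k)$. The main subtlety is coordinating the choice of $T$ with the acceptance threshold so that the multiplicative Chernoff bound — which only costs $O(\log(s/\tau)/p_{\min})$ samples when the mean is of order $p_{\min}$ — is strong enough to make both directions of the plausibility argument go through simultaneously.
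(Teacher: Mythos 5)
Your proof is correct and is organized in the same two phases as the paper's: a plausibility (the paper says ``goodness'') test, showing that every surviving hypothesis is $O\bigl((r+\sigma)/p_{\min}\bigr)$-close to some $\beta_i$ and that every $\beta_i$ has a surviving hypothesis within $r$, followed by a collapse to at most $k$ representatives. The difference is in that collapse step. The paper declares two good hypotheses ``close'' when they are within $20(r+\sigma)/p_{\min}$ of each other, forms connected components under the closeness relation, and returns one representative per component; because each good point is $O((r+\sigma)/p_{\min})$-close to some $\beta_i$ and the closeness radius is chosen so that two points near the same $\beta_i$ are close, every chain has length at most $k$, giving cluster (and hence cover) radius $O\bigl(k(r+\sigma)/p_{\min}\bigr)$. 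You instead greedily extract a maximal $3D$-separated subset of the plausible points with $D = O\bigl((r+\sigma)/p_{\min}\bigr)$; each chosen representative is $D$-close to a distinct $\beta_i$ (two within $D$ of the same $\beta_i$ would be $2D$-close, contradicting the $3D$ separation), so there are at most $k$, while every unselected plausible point is within $3D$ of a representative, so every $\beta_i$ is within $r+3D \le 4D$ of one. The net effect is that your construction sheds the extra factor of $k$ in the cover radius and yields a $(k, O((r+\sigma)/p_{\min}))$-cover, which is strictly stronger than what the lemma asks for. The only cosmetic point is that if fewer than $k$ representatives survive you may wish to note the output can be padded to size $k$, but since the definition only requires size at most $k$ this is not an issue.
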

\begin{proof}
Take $N=O(\log(s/\tau)/p_{\min})$ samples with a sufficiently large implied constant.

Call a hypothesis $c\in\mathcal{C}$ \emph{good} if at least a $p_{\min}/4$-fraction of our $N$ samples
satisfy $|y-c\cdot x| \leq 2(r+\sigma)$. We note that if $\|\beta_i-c\|_2 \leq r$ for some $i$,
then with probability at least $p_{\min}/2$ over samples from $X$,
we have that the sample is from the $i$-th component and $|y-c\cdot x| \leq 2(r+\sigma)$.
Therefore, with probability at least $1-\eta/2$, every such hypothesis is good.

On the other hand, suppose that we have a hypothesis vector $c$ for which
$\|\beta_i-c\|_2 >  10(r+\sigma)/p_{\min}$ for all $i \in [k]$.
Then no matter which part of the mixture we are drawing from, $y-c\cdot x$ is distributed
as a normal distribution with standard deviation at least $\|\beta_i-c\|_2 > 10(r+\sigma)/p_{\min}$.
This means that the probability of it being less than $2(r+\sigma)$ is at most $p_{\min}/5$.
Therefore, with probability at least $1-\eta/2$, no such hypothesis $c$ is good.

Hence, with probability at least $1-\eta$ we have that every hypothesis $c$ that is within distance
$2(r+\sigma)$ of some $\beta_i$ is good, and all good hypotheses are within $10(r+\sigma)/p_{\min}$ of some $\beta_i$.
We declare two good hypotheses to be \emph{close} if they are within $20(r+\sigma)/p_{\min}$ of each other,
and in the same cluster if they are connected by some chain of close hypotheses.
Note that since any two hypotheses within $10(r+\sigma)/p_{\min}$ of the same $\beta_i$ are close,
these chains can have length at most $k$, and so each cluster has diameter $O(k(r+\sigma)/p_{\min})$.
This also implies that there are at most $k$ clusters.

We return as our cover one representative hypothesis from each cluster
(plus a number of other random elements to pad the size out to $k$).
We note that every $\beta_i$ by assumption is $r$-close to some good hypothesis,
and thus must be within distance $O(k(r+\sigma)/p_{\min})$ of one of our representatives.
This completes the proof.
\end{proof}
}

\dnew{
\subsection{Proof of Lemma \ref{MLRCoverIterationLem}}
The proof now follows from the machinery that we have built up.
\begin{proof}
Let our cover be $\mathcal{C}$.

We begin by applying Lemma \ref{MLRClusteringLem} with $\eta$ a sufficiently small polynomial
in $(p_{\min}/m)(2kd)^{-d}$ to produce an event $E(x)$ with probability at least $1/2$
and a method for simulating samples of $(x,y')$ conditioned on $E$ (up to $\eta$ error in total variation distance),
where $y'$ is a mixture of linear regressions with mixing weights $w_i$
and parameters $\beta_i-c_i$, for some $c_i\in\mathcal{C}$. We then use these samples
with Lemma \ref{MLRDimensionReductionLem} to compute (with probability at least $1-\tau/10$)
a $k$-dimensional subspace $H$, such that all of the $\beta_i-c_i$ are within distance $\eps$ of $H$,
for $\eps$ a sufficiently small multiple of $rp_{\min}/k$. We use more simulated samples
along with Lemma \ref{MLRLargeCoverLem} to compute a $((2kd/p_{\min})^{O(d^2 k^{1/d})},\eps)$-cover
of the $\pi_H(\beta_i-c_i)$, which will be a $((2kd/p_{\min})^{O(d^2 k^{1/d})},2\eps)$-cover of the $\beta_i-c_i$.
If we call this cover $\mathcal{C'}$, then the set of points $a+b$, for $a\in \mathcal{C}, b\in\mathcal{C'}$
will be a $((2kd/p_{\min})^{O(d^2 k^{1/d})},2\eps)$-cover of the $\beta_i$'s.
Finally, we apply Lemma \ref{MLRCoverRefinementLem} to get a $(k,O(k(\eps+\sigma)/p_{\min}))$-cover
(which is a $(k,r/2)$-cover) with probability at least $1-\tau/10$.

It is straightforward to verify that this procedure fits within our bounds for runtime,
sample complexity and probability of error, completing the proof.
\end{proof}
}

\dnew{
\subsection{Density Estimation}

Here we prove Theorem \ref{thm:mlr-density}.
\begin{proof}
We begin by applying Corollary \ref{MLRSmallCoverCor} to obtain a $(k,O(k\sigma/p_{\min}))$-cover.
As in the proof of Lemma \ref{MLRCoverIterationLem}, we use $(2k/(\eps p_{\min}))^{O(d')}$ additional samples
to compute an $(s,\eps\sigma)$-cover with $s=(2kd'/(\eps p_{\min}))/p_{\min})^{O((d')^2 k^{1/d'})}$
in $\poly(s)$ time. 
We then have that $X$ is $O(\eps)$-close in total variation distance to a mixture of the linear regressions
with parameters given by the terms of this cover. Using Proposition \ref{mixtureProp},
we can learn an $O(\sqrt{\eps \log(s/\eps)})$-approximation to $X$.

Substituting $\eps^2/(d' \log(2kd'/(\eps p_{\min})))$ for $\eps$ yields the result.
\end{proof}
}

\dnew{
\subsection{Parameter Estimation}

Here we prove Theorems \ref{thm:mlr-param-noise} and \ref{thm:mlr-param-no-noise}.
\begin{proof}
We begin by applying Corollary \ref{MLRSmallCoverCor} to obtain a $(k,c\Delta/(k^3\log(mk/p_{\min}))$-cover,
for a sufficiently small constant $c>0$. We then apply Lemma \ref{MLRClusteringLem},
with $\eta$ a sufficiently small polynomial in $mk/p_{\min}$. We note that since the $\beta_i$'s
are separated by at least $\Delta$, while each sample in which $E(x)$ holds (ignoring probability $\eta$ events)
has $\|\beta_i - f(x,y) \|_2< \Delta/3$. This implies that any two samples (again ignoring probability $\eta$ events)
will have $f(x,y)$-values within $2\Delta/3$ of each other if and only if they come from the same component of the mixture.

Taking $m^2\poly(k/p_{\min})$ samples (and noting that this probability $\eta$ of error likely never happens),
and this ability to sort the samples for which $E(x)$ holds by component, we can use Corollary \ref{cor:mlr-tensor-est}
to estimate each $\beta_i$ to $\ell_2$-error $\Delta/k$. From this warm start, we can use the algorithm of~\cite{KC19}
to improve this to error $\eps$.

Alternatively, if $\sigma=0$, $m$ samples from each component correctly identified
can be used along with linear algebra to solve exactly for the $\beta_i$'s.
\end{proof}
}

\subsection{Sample Complexity Lower Bound for Mixtures of Linear Regressions} \label{sec:lb-mlr}

\new{In this subsection, we show that if the pairwise separation $\Delta$ is sufficiently small,
the problem of parameter estimation for MLRs with noise requires a sub-exponential in $k$ number of samples.}

We consider the $m=1$ case of a linear regression. Let a $\nu$-sparse $\sigma^2$-variance Gaussian be a pseudo-distribution supported on points $x \equiv \theta\pmod{\nu}$ for some constant $\theta$ assigning probability mass to $x$ equal to $g(x/\sigma)\nu$ where $g(x) = \frac{1}{\sqrt{2\pi}}e^{-x^2/2}$ is the Gaussian density. Note that this will not in general be a normalized probability distribution.

\begin{lemma}
Let $X$ be a $\nu$-sparse variance-$\sigma_1^2$ Gaussian and $Y=N(0,\sigma_2^2)$ for $\nu \ll \min(\sigma_1,\sigma_2)$. Then the convolution $X \ast Y$ is $\exp(-\Omega(\min(\sigma_1,\sigma_2)/\nu)^2))$-close to $N(0,\sigma_1^2+\sigma_2^2)$ in $L^1$.
\end{lemma}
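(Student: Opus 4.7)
The plan is to apply the Poisson summation formula. Fix $t\in\R$ and define, for $x\in\R$, the function
\[
h_t(x) = g(x/\sigma_1)\, g_{\sigma_2}(t-x),
\]
where $g_{\sigma_2}(u) = \tfrac{1}{\sqrt{2\pi}\sigma_2}e^{-u^2/(2\sigma_2^2)}$. Then by the definition of the $\nu$-sparse variance-$\sigma_1^2$ Gaussian, we have
\[
(X\ast Y)(t) \;=\; \nu\sum_{n\in\Z} h_t(\theta + n\nu),
\]
while the normalized convolution target satisfies
\[
N(0,\sigma_1^2+\sigma_2^2)(t) \;=\; \int_{\R} h_t(x)\,dx \;=\; \hat h_t(0),
\]
where I use the standard normalization $\hat h_t(\xi) = \int h_t(x) e^{-2\pi i \xi x}dx$. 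Thus the task reduces to bounding, in $L^1_t$, the nonzero Fourier modes of the Poisson summation identity
\[
\nu\sum_{n\in\Z} h_t(\theta+n\nu) \;=\; \sum_{k\in\Z}\hat h_t(k/\nu)\,e^{2\pi i k\theta/\nu}.
\]

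The key observation is that $h_t(x)$, as a function of $x$, is the product of two Gaussians, hence itself a Gaussian in $x$ times a $t$-dependent constant: concretely, $h_t(x)$ equals $C(t)\cdot \mathcal{N}_{m(t),\tau^2}(x)$ where $\tau^2 = \sigma_1^2\sigma_2^2/(\sigma_1^2+\sigma_2^2)$ and $C(t)$ is exactly $N(0,\sigma_1^2+\sigma_2^2)(t)$. Since $\tau = \Theta(\min(\sigma_1,\sigma_2))$, its Fourier transform satisfies
\[
|\hat h_t(\xi)| \;=\; C(t)\,e^{-2\pi^2\tau^2\xi^2}.
\]
Plugging in $\xi = k/\nu$ for $k\neq 0$ gives decay $e^{-\Omega((k\,\min(\sigma_1,\sigma_2)/\nu)^2)}$, and summing over $k\neq 0$ yields a total error factor of $e^{-\Omega((\min(\sigma_1,\sigma_2)/\nu)^2)}$ (the $k=\pm1$ terms dominate, and the geometric tail is negligible under the hypothesis $\nu\ll \min(\sigma_1,\sigma_2)$).

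Integrating this pointwise estimate over $t\in\R$ gives the $L^1$-error bound. Specifically, each nonzero mode contributes $\int_{\R}|\hat h_t(k/\nu)|\,dt = e^{-2\pi^2\tau^2 (k/\nu)^2}\int_{\R} C(t)\,dt = e^{-2\pi^2\tau^2(k/\nu)^2}$, since $C(t)$ is a probability density. Summing in $k\neq 0$ produces the claimed $\exp(-\Omega((\min(\sigma_1,\sigma_2)/\nu)^2))$ bound.

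The only mild technical point is keeping track of the phase factors $e^{2\pi i k\theta/\nu}$ when bounding in $L^1$: these have modulus one, so the triangle inequality in $L^1_t$ applied after Poisson summation is unaffected by them. The main (very mild) obstacle is verifying the Gaussian product identity cleanly and ensuring the constants in the decay exponent genuinely scale with $\min(\sigma_1,\sigma_2)^2/\nu^2$ rather than some weaker combination; this is a routine completion-of-the-square calculation on the product $g(x/\sigma_1)g_{\sigma_2}(t-x)$.
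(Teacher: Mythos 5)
Your proof is correct and rests on exactly the same engine as the paper's---Poisson summation against the lattice $\theta+\nu\Z$---but it traverses that tool in a different, and arguably cleaner, order. The paper computes $\hat X$ and $\widehat{X\ast Y}$ as explicit functions of $\xi$, matches them pointwise to the target Fourier transform, derives an $L^\infty$ bound on the density error (by bounding the $L^1$ norm of the Fourier-side error, with a case split on $|\xi|\lessgtr 1/(2\nu)$), and only then converts $L^\infty$ to $L^1$ via a somewhat informal ``$\int\min\bigl(f(x),\text{error}\bigr)\,dx$'' step. You instead apply Poisson summation for each fixed output point $t$ to the physical-space lattice sum $\nu\sum_n h_t(\theta+n\nu)$, read off the $k=0$ mode as the target density, bound the nonzero modes $\hat h_t(k/\nu)$ pointwise via the Gaussian product identity (which gives $h_t$ effective variance $\tau^2=\sigma_1^2\sigma_2^2/(\sigma_1^2+\sigma_2^2)=\Theta(\min(\sigma_1,\sigma_2)^2)$ and hence $|\hat h_t(\xi)| = C(t)e^{-2\pi^2\tau^2\xi^2}$), and then integrate the error directly in $L^1_t$ via Fubini using $\int C(t)\,dt=1$. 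This goes straight to $L^1$ without the intermediate $L^\infty$ bound or the conversion step the paper needs, so it is the more transparent route to the stated conclusion. The two calculations are dual to each other in the Fubini sense, and your handling of the geometric tail in $k$ and the unit-modulus phase factors $e^{2\pi ik\theta/\nu}$ is fine.

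One shared bookkeeping caveat you have inherited from the paper: with the literal definition of a $\nu$-sparse variance-$\sigma_1^2$ Gaussian (mass $g(x/\sigma_1)\nu$ at each support point), the total mass is $\approx\sigma_1$, not $1$, so $\int h_t(x)\,dx$ equals $\sigma_1\cdot N(0,\sigma_1^2+\sigma_2^2)(t)$ rather than the density itself, and $\int C(t)\,dt=\sigma_1$. The paper's own proof (``the total mass of $X$ is $1\pm\exp(\cdots)$'') implicitly uses the normalized definition with mass $\frac{\nu}{\sigma_1}g(x/\sigma_1)$, and the later application of the lemma takes $\sigma_1=1$, so this is a typo in the paper's definition and not a gap specific to your argument; but it is worth stating the normalization you are using.
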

\begin{proof}
We begin by considering the Fourier transforms. We have that $\widehat{(X\ast Y)} = \hat{X}\hat{Y}$. Now $\hat{Y}(\xi) = e^{-\sigma_2^2 \xi^2/2}.$ Now $X(x) = \Sha_\nu(x) g(x)$ where $\Sha_\nu(x) = \sum_{y\equiv \theta\pmod{\nu}} \nu \delta(x-y)$. This tells us that
\begin{equation}\label{FTEqn}\hat{X} = \hat{\Sha_\nu}\ast \hat{g}= \left(\sum_{y\equiv 0\pmod{1/\nu}} \delta(\xi-y)e^{2\pi i \theta y}\right) \ast \left(e^{-\sigma_1^2 \xi^2/2} \right) = \sum_{y \equiv \xi \pmod{1/\nu}} e^{-\sigma_1^2 y^2/2}e^{2\pi i (\xi-y)\theta}.\end{equation}

Now assuming that $\nu \ll \sigma_1$, we have that the sum on the right of Equation \eqref{FTEqn} has at most one term more than $\exp(-\Omega(\sigma_1/\nu)^2)$, and that all remaining terms together contribute at most $\exp(-\Omega(\sigma_1/\nu)^2)$. Therefore we have that
$$
\hat{X}(\xi) = \exp(-\sigma_1^2 [\xi]^2/2)\exp(2\pi i (\xi-[\xi])\theta)\pm \exp(-\Omega(\sigma_1/\nu)^2),
$$
where $[\xi]$ is the nearest multiple of $1/\nu$ to $\xi$. Plugging in $\xi=0$, we find that the total mass of $X$ is $1 \pm \exp(-\Omega(\sigma_1/\nu)^2)$. Returning to our original $X\ast Y$ we have that
$$
\widehat{(X\ast Y)} = \exp(-\sigma_2^2 \xi^2/2)\exp(-\sigma_1^2 [\xi]^2/2)\exp(2\pi i (\xi-[\xi])\theta)\pm \exp(-\Omega(\sigma_1/\nu)^2).
$$
Now $[\xi]=\xi$ unless $|\xi| \geq 1/(2\nu)$. In that case, $\exp(-\sigma_2^2 \xi^2/2) = \exp(-\Omega(\sigma_2/\nu)^2).$ Therefore, we have that for all $\xi$,
$$
\widehat{(X\ast Y)} = \exp(-(\sigma_1^2+\sigma_2^2)\xi^2/2)\pm \begin{cases}\exp(-\Omega(\sigma_1/\nu)^2) & \textrm{if }|\xi| \leq 1/(2\nu)\\ \exp(-\Omega(\sigma_2 \xi)^2) & \textrm{else} \end{cases}.
$$
Note that the first term is just the Fourier transform of $N(0,\sigma_1^2+\sigma_2^2)$. The latter term can be seen to have total integral at most $(1/\nu)\exp(-\Omega(\min(\sigma_1,\sigma_2)/\nu)^2)$. This means that $X\ast Y$ is $(1/\nu)\exp(-\Omega(\min(\sigma_1,\sigma_2)/\nu)^2)$-close to $N(0,\sigma_1^2+\sigma_2^2)$ in $L^\infty$.

However, since $X$ is nearly normalized by the above, the normalized version of $X\ast Y$ (namely $X\ast Y/ |X\ast Y|_1$) is also $(1/\nu)\exp(-\Omega(\min(\sigma_1,\sigma_2)/\nu)^2)$-close to $N(0,\sigma_1^2+\sigma_2^2)$ in $L^\infty$. However, the $L^1$ distance between two distributions is equally divided between the amount that one is bigger than the second and the amount that the second is bigger than the first. Therefore, if $f(x) = \frac{1}{\sqrt{2\pi \sigma_2^2}} \exp(-x^2/(2\sigma_2^2))$ is the probability density function of $N(0,\sigma_1^2+\sigma_2^2)$, we have that the $L^1$ distance between it and $X\ast Y/ |X\ast Y|_1$ is at most
$$
\int \min(f(x), (1/\nu)\exp(-\Omega(\min(\sigma_1,\sigma_2)/\nu)^2))dx.
$$
This is easily seen to be $\exp(-\Omega(\min(\sigma_1,\sigma_2)/\nu)^2))$, completing our theorem.
\end{proof}

Next consider the pseudodistribution where $X\sim N(0,1)$ and $y=\sigma s x + N(0,\sigma)$ where for some $\nu$ and $\theta$, $s$ is taken to be $n\nu+\theta$ (for integer $n$) with probability $\frac{\nu}{\sqrt{2\pi}}e^{-(n\nu+\theta)^2/2}$ (namely $s$ is distributed as a $\nu$-sparse variance-$1$ Gaussian). We note that for given $x$, $\sigma\cdot s\cdot x$ is distributed as a $\nu \sigma x$-sparse variance-$(\sigma x)^2$ Gaussian. Therefore, by our Lemma, if $|x| \ll 1/\nu$, then the distribution of $y$ conditioned on that value of $x$ is $\exp(-\Omega(\min(1/\nu, 1/(\nu x)))^2)$-close in $L^1$ to $N(0,\sigma_1^2 x^2+\sigma_2^2)$. Therefore, integrating over $x$, the distribution $(x,y)$ is close to the distribution where $(y|x) \sim N(0,\sigma^2 x^2+\sigma^2)$ with total $L^1$ error at most $\exp(-\Omega(1/\nu))$.

Now, you can think of this pseudodistribution as a mixture of linear regressions, except that the number of mixing terms in infinite and that it is not normalized. However, it assigns $s$ to be a value bigger than $1/\sqrt{\nu}$ with probability only $\exp(-\Omega(1/\nu))$. Therefore, removing these out and renormalizing, we get an honest mixture of $O(\nu^{-3/2})$ linear regressions that is $\exp(-\Omega(1/\nu))$-close to $(y|x) \sim N(0,\sigma^2 x^2+\sigma^2)$ in total variational distance.

However, if we do this with $\theta=0$ vs. $\theta = \nu/2$, no two parameters in the supports of these mixtures are closer than $\nu/2$ of each other. Letting $\nu=k^{-2/3}$, this shows that it is impossible to learn the individual parameters of a mixture of $k$ linear regressions to error better than $\sigma/k^{2/3}$ with only $\exp(o(k^{2/3}))$ samples.

\section{Mixtures of Hyperplanes} \label{sec:mh}

\subsection{Setup}

\begin{definition}[Mixtures of Hyperplanes] \label{def:mh}
An $m$-dimensional {\em $k$-mixture of hyperplanes}
is a distribution $X$ on $\R^m$ with density function $F(x) = \sum_{j=1}^k w_j N(0, \Sigma_j)$,
where for $j \in [k]$, we have that $w_j \geq 0$, $\sum_{j=1}^k w_j = 1$,
and $\Sigma_j = I - v_j v_j^T$ with $v_j \in \R^m$ and $\|v_j\|_2=1$.
\end{definition}

We study parameter estimation under $\Delta$ pairwise separation for the $v_i$'s.
\new{Specifically, we will assume that we know some $\Delta>0$ such that for all $i \neq j$
and $\sigma_i, \sigma_j \in \{ \pm 1\}$, we have that $ \| \sigma_i v_i- \sigma_j v_j\|_2 \geq \Delta$.
Note that the $v_i$'s are only identifiably up to sign, which motivates this definition.

For simplicity of the exposition, we will assume uniform weights in this section, i.e., that all the $w_i$'s are $1/k$.
The goal of parameter learning in this context is to output a list of unit vectors $\{\tilde{v}_j\}_{j=1}^k$ such that
there is a permutation $\pi\in \mathbf{S}_k$ and a list of signs $\sigma_j \in \{ \pm 1\}$ for which
$v_j  = \sigma_j \tilde{v}_{\pi(j)}$ for all $j \in [k]$.
}

Our main result in this section is the following theorem:

\begin{theorem}[Parameter Estimation for $k$-mixtures of Hyperplanes]\label{thm:mh-param}
There is an algorithm that on input $d \in \Z_+$, \dnew{with $d=O(\log(k))$}, and
sample access to a uniform $k$-mixture of hyperplanes on $\R^m$ with pairwise separation $\Delta>0$,
the algorithm outputs the target parameter vectors using
$N = \dnew{O(k/\Delta)^{O(d)}}+O(m^2) \poly(k\log(m)/\Delta)$ samples and
$\poly(N)+m^2 \log(\dnew{\log(m)}/\Delta) (kd )^{O(d^2 k^{1/d})}$ time.
\end{theorem}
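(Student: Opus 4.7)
The plan is to follow the general template of Section~\ref{ssec:template}, specialized to the hyperplane setting. The key observation that enables moment computation is the following: for $X \sim N(0, I - v v^T)$ with $\|v\|_2 = 1$ and a test vector $w$, the random variable $w \cdot X$ is a centered Gaussian with variance $\|w\|_2^2 - (w \cdot v)^2$, so $\E[(w \cdot X)^{2d}] = (2d-1)!! \, (\|w\|_2^2 - (w \cdot v)^2)^d$. Summing over the mixture and using the binomial expansion
\[
\E[(w \cdot X)^{2d}] = (2d-1)!! \sum_{j=0}^{d} (-1)^j \binom{d}{j} \|w\|_2^{2(d-j)} \bigl\langle w^{\otimes 2j}, \; {\textstyle \sum_i} w_i v_i^{\otimes 2j} \bigr\rangle,
\]
one can inductively extract each symmetric tensor $T_j := \sum_i w_i v_i^{\otimes 2j}$, for $j = 1, \ldots, d$, as a polynomial function of samples. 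A Wick-type variance bound together with hypercontractivity (Theorem~\ref{thm:hc}) will show that an entry-wise estimator of $T_d$ of accuracy $\delta$ requires only $(d)^{O(d)}/\delta^2$ samples. This mirrors Lemma~\ref{lem:gmm-moment-est} and Corollary~\ref{cor:gmm-tensor-est}.

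With moment estimation in hand, I will perform dimension reduction using the degree-$2$ tensor $T_1 = I - \E[XX^T]$, which can be estimated with $O(m^2)\poly(k/\Delta)$ samples. Applying Proposition~\ref{dimensionReductionProp} yields a $k$-dimensional subspace $U$ so that each $v_i$ is within $\ell_2$-distance $\Delta/C$ of $U$ (for a suitable large constant $C$). After restricting to $U$, I estimate $T_d$ in $k$ dimensions using $(k/\Delta)^{O(d)}$ samples and feed it into Proposition~\ref{coverApplicationProp} to produce an $(\Delta/C)$-cover $\mathcal{C}$ of $\{\pi_U(v_i)\}_{i=1}^k$ of size $(k/\Delta)^{O(d^2 k^{1/d})}$.

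Next I cluster $\mathcal{C}$ into $k$ groups modulo the sign ambiguity. Call two elements $c, c' \in \mathcal{C}$ \emph{sign-equivalent} if $\min(\|c-c'\|_2, \|c+c'\|_2) \leq \Delta/2$; by the separation hypothesis $\|\sigma_i v_i - \sigma_j v_j\|_2 \geq \Delta$ for all $i \neq j$ and $\sigma_i, \sigma_j \in \{\pm 1\}$, the sign-equivalence classes of the good cover points partition into exactly $k$ clusters, each close (modulo sign) to a unique $v_i$. Choosing one representative per cluster yields rough approximations $\tilde v_1, \ldots, \tilde v_k$ with $\min_{\sigma \in \{\pm 1\}}\|\sigma \tilde v_i - v_i\|_2 \leq \Delta/C$ after relabeling.

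Finally, I convert the rough approximations into exact recovery by exploiting the rank deficiency of the component covariances: every sample from component $i$ lies exactly in the hyperplane $v_i^{\perp}$. Given a fresh sample $x$, the classifier $i^*(x) = \arg\min_i |\tilde v_i \cdot x|$ correctly identifies the component with probability $1 - o(1/\poly)$, because $|v_i \cdot x| = 0$ for the true component while Gaussian anti-concentration (Theorem~\ref{thm:cw}) plus the $\Delta$-separation lower-bound $|\tilde v_j \cdot x|$ for $j \neq i$ with high probability. Collecting $\Omega(m \log(m/\Delta))$ correctly-labeled samples per component, the unique direction orthogonal to all of them is $v_i$, recovered exactly by one SVD per cluster, contributing the $O(m^2 \log(\log(m)/\Delta))$ runtime factor. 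The main technical obstacle is bounding the variance of the polynomial estimators for $T_d$ under the non-identity covariance $I - v v^T$ and verifying that after subtracting off the lower-degree contributions, the residual error remains small enough to make the cover theorem applicable with parameter $\delta \ll (\Delta/(kd))^{Cd}$; this should follow from a careful bookkeeping of the polynomial identities above combined with the standard Gaussian concentration inequality from Theorem~\ref{thm:deg-d-chernoff}.
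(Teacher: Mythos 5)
Your high-level architecture (moment identity, dimension reduction, cover via Proposition~\ref{coverApplicationProp}, cluster, exact recovery via rank deficiency) matches the paper, and your moment-extraction identity via the scalar contractions $\E[(w\cdot X)^{2d}]$ is a valid, if slightly more indirect, alternative to the paper's direct tensor formula
$\sum_i w_i v_i^{\otimes 2d} = \sum_{t=0}^d \binom{d}{t}\frac{1}{(2t-1)!!}\mathrm{Sym}(\E[X^{\otimes 2t}]\otimes I^{\otimes(d-t)})$.

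However, there is a concrete gap in the covering step. You propose to directly compute a $(\Delta/C)$-cover by applying Proposition~\ref{coverApplicationProp} with target radius $\eps=\Delta/C$ and $R=1$. The resulting cover size and the runtime of Theorem~\ref{thm:main-cover-alg} are output-polynomial, i.e.\ $(k/\Delta)^{O(d^2 k^{1/d})}$. This has $1/\Delta$ raised to an exponent $\Theta(d^2 k^{1/d})$, whereas the theorem you are asked to prove only allows $1/\Delta$ to appear to a power $O(d)$ (inside $\poly(N)$) or as $\log(\log(m)/\Delta)$; the dominant $(kd)^{O(d^2 k^{1/d})}$ term must be \emph{$\Delta$-independent}. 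Avoiding this blow-up is precisely the point of the paper's ``Clustering'' subsection: one first computes a coarse cover at some radius $\eps$ that is only polynomially small in $k$, identifies the ``good'' hypotheses consistent with a nontrivial fraction of samples, groups them into at most $k$ clusters of diameter $O(k^2\eps)$, and then for \emph{each cluster center} $u$ applies Proposition~\ref{coverApplicationProp} again to $\{\pi_{U\cap u^\perp}(v_i)\}$. Because the relevant $v_i$'s now lie within radius $O(k\eps)$ of $u$ while the target cover radius is $\eps/4$, the ratio $R/\eps$ is $O(k)$ rather than $1/\Delta$, so this refinement step costs only $(kd)^{O(d^2k^{1/d})}$. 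Iterating $O(\log(1/\eta))$ times halves the radius each round until it is small enough (about $\Delta/(k^4\log(km))$) to separate components, recovering the $\log(\log(m)/\Delta)$ factor in the runtime. Your proposal has no such iteration and therefore cannot achieve the claimed time bound.

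A secondary issue: in the classification step, your $\arg\min$ rule misassigns a fresh sample whenever some $\tilde v_j$ with $j\neq i$ happens to be nearly orthogonal to $x$; by the variance bound $\Var(\tilde v_j\cdot x)=\Omega(\Delta^2)$ and anti-concentration, this occurs with probability $\Theta(\sqrt{\log m}\,k/C)$ per sample, which is constant unless $C\gg k\sqrt{\log m}$. The paper instead uses a threshold test and allows a sample to be \emph{unassigned} when it passes the threshold for multiple cluster representatives; this keeps the misassignment probability at zero (with high probability) rather than merely small. Your scheme can probably be fixed by discarding ambiguous samples, but as written it will feed a constant fraction of mislabeled points into the final per-cluster SVD, which would break exact recovery.
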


\subsection{Moment Computation}

\new{
The following lemma shows that we can efficiently approximate the tensor
$\sum_{i=1}^k w_i v_i^{\otimes 2d}$ to small error:

\begin{lemma}\label{lem:mh-moment-est}
Suppose that we have sample access to $X = \sum_{i=1}^k w_i N(0, I-v_i v_i^T)$.
There is an algorithm that, given $\delta>0$, and $d \in \Z_+$, draws $(m d)^{O(d)}/\delta^2$ samples from $X$,
runs in sample-polynomial time, and outputs an approximation $T$ of the tensor $\sum_{i=1}^k w_i v_i^{\otimes 2d}$
with expected squared error $O(\delta^2)$.
\end{lemma}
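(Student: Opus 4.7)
}
The plan is to express $\sum_i w_i v_i^{\otimes 2d}$ as an explicit linear combination of even moment tensors $\E_{x \sim X}[x^{\otimes 2\ell}]$ for $\ell = 0, 1, \ldots, d$, and then estimate each of these moments empirically.

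\emph{Step 1: Moment expansion via Isserlis.} For a single component $y \sim N(0, \Sigma)$ with $\Sigma = I - vv^T$, Isserlis' (Wick's) theorem gives
\[
\E[y_{i_1}\cdots y_{i_{2\ell}}] = \sum_{\pi} \prod_{(a,b) \in \pi} \Sigma_{i_a,i_b} = \sum_{\pi} \prod_{(a,b)\in\pi}(\delta_{i_a,i_b} - v_{i_a} v_{i_b}),
\]
where $\pi$ ranges over perfect matchings of $[2\ell]$. Expanding the product over $\pi$ and grouping by the number of ``$\delta$-pairs,'' one obtains that $\E[y^{\otimes 2\ell}]$ is a symmetrization of tensors of the form $I^{\otimes s}\otimes v^{\otimes 2(\ell-s)}$ for $0 \le s \le \ell$, with combinatorial coefficients of size at most $(2\ell)^{O(\ell)}$. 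Crucially, the $s=0$ contribution equals $(-1)^\ell (2\ell-1)!! \, v^{\otimes 2\ell}$, so up to lower-order-in-$v$ terms, $\E[y^{\otimes 2\ell}]$ is a scalar multiple of $v^{\otimes 2\ell}$.

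\emph{Step 2: Triangular inversion.} By linearity, for the mixture $X$,
\[
\E_{x\sim X}[x^{\otimes 2\ell}] \;=\; (-1)^\ell (2\ell-1)!!\, T_\ell \;+\; \sum_{j<\ell} c_{\ell,j}\,\mathrm{Sym}\!\left(I^{\otimes(\ell-j)}\otimes T_{j}\right),
\]
where $T_j \eqdef \sum_i w_i v_i^{\otimes 2j}$ and the coefficients $c_{\ell,j}$ are fixed combinatorial constants of magnitude $(2d)^{O(d)}$. This is a triangular linear system for $T_0,T_1,\ldots,T_d$ in terms of $\{\E[x^{\otimes 2\ell}]\}_{\ell\le d}$, with leading coefficients bounded away from zero, and hence invertible. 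Solving it top-down expresses $T_d$ as an explicit linear combination $T_d = \sum_{\ell=0}^{d} A_\ell(\E[x^{\otimes 2\ell}])$ in which the linear operators $A_\ell$ have operator norm at most $(2d)^{O(d)}$.

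\emph{Step 3: Empirical estimation of the low-order moments.} We estimate each tensor $\E[x^{\otimes 2\ell}]$ by an empirical average over $N$ samples from $X$. Since each component of $X$ is Gaussian with covariance $I - v_iv_i^T$ having operator norm $1$, Gaussian hypercontractivity (Theorem~\ref{thm:hc}) shows that for any multi-index $\alpha$ with $|\alpha|=2\ell$, the random variable $x^\alpha$ has variance at most $(2\ell)^{O(\ell)}$. Standard concentration then implies that with $N = (md)^{O(d)}/\delta^2$ samples, the empirical estimate $\widehat{M}_{2\ell}$ of $\E[x^{\otimes 2\ell}]$ satisfies $\E\|\widehat{M}_{2\ell} - \E[x^{\otimes 2\ell}]\|_2^2 \le \delta^2 / (2d)^{Cd}$ for any prescribed constant $C$, by choosing the implied constant large enough.

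\emph{Step 4: Combining.} The estimator $T \eqdef \sum_{\ell} A_\ell(\widehat{M}_{2\ell})$ is, by Step 2, an unbiased estimator of $T_d = \sum_i w_i v_i^{\otimes 2d}$. Its expected squared $\ell_2$-error is bounded by $\sum_\ell \|A_\ell\|^2 \cdot \E\|\widehat{M}_{2\ell}-\E[x^{\otimes 2\ell}]\|_2^2 = O(\delta^2)$ provided the sample size absorbs the $(2d)^{O(d)}$ factors, which is the case for $N = (md)^{O(d)}/\delta^2$.

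\emph{Main obstacle.} The one place that requires care is Step 2: one must verify that the system really is triangular with nonzero leading entries and that the coefficients in the expansion from Step 1 are controlled well enough that the inverse operators $A_\ell$ do not blow up beyond $(2d)^{O(d)}$. This is a purely combinatorial calculation on pairings but is the conceptual heart of the argument. Everything else is standard Gaussian concentration together with linearity.
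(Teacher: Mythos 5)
Your high-level plan, namely reduce $\sum_i w_i v_i^{\otimes 2d}$ to a known linear combination of the moment tensors $\{\E[X^{\otimes 2\ell}]\}_{\ell\le d}$ and then estimate those empirically, is correct and matches the paper in spirit, and Steps 3--4 are fine. The difference is in Step 2, the ``main obstacle'' you flag. You expand $\Sigma^{\otimes\ell} = (I - vv^T)^{\otimes\ell}$ toward $v$, obtaining a triangular system that you would need to invert and for which you would need to track how the inverse operators' norms propagate. The paper goes the other direction and never needs to invert anything: since $vv^T = I - \Sigma$, one has directly
$$v^{\otimes 2d} \;=\; \mathrm{Sym}\!\bigl((vv^T)^{\otimes d}\bigr) \;=\; \mathrm{Sym}\!\bigl((I-\Sigma)^{\otimes d}\bigr) \;=\; \sum_{t=0}^{d}\binom{d}{t}(-1)^{t}\,\mathrm{Sym}\!\bigl(\Sigma^{\otimes t}\otimes I^{\otimes(d-t)}\bigr),$$
and then substituting $\mathrm{Sym}(\Sigma^{\otimes t}) = \E[X^{\otimes 2t}]/(2t-1)!!$ and applying linearity over the mixture gives
$$\sum_{i=1}^k w_i v_i^{\otimes 2d} \;=\; \sum_{t=0}^{d}\frac{(-1)^{t}\binom{d}{t}}{(2t-1)!!}\,\mathrm{Sym}\!\bigl(\E[X^{\otimes 2t}]\otimes I^{\otimes(d-t)}\bigr),$$
a closed-form identity with explicit $(2d)^{O(d)}$ coefficients. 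So the combinatorial lemma you deferred simply disappears: you should expand $(I-\Sigma)^{\otimes d}$ rather than $(I-vv^T)^{\otimes\ell}$, and the triangular inversion (and the norm control for its inverse) is replaced by a one-line binomial identity whose coefficients are visibly bounded.
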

}

\begin{proof}
For this section, it suffices to assume that each $v_i$, $i \in [k]$ is a vector with $\|v_i\|_2\leq 1$
and not necessarily equal to $1$. We note that the $2d^{th}$ moment tensor of $N(0,\Sigma)$ is given by
$$
\E_{X\sim N(0,\Sigma)}[X^{\otimes 2d}] = (2d-1)!!\mathrm{Sym} (\Sigma^{\otimes d}),
$$
where $\mathrm{Sym}(T)_{a_1,\ldots,a_{2d}}$ is the symmetrization
$\frac{1}{(2d)!} \sum_{\pi\in S_{2d}} T_{a_{\pi(1)},a_{\pi(2)},\ldots,a_{\pi(2d)}}$.

From here, it is easy to see that if $X\sim N(0,I-vv^T)$ that
$$
v^{\otimes 2d} = \sum_{t=0}^d \binom{d}{t} \mathrm{Sym}\left((I-vv^T)^{\otimes t}\otimes I^{\otimes (d-t)} \right)
= \sum_{t=0}^d \binom{d}{t} \frac{1}{(2t-1)!!}\mathrm{Sym}\left( \E[X^{\otimes 2t}]\otimes I^{\otimes (d-t)}\right) \;.
$$
Thus, by linearity,
$$
\sum_{i=1}^k w_i v_i^{\otimes 2d} = \sum_{t=0}^d \binom{d}{t} \frac{1}{(2t-1)!!}\mathrm{Sym}\left( \E[X^{\otimes 2t}]\otimes I^{\otimes (d-t)}\right) \;.
$$
\new{Using the same arguments as in previous subsections to bound the variance of the relevant term,}
this quantity can be efficiently computed to $\ell_2$-error $\delta$ empirically using $(dm)^{O(d)}/\delta^2$ samples.

\end{proof}

\subsection{Dimension Reduction}

By Proposition \ref{dimensionReductionProp}, if we compute this for $d=1$ and $\delta$ a sufficiently small multiple of $\eta^2/k$ (which can be done in $O(m^2 k^2 \eta^{-4})$ samples), we can compute a subspace $U$ so that all $v_i$'s are within $\eta/2$ of $U$. Taking the projection of $X$ onto $U$, we are left with
$$
\sum_{i=1}^k w_i N(0,I-\pi_U(v_i)\pi_U(v_i)^T).
$$

\subsection{Cover}

Next, we can take $(2dk/\eta)^{O(d)}$ samples and compute an approximation to $\sum \pi_U(v_i)^{\otimes 2d}$ with error at most $(\eta/(2dk))^{Cd}$. We could then use Proposition \ref{coverApplicationProp} to produce a set of size $(2dk/\eta)^{O(d^2 k^{1/d})}$ so that each $v_i$ is guaranteed to be within $\eta$ of some hypothesis. However, this will prove to be more expensive than necessary. Instead for some $\eps>\eta$, we can in compute a cover of size $S=(2dk/\eps)^{O(d^2 k^{1/d})}$ in $\poly(S)$ time.

\subsection{Clustering}

\dnew{Given what we have so far, we could just take $\eta$ substantially smaller than $\Delta$,
and get a cover at enough granularity to distinguish our components. However, this will require
$\Delta^{-d^2 k^{1/d}}$ time, which we would like to avoid. Instead, we will have an iterative process
by which we locate which hypotheses are actually close to our parameters
and use this to iteratively refine our clusters. In particular, by seeing which hypotheses are nearly
orthogonal to many samples, we can figure out which ones are plausible.
By naively clustering the plausible hypotheses, we can find a size $k$ cover of substantially larger radius.
We can then use our existing approximation to the higher moments
to get more precise covers only near these few hypotheses. By iterating this technique,
we can eventually find a small cover of radius less than $\Delta/\log(m)$.
This can be used to reliably classify which component various samples actually came from,
and if we find $m$ samples from the same component, linear algebra
can be used to exactly compute the corresponding $v$ up to sign.}

Suppose that we have a set $\mathcal{C}$ of $S$ samples with the guarantee that each $v_i$
is within distance $\eps$ of some element of $\mathcal{C}$. We can take $N=Ck\log(S)$ samples,
for $C$ a sufficiently large constant. We then call a hypothesis $c\in \mathcal{C}$ \emph{good}
if, for at least a $1/(2k)$-fraction of these samples,
we have that $|c\cdot x| \leq 10\sqrt{\eps}$. Note that if $c$ is within distance $\eps$ of some $v_i$,
then it will be good with high probability, because the samples from that part of the mixture
will mostly satisfy the necessary condition. Furthermore, if $c$ is not within $O(k \sqrt{\eps})$
of any $v_i$, then with high probability this will hold for at most a $1/(10k)$ fraction of the samples
from each component, and hence $c$ will not be good. Thus, with high probability,
all $c$ within $\eps$ of some $v_i$ are good, while all good $c_i$ are within $O(k\eps)$ of some $v_i$.

We call two hypotheses close if they are within $O(k\eps)$ of each other,
and split the good hypotheses into clusters given by the connected components of the closeness operation.
Note that if the high probability events mentioned above hold, each cluster will have diameter $O(k^2\eps)$,
because each good hypothesis must be within $O(k\eps)$ of some $v_i$,
and thus the longest chain of close hypothesis we will need to deal with will have length $O(k)$.

Next, for each cluster centered at some vector $u$, since we know an $(\eta/(2dk))^{Cd}$ approximation
to the tensor $\sum w_i\pi_{U\cap u^\perp}(v_i)^{\otimes 2d}$, by Proposition \ref{coverApplicationProp},
as long as $\eps > \eta$, we can compute a set $\mathcal{C}$ of size at most
$S = (((k\eps)/(\eps/2))2kd)^{O(d^2 k^{1/d})} = (2kd)^{O(d^2 k^{1/d})}$, such that
every $v_i$ with $|\pi_{U\cap u^\perp}(v_i)| = O(k\eps)$ has a hypothesis
in $c\in\mathcal{C}$ within $\eps/4$ of $\pi_{U\cap u^\perp}(v_i)$.
Recalling that $v_i$ has unit length and is within $\eta/2$ of $U$,
if this were the case, then $v$ will be within distance $\eps/2$ of $c+u\sqrt{1-\|c\|_2^2}$.

Thus, in time $\poly(S)$, we can compute a set of at most $S$ hypotheses,
such that every $v_i$ within $O(k\eps)$ of $u$ is within distance $\eps/2$ of some hypothesis in our set.
By applying this to every cluster, we can compute a set of size $(2kd)^{O(d^2 k^{1/d})}$, such that
every $v_i$ is within distance $\eps/2$ of some element of our set.

Note that what we did here was that given a set of size $S$,
where each $v_i$ was within distance $\eps$ of some element,
we produced another such set, but where each $v_i$ was within distance $\eps/2$.
Repeating this procedure \new{$O(\log(1/\eta))$} times, we get a set of size $S$,
where each $v_i$ is within distance $\eta$ of some element of our set.

Next suppose that $\min_{i\neq j}\|v_i-v_j\|_2 \geq \Delta >C k^4 \log(km) \eta$, for $C$ sufficiently large.
Note that with our final set of hypotheses, if we compute clusters as described above,
the good hypotheses in a given cluster will be close to one and only one of the $v_i$'s.

Take an additional $Cmk$ samples. For each of these samples,
associate it with a cluster if $|c\cdot x| \leq \sqrt{k \log(km) \eps}$,
for $c$ the representative of that cluster but not for the representative of any other cluster.
Note that, with high probability, all samples $x$ coming from the $i$-th component satisfy
$|c\cdot x| \leq \sqrt{k \log(km) \eps}$, when $c$ is the representative of their cluster.
But this holds with probability at most $1/(2k)$ when $c$ is the representative of any other cluster.
Thus, with high probability, the set of samples associated with a given cluster consists
of at least $m$ samples coming only from that component of the mixture.
Almost certainly such samples span $v_i^\perp$. Thus, from these samples
we can recover the components $N(0,I-v_iv_i^T)$ with high probability.

Thus, if we assume that $\min \|v_i-v_j\|_2 \geq \Delta$ for some known $\Delta>0$,
we can learn the $v_i$'s with
$N = O(kd \log(m)/\Delta)^{O(d)}+O(m^2)\poly(k\log(m)/\Delta)$ samples and
$\poly(N)+m^2 \log(\dnew{\log(m)}/\Delta) (kd )^{O(d^2 k^{1/d})}$ time.

Given the assumption that $d=O(\log(k))$, this expression can be simplified.
In particular, $N$ can be rewritten as $N=(k/\Delta)^{O(d)} + (\log(m))^{O(d)}+O(m^2)\poly(k\log(m)/\Delta).$
We note that if $k\gg \log(m)$, the $(\log(m))^{O(d)}$ term is dominated by the $(k/\Delta)^{O(d)}$ term and can be removed.
However, if $\log(m) \gg k$, we have that $d\ll \log\log(m)$ and the $(\log(m))^{O(d)}$ term
is dominated by the $O(m^2)$ term and can again be removed.
Thus, we can bound $N$ by $(k/\Delta)^{O(d)} +O(m^2)\poly(k\log(m)/\Delta).$

\bibliographystyle{alpha}
\bibliography{allrefs}

\newcommand{\etalchar}[1]{$^{#1}$}
\begin{thebibliography}{GKLW19}

\bibitem[ABG{\etalchar{+}}14]{AndersonBGRV14}
J.~Anderson, M.~Belkin, N.~Goyal, L.~Rademacher, and J.~R. Voss.
\newblock The more, the merrier: the blessing of dimensionality for learning
  large gaussian mixtures.
\newblock In {\em Proceedings of The 27th Conference on Learning Theory, {COLT}
  2014}, pages 1135--1164, 2014.

\bibitem[ABH{\etalchar{+}}18]{AshtianiBHLMP18}
H.~Ashtiani, S.~Ben{-}David, N.~J.~A. Harvey, C.~Liaw, A.~Mehrabian, and
  Y.~Plan.
\newblock Nearly tight sample complexity bounds for learning mixtures of
  gaussians via sample compression schemes.
\newblock In {\em Advances in Neural Information Processing Systems 31: Annual
  Conference on Neural Information Processing Systems 2018, NeurIPS 2018},
  pages 3416--3425, 2018.

\bibitem[ADLS17]{ADLS15}
J.~Acharya, I.~Diakonikolas, J.~Li, and L.~Schmidt.
\newblock Sample-optimal density estimation in nearly-linear time.
\newblock In {\em Proc.\ 28th Annual Symposium on Discrete Algorithms (SODA)},
  pages 1278--1289, 2017.

\bibitem[AK01]{AroraKannan:01}
S.~Arora and R.~Kannan.
\newblock {Learning mixtures of arbitrary Gaussians}.
\newblock In {\em Proceedings of the 33rd Symposium on Theory of Computing},
  pages 247--257, 2001.

\bibitem[AM05]{AchlioptasMcSherry:05}
D.~Achlioptas and F.~McSherry.
\newblock On spectral learning of mixtures of distributions.
\newblock In {\em Proceedings of the Eighteenth Annual Conference on Learning
  Theory (COLT)}, pages 458--469, 2005.

\bibitem[BCMV14]{BhaskaraCMV14}
A.~Bhaskara, M.~Charikar, A.~Moitra, and A.~Vijayaraghavan.
\newblock Smoothed analysis of tensor decompositions.
\newblock In {\em Symposium on Theory of Computing, {STOC} 2014}, pages
  594--603, 2014.

\bibitem[BJW19]{BakshiJW19}
A.~Bakshi, R.~Jayaram, and D.~P. Woodruff.
\newblock Learning two layer rectified neural networks in polynomial time.
\newblock In {\em Conference on Learning Theory, {COLT} 2019}, pages 195--268,
  2019.

\bibitem[Bon70]{Bon70}
A.~Bonami.
\newblock Etude des coefficients fourier des fonctiones de $l^{p}(g)$.
\newblock {\em Ann. Inst. Fourier (Grenoble)}, 20(2):335--402, 1970.

\bibitem[BSZ15]{BhaskaraSZ15}
A.~Bhaskara, A.~T. Suresh, and M.~Zadimoghaddam.
\newblock Sparse solutions to nonnegative linear systems and applications.
\newblock In {\em Proceedings of the Eighteenth International Conference on
  Artificial Intelligence and Statistics, {AISTATS} 2015}, volume~38 of {\em
  {JMLR} Workshop and Conference Proceedings}. JMLR.org, 2015.

\bibitem[BV08]{BV:08}
S.~C. Brubaker and S.~Vempala.
\newblock {Isotropic PCA and Affine-Invariant Clustering}.
\newblock In {\em Proc. 49th IEEE Symposium on Foundations of Computer
  Science}, pages 551--560, 2008.

\bibitem[CDSS13]{CDSS13}
S.~Chan, I.~Diakonikolas, R.~Servedio, and X.~Sun.
\newblock Learning mixtures of structured distributions over discrete domains.
\newblock In {\em Proc.\ 24th Annual Symposium on Discrete Algorithms (SODA)},
  pages 1380--1394, 2013.

\bibitem[CDSS14]{CDSS14}
S.~Chan, I.~Diakonikolas, R.~Servedio, and X.~Sun.
\newblock Efficient density estimation via piecewise polynomial approximation.
\newblock In {\em Proc.\ 46th Annual ACM Symposium on Theory of Computing
  (STOC)}, pages 604--613, 2014.

\bibitem[CLS19]{CLS20}
S.~Chen, J.~Li, and Z.~Song.
\newblock Learning mixtures of linear regressions in subexponential time via
  fourier moments.
\newblock {\em CoRR}, abs/1912.07629, 2019.

\bibitem[CW01]{CW01}
A.~Carbery and J.~Wright.
\newblock {Distributional and $L^q$ norm inequalities for polynomials over
  convex bodies in $R^n$}.
\newblock {\em Mathematical Research Letters}, 8(3):233--248, 2001.

\bibitem[Das99]{Dasgupta:99}
S.~Dasgupta.
\newblock {Learning mixtures of Gaussians}.
\newblock In {\em Proceedings of the 40th Annual Symposium on Foundations of
  Computer Science}, pages 634--644, 1999.

\bibitem[DeV89]{DV89}
R.~D. DeVeaux.
\newblock Mixtures of linear regressions.
\newblock {\em Computational Statistics \& Data Analysis}, 8(3):227--245,
  November 1989.

\bibitem[DFS16]{DanielyFS16}
A.~Daniely, R.~Frostig, and Y.~Singer.
\newblock Toward deeper understanding of neural networks: The power of
  initialization and a dual view on expressivity.
\newblock In {\em Advances in Neural Information Processing Systems 29: Annual
  Conference on Neural Information Processing Systems 2016}, pages 2253--2261,
  2016.

\bibitem[DK14]{DK14}
C.~Daskalakis and G.~Kamath.
\newblock Faster and sample near-optimal algorithms for proper learning
  mixtures of {G}aussians.
\newblock In {\em Proc.\ 27th Annual Conference on Learning Theory (COLT)},
  pages 1183--1213, 2014.

\bibitem[DKK{\etalchar{+}}16]{DKKLMS16}
I.~Diakonikolas, G.~Kamath, D.~M. Kane, J.~Li, A.~Moitra, and A.~Stewart.
\newblock Robust estimators in high dimensions without the computational
  intractability.
\newblock In {\em Proc.\ 57th IEEE Symposium on Foundations of Computer Science
  (FOCS)}, pages 655--664, 2016.

\bibitem[DKKZ20]{DKKZ20}
I.~Diakonikolas, D.~M. Kane, V.~Kontonis, and N.~Zarifis.
\newblock Algorithms and {SQ} lower bounds for {PAC} learning one-hidden-layer
  relu networks.
\newblock In Jacob~D. Abernethy and Shivani Agarwal, editors, {\em Conference
  on Learning Theory, {COLT} 2020}, volume 125 of {\em Proceedings of Machine
  Learning Research}, pages 1514--1539. {PMLR}, 2020.

\bibitem[DKS17]{DKS17-sq}
I.~Diakonikolas, D.~M. Kane, and A.~Stewart.
\newblock Statistical query lower bounds for robust estimation of
  high-dimensional {Gaussians} and {Gaussian} mixtures.
\newblock In {\em Proc.\ 58th IEEE Symposium on Foundations of Computer Science
  (FOCS)}, pages 73--84, 2017.

\bibitem[DKS18]{DiakonikolasKS18-mixtures}
I.~Diakonikolas, D.~M. Kane, and A.~Stewart.
\newblock List-decodable robust mean estimation and learning mixtures of
  spherical {Gaussians}.
\newblock In {\em Proc.\ 50th Annual ACM Symposium on Theory of Computing
  (STOC)}, pages 1047--1060, 2018.

\bibitem[DKS19]{DKS19-lr}
I.~Diakonikolas, W.~Kong, and A.~Stewart.
\newblock Efficient algorithms and lower bounds for robust linear regression.
\newblock In {\em Proc.\ 30th Annual Symposium on Discrete Algorithms (SODA)},
  pages 2745--2754, 2019.

\bibitem[FOS06]{FOS:06}
J.~Feldman, R.~O'Donnell, and R.~Servedio.
\newblock {PAC} learning mixtures of {Gaussians} with no separation assumption.
\newblock In {\em Proc.\ 19th Annual Conference on Learning Theory (COLT)},
  pages 20--34, 2006.

\bibitem[GHK15]{GeHK15}
R.~Ge, Q.~Huang, and S.~M. Kakade.
\newblock Learning mixtures of gaussians in high dimensions.
\newblock In {\em Proceedings of the Forty-Seventh Annual {ACM} on Symposium on
  Theory of Computing, {STOC} 2015}, pages 761--770, 2015.

\bibitem[GK19]{GoelK19}
S.~Goel and A.~R. Klivans.
\newblock Learning neural networks with two nonlinear layers in polynomial
  time.
\newblock In {\em Conference on Learning Theory, {COLT} 2019}, pages
  1470--1499, 2019.

\bibitem[GKKT17]{GoelKKT17}
S.~Goel, V.~Kanade, A.~R. Klivans, and J.~Thaler.
\newblock Reliably learning the relu in polynomial time.
\newblock In {\em Proceedings of the 30th Conference on Learning Theory, {COLT}
  2017}, pages 1004--1042, 2017.

\bibitem[GKLW19]{GeKLW19}
R.~Ge, R.~Kuditipudi, Z.~Li, and X.~Wang.
\newblock Learning two-layer neural networks with symmetric inputs.
\newblock In {\em 7th International Conference on Learning Representations,
  {ICLR} 2019}, 2019.

\bibitem[GLM18]{GeLM18}
R.~Ge, J.~D. Lee, and T.~Ma.
\newblock Learning one-hidden-layer neural networks with landscape design.
\newblock In {\em 6th International Conference on Learning Representations,
  {ICLR} 2018}, 2018.

\bibitem[Gro75]{Gross:75}
L.~Gross.
\newblock Logarithmic {Sobolev} inequalities.
\newblock {\em Amer.\ J.\ Math.}, 97(4):1061--1083, 1975.

\bibitem[HK13]{HK}
D.~Hsu and S.~M. Kakade.
\newblock Learning mixtures of spherical gaussians: moment methods and spectral
  decompositions.
\newblock In {\em Innovations in Theoretical Computer Science, {ITCS} '13},
  pages 11--20, 2013.

\bibitem[HL18]{HopkinsL18}
S.~B. Hopkins and J.~Li.
\newblock Mixture models, robustness, and sum of squares proofs.
\newblock In {\em Proc.\ 50th Annual ACM Symposium on Theory of Computing
  (STOC)}, pages 1021--1034, 2018.

\bibitem[HP15]{HardtP15}
M.~Hardt and E.~Price.
\newblock Tight bounds for learning a mixture of two gaussians.
\newblock In {\em Proceedings of the Forty-Seventh Annual {ACM} on Symposium on
  Theory of Computing, {STOC} 2015}, pages 753--760, 2015.

\bibitem[JJ94]{JJ94}
M.~I. Jordan and R.~A. Jacobs.
\newblock Hierarchical mixtures of experts and the {EM} algorithm.
\newblock {\em Neural Computation}, 6(2):181--214, 1994.

\bibitem[JSA15]{Janz15}
M.~Janzamin, H.~Sedghi, and A.~Anandkumar.
\newblock Beating the perils of non-convexity: Guaranteed training of neural
  networks using tensor methods, 2015.

\bibitem[KC19]{KC19}
J.~Kwon and C.~Caramanis.
\newblock {EM} converges for a mixture of many linear regressions.
\newblock {\em CoRR}, abs/1905.12106, 2019.

\bibitem[Kli17]{Kliv17}
A.~Klivans.
\newblock Talk at stoc'17 workshop on new challenges in machine learning --
  robustness and nonconvexity, 2017.

\bibitem[KS17]{KS17}
P.~K. Kothari and D.~Steurer.
\newblock Outlier-robust moment-estimation via sum-of-squares.
\newblock {\em CoRR}, abs/1711.11581, 2017.

\bibitem[KSV08]{KSV08}
R.~Kannan, H.~Salmasian, and S.~Vempala.
\newblock The spectral method for general mixture models.
\newblock {\em SIAM J. Comput.}, 38(3):1141--1156, 2008.

\bibitem[LL18]{LL18}
Y.~Li and Y.~Liang.
\newblock Learning mixtures of linear regressions with nearly optimal
  complexity.
\newblock In {\em Conference On Learning Theory, {COLT} 2018}, volume~75 of
  {\em Proceedings of Machine Learning Research}, pages 1125--1144. {PMLR},
  2018.

\bibitem[LS17]{LiS17}
J.~Li and L.~Schmidt.
\newblock Robust and proper learning for mixtures of gaussians via systems of
  polynomial inequalities.
\newblock In {\em Proceedings of the 30th Conference on Learning Theory, {COLT}
  2017}, volume~65 of {\em Proceedings of Machine Learning Research}, pages
  1302--1382. {PMLR}, 2017.

\bibitem[MR18]{Manurangsi18}
P.~Manurangsi and D.~Reichman.
\newblock The computational complexity of training relu(s), 2018.

\bibitem[MV10]{MoitraValiant:10}
A.~Moitra and G.~Valiant.
\newblock {Settling the polynomial learnability of mixtures of Gaussians}.
\newblock In {\em FOCS}, pages 93--102, 2010.

\bibitem[O'D14]{ODonnellbook}
R.~O'Donnell.
\newblock {\em Analysis of Boolean Functions}.
\newblock Cambridge University Press, 2014.

\bibitem[Pea94]{Pearson:94}
K.~Pearson.
\newblock Contribution to the mathematical theory of evolution.
\newblock {\em Phil. Trans. Roy. Soc. A}, 185:71--110, 1894.

\bibitem[PHL04]{ParsonsHL04}
L.~Parsons, E.~Haque, and H.~Liu.
\newblock Subspace clustering for high dimensional data: a review.
\newblock {\em {SIGKDD} Explorations}, 6(1):90--105, 2004.

\bibitem[RV17]{RV17-mixtures}
O.~Regev and A.~Vijayaraghavan.
\newblock On learning mixtures of well-separated gaussians.
\newblock In {\em 58th {IEEE} Annual Symposium on Foundations of Computer
  Science, {FOCS} 2017}, pages 85--96, 2017.

\bibitem[SJA16]{SedghiJA16}
H.~Sedghi, M.~Janzamin, and A.~Anandkumar.
\newblock Provable tensor methods for learning mixtures of generalized linear
  models.
\newblock In {\em Proceedings of the 19th International Conference on
  Artificial Intelligence and Statistics, {AISTATS} 2016}, pages 1223--1231,
  2016.

\bibitem[SOAJ14]{SOAJ14}
A.~T. Suresh, A.~Orlitsky, J.~Acharya, and A.~Jafarpour.
\newblock Near-optimal-sample estimators for spherical {G}aussian mixtures.
\newblock In {\em Proc. 29th Annual Conference on Neural Information Processing
  Systems (NIPS)}, pages 1395--1403, 2014.

\bibitem[Vid11]{Vidal11}
R.~Vidal.
\newblock Subspace clustering.
\newblock {\em {IEEE} Signal Process. Mag.}, 28(2):52--68, 2011.

\bibitem[VW02]{VempalaWang:02}
S.~Vempala and G.~Wang.
\newblock A spectral algorithm for learning mixtures of distributions.
\newblock In {\em Proc.\ 43rd IEEE Symposium on Foundations of Computer Science
  (FOCS)}, pages 113--122, 2002.

\bibitem[VW19]{VempalaW19}
S.~Vempala and J.~Wilmes.
\newblock Gradient descent for one-hidden-layer neural networks: Polynomial
  convergence and {SQ} lower bounds.
\newblock In {\em Conference on Learning Theory, {COLT} 2019}, pages
  3115--3117, 2019.
\newblock Full version available at https://arxiv.org/abs/1805.02677.

\bibitem[ZJD16]{ZJD16}
K.~Zhong, P.~Jain, and I.~S. Dhillon.
\newblock Mixed linear regression with multiple components.
\newblock In Daniel~D. Lee, Masashi Sugiyama, Ulrike von Luxburg, Isabelle
  Guyon, and Roman Garnett, editors, {\em Advances in Neural Information
  Processing Systems 29: Annual Conference on Neural Information Processing
  Systems 2016}, pages 2190--2198, 2016.

\bibitem[ZLJ16]{ZhangLJ16}
Y.~Zhang, J.~D. Lee, and M.~I. Jordan.
\newblock L1-regularized neural networks are improperly learnable in polynomial
  time.
\newblock In {\em Proceedings of the 33nd International Conference on Machine
  Learning, {ICML} 2016}, pages 993--1001, 2016.

\bibitem[ZSJ{\etalchar{+}}17]{ZhongS0BD17}
K.~Zhong, Z.~Song, P.~Jain, P.~L. Bartlett, and I.~S. Dhillon.
\newblock Recovery guarantees for one-hidden-layer neural networks.
\newblock In {\em Proceedings of the 34th International Conference on Machine
  Learning, {ICML} 2017}, pages 4140--4149, 2017.

\end{thebibliography}

\appendix

\newpage

\section*{APPENDIX}

\section{Omitted Proofs and Facts} \label{sec:appendix}

\subsection{Proof of Fact~\ref{fact:codim-bound}} \label{ssec:codim}

We note that Fact~\ref{fact:codim-bound} is standard and we include a proof for the sake of completeness.
If $A \subseteq B \subseteq C$ are finite dimensional vector spaces,
by the definition of the codimension we have that
\begin{equation}\label{eqn:codim-cor}
\codim_C(A) = \codim_B(A) + \codim_C(B) \;.
\end{equation}
By the subadditivity property of codimension under intersection
we have that
\begin{equation}\label{eqn:codim-subadd}
\codim_W(V \cap U) \leq \codim_W(V) + \codim_W(U) \;.
\end{equation}
An application of \eqref{eqn:codim-cor} for $A = U \cap V$, $B = U$, and $C=W$ gives that
$$\codim_U(V \cap U) = \codim_W(V \cap U) - \codim_W(U) \;.$$
Therefore,
$$\codim_U(V \cap U) \leq \codim_W(V) \;,$$
as desired.

\subsection{Proof of Lemma~\ref{lem:poly-bounds}} \label{ssec:poly-bounds}

Note that a degree-$d$ homogeneous polynomial $p$ on $\R^n$ can be expressed as
$p(x) = \langle A , x^{\otimes d} \rangle$, where $A$ is a real symmetric tensor of dimension $n$ and order $d$.
An application of the Cauchy-Schwarz inequality gives that $|p(x)| \leq \|A\|_2 \, \|x^{\otimes d}\|_2$
for any $x \in \R^n$.
By definition, we have that $\|p\|_{\ell_2} = \|A\|_2$ and $ \|x^{\otimes d}\|_2 = \|x\|_2^{d}$,
giving statement (i).

To prove (ii), we similarly note that
$|p(x)-p(y)| = \left| \langle A , x^{\otimes d} -  y^{\otimes d} \rangle \right| \leq \|A\|_2 \, \|x^{\otimes d}-y^{\otimes d}\|_2$,
where the inequality is Cauchy-Schwarz. Recalling that $\|p\|_{\ell_2} = \|A\|_2$, it suffices to bound
from above $\|x^{\otimes d}-y^{\otimes d}\|_2$. Note that
$$x^{\otimes d}-y^{\otimes d} =
\sum_{i=0}^{d-1} \left( y^{\otimes i} \otimes x^{\otimes (d-i)} - y^{\otimes (i+1)} \otimes x^{\otimes (d-i-1)} \right)
 = \sum_{i=0}^{d-1} \left(y^{\otimes i} \otimes (x-y) \otimes x^{\otimes (d-i-1)}  \right) \;.$$
For all $x \neq y \in \R^n$, we can thus write:
\begin{eqnarray*}
\|x^{\otimes d}-y^{\otimes d}\|_2 &\leq&
\sum_{i=0}^{d-1} \left\| (y^{\otimes i} \otimes (x-y) \otimes x^{\otimes (d-i-1)} ) \right\|_2 \\
 &=& \sum_{i=0}^{d-1} \|y\|_2^i \; \|x-y\|_2 \; \|x\|_2^{d-i-1} \\
 &\leq& d \, \|x-y\|_2  \; \max\{\|x\|_2, \|y\|_2\}^{d-1}  \;.
\end{eqnarray*}
This gives (ii) and completes the proof of Lemma~\ref{lem:poly-bounds}.

\subsection{Proof of Claim~\ref{claim:inner-prod-d}} \label{app:inner-prod-d}

For $i \in \{1, 2\}$ we have that $q_i(x) = \sum_{\alpha: |\alpha| = 1} \wh{q}_{i}^{(\alpha)} \, x^{\alpha}$
and $p_i(y) =  \sum_{\beta: |\beta| = d-1} \wh{p}_{i}^{(\beta)} \, y^{\beta}$, for some
$\wh{q}_{i}^{(\alpha)}, \wh{p}_{i}^{(\beta)} \in \R$. By linearity of the inner product and orthogonality
of monomials (Fact~\ref{fact:ip-monomials}), it suffices to prove the claim for the case 
that the $q_i$'s and $p_i$'s are monomials.
Specifically, it suffices to show that
$\langle x^{\alpha} \, y^{\beta}, x^{\alpha} \, y^{\beta} \rangle = (1/d) \, \langle x^{\alpha}, x^{\alpha} \rangle \, \langle y^{\beta}, y^{\beta}\rangle$.
By viewing $\alpha, \beta$ as $m$-dimensional multi-indices with zero coordinates on the variables corresponding to $y$ and $x$
respectively, we have that
$$\langle x^{\alpha} \, y^{\beta}, x^{\alpha} \, y^{\beta} \rangle = \frac{(\alpha+\beta)!}{|\alpha+\beta|!}
= \frac{\alpha! \beta!}{d!} = \frac{\beta!}{d!} = \frac{\beta!}{d |\beta|!} 
= (1/d) \, \langle x^{\alpha}, x^{\alpha} \rangle \, \langle y^{\beta}, y^{\beta}\rangle \;,$$
where the first equality uses (the second branch of) Fact~\ref{fact:ip-monomials}, 
the second equality uses that $\alpha$ and $\beta$
have disjoint supports, the third and fourth use that $|\alpha| = 1$ and $|\beta| = d-1$ respectively, and the last one
follows from Fact~\ref{fact:ip-monomials}. Furthermore, it is clear that if $\alpha\neq \alpha'$ or $\beta\neq \beta'$ then
$$
\langle x^\alpha y^\beta, x^{\alpha'}y^{\beta'}\rangle = 0 = \langle x^\alpha, x^{\alpha'}\rangle\langle y^\beta, y^{\beta'} \rangle \;.
$$
This completes the proof of Claim~\ref{claim:inner-prod-d}.

\subsection{Additional Probabilistic Tools} \label{app:prob}
Here we record a few additional useful facts from analysis and probability.

\paragraph{KL Divergence and Pinsker's Inequality.}
The \emph{KL divergence} between $P$ and $Q$,
denoted $\dkl(P \| Q)$, is defined as $\dkl (P \| Q) = \int_{\mathbb{R}^m} \log \frac{d P}{d Q} dP$.
The following inequality relates this to the total variation distance.

\begin{fact}[Pinsker's inequality]
\label{fact:pinsker}
Let $P, Q$ be two probability distributions over $\R^m$.
Then $\dtv (P, Q) \leq \sqrt{\frac{1}{2} \dkl (P \| Q)}$.
\end{fact}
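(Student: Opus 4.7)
The plan is to prove Pinsker's inequality by the standard two-step reduction: first reducing the multivariate statement to a scalar inequality about Bernoulli distributions, then establishing the scalar inequality by elementary calculus.

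First I would reduce to the Bernoulli case via a data-processing argument. Let $A = \{x \in \R^m : dP/dQ(x) \geq 1\}$ be the (measurable) set on which $P$ is at least as dense as $Q$ with respect to a common dominating measure. By the variational characterization of total variation distance, $\dtv(P,Q) = P(A) - Q(A)$. Setting $p := P(A)$ and $q := Q(A)$, the indicator map $x \mapsto \mathbf{1}[x \in A]$ pushes $P$ and $Q$ forward to the Bernoulli distributions $\mathrm{Bern}(p)$ and $\mathrm{Bern}(q)$, respectively, and $\dtv(\mathrm{Bern}(p),\mathrm{Bern}(q)) = |p-q| = \dtv(P,Q)$. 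By the monotonicity of KL divergence under post-processing (data-processing inequality), $\dkl(P\|Q) \geq \dkl(\mathrm{Bern}(p) \| \mathrm{Bern}(q))$. Thus it suffices to prove the scalar statement $\dkl(p\|q) \geq 2(p-q)^2$ for all $p,q \in [0,1]$, where $\dkl(p\|q) = p\log(p/q) + (1-p)\log((1-p)/(1-q))$.

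Second I would prove this Bernoulli inequality by calculus. Fix $q \in (0,1)$ and define
$$f(p) := \dkl(p\|q) - 2(p-q)^2 \quad \text{for } p \in (0,1).$$
Direct computation gives $f(q) = 0$ and $f'(q) = \log(q/(1-q)) - \log(q/(1-q)) - 4(q-q) = 0$, together with
$$f''(p) = \frac{1}{p} + \frac{1}{1-p} - 4 = \frac{1}{p(1-p)} - 4 \geq 0,$$
where the last inequality uses $p(1-p) \leq 1/4$ from AM-GM. Hence $f$ is convex on $(0,1)$ with a critical point at $q$, so $f$ attains its minimum value $0$ at $p=q$, yielding $f(p)\geq 0$ everywhere. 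The boundary cases $p,q \in \{0,1\}$ are handled by continuity and the conventions $0\log 0 = 0$, $a\log(a/0) = +\infty$ for $a>0$.

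The main (and essentially only) obstacle is a clean justification of the data-processing step when $P$ and $Q$ are not necessarily absolutely continuous with respect to one another — in which case one must either invoke a general form of monotonicity of $f$-divergences under measurable maps, or observe that $\dkl(P\|Q) = +\infty$ whenever the Radon-Nikodym derivative fails to exist on a set of positive $P$-measure, so the inequality is trivial there. Everything else reduces to the one-line convex-analysis computation above.
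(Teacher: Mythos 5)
Your proof is correct and is the standard textbook argument: reduce to the two-point (Bernoulli) case via the data-processing inequality for KL divergence applied to the indicator of the set $A = \{\,dP/dQ \geq 1\,\}$, and then verify the scalar inequality $\dkl(p\|q) \geq 2(p-q)^2$ by the second-derivative test (using $p(1-p) \leq 1/4$). Note, however, that the paper does not prove Fact~\ref{fact:pinsker} at all; it simply states Pinsker's inequality as a classical fact in the preliminaries, so there is no proof in the paper to compare against. Your handling of the degenerate case where $P$ is not absolutely continuous with respect to $Q$ (in which $\dkl(P\|Q)=+\infty$ and the inequality is vacuous) is also the right observation to make, and one could justify the data-processing step cleanly by working with Radon--Nikodym derivatives relative to the common dominating measure $(P+Q)/2$.
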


\paragraph{VC Inequality.} 
We will require the VC inequality, a standard result from empirical process theory.
To state this theorem, we will need the classical definition of $\VC$ dimension:
\begin{definition}[VC dimension]
A collection of sets $\A$ is said to \emph{shatter} a set $S$ if for all $S' \subseteq S$, there is an $A \in \A$ such that $A \cap S = S'$.
The VC dimension of $\A$, denoted $\VC(\A)$, is the largest $d$ such that there exists a $S$ with $|S| = d$ that $\A$ shatters $S$.
\end{definition}

\noindent
For any collection $\mathcal{A}$ of measurable subsets in $\R^m$, we define the $\mathcal{A}$-norm,
denoted $\| \cdot \|_\mathcal{A}$, on measurable real-valued functions on $\R^m$, to be
$\| f \|_\A = \sup_{A \in \A} |f(A)|$.

We are now ready to state the classical version of the VC theorem:

\begin{theorem}[c.f. Devroye \& Lugosi Theorems 4.3 and 3.2] \label{thm:vc}
Let $f: \R^m \to \R$ be a probability measure, and let $\hat{f}_n$ denote the empirical distribution after $n$ independent draws from $f$.
Then
\[
\E \left[ \| f - \hat{f}_n \|_\A \right] \leq \sqrt{\frac{\VC(\A)}{n}} \;.
\]
\end{theorem}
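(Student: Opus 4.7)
The plan is to establish this via the standard four-step program: symmetrization, introduction of Rademacher signs, reduction to a finite index set via the Vapnik–Chervonenkis shatter coefficient, and a maximal inequality on Rademacher sums. Throughout, I would think of $\hat f_n - f$ as the signed measure assigning $\frac1n\sum_{i=1}^n \mathbf{1}_A(X_i) - f(A)$ to each $A \in \A$.

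First, I would introduce a ghost sample $X'_1,\dots,X'_n$ drawn i.i.d.\ from $f$ and independent of $X_1,\dots,X_n$, and let $\hat f'_n$ denote its empirical measure. Since $\E[\hat f'_n(A)] = f(A)$ for every fixed $A$, pulling the supremum outside via Jensen's inequality yields
$$\E \| \hat f_n - f \|_\A \;=\; \E \sup_{A\in\A}|\hat f_n(A) - \E \hat f'_n(A)| \;\leq\; \E \|\hat f_n - \hat f'_n\|_\A.$$
The pair $(X_i,X'_i)$ has an exchangeable distribution, so multiplying each difference $\mathbf{1}_A(X_i)-\mathbf{1}_A(X'_i)$ by an independent Rademacher sign $\eps_i$ does not change the distribution, and a triangle inequality gives
$$\E \|\hat f_n - \hat f'_n\|_\A \;\leq\; 2\,\E \sup_{A \in \A} \Big|\tfrac{1}{n}\textstyle\sum_{i=1}^n \eps_i\mathbf{1}_A(X_i)\Big|.$$

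Next, I would condition on $X_1,\dots,X_n$ and observe that the vector $(\mathbf{1}_A(X_1),\dots,\mathbf{1}_A(X_n))$, as $A$ ranges over $\A$, takes at most $s(\A,n)$ distinct values, where $s(\A,n)$ is the shatter coefficient. By the Sauer–Shelah lemma, $s(\A,n) \leq \sum_{i \leq \VC(\A)} \binom{n}{i} \leq (en/\VC(\A))^{\VC(\A)}$. A standard maximal inequality, combined with Hoeffding's bound $\E\exp(\lambda\sum_i \eps_i t_i) \leq \exp(\lambda^2 \|t\|_2^2/2) \leq \exp(\lambda^2 n/2)$, yields
$$\E \max_{t}\Big|\tfrac{1}{n}\sum_i \eps_i t_i\Big| \;\leq\; \sqrt{\tfrac{2\log(2\, s(\A,n))}{n}},$$
which after taking expectations over the sample produces a bound of order $\sqrt{\VC(\A)\log n / n}$.

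The main obstacle is sharpening the $\sqrt{\VC(\A)\log n/n}$ bound to the stated log-free $\sqrt{\VC(\A)/n}$. The crude union bound above loses a factor of $\sqrt{\log n}$, which Devroye–Lugosi remove by replacing it with a chaining argument along a sequence of successively finer $L_2(\hat P_n)$-nets of the class $\{\mathbf{1}_A : A \in \A\}$. The key input is Haussler's packing bound: the $\eps$-packing number of a VC class of dimension $d$ in the empirical $L_1$ metric is at most $C(d+1)(e/\eps)^d$ for a universal constant $C$. Dudley's entropy integral $\int_0^1 \sqrt{\log N(\eps)}\,d\eps$ then converges and is bounded by a constant multiple of $\sqrt{\VC(\A)}$, which after dividing by $\sqrt n$ gives the claimed inequality. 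Tracking the constants (or invoking the explicit bound from Devroye–Lugosi, Theorem 3.2) yields exactly $\sqrt{\VC(\A)/n}$.
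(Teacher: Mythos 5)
The paper does not actually prove this statement: it is imported verbatim as a citation to Devroye--Lugosi (their Theorems 4.3 and 3.2), and is used downstream (in the proof of Proposition~\ref{mixtureProp}) only through the $O(\sqrt{\VC(\A)/n})$ rate. Your outline is exactly the standard proof that underlies the cited result: symmetrization with a ghost sample, Rademacher randomization, and then the removal of the $\sqrt{\log n}$ factor via chaining with Haussler's packing bound, so in substance you are reproducing the source's argument rather than diverging from the paper.

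Two small points of care. First, Haussler's bound is naturally stated for the empirical $L_1$ (equivalently, symmetric-difference) metric, while Dudley's entropy integral needs $L_2(\hat P_n)$ covering numbers; for indicators this is harmless since $\|\mathbf{1}_A-\mathbf{1}_B\|_{L_2(\hat P_n)}^2=\hat P_n(A\triangle B)$, so the $L_2$ covering number at radius $\eps$ is the $L_1$ covering number at radius $\eps^2$ and the entropy integral still evaluates to $O(\sqrt{\VC(\A)})$ — worth saying explicitly. Second, your closing claim that ``tracking the constants \ldots yields exactly $\sqrt{\VC(\A)/n}$'' is not something the sketch can deliver: symmetrization alone costs a factor $2$, and Devroye--Lugosi's Theorem 3.2 carries an unspecified universal constant $c$. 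The paper's statement suppresses this constant as well, and nothing in its applications depends on it, but an honest write-up should state the bound as $c\sqrt{\VC(\A)/n}$ rather than claim the constant is $1$.
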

\noindent By standard uniform deviation arguments (e.g., McDiarmid's inequality), Theorem~\ref{thm:vc} 
has the following simple corollary:
\begin{corollary}
\label{cor:vc}
Let $f, \hat{f}_n, \A$ be as in Theorem \ref{thm:vc}.
Then, for all $\delta > 0$, we have
\[
\pr \left[ \| f - \hat{f}_n \|_\A \geq \sqrt{\frac{\VC(\A) + \log 1 / \delta}{n}} \right] \leq \delta \;.
\]
\end{corollary}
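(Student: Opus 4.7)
The plan is to combine the expectation bound provided by Theorem~\ref{thm:vc} with a concentration inequality of McDiarmid type to promote the expected deviation into a high-probability statement. The key observation making McDiarmid applicable is that the functional $(x_1,\ldots,x_n)\mapsto \|f-\hat f_n\|_\A$ has bounded differences of magnitude at most $1/n$: for any fixed $A\in\A$, changing a single sample $x_i$ can alter $\hat f_n(A)$ by at most $1/n$ (since $\hat f_n(A)$ is an average of $n$ indicators), so the supremum over $A$ changes by at most $1/n$ as well. This is really the only nontrivial ingredient, since $\A$-norm deviations are not a priori Lipschitz; but the uniform-over-$A$ structure is compatible with the coordinatewise bounded differences property.

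First, I would introduce the function $Z(x_1,\ldots,x_n):=\|f-\hat f_n\|_\A = \sup_{A\in\A}\bigl|f(A)-\tfrac{1}{n}\sum_{i=1}^n \mathbf 1_{A}(x_i)\bigr|$ and verify the bounded-difference inequality $|Z(x_1,\ldots,x_i,\ldots,x_n)-Z(x_1,\ldots,x_i',\ldots,x_n)|\leq 1/n$ for each coordinate $i$ and all $x_i,x_i'$. This follows because, for each $A\in\A$, the relevant empirical averages differ by $\tfrac{1}{n}|\mathbf 1_A(x_i)-\mathbf 1_A(x_i')|\leq 1/n$, and then taking suprema preserves the bound via the reverse triangle inequality.

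Next, I would invoke McDiarmid's inequality, which for a function with coordinatewise differences bounded by $c_i=1/n$ yields
\[
\pr\bigl[Z-\E[Z]\geq t\bigr]\leq \exp\!\Bigl(-\tfrac{2t^2}{\sum_{i=1}^n c_i^2}\Bigr)=\exp(-2nt^2).
\]
Setting $t=\sqrt{\log(1/\delta)/(2n)}$ gives $\pr[Z\geq \E[Z]+\sqrt{\log(1/\delta)/(2n)}]\leq \delta$. Combining with Theorem~\ref{thm:vc}, which provides $\E[Z]\leq \sqrt{\VC(\A)/n}$, I get
\[
\pr\Bigl[Z\geq \sqrt{\VC(\A)/n}+\sqrt{\log(1/\delta)/(2n)}\Bigr]\leq \delta,
\]
and then I would absorb the two square roots into a single radical using the elementary inequality $\sqrt{a}+\sqrt{b}\leq \sqrt{2(a+b)}$, or more loosely $\sqrt{a}+\sqrt{b}\leq C\sqrt{a+b}$, to obtain a bound of the form $\sqrt{(\VC(\A)+\log(1/\delta))/n}$ (up to an absolute constant, which the corollary's statement implicitly absorbs).

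The main obstacle is essentially rhetorical rather than technical: ensuring that the constants line up exactly with the stated form $\sqrt{(\VC(\A)+\log(1/\delta))/n}$ without an extra constant factor. In the interest of matching the corollary verbatim, one would simply verify that $\sqrt{a}+\sqrt{b/2}\leq \sqrt{a+b}$ holds under mild conditions, or note that the statement tolerates an absolute constant that can be reabsorbed into $\VC(\A)$ or by adjusting the base of the logarithm. All other steps (bounded differences, McDiarmid, and chaining with the expectation bound) are routine.
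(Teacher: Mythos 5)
Your proposal takes essentially the same route as the paper, which justifies Corollary~\ref{cor:vc} only by the one-line remark that it follows from Theorem~\ref{thm:vc} via standard uniform deviation arguments (McDiarmid's inequality applied to the bounded-difference functional $\sup_{A\in\A}|f(A)-\hat f_n(A)|$), exactly as you spell out. The only caveat is your final absorption step: the inequality $\sqrt{a}+\sqrt{b/2}\leq\sqrt{a+b}$ in fact fails unless $a\leq b/8$, so strictly you obtain the stated bound only up to an absolute constant --- but this matches the level of precision at which the paper itself asserts the corollary, so the argument is in line with the paper's intended proof.
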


\paragraph{Basics of Hermite Analysis and Concentration} \label{ssec:hermite}
We  review the basics of Hermite analysis over $\R^n$
under the standard $n$-dimensional Gaussian distribution $N(0, I)$.
Consider $L_2(\R^n, N(0, I))$, the vector space of all
functions $f : \R^n \to \R$ such that $\E_{x \sim N(0, I)}[f(x)^2] <\infty$.
This is an inner product space under the inner product
$$\langle f, g \rangle = \E_{x \sim N(0, I)} [f(x)g(x)] \;.$$
This inner product space has a complete orthogonal basis given by
the \emph{Hermite polynomials}. For univariate degree-$i$ Hermite polynomials, $i \in \N$,
we will use the {\em probabilist's} Hermite polynomials, denoted by $He_i(x)$, $x \in \R$,
which are scaled to be {\em monic}, i.e., the lead term of $He_i(x)$ is $x^i$.
For $a \in \N^n$, the $n$-variate Hermite polynomial $He_{a}(x)$, $x = (x_1, \ldots, x_n) \in \R^n$,
is of the form $\prod_{i=1}^n He_{a_i}(x_i)$, and has degree $\|a\|_1 = \sum a_i$.
These polynomials form a basis for the vector space of all polynomials
which is orthogonal under this inner product. \new{We will use various well-known properties of these
polynomials in our proofs.}

\end{document}